\newtheorem{assum}{Assumption}[section]
\newtheorem{cor}{Corollary}[section]
\newcommand{\EE}{\mathbb{E}}
\newcommand{\ini}{\mathrm{ini}}
\newcommand{\dis}{\mathrm{dis}}
\newcommand{\rd}{\,\mathrm{d}}
\newcommand{\revise}[1]{\textcolor{black}{#1}}
\begin{document}

\title{Overparameterization of deep ResNet: zero loss and mean-field analysis}

\author{\name Zhiyan Ding \email zding49@math.wisc.edu\\
       \addr Mathematics Department\\
       University of Wisconsin-Madison\\
       Madison, WI 53706 USA.
       \AND
       \name Shi Chen \email schen636@wisc.edu\\
       \addr Mathematics Department\\
       University of Wisconsin-Madison\\
       Madison, WI 53706 USA.
       \AND
       \name Qin Li \email qinli@math.wisc.edu\\
       \addr Mathematics Department\\
       University of Wisconsin-Madison\\
       Madison, WI 53706 USA.
       \AND
       \name Stephen J. Wright \email swright@cs.wisc.edu\\
       \addr Department of Computer Sciences \\
       University of Wisconsin-Madison\\
       Madison, WI 53706 USA.
       }

\editor{}

\maketitle

\begin{abstract}%   <- trailing '%' for backward compatibility of .sty file
Finding parameters in a deep neural network (NN) that fit training data is a nonconvex optimization problem, but a basic first-order optimization method (gradient descent) finds a global optimizer with perfect fit (zero-loss) in many practical situations. We examine this phenomenon for the case of Residual Neural Networks (ResNet) with smooth activation functions in a limiting regime in which both the number of layers (depth) and the number of weights in each layer (width) go to infinity. First, we use a mean-field-limit argument to prove that the gradient descent for parameter training becomes a gradient flow for a probability distribution that is characterized by a partial differential equation (PDE) in the large-NN limit. Next, we show that \revise{under certain assumptions,} the solution to the PDE converges in the training time to a zero-loss solution. Together, these results \revise{suggest} that the training of the ResNet gives a near-zero loss if the ResNet is large enough. We give estimates of the depth and width needed to reduce the loss below a given threshold, with high probability.
\end{abstract}

\begin{keywords}
   Residual neural network, overparameterization, mean-field analysis, zero loss, gradient flow
\end{keywords}

\section{Introduction}
In the training of deep neural networks (NN), the loss function is nonconvex, but algorithms based on gradient descent (GD) appear to converge to a zero-loss solution that interpolates the training data. As the number of layers and the width of the NN increase to infinity, the training problem becomes vastly overparameterized, potentially yielding a large set of interpolating solutions. Still, it remains surprising that GD approaches find the global minimizer reliably.

\emph{What is the mechanism that allows gradient descent to find the minimizer?}

This article \revise{proposes a partial answer to} this question for residual neural networks (ResNet), relying on three main toolboxes: a continuous limit argument, a mean-field limit argument, and gradient flow analysis. These toolboxes are used to translate gradient descent for parameter training into a partial differential equation (PDE), where PDE analysis (specifically, steady-state equilibrium analysis) is employed to trace the convergence to the minimizer.

We denote by $L$ and $M$ the number of layers, and the number of weights per layer, respectively. We study highly overparameterized NN in which both $L$ and $M$ go to infinity. As $L\to\infty$, the NN approaches infinite depth, and ResNet translates into an ordinary differential equation (ODE) with a force term encoding the parameter configuration. We refer to this limit as the {\em continuous limit}. As $M\to\infty$, the NN approaches infinite width, and the ODE that characterizes ResNet further translates into an ordinary-integral equation (OIE), with the integrand encoding the configuration of neuron parameters. We refer to this limit as the {\em mean-field limit}. In the limiting regime, the training of the parameters becomes the training of the integrand, and gradient descent on the parameters translates into a gradient flow on the integrand. The gradient flow process turns out to be characterized by a PDE. We observe  that \revise{under certain assumptions,} the long ``time" limit of the PDE solution has zero cost, \revise{suggesting} that gradient descent applied to a (finite) parameter configuration yields almost-zero-cost solutions too. The smallness of this cost depends on the largeness of $L$ and $M$; we give a precise quantification.

% We try to address this question with three machinery on ResNet, an architecture that permits a limiting ODE that is easy to analyze. The three machinery are: continuous limit argument and ordinary differential equation theory, mean-field limit argument, and gradient flow analysis. The first toolbox allow us to flip the algorithm spelled on the discrete level to the continuous level. With $L$, the number of layers goes to infinity, a coupled ODE is derived. The second toolbox then further translates this ODE to an ordinary-integral equation by sending $M$, the number of weights to infinity. Correspondingly, the neuron configuration, when guided by the gradient descent, satisfies a partial differential equation (PDE). Lastly, the gradient flow analysis provides techniques to analyze this PDE and trace the convergence to the minimizer.

% These techniques combined lead us to the following informal statement:

% \emph{Under some conditions, with ResNet of large enough size, and stepsize in GD small enough, a configuration of parameters that achieves zero-loss can be found in the long time limit.}

We describe the problem setup in Section~\ref{sec:setup}, where we also formally derive the continuous and  mean-field limits. We unify notations in Section~\ref{sec:notations} and present our main results in Section~\ref{sec:conv}, including rigorous justifications for both limiting procedures, the result about convergence of the PDE solution to the global minimum, and a result about convergence to near-zero losses of gradient descent on large (but finite) networks.
% As a corollary, for ResNet with big $M$ and $L$, global minimum should also be achievable. 
As a byproduct of our analysis, well-posedness for both the ODE formulation and the PDE formulation is obtained in  Section~\ref{sec:well_posed}.

There is a vast of literature addressing the overparameterization of DNN, its associated nonconvex optimization problem, and the application of the first order (gradient descent) optimization methods. 
Three major perspectives have been taken. 
One approach is to analyze the landscape of the objective functions, identifying properties that a nonconvex optimization problem needs to satisfy for a first order method to converge to an optimizer. 
Different NN structures are then analyzed to fit these properties~\citep{pmlr-v70-jin17a,pmlr-v40-Ge15,NIPS2017_f79921bb,ge2018learning,Nguyen2018OptimizationLA,pmlr-v80-du18a,10.1109/TIT.2018.2854560,pmlr-v70-nguyen17a,NIPS2016_f2fc9902,yun2018global,safran2020effects,pmlr-v80-bartlett18a,bartlett2018representing}. The second approach largely depends on the insensibility of the Gram matrix as the number of weights goes to infinity~\citep{pmlr-v97-allen-zhu19a,pmlr-v97-du19c,zhang2019convergence,chatterji2021does,zou2018stochastic,du2018gradient,NEURIPS2019_6e2290db,NEURIPS2020_9afe487d}, as a way to analyze Neural Tangent Kernel~\citep{NEURIPS2018_5a4be1fa,NEURIPS2020_b7ae8fec}. The analysis crucially depends on the fact that GD confines the iteration in a small neighborhood if the initial conditions are random. However, as point out in~\citep{NEURIPS2019_58,Ba2020GeneralizationOT,Wei2019RegularizationMG,fang2019convex}, NTK, like many other kernel models~\cite{Arora2019OnEC}, is rather a limited description of NN. The success of NTK highly depends on the fact that a nonlinear NN can be approximated by a linear model of the infinite dimensional random features associated with the NTK. The third approach, which we adopt in this paper, involves the mean-field limit. 
The idea is to let the number of weights $M$ and/or the number of layers $L$ go to infinity, and translate the gradient descent for the parameters in a finite dimensional space to a gradient flow in a (infinite-dimensional) function space~\citep{araujo2019meanfield,fang2019convex,nguyen2019mean,pmlr-v97-allen-zhu19a,pmlr-v97-du19c,zhang2019convergence,chatterji2021does,NEURIPS2018_a1afc58c,MeiE7665,wojtowytsch2020convergence,pmlr-v119-lu20b,doi:10.1287/moor.2020.1118,doi:10.1137/18M1192184,e2020mathematical,nguyen2019mean,nguyen2021rigorous,cONG2021,pmlr-v119-lu20b,E_2020,jabir2021meanfield}.~\revise{In particular, in~\citep{E_2020}, the authors have formally derived the limiting gradient flow equation for deep and  overparameterized ResNet.}
Some PDE analysis techniques are borrowed to show the convergence of the limiting gradient flow equation. In~\citep{NEURIPS2018_a1afc58c,MeiE7665,wojtowytsch2020convergence}, the authors study the two-layer (one hidden layer) NN in the regime where $M$ is infinite. 
In these works, both the mean-field limit and the convergence of the gradient flow are proved, giving a complete answer about the convergence to the zero-loss when NN has two layers. 
The problem is significantly more complicated for NNs that have multiple layers, since the convexity of the objective function is lost. To the best of our knowledge, the current paper is the first that both rigorously proves the mean-field limit that translates the optimization problem into a gradient flow, and demonstrates why the gradient flow leads to a global minimizer.

\section{ResNet and gradient descent}\label{sec:setup}
General ResNet uses the following model:
\begin{equation}\label{eqn:disRes}
\begin{aligned}
z_{l+1}(x) &=z_{l}(x)+\frac{1}{ML}\sum^M_{m=1}f(z_{l}(x),\theta_{l,m})\,,\quad l=0,1,\dots,L-1\\
z_{0}(x) &=x\,,
\end{aligned}
\end{equation}
where $x\in\mathbb{R}^d$ is the input data. The configuration of the NN is encoded in ${\Theta}_{L,M}=\left\{\theta_{l,m}\right\}^{L-1,M}_{l=0,m=1}$, where $\theta_{l,m}\in\mathbb{R}^k$, and $f:\mathbb{R}^d\times\mathbb{R}^k\rightarrow\mathbb{R}^d$ is a given activation function. \citep[Section 3.3]{e2020mathematical} verifies that ``conventional'' ResNets are included in this framework. Define
\[
\begin{aligned}
z_{l+1}(x)& =z_{l}(x)+\frac{1}{ML}\sum^{M}_{m=1}U_{l,m}\sigma(W^\top_{l,m}z_{l}(x)+b_{l,m})\,,\quad l=0,1,\dots,L-1\\
z_{0}(x)& =x\,,
\end{aligned}
\]
where $W_{l,m},U_{l,m}\in\mathbb{R}^d$, $b_{l.m}\in\mathbb{R}$, and $\sigma$ is the ReLU activation function. 
In this example, we have $\theta_{l,m}=(W_{l,m},U_{l,m},b_{l,m})\in\mathbb{R}^k$, with $k=2d+1$.

Denote by $Z_{{\Theta}_{L,M}}(l;x)$ the solution to \eqref{eqn:disRes}. The goal of training ResNet is to seek parameters $\Theta_{L,M}$ such that the following cost is minimized:
%drawn from some random distribution $\mu(x)$ with compact support $|x|<R$ and bounded density, $\left\{\theta_{l,m}\right\}^{L-1,M}_{l=0,m=1}$ is the set of parameters (weights), and $f:\mathbb{R}^d\times\mathbb{R}^k\rightarrow\mathbb{R}^d$ is a known activation function. For the set of parameters ${\Theta}_{L,M}=\left\{\theta_{l,m}\right\}^{L-1,M}_{l=0,m=1}$, we define the cost function:
\begin{equation}\label{eqn:costfunction}
E({\Theta}_{L,M})=\mathbb{E}_{x\sim\mu}\left[\frac{1}{2}\left(g(Z_{{\Theta}_{L,M}}(L;x))-y(x)\right)^2\right]\,,
\end{equation}
where $g(x):\mathbb{R}^d\rightarrow\mathbb{R}$ is a given measuring function, $y(x)\in\mathbb{R}$ is the label corresponding to $x$, and $\mu$ is the probability distribution from which $x$ is drawn.

Classical gradient descent updates the parameters according to the formula
\begin{equation*}
\Theta_{L,M}^{n+1}=\Theta_{L,M}^{n}-h\nabla_{{\Theta}_{L,M}} E(\Theta^n_{L,M})\,,
\end{equation*}
where $h$ is the step length. In the $h\to0$ limit, these parameters satisfy the following ODE~\citep[Def~2.2]{NEURIPS2018_a1afc58c}, with $s$ denoting the rescaled pseudo-time:
\begin{equation}\label{eqn:gradient_theta_LM}
    \frac{\rd{\Theta}_{L,M}(s)}{\rd s}=-ML\nabla_{{\Theta}_{L,M}} E({\Theta}_{L,M})\,,\quad\text{for}\; s\geq 0\,.
\end{equation}
%This formulation is convenient for taking the continuous and mean-field limit $M,L\rightarrow$, see detail in \citep[Def 2.2]{NEURIPS2018_a1afc58c}.\ql{not sure what you are talking about...}~\zd{If you calculate $\nabla_{{\Theta}_{L,M}} E({\Theta}_{L,M})$, you will have $\frac{1}{ML}$ before the formula, this $M,L$ is to normalize this coefficient. This is also the case in Bach's paper.}

% Since this is not a convex problem, a general question is:

% \begin{center}
% \emph{Can we find the global minimizer if we use the gradient descent to optimize the parameter?}
% \end{center}

\subsection{The continuous limit and mean-field limit}\label{sec:limit_formal}

We obtain the {\em continuous limit} of \eqref{eqn:disRes} by making the ResNet infinitely deep, that is,  $L\to\infty$. By reparameterizing the indices $l=[0,1, \dotsc,L-1]$ with the continuous variable $t\in[0,1]$, we view $z$ in~\eqref{eqn:disRes} as a function in $t$ that satisfies a coupled ODE, with $1/L$ being the stepsize in time. Accordingly, $\theta_{l,m}$ can be viewed as $\theta_m(t=l/L)$. Denoting $\Theta(t) = \{\theta_m(t)\}_{m=1}^M$, the continuous limit of \eqref{eqn:disRes} can be written as
% and $\Theta_{L,M}$ becomes a list of $M$ functions in time $\Theta(t) = \{\theta_m(t)\}_{m=1}^M$ that follows a gradient-flow for every $t$. More specifically, viewing $1/L$ the time stepsize in~\eqref{eqn:disRes} and let $L\rightarrow\infty$, the continuous limit of \eqref{eqn:disRes} now writes as
\begin{equation}\label{eqn:contRes}
\frac{\rd z(t;x)}{\rd t}=\frac{1}{M}\sum^M_{m=1}f(z(t;x),\theta_m(t))\,,\quad t\in[0,1], \quad\text{with}\; z(0;x)=x\,.
\end{equation}
Following~\eqref{eqn:costfunction}, we define the cost functional $E$ for this formulation as
\begin{equation}\label{eqn:costfunctioncont}
E(\Theta)=\mathbb{E}_{x\sim\mu}\left[\frac{1}{2}\left(g(Z_{\Theta}(1;x))-y(x)\right)^2\right]\,,
\end{equation}
where $Z_{\Theta}(t;x)$ solves \eqref{eqn:contRes} for a given collection $\Theta(t)$ of the $M$ functions $\{\theta_m(t)$, $m=1,2,\dotsc,M\}$. Similar to~\eqref{eqn:gradient_theta_LM}, we can use GD to find the configuration of $\Theta(t)$ that minimizes~\eqref{eqn:costfunctioncont} by making $\Theta(t)$ flow in the descending direction of $E(\Theta)$, that is,
\begin{equation}\label{eqn:gradient_theta_M}
\frac{\partial \Theta(s;t)}{\partial s}=-M\left.\frac{\delta E}{\delta\Theta}\right|_{\Theta(s;\cdot)}\,,\quad s>0,\quad t\in[0,1]\,,
\end{equation}
where $\frac{\delta E}{\delta\Theta}$ is the functional derivative of $E$ with respect to $\Theta(t)$.

The {\em mean-field limit} is obtained by making the ResNet infinitely wide, that is, $M\to\infty$. 
Considering that the right hand side of~\eqref{eqn:contRes} has the form of an expectation, it naturally becomes an integral in the limit with respect to a certain probability density.
Denoting this density by $\rho(\theta,t)$, and assuming that the $\theta_m$ are drawn from it, the ODE \eqref{eqn:contRes} for $z$ translates to the following OIE:
\begin{equation}\label{eqn:meancontRes}
\frac{\rd z(t;x)}{\rd t}=\int_{\mathbb{R}^k}f(z(t;x),\theta)\rd\rho(\theta,t)\,,\quad t\in[0,1]\quad\text{with}\; z(0;x)=x\,.
\end{equation}
By mimicking~\eqref{eqn:costfunctioncont}, we define the cost function in the mean-field setting as
\begin{equation}\label{eqn:costfunctioncont2}
E(\rho)=\mathbb{E}_{x\sim\mu}\left[\frac{1}{2}\left(g(Z_{\rho}(1;x))-y(x)\right)^2\right]\,,
\end{equation}
where $Z_{\rho}(t;x)$ is the solution to \eqref{eqn:meancontRes} for a given $\rho$. Similar to the gradient flow for $\Theta_{L,M}$ and $\Theta(t)$, the probability distribution $\rho$ that encodes the configuration of $\theta$ in the mean-field setting, flows in the descending direction of $E(\rho)$ in pseudo-time $s$. 
Considering that $\rho(\theta,t,s)$ should be a probability density for all $s$ and $t$, this gradient flow is conducted in the Wasserstein metric. Following~\citep{NEURIPS2018_a1afc58c,pmlr-v119-lu20b}, we have
\begin{equation}\label{eqn:Wassgradientflow}
\frac{\partial \rho(\theta,t,s)}{\partial s}=\nabla_\theta\cdot\left(\rho(\theta,t,s)\nabla_\theta\left.\frac{\delta E(\rho)}{\delta \rho}\right|_{\rho(\cdot,\cdot,s)}\right)\,,\quad s>0,\; t\in[0,1]\,,
\end{equation}
where $\frac{\delta E}{\delta \rho}$ is the functional derivative with respect to $\rho$. With the classical variational method, one can explicitly compute this derivative as follows (see \citep[Section 3.1]{pmlr-v119-lu20b}):
\begin{equation}\label{eqn:Frechetderivative}
\left.\frac{\delta E}{\delta \rho}\right|_\rho(\theta,t)=\mathbb{E}_{x\sim\mu}\left(p^\top_\rho(t;x)f(Z_\rho(t;x),\theta)\right)\,,
\end{equation}
where $p_\rho(t;x)$, parameterized by $x$, maps $[0,1]\to\mathbb{R}^d$, and is a vector solution to the following ODE:
\begin{equation}\label{eqn:prho}
\left\{
\begin{aligned}
&\frac{\rd p^\top_\rho}{\rd t}=-p^\top_\rho\int_{\mathbb{R}^k}\partial_zf(Z_\rho,\theta)\rho(\theta,t)\rd\theta\,,\\
&p_\rho(t=1;x)=\left(g(Z_{\rho}(1;x))-y(x)\right)\nabla g(Z_{\rho}(1;x))\,.
\end{aligned}
\right.
\end{equation}
In the later sections, to emphasize the $s$ dependence, we use $\frac{\delta E(\Theta(s))}{\delta \Theta}$ and $\frac{\delta E(\rho(s))}{\delta \rho}$ to denote $\left.\frac{\delta E}{\delta \Theta}\right|_{\Theta(s;\cdot)}$ and $\left.\frac{\delta E}{\delta \rho}\right|_{\rho(\cdot,\cdot,s)}$ respectively. As a summary, to update $\rho(\theta,t,s)$ to $\rho(\theta,t,s+\delta s)$ with an infinitesimal $\delta s$, we solve~\eqref{eqn:meancontRes} for $Z_\rho(t;x)$ using the given $\rho(\theta,t,s)$ and compute $p_\rho$ using~\eqref{eqn:prho}. This allows us to compute $\frac{\delta E(\rho(s))}{\delta \rho}(\theta,t)$ which, in turn,  yields $\rho(\theta,t,s+\delta s)$ from~\eqref{eqn:Wassgradientflow}. In~\eqref{eqn:prho}, $\partial_zf$ is a $d\times d$ matrix that stands for the Jacobian of $f$ with respect to its $z$ argument.

% This means to compute~\eqref{eqn:Frechetderivative} at $\rho(\theta,t)$, one solves~\eqref{eqn:meancontRes} for all $x\in\mathbb{R}^d$ using the given $\rho(\theta,t)$ for $Z_\rho(t;x)$, and uses it as the input in~\eqref{eqn:prho} for obtaining $p_\rho$. The vector inner product of $p_\rho$ and $f$ for all $x$ is then taken before one takes integration in $x$ with measure $\mu$. It results a function on $(\theta,t)$. In~\eqref{eqn:prho}, $\partial_zf$ is a $d\times d$ matrix that stands for the Jacobian of $f$ with respect to its $z$ argument.
% And we have the following proposition:
% \begin{proposition}[Theorem 3.1, \citep{pmlr-v119-lu20b}]\label{prop:Frechet} For $\rho_1,\rho_2\in \mathcal{C}([0,1];\mathcal{P}^2)$, we have
% \[
% E(\rho_1+\lambda (\rho_2-\rho_1))=E(\rho_1)+\lambda \int^1_0\int_{\mathbb{R}^k}\frac{\delta E}{\delta \rho}(\rho_2-\rho_1)d\theta dt+o(\lambda).
% \]
% Formally, when $M\rightarrow\infty$, 
% \end{proposition}

% Similar to \citep{NEURIPS2018_a1afc58c,pmlr-v119-lu20b}, we expect that when $M\rightarrow\infty$, the mean-field PDE of $\rho$ should be:
% \begin{equation}\label{eqn:Wassgradientflow}
% \frac{\partial \rho(\theta,t,s)}{\partial s}=\nabla_\theta\cdot\left(\rho(\theta,t,s)\nabla_\theta\frac{\delta E}{\delta \rho(s)}(\theta,t)\right)\,,\quad s>0,\quad t\in[0,1]\,.
% \end{equation}

\begin{remark} We stress that equation~\eqref{eqn:Wassgradientflow} is not a classical Wasserstein-2 gradient flow in the probability space. The classical gradient flow looks for one probability function of $\theta$ in the pseudo-time $s$~\citep{Gradientflow}. Instead, we now have $\rho(\theta,t)$, where $\rho$ is a probability density of $\theta$ for every $t$, and their flows are all coupled through the definition of $E$. Note that this feature emerges exactly because we have infinitely many layers $L\to\infty$, reformulated using a continuous variable $t$. When there are only 2 layers, the dependence on $t$ can be dropped, and the problem reduces to the classical gradient flow~\citep{NEURIPS2018_a1afc58c}. 
A core obstacle to be overcome in this paper is the need for  new technicalities to handle the non-standard gradient flow.
\end{remark}

\subsection{Contributions}\label{sec:contribution}
\revise{With the roadmap above, it is clear that we need to justify that the gradient flow in the discrete setting~\eqref{eqn:gradient_theta_LM} is tracked closely by the PDE, so that $E(\Theta_{L,M}(s))\approx E(\rho(\cdot,\cdot,s))$, and that the PDE~\eqref{eqn:Wassgradientflow} achieves the global minimum, for which $E(\rho(\cdot,\cdot,s=\infty))=0$.}

\revise{In this paper we fully address the first question, and we provide some conditions under which the global minimum can be achieved. More specifically, the two main contributions of the paper are as follows.}
\begin{itemize}
\item[--]{Contribution 1:} We give a rigorous proof of the continuous and mean-field limit: Theorem~\ref{thm:consistent}. This is to justify that when $M,L\to\infty$, $E(\Theta_{L,M}(s))\approx E(\Theta(s;\cdot))\approx E(\rho(\cdot,\cdot,s))$. The dependence of these approximations on $L$ and $M$ are made precise in the result.

\item[--]{Contribution 2:} \revise{We show that the global minimum can be achieved in the continuous setting under certain conditions: Theorem~\ref{thm:globalminimal}. That is,  the gradient flow equation~\eqref{eqn:Wassgradientflows}, a modified version of~\eqref{eqn:Wassgradientflow}, sends its distribution $\rho(\theta,t,s)$ to a global minimizer in the long-time regime (as $s\to\infty$). The conditions required may be stringent in real applications and we discuss their validity in certain scenarios in Appendix~\ref{sec:proofofdiscussion}.}
% Convergence to global minimum: Theorem~\ref{thm:globalminimal}. We show the gradient flow equation~\eqref{eqn:Wassgradientflows}, a modified version of~\eqref{eqn:Wassgradientflow}, sends its distribution $\rho(\theta,t,s)$ to a global minimizer in the long-time regime ($s\to\infty$).
\end{itemize}
Theorems~\ref{thm:consistent} and  \ref{thm:globalminimal} are combined in Theorem~\ref{thm:maintheorem}, which shows that $E(\Theta_{L,M}(s))\to0$ for large $s$.

By comparison with existing mean-field limit results,  Theorem~\ref{thm:consistent} gives rigorous justifications for both $L\to\infty$ and $M\to\infty$. By contrast,~\citep{NEURIPS2018_a1afc58c,araujo2019meanfield,fang2019convex,nguyen2019mean} consider $M\to\infty$ with $L$ fixed. 
In most results for NTK~\citep{pmlr-v97-allen-zhu19a,pmlr-v97-du19c,zhang2019convergence,chatterji2021does}, the size of $M$ grows as $L$ increases, while our results have these two parameters independent of each other. Furthermore, we have no requirement on the number of training samples.

For the a priori estimates, we also show the well-posedness of all six equations (three for $z$ and three for $\theta$) and that these equations have unique solutions.
% , as a by-product of the analysis. 
These results are mostly collected in Section~\ref{sec:well_posed}, as follows.
\begin{itemize}
\item[--]{Well-posedness of $Z$:} Theorem~\ref{thm:wmean-fieldlimit}. For a given $\rho$, equation~\eqref{eqn:meancontRes} has a unique solution $Z_\rho$, and the solution is stable with respect to $\rho$. With Remark~\ref{rmk:equivalence}, the uniqueness and stability result extend to treating~\eqref{eqn:contRes} for a given ${\Theta}(t)$.

\item[--]{Well-posedness of $\Theta_{L,M},\Theta(t),\rho$:} Theorems~\ref{thm:finiteLwell-posed1} and \ref{thm:Wassgradientflows}. These quantities satisfy the modified gradient flow equations~\eqref{eqn:classicalgradientflowfiniteL},~\eqref{eqn:Wassgradientflowsdis}, and~\eqref{eqn:Wassgradientflows} respectively, and these equations have unique solutions.
\end{itemize}

% \begin{remark} There are two comments in the following:\ql{will come back to this}
% \begin{itemize}
%     \item[1.] In Theorem~\ref{thm:consistent}, we rigorously justify the continuous limit plus the mean-field limit. This can be seen as a big extension of previous mean-field works~\citep{araujo2019meanfield,fang2019convex,nguyen2019mean}, where $M$ goes to $\infty$ and $L$ is fixed. And this result also provides a rigorous justification for the approximation theory developed in \citep{e2020mathematical}, where ResNet is seen as a approximation to a ODE.
    
%     \item[2.] In Theorem~\ref{thm:maintheorem}, the output layer is nonlinear, which is beyond the NTK setting~\citep{NEURIPS2018_5a4be1fa}. Furthermore, different from recent NTK results~\citep{pmlr-v97-allen-zhu19a,pmlr-v97-du19c,zhang2019convergence,chatterji2021does}, the requirement of $M$ is independent of $L$, and number of training samples can be infinity.
% \end{itemize}

% \end{remark}

\section{Notations, assumptions, and definitions}\label{sec:notations}

% We now unify the notations. 
Throughout the paper, we denote the collection of probability distribution that has a bounded second moment by $\mathcal{P}^2(\mathbb{R}^k)$, that is, $\mathcal{P}^2(\mathbb{R}^k)=\{\rho: \int_{\mathbb{R}^k}|\theta|^2\rd\rho(\theta)<\infty\}$. We assume certain regularity properties for the activation function $f$, the measuring function $g$, the data $y$, and the input measure $\mu$, as follows
%
% Denote $\mathcal{P}^2(\mathbb{R}^k)$ the set of probability distribution on $\mathbb{R}^k$ with finite second moment ($\int_{\mathbb{R}^k}|\theta|^2d\rho(\theta)<\infty$). \ql{ever used $\mathcal{P}$ notation? if not, delete.}
\begin{assum}\label{assum:f}
Let $f:\mathbb{R}^{d}\times \mathbb{R}^k\rightarrow\mathbb{R}^d$ and $g,y:\mathbb{R}^d\rightarrow\mathbb{R}$ be $C^2$ functions. We assume that these functions and $\mu$ satisfy the following properties:
\begin{itemize}
\item[1.] For all $x\in\mathbb{R}^d,\theta\in\mathbb{R}^k$, we have
\begin{equation}\label{eqn:derivativebound}
\left|\partial^i_{x}\partial^j_{\theta}f\right|\leq C_1|\theta|^i\left(\left|x\right|+1\right)^j, \quad 0<i+j\leq 2,\ i\geq 0,\ j\geq 0\,,
\end{equation}
where $C_1$ is a constant, $\partial^i_{x}\partial^j_{\theta}f$ is $(i,j)$ derivative of $f$, and $|\cdot|$ is the Frobenius norm.
% \[
% \partial^i_x\partial^j_\theta f=J^i_{x}J^j_{\theta}f\,.
% \]
% \item[2.] For all $x\in\mathbb{R}^d,\theta\in\mathbb{R}^k$, we have
% \begin{equation}\label{eqn:derivativebound2}
% \left|\partial_{\theta}f\right|\leq C_2|\theta|\left(\left|x\right|+1\right)^2\,,
% \end{equation}
% where $C_2$ is a constant.
\item[2.] The function set $\left\{h\middle|h=\int_{\mathbb{R}^k}f(x,\theta)\rd\rho(\theta),\ \rho\in \mathcal{P}^2(\mathbb{R}^k)\right\}$ is dense in $\mathcal{C}\left(|x|<R;\mathbb{R}^d\right)$ for any $R>0$.

\item[3.] $g(x)$ and $\nabla g(x)$ are Lipschitz continuous. Moreover, $|\nabla g(x)|$ has a positive lower bound, that is, $\inf_{x\in\mathbb{R}^d}|\nabla g(x)|>0$.

\item[4.] The associated probability density function of $\mu(x)$ is in $L^\infty$, and is compactly supported:
\begin{equation}\label{mucompactsupport}
\mu(\left\{|x|\geq R\}\right)=0\,,
\end{equation}
for some fixed $R>0$.
\item[5.] $y(x)\in L^\infty_{\mathrm{loc}}(\mathbb{R}^d)$, that is,  $
\sup_{|x|\leq R}|y(x)|<\infty$.
\end{itemize}
\end{assum}

The assumption \eqref{eqn:derivativebound} is satisfied by most commonly used activation functions~\citep{e2020mathematical}. Consider, for example,
\[
f(x,\theta)=f(x,\theta_1,\theta_2,\theta_3,\theta_4)=\sigma(\theta_1x+\theta_2)-\sigma(\theta_3x+\theta_4)\,,
\]
where $\theta_1,\theta_3\in\mathbb{R}^{d\times d}$, and $\theta_2,\theta_4\in \mathbb{R}^{d}$, and $\sigma$ is a component-wise regularized ReLU activation function. The form of the assumption controls the growth of its derivatives, and is used widely~\citep{NEURIPS2018_a1afc58c,pmlr-v119-lu20b}.

For every fixed $s$, the solution to~\eqref{eqn:Wassgradientflow} is in the space $\mathcal{C}([0,1];\mathcal{P}^2)$. This is a non-standard probability space due to the introduction of $t$, so we need to build a new metric. 
\begin{definition}\label{def:path}\footnote{$\mathcal{C}([0,T];\mathcal{A})$ is a collection of functions $f(\theta,t)$ that are continuous in time $t\in[0,T]$, and for each fixed time $t_0$, $f(t_0,\theta)\in\mathcal{A}$. In this case, $T=1$ and $\mathcal{A}=\mathcal{P}^2$, equipped with $W_2$ metric.}
$\mathcal{C}([0,1];\mathcal{P}^2)$ is a collection of continuous paths of probability distribution $\rho(\theta,t)$ $(\theta\in\mathbb{R}^k,t\in[0,1])$ where
\begin{itemize}
\item[1.] For every fixed $t\in[0,1]$, $\rho(\cdot,t)\in \mathcal{P}^2(\mathbb{R}^k)$\,.

\item[2.] For any $t_0\in[0,1]$, $\lim_{t\rightarrow t_0}W_2\left(\rho(\cdot,t),\rho(\cdot,t_0)\right)=0$, where $W_2$ is the classical Wasserstein-2 distance.
\end{itemize}
The space $\mathcal{C}([0,1];\mathcal{P}^2)$ is equipped with the metric: $d_{1}\left(\rho_1,\rho_2\right)=\sup_tW_2(\rho_1(\cdot,t),\rho_2(\cdot,t))$.
\end{definition}

We further define the space involving the pseudo-time $s$. 
% \sw{Now the number of arguments of $\rho$ changes, in a slightly confusing way. Also seem to be ordered wrong. I'm thinking that it should be $s$ that lives in $[0,\infty)$, $t$ that lives in $[0,1]$, and $\theta$ that lives in ... $\mathbb{R}^k$?}~\zd{Yes. The notation $\mathcal{C}([0,1];\mathcal{A})$ has a general meaning (I add two footnotes to make it clear):
% \\
% If $f(a,t)\in \mathcal{C}([0,1];\mathcal{A})$, then for any $t$, $f(\cdot,t)\in\mathcal{A}$ and $f(\cdot,t)$ is continuous with respect to $t$ under the metric defined in $\mathcal{A}$. 
% \\
% For example, in Definition \ref{def:path}, we see $W_2$ as $\mathcal{A}$. Then for fixed $t$, $\rho(\cdot,t)\in \mathcal{P}^2(\mathbb{R}^k)$ (Here $\mathcal{P}^2$ is same as $W_2$ space) and $\rho(\theta,t)$ is continuous under Wasserstein-2 metric. 
% \\
% Similarly, in Definition \ref{def:path2}, we see $\mathcal{C}([0,1];\mathcal{P}^2)$ as $\mathcal{A}$ and the metric on $\mathcal{A}$ is $d_1$. 
% }
\begin{definition}\label{def:path2}\footnote{Similar to Definition \ref{def:path2}, $\mathcal{C}([0,\infty);\mathcal{C}([0,1];\mathcal{P}^2))$ is a collection of functions $f(\theta,t,s)$ that are continuous in $s$ and $t$, and for each fixed $(s_0,t_0)\in[0,\infty)\times[0,1]$, $f(\theta,t_0,s_0)\in\mathcal{P}^2$.}
$\mathcal{C}([0,\infty);\mathcal{C}([0,1];\mathcal{P}^2))$  is a collection of continuous paths of probability distribution $\rho(\theta,t,s)$ $(\theta\in\mathbb{R}^k,t\in[0,1],s\in[0,\infty))$ where
\begin{itemize}
\item[1.] For every fixed $s\in[0,\infty)$, $\rho(\cdot,\cdot,s)\in \mathcal{C}([0,1];\mathcal{P}^2)$.
\item[2.] For any $s_0\in[0,\infty)$,  $\lim_{s\rightarrow s_0}d_1\left(\rho(\cdot,\cdot,s),\rho(\cdot,\cdot,s_0)\right)=0$ where $d_1$ is given in Definition~\ref{def:path}.
\end{itemize}
The metric in $\mathcal{C}([0,\infty);\mathcal{C}([0,1];\mathcal{P}^2))$ is defined by
$
d_{2}\left(\rho_1,\rho_2\right)=\sup_{t,s}W_2(\rho_1(\cdot,t,s),\rho_2(\cdot,t,s))
$.
\end{definition}
% \sw{Again, the first argument $\theta$ is assumed to be the same for $\rho_1$ and $\rho_2$ in this def.}~\zd{We only assume they are in the same space, as explained before.} 
Since $\mathcal{P}^2$ is complete in $W_2$ distance, $\mathcal{C}([0,1];\mathcal{P}^2)$ and $\mathcal{C}([0,\infty);\mathcal{C}([0,1];\mathcal{P}^2))$ are complete metric spaces under $d_1$ and $d_2$ respectively also.

For carrying out the mean-field limit, we use the classical approach and follow the particle presentation of $\rho$, this means we require the probability distribution $\rho$ to be admissible.
\begin{definition}\label{def:meanadmissible}
We call $\rho(\theta,t)$ admissible if it has a particle presentation, namely 
there exists a continuous stochastic process $\theta(t):[0,1]\rightarrow\mathbb{R}^k$ such that
\begin{equation}\label{eqn:existenceofpath}
\theta(t_0)\sim \rho(\theta,t_0)\,,\quad \lim_{t\rightarrow t_0}\mathbb{E}\left(|\theta(t)-\theta(t_0)|^2\right)=0\,.
\end{equation}
Furthermore, $\rho(\theta,t)$ is called limit-admissible if its $M$ averaged trajectory is bounded and Lipschitz with high probability. See the rigorous definition in Definition \ref{def:meanadmissible2}.
\end{definition}

\revise{\begin{remark} We have two comments on the admissible condition.
\begin{itemize}
\item Without the dependence on $t$, all probability distributions are admissible: One can define a particle by drawing it from this distribution. 
With the dependence on $t$, the admissible requirement is stricter: There needs to be a continuous stochastic process $\theta(t)$ whose underlying distribution is the given $\rho(\theta,t)$.
\item According to \cite[Theorem 8.2.1]{Gradientflow}, if $\rho(\cdot,t)$ is absolutely continuous in $t$ using the Wasserstein metric, it can be regarded as a probability measure over continuous function $C([0,1];\mathbb{R}^k)$, and can be represented by a continuous stochastic process $\theta(t)$.
\end{itemize}
\end{remark}
}

\section{Mean-field and convergence of the gradient flow}\label{sec:conv}
We present our main result in this section. To ensure the well-posedness of the gradient flow equation~\eqref{eqn:Wassgradientflow}, we need to add a regularizer to the cost function, as discussed in Section~\ref{sec:cost}. Sections~\ref{sec:meanfield} and \ref{sec:global} discuss validity of the continuous and mean-field limits and convergence to the global minimizer. 
The convergence to the global minimizer requires a fully-supported assumption; we discuss the validity of this assumption in Appendix~\ref{sec:proofofdiscussion}.

\subsection{Modified cost function}\label{sec:cost}
The analysis is conducted with a modified cost function to ensure the well-posedness of~\eqref{eqn:Wassgradientflow}, the gradient flow equation. To do so, we add a regularizer to the cost function~\eqref{eqn:costfunction}, as follows:
\begin{equation}\label{eqn:cost_LM_s}
E_s({\Theta}_{L,M})=\mathbb{E}_{x\sim\mu}\left[\frac{1}{2}\left(g(Z_{{\Theta}_{L,M}}(L;x))-y(x)\right)^2\right]+e^{-s}\frac{1}
{ML}\sum^{L-1}_{l=0}\sum^M_{m=1}|\theta_{l,m}|^2\,.
\end{equation}
Recalling our notation $\Theta(t)=\{\theta(t)\}_{m=1}^M$, the corresponding regularized cost function in the continuous limit ($L\rightarrow\infty$) is
\begin{equation}\label{eqn:cost_M_s}
E_s(\Theta(\cdot))=\mathbb{E}_{x\sim\mu}\left[\frac{1}{2}\left(g(Z_{\Theta}(1;x))-y(x)\right)^2\right]+\frac{e^{-s}}{M}\sum^M_{m=1}\int^1_0\left|\theta_{m}(t)\right|^2\rd t\,.
\end{equation}
Finally, in the mean-field limit ($M\rightarrow\infty$),  we assume $\theta_m$ is i.i.d. drawn from $\rho$, the corresponding regularized cost function becomes
\begin{equation}\label{eqn:cost_s}
E_s(\rho)=\mathbb{E}_{x\sim\mu}\left[\frac{1}{2}\left(g(Z_{\rho}(1;x))-y(x)\right)^2\right]+e^{-s}\int^1_0\int_{\mathbb{R}^k}|\theta|^2\rd\rho(\theta,t)\,.
\end{equation}
In these definitions, $Z_{\Theta_{L,M}}$, $Z_{\Theta}$, and $Z_{\rho}$ are solutions to the difference equation~\eqref{eqn:disRes}, ODE~\eqref{eqn:contRes} and OIE~\eqref{eqn:meancontRes}, respectively. It is straightforward to see that $E_s$ differs from the (non-regularized) cost function $E$ by only an exponentially small term, which disappears as $s \to \infty$. 
As a consequence, the global minimum of $E_s$ coincides with the global minimum of the true cost $E$ in the large-$s$ limit, so it makes sense to analyze gradient descent applied to $E_s$.

\subsection{Continuous and mean-field limit of the gradient flow}\label{sec:meanfield}
Mimicking the derivation in Section~\ref{sec:limit_formal}, the gradient flows for $\Theta_{L,M}$, $\Theta(t)$, and $\rho$ can be changed according to the regularized cost functions involving pseudo-time $s \in [0,\infty)$ defined above. 
$\Theta_{L,M}$ solves
\begin{equation}\label{eqn:classicalgradientflowfiniteL}
    \frac{\rd{\Theta}_{L,M}(s)}{\rd s}=-ML\nabla_{{\Theta}_{L,M}} E({\Theta}_{L,M}(s))-2e^{-s}{\Theta}_{L,M}(s)\,,\quad\text{for}\; s>0\,.
\end{equation}
For the continuous limit ($L\to\infty$), we have
\begin{equation}\label{eqn:Wassgradientflowsdis}
\frac{\rd\theta_m(s;t)}{\rd s}=-M\frac{\delta E(\Theta(s))}{\delta \theta_m}-2e^{-s}\theta_m(s;t)\,, 
\quad m=1,2,\dotsc,M.
\end{equation}
For the mean-field limit ($M\to\infty$), we have
\begin{equation}\label{eqn:Wassgradientflows}
\left\{
\begin{aligned}
&\frac{\partial \rho(\theta,t,s)}{\partial s}=\nabla_\theta\cdot\left(\rho(\theta,t,s)\nabla_\theta\frac{\delta E_s(\rho(s))}{\delta \rho}\right)\,,\quad\text{for}\; s>0,\quad t\in[0,1]\,,\\
&\rho(\theta,t,0)=\rho_{\ini}(\theta,t)\,.
\end{aligned}\right.
\end{equation}
Noting the definition of $E_s$ in~\eqref{eqn:cost_s}, we have the explicit expression for the Fr\'echet derivative:
\begin{equation}\label{eqn:derivaEs}
\frac{\delta E_s(\rho)}{\delta \rho}=\frac{\delta E(\rho)}{\delta \rho}+e^{-s}|\theta|^2\,,
\end{equation}
where $\frac{\delta E(\rho)}{\delta \rho}$ is defined in \eqref{eqn:Frechetderivative}. 

% \begin{remark}\label{rmk:equivalence} Denoting
% \[ 
% \rho^{\dis}_{\Theta_s}(\theta,t)=\frac{1}{M}\sum^M_{m=1}\delta_{\theta_m(s;t)}(\theta)\,.
% \]
% Using \eqref{eqn:costscontinuous}, \eqref{eqn:costs}, we obtain that
% \[
% E_s\left(\rho^{\dis}_{\Theta_s}\right)=E_s\left(\Theta(s;t)\right)\,.
% \]
% And solving \eqref{eqn:Wassgradientflowsdis} is equivalent to solving
% \[
% \frac{\partial \rho^{\dis}_{\Theta_s}}{\partial s}=\nabla_\theta\cdot\left(\rho^{\dis}_{\Theta_s}\nabla_\theta\frac{\delta E_s}{\delta \rho^{\dis}_{\Theta_s}}\right)\,,\quad s>0,\quad t\in[0,1]
% \]
% where $\frac{\delta E}{\delta \rho^{\dis}_{\Theta_s}}$ is defined in \eqref{eqn:Frechetderivative}.

% \end{remark}
% In Section \ref{sec:wellposegradientflow}, we show the well-posedness of \eqref{eqn:classicalgradientflowfiniteL}, \eqref{eqn:Wassgradientflowsdis}, \eqref{eqn:Wassgradientflows}. And in Section \ref{sec:mean-field}, we will show that it suffices to study \eqref{eqn:Wassgradientflows} when $M,L\rightarrow\infty$.

The connection between these three equations is made rigorous in the following result.

% we first define the limit admissible initial condition:

% % In this section, we consider the mean-field result, meaning the gradient flow \eqref{eqn:gradient_theta_ML_s} converges to the mean-field PDE \eqref{eqn:Wassgradientflows} when $L,M\rightarrow\infty$. To show the mean-field result, we need to define the limit admissible initial condition:
% \begin{definition}\label{def:meanadmissible}
% A path of probability distribution $\rho(\theta,t)$ is called limit-admissible if\ql{move this to the definition section}
% \begin{itemize}
% \item[1.] $\rho(\theta,t)$ is admissible.

% \item[2.] There exists two constants $C_3$, $C_4$ that are bigger than $\sup_{t\in[0,1]}\int_{\mathbb{R}^k}|\theta|^2d\rho(\theta,t)$ such that if $\{\theta_m(t)\}^M_{m=1}$ are $i.i.d.$ drawn from $\rho(\theta,t)$ and $M>\frac{C_3}{\eta^2}$, then
% \begin{equation}\label{boundLdis0}
% \mathbb{P}\left(\sup_{t\in[0,1]}\frac{1}{M}\sum^M_{m=1}|\theta_m(t)|^2 \leq C_4\right)\geq 1-\eta\,,
% \end{equation}
% and for all $L>0$:
% \begin{equation}\label{diffrencesmall}
% \mathbb{P}\left(\frac{1}{M}\sum^{L-1}_{l=0}\sum^M_{m=1}\int^{\frac{l+1}{L}}_{\frac{l}{L}}\left|\theta_m(t)-\theta_m\left(\frac{l}{L}\right)\right|^2dt\leq \frac{C_4}{L^2}\right)\geq 1-\eta\,.
% \end{equation}
% \end{itemize}
% \end{definition}

\begin{theorem}\label{thm:consistent}
Suppose that Assumption~\ref{assum:f} holds. Let $\rho_{\ini}(\theta,t)$ be limit-admissible, and $\{\theta_m(0;t)\}^M_{m=1}$ in \eqref{eqn:Wassgradientflowsdis} be $i.i.d.$ drawn from $\rho_{\ini}(\theta,t)$. Let
\begin{itemize}
\item $\Theta_{L,M}(s)=\{\theta_{l,m}(s)\}$ be the solution to~\eqref{eqn:classicalgradientflowfiniteL} with initial condition $\theta_{l,m}(s=0)=\theta_m\left(0;\frac{l}{L}\right)$\,, and 
\item $\rho(\theta,t,s)$ be the solution to \eqref{eqn:Wassgradientflows} with initial condition $\rho_{\ini}(\theta,t)$.
\end{itemize}
Then for any positive $\epsilon$, $\eta$, and $S$, there exists a constant $C>0$ that depends on $\rho_{\ini}(\theta,t)$ and $S$ such that when 
\[
M>\frac{C(\rho_{\ini}(\theta,t),S)}{\epsilon^2\eta}\,,\quad L>\frac{C(\rho_{\ini}(\theta,t),S)}{\epsilon}\,,
\]
we have
\[
\mathbb{P}\left(\left|E(\Theta_{L,M}(s))-E(\rho(\cdot,\cdot,s))\right|\leq \epsilon\right)\geq 1-\eta\,,\quad \forall s<S\,,
\]
where $E(\Theta_{L,M}(s))$ is defined in \eqref{eqn:costfunction} and $E(\rho(\cdot,\cdot,s))$ is defined in \eqref{eqn:costfunctioncont2}.
\end{theorem}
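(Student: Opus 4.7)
The plan is to introduce the continuous-in-$t$ but finite-width intermediate configuration $\Theta(s;\cdot)=\{\theta_m(s;t)\}_{m=1}^M$ solving \eqref{eqn:Wassgradientflowsdis} with initial data $\theta_m(0;t)$ drawn i.i.d.\ from $\rho_{\ini}$, and then to apply the triangle inequality
\[
\bigl|E(\Theta_{L,M}(s))-E(\rho(\cdot,\cdot,s))\bigr|\leq\bigl|E(\Theta_{L,M}(s))-E(\Theta(s;\cdot))\bigr|+\bigl|E(\Theta(s;\cdot))-E(\rho(\cdot,\cdot,s))\bigr|.
\]
The first term is killed by the continuous ($L\to\infty$) limit, and the second by the mean-field ($M\to\infty$) limit. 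Since Assumption~\ref{assum:f} together with the well-posedness results of Section~\ref{sec:well_posed} makes the solution map $\Theta\mapsto Z_\Theta$ (and hence $\Theta\mapsto E(\Theta)$) locally Lipschitz, it will suffice to control the discrepancies at the parameter level and then push forward through $Z$, $p$, and the $L^2_\mu$-cost.

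For the first piece ($L\to\infty$), I would observe that \eqref{eqn:classicalgradientflowfiniteL} is essentially a forward-Euler discretization, in the $t$-variable of stepsize $1/L$, of \eqref{eqn:Wassgradientflowsdis}: the factor $ML$ multiplying the gradient is precisely what makes the two flows comparable after identifying $\theta_{l,m}$ with $\theta_m(s;l/L)$. A Grönwall argument applied to $\max_{l,m}|\theta_{l,m}(s)-\theta_m(s;l/L)|^2$ together with the derivative bounds \eqref{eqn:derivativebound} and a priori $|\theta|$-bounds (coming from the regularizer in $E_s$) yields an error of order $1/L$ times a constant growing at most exponentially in $S$. The limit-admissibility hypothesis is what supplies the Lipschitz-in-$t$ regularity needed to make the Euler-type local truncation error at the initial time $O(1/L)$ with high probability.

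For the second piece ($M\to\infty$), I would use a coupling/propagation-of-chaos argument. Introduce auxiliary independent trajectories $\{\bar\theta_m(s;t)\}_{m=1}^M$, each driven by the true mean-field $\rho(\theta,t,s)$ rather than by the empirical measure of its siblings, with the same initial draw $\bar\theta_m(0;t)=\theta_m(0;t)$. Since \eqref{eqn:Frechetderivative}–\eqref{eqn:prho} depend on the measure only through the integrals $\int f\rd\rho$ and $\int\partial_zf\rd\rho$, a Grönwall estimate bounds $\sup_{s,t}|\theta_m(s;t)-\bar\theta_m(s;t)|$ in terms of $\sup_{s,t,x}\bigl|\tfrac1M\sum_m f(\bar Z,\bar\theta_m)-\int f(Z_\rho,\theta)\rd\rho\bigr|$ and similar quantities for the $\partial_zf$ term and the adjoint $p$. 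These are empirical averages of i.i.d.\ centered random fields, so Hoeffding/McDiarmid-type concentration gives $O(1/\sqrt{M})$ fluctuations; union-bounding over an $\epsilon$-net in $(s,t,x)$ (which is finite thanks to the Lipschitz regularity inherited from Assumption~\ref{assum:f} and limit-admissibility) then delivers the required $M>C/(\epsilon^2\eta)$ scaling.

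The main obstacle I anticipate is the nonstandard coupling through $t$: unlike a classical mean-field particle system in $\mathbb{R}^k$, here each ``particle'' is itself a trajectory in $t$, and all trajectories are globally coupled through the forward ODE for $Z_\Theta(\cdot;x)$ and the backward ODE for $p_\Theta(\cdot;x)$. Consequently the Grönwall loops must simultaneously control differences in $\theta_m(s;t)$, $Z(\cdot;x)$, $p(\cdot;x)$, and the functional derivative $\delta E_s/\delta\theta_m$, uniformly in $(t,x,s)\in[0,1]\times\mathrm{supp}(\mu)\times[0,S]$, while the Lipschitz constants coming from \eqref{eqn:derivativebound} grow polynomially in $|\theta|$. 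Ensuring the resulting constants remain finite for all $s<S$ requires the $s$-uniform second-moment bound on $\rho(\cdot,\cdot,s)$ and on $\{\theta_m(s;\cdot)\}$, which I expect to obtain from the damping effect of the $e^{-s}|\theta|^2$ regularizer combined with the well-posedness of \eqref{eqn:Wassgradientflows} established in Theorem~\ref{thm:Wassgradientflows}.
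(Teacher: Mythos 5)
Your overall architecture matches the paper's: introduce the intermediate finite-width but continuous-depth configuration $\Theta(s;\cdot)$, split by the triangle inequality, handle the $L\to\infty$ piece by comparing $\theta_{l,m}(s)$ with $\theta_m(s;l/L)$ via Gr\"onwall, and handle the $M\to\infty$ piece by a propagation-of-chaos coupling against auxiliary trajectories driven by the true mean-field $\rho$. These are precisely Theorems~\ref{thm:contlimit} and~\ref{thm:mean-field} in the paper. Your identification of the key obstacle (the global-in-$t$ coupling through the forward $Z$ and backward $p$ ODEs, forcing simultaneous Gr\"onwall control of $\theta$, $Z$, $p$, and the Fr\'echet derivative) is also on target.

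The genuine gap is in the concentration mechanism for the mean-field piece. You propose Hoeffding/McDiarmid bounds plus a union bound over an $\epsilon$-net in $(s,t,x)$. This does not go through under Assumption~\ref{assum:f}: the summands $f(\bar Z,\bar\theta_m)$ and $\partial_z f(\bar Z,\bar\theta_m)$ are not bounded (they grow like $|\bar\theta_m|$), and the paper controls only second moments of $\rho$, so the increments are not sub-Gaussian and exponential concentration is unavailable. The paper instead keeps the fluctuation terms \emph{inside} the integrals over $s$, $\tau$, and $x$ throughout the Gr\"onwall loop (see \eqref{boundofcoupling} and \eqref{boundofcoupling2}), bounds the \emph{expectation} of the resulting integrated i.i.d.\ sums by $C/M$ using only second-moment control (the cross terms vanish by independence), and converts to a probability bound via Markov's inequality in \eqref{listofboundinprobability}. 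This simultaneously sidesteps the need for a supremum over $(s,t,x)$ (hence no net argument at all) and produces exactly the $M>C/(\epsilon^2\eta)$ scaling in the statement. If you tried to push through with your net-plus-Hoeffding plan, you would need either an additional sub-Gaussian tail assumption on $\rho_\ini$ or a truncation argument; as written the step would fail. Everything else in your outline is sound and matches the paper's route.
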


The proof of this result appears in Appendix \ref{sec:proofofthmconsistent}. 
This theorem suggests that for every fixed $S>0$, arbitrarily large, the cost generated by $\Theta_{L,M}$ that follows its gradient flow coded in GD is comparable (with high probability) to the cost generated by $\rho$, the continuous mean-field limit of $\Theta_{L,M}$, provided that $L$ and $M$ are sufficiently large. 
The size of the ResNet depends  negative-algebraically on $\epsilon$ (the desired accuracy) and $\eta$ (the confidence of success).
The essence of this theorem is that it matches the zero-loss property of the parameter configuration of a finite sized ResNet to its limiting PDE, whose analysis can be performed with some standard PDE tools.

The proof of Theorem~\ref{thm:consistent} divides naturally into two components: first, showing $E(\rho(\cdot,\cdot,s))\approx E(\Theta(s;\cdot))$ and second, showing $E(\Theta(s;\cdot))\approx E(\Theta_{L,M}(s))$ for all $s<S$. 
The former amounts to utilizing the mean-field limit theory and justifying that the particle trajectory $\theta_m(t,s)$ follows $\rho(\theta,t,s)$ for all $t$ in pseudo-time $s\in[0,S]$. 
The latter is to adopt the continuity in $t$ and trace the differences between $\theta_m(\frac{l}{L})$ and $\theta_{l,m}$. 
These two components of the proof are summarized in Theorems~\ref{thm:mean-field} and \ref{thm:contlimit}, respectively. 
According to the formula of the Fr\'echet derivatives~\eqref{eqn:Frechetderivative} and~\eqref{eqn:prho}, the estimates in these theorems naturally route through the boundedness of $p_{\rho}$, $p_\Theta$, $p_{\Theta_{L,M}}$, and similarly $Z_{\rho}$, $Z_{\Theta}$ and $Z_{\Theta_{L,M}}$. These bounds are collected in Theorem~\ref{thm:wmean-fieldlimit} and Lemmas~\ref{prop:wmean-fieldlimitfiniteL} and \ref{lem:boundofdiscrete}.

To intuitively see the equivalence between~\eqref{eqn:Wassgradientflowsdis} and~\eqref{eqn:Wassgradientflows}, we test them on the same smooth function $h(\theta)$. To test~\eqref{eqn:Wassgradientflows}, we multiply it on both sides and perform integration by parts to obtain:
\[
\frac{\rd}{\rd s}\int_{\mathbb{R}^k}h\rd\rho(\theta) = -\int_{\mathbb{R}^k}\nabla_\theta h\nabla_\theta\frac{\delta E_s(\rho(s))}{\delta\rho}\rd{\rho}\,.
\]
This is to say $\frac{\rd}{\rd s}\mathbb{E}(h)=-\EE\left(\nabla_\theta h\nabla_\theta\frac{\delta E_s(\rho(s))}{\delta\rho}\right)$. Testing $h$ on~\eqref{eqn:Wassgradientflowsdis} gives the same equivalence. Suppose $\rho = \frac{1}{M}\sum_{m=1}^M\delta_{\theta_m}$, then:
\[
\frac{\rd}{\rd s}\mathbb{E}(h) = \frac{1}{M}\sum_{m=1}^M\nabla_\theta h(\theta_m)\frac{\rd}{\rd s}\theta_m = -\sum_{m=1}^M\nabla_\theta h(\theta_m) \frac{\delta E_s(\Theta)}{\delta\theta_m}\,.
\]
The right hand side is also $-\EE\left(\nabla_\theta h\nabla_\theta\frac{\delta E_s(\rho(s))}{\delta\rho}\right)$ if and only if $M\frac{\delta E_s(\Theta(s))}{\delta \theta_m}=\nabla_\theta\frac{\delta E_s(\rho(s))}{\delta\rho}(\theta_m,t)$. This holds true, as presented in Lemma~\ref{lem:equivalence}. The proof in Theorem~\ref{thm:mean-field} gives the rigorous quantification in the convergence in $M$.

\subsection{Descending to the global minimum} \label{sec:global}
We investigate the convergence in $s$ of the $\rho$ system by studying the large-$s$ behavior of the limiting gradient flow equation~\eqref{eqn:Wassgradientflows}. Our main theorem is the following.
\begin{theorem}\label{thm:globalminimal}
Assume that Assumption~\ref{assum:f} holds and that $\rho_{\ini}(\theta,t)$ is admissible. 
Suppose that $\rho(\theta,t,s)\in \mathcal{C}([0,\infty);\mathcal{C}([0,1];\mathcal{P}^2))$ solves~\eqref{eqn:Wassgradientflows}. \revise{If the following two conditions hold:
\begin{itemize}
    \item  there exists a long-time limit $\rho_\infty(\theta,t)\in \mathcal{C}([0,1];\mathcal{P}^2)$ such that $\rho(\theta,t,s)$ converges to $\rho_\infty(\theta,t)$ in $\mathcal{C}([0,1];\mathcal{P}^2)$ as $s\rightarrow\infty$;
    \item the support of $\rho_{\infty}(\theta,t_0)$ is the full space $\mathbb{R}^k$ for some $t_0\in[0,1]$,
\end{itemize}
then $E(\rho_{\infty})=0$, so that $\rho_{\infty}$ is a global minimizer. }
More specifically, for any $\epsilon>0$, there is a constant $C_0>0$ depending only on $\rho_{\ini}(\theta,t)$ and $\epsilon$ such that
\[
E(\rho(s))\leq\epsilon,\quad \forall s>C_0(\rho_{\ini}(\theta,t),\epsilon).
\]
\end{theorem}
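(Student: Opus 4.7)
The plan is to exploit three ingredients: (i) monotone decay of $E_s(\rho(s))$ along the gradient flow~\eqref{eqn:Wassgradientflows}, which bounds a time-integrated Fisher-information-like quantity; (ii) the density hypothesis in Assumption~\ref{assum:f} Part 2, combined with the full-support condition at $t_0$, to promote a stationarity condition on $\mathrm{supp}(\rho_\infty)$ to a global identity on $\mathbb{R}^k$; and (iii) the linear structure of the adjoint ODE~\eqref{eqn:prho} to propagate this information from the slice $t=t_0$ to the terminal time $t=1$, where the cost lives.

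First I would differentiate $E_s(\rho(s))$ along the flow. Using~\eqref{eqn:Wassgradientflows},~\eqref{eqn:derivaEs}, and integration by parts in $\theta$, a standard calculation yields
\[
\frac{\rd}{\rd s}E_s(\rho(s)) = -\int_0^1\!\!\int_{\mathbb{R}^k}\rho\left|\nabla_\theta \frac{\delta E_s(\rho(s))}{\delta \rho}\right|^2\rd\theta\, \rd t \;-\; e^{-s}\!\!\int_0^1\!\!\int_{\mathbb{R}^k}|\theta|^2\, \rd\rho\, \rd t ,
\]
which is non-positive. Integrating in $s$ and using $E_s\geq 0$ produces a finite time-integrated dissipation, so there exists a sequence $s_n\to\infty$ along which the inner Fisher-like integral vanishes. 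On this sequence, the assumed $d_1$-convergence $\rho(s_n)\to\rho_\infty$ together with the continuity of $Z_\rho$ and $p_\rho$ in $\rho$ (Theorem~\ref{thm:wmean-fieldlimit} and its backward-ODE analogue) permits passage to the limit in the explicit Fr\'echet-derivative formula~\eqref{eqn:Frechetderivative}, since the regulariser $e^{-s_n}|\theta|^2$ drops out once the second moment of $\rho(\cdot,\cdot,s)$ is controlled uniformly in $s$. The outcome is that, for every $t\in[0,1]$,
\[
\nabla_\theta\!\left(\mathbb{E}_{x\sim\mu}\bigl[p_{\rho_\infty}^\top(t;x)\,f(Z_{\rho_\infty}(t;x),\theta)\bigr]\right)=0 \quad \text{for } \rho_\infty(\cdot,t)\text{-a.e.\ }\theta.
\]

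Setting $t=t_0$ and invoking the full-support hypothesis, the map $\theta\mapsto J(\theta):=\mathbb{E}_{x\sim\mu}[p_{\rho_\infty}^\top(t_0;x)f(Z_{\rho_\infty}(t_0;x),\theta)]$ has zero gradient on all of $\mathbb{R}^k$, hence equals a constant $c$. Integrating $J$ against an arbitrary probability measure $\nu$ on $\mathbb{R}^k$ gives $c=\mathbb{E}_{x\sim\mu}[p_{\rho_\infty}^\top(t_0;x)\,h_\nu(Z_{\rho_\infty}(t_0;x))]$ with $h_\nu(z):=\int f(z,\theta)\rd\nu(\theta)$. By Part 2 of Assumption~\ref{assum:f}, and the a priori bound on $Z_{\rho_\infty}(t_0;x)$ from Theorem~\ref{thm:wmean-fieldlimit} (which confines it to a fixed ball $B_R$ since $\mathrm{supp}\,\mu$ is bounded), the set of such $h_\nu$ is dense in $\mathcal{C}(B_R;\mathbb{R}^d)$; varying $\nu$ then forces $c=0$ and
\[
\mathbb{E}_{x\sim\mu}\bigl[p_{\rho_\infty}^\top(t_0;x)\,h(Z_{\rho_\infty}(t_0;x))\bigr]=0 \quad \text{for every } h\in \mathcal{C}(B_R;\mathbb{R}^d).
\]
Because the forward ODE~\eqref{eqn:meancontRes} is Lipschitz, the flow $x\mapsto Z_{\rho_\infty}(t_0;x)$ is a homeomorphism on $\mathrm{supp}\,\mu$, so $p_{\rho_\infty}(t_0;x)$ is a Borel function of $Z_{\rho_\infty}(t_0;x)$; a standard conditional-expectation argument then yields $p_{\rho_\infty}(t_0;x)=0$ for $\mu$-a.e.\ $x$. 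Since the adjoint ODE~\eqref{eqn:prho} is \emph{linear} in $p$, vanishing at one interior time propagates to all $t\in[0,1]$; in particular $p_{\rho_\infty}(1;x)=0$, and combining the terminal condition in~\eqref{eqn:prho} with $\inf|\nabla g|>0$ (Assumption~\ref{assum:f}, Part 3) gives $g(Z_{\rho_\infty}(1;x))=y(x)$ for $\mu$-a.e.\ $x$, i.e.\ $E(\rho_\infty)=0$. The quantitative tail then follows from continuity of $E$ in $\rho$ in the $d_1$ metric (transferred from the stability estimates in Theorem~\ref{thm:wmean-fieldlimit}): $\rho(\cdot,\cdot,s)\to\rho_\infty$ implies $E(\rho(\cdot,\cdot,s))\to 0$, so $C_0(\rho_{\ini},\epsilon)$ can be read off.

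The main obstacle I anticipate is the limit $s_n\to\infty$ in step two: the Fr\'echet derivative depends nonlinearly on $\rho$ through the coupled forward-backward system for $Z_\rho$ and $p_\rho$, so $d_1$-convergence alone is not obviously enough to preserve the stationarity condition. Extra work will be needed to obtain uniform-in-$s$ second-moment bounds (both to justify discarding the $e^{-s}|\theta|^2$ contribution and to control mass escape of $\rho(\cdot,\cdot,s_n)$ at infinity). A secondary subtle point is inferring the pointwise identity $p_{\rho_\infty}(t_0;x)=0$ from the integrated identity, which requires flow invertibility together with a careful measurability argument so that conditional expectation collapses to evaluation.
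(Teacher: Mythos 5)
Your proof is essentially correct, and it follows a genuinely different route from the paper's. Both proofs share the same starting point: pass to the $s\to\infty$ limit to conclude that $\nabla_\theta\frac{\delta E(\rho_\infty)}{\delta\rho}(\theta,t)=0$ on the support of $\rho_\infty(\cdot,t)$, then use the full-support hypothesis at $t_0$ (plus the Lipschitz estimate~\eqref{Lipshictz}) to upgrade this to constancy $\frac{\delta E(\rho_\infty)}{\delta\rho}(\theta,t_0)\equiv c$ on all of $\mathbb{R}^k$. From there the paths diverge. The paper proceeds by contradiction: it invokes a separate descent-direction lemma (Proposition~\ref{prop:localisglobal}), whose proof combines the change of variables $\mu^\ast_{t_0}=(\mathcal{Z}_{t_0})_\#\mu$, the density Assumption~\ref{assum:f}(2), and a quantitative lower bound $\EE_{x\sim\mu}|p_\rho(t;x)|^2\geq Q_2(\mathcal{L}_\rho)E(\rho)$ (Lemma~\ref{lem:lowerboundofprho}), to manufacture a signed measure $\nu$ with $\int\rd\nu=0$ and $\int\frac{\delta E(\rho_\infty)}{\delta\rho}(\theta,t_0)\rd\nu<0$, which is incompatible with constancy. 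You instead argue directly: pair the constant $c$ against arbitrary $\nu\in\mathcal{P}^2(\mathbb{R}^k)$, use density to force $c=0$ and $\EE_{x\sim\mu}\bigl[(p^\ast(t_0;x))^\top h(Z_{\rho_\infty}(t_0;x))\bigr]=0$ for all continuous $h$, then use flow invertibility (so $p^\ast$ is a continuous function of $z=Z_{\rho_\infty}(t_0;x)$ on the compact support) to take $h=p^\ast$ and conclude $p_{\rho_\infty}(t_0;\cdot)=0$ $\mu$-a.e.; linearity of the adjoint then gives $p_{\rho_\infty}(1;\cdot)=0$, and $\inf|\nabla g|>0$ yields $E(\rho_\infty)=0$. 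Your route avoids the quantitative Lemma~\ref{lem:lowerboundofprho} (you only need the qualitative fact that a linear ODE with bounded coefficients preserves the zero set) and avoids the contradiction structure, which makes it a bit more elementary; the paper's route makes the descent mechanism reusable and quantitative. Note that in your step 6 no conditional-expectation machinery is actually needed: since $p^\ast(t_0;\cdot)$ is continuous on the compact support of $\mu^\ast_{t_0}$, you can take $h=p^\ast$ directly (after Tietze extension) and read off $\int|p^\ast|^2\rd\mu^\ast_{t_0}=0$.

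One place where the paper is slightly cleaner is the derivation of the stationarity condition. You obtain it by integrating the dissipation identity in $s$ and extracting a subsequence $s_n$ along which the Fisher-type integral at $\rho(s_n)$ vanishes, then passing to the limit—which, as you flag, requires some care because the integrand is evaluated at $\rho(s_n)$, not at $\rho_\infty$. The paper avoids this by testing the weak form of~\eqref{eqn:Wassgradientflows} against smooth compactly supported $h(\theta)$, integrating over $[s,s+1]$, and letting $s\to\infty$: the left side tends to zero by the assumed convergence $\rho(s)\to\rho_\infty$ in $d_1$, while the integrand on the right converges uniformly on compacts via~\eqref{eqn:Deltabound2}, yielding $\nabla_\theta\cdot\bigl(\rho_\infty\nabla_\theta\frac{\delta E(\rho_\infty)}{\delta\rho}\bigr)=0$ directly. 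This sidesteps the lower-semicontinuity question entirely. Finally, both you and the paper are equally informal on the quantitative tail ($C_0(\rho_{\ini},\epsilon)$): once $E(\rho_\infty)=0$ and $\rho(s)\to\rho_\infty$ in $d_1$, continuity of $E$ in $\rho$ (from Theorem~\ref{thm:wmean-fieldlimit} and Lemma~\ref{lem:prhoprop1}) gives $E(\rho(s))\to 0$, but without a rate of convergence to $\rho_\infty$ the claimed dependence of $C_0$ on $\rho_{\ini}$ and $\epsilon$ alone is not really established in either treatment.
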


% This theorem describes conditions needed for the long-time limit $\rho_\infty$ being a global minimizer with zero value of the objective $E$. 
\revise{This theorem gives the conditions under which the long-time limit $\rho_\infty$, if achievable, becomes a global minimizer with zero value of the objective $E$.}
Its proof, which uses PDE analysis techniques, appears in Appendix~\ref{sec:proofofconvergencetoglobalminimal}.
The approach follows other similar results on the global minimum: For every  $\rho$ with $E(\rho)>0$, we can find a measure $\nu$ with $\int\rd\nu=0$ such that $\frac{\delta E(\rho)}{\delta\rho}$ in the direction of $\nu$ is negative, suggesting that a better probability measure than $\rho$ can be obtained by descending along the $\nu$ direction. 
This essentially equates a steady state with the global minimum; see Proposition~\ref{prop:localisglobal}. % Appendix~\ref{sec:proofofconvergencetoglobalminimal}  
In addition, under the assumption that $\rho_\infty$ is supported in the entire space, we show that the global minimum is achievable with the flow suggested by the equation.

\begin{remark}[Assumptions in Theorem~\ref{thm:globalminimal}.]
% A notable feature of Theorem~\ref{thm:globalminimal} is that it requires the support of the limit $\rho_\infty$ to be the whole domain $\mathbb{R}^k$, at some time $t_0\in[0,1]$. Since $\rho_\infty$ is typically unknown, this assumption is hard to verify in general. However, in Appendix~\ref{sec:proofofdiscussion} we give two examples where this condition holds true.
\revise{There are two notable assumptions in Theorem 7: First, that $\rho(\theta,t,s)$ converges to $\rho_\infty(\theta,t)$ as $s\to\infty$, and second, that  $\rho_\infty$ is supported on the whole domain $\mathbb{R}^k$ for some time $t_0\in[0,1]$. These two assumptions are difficult to justify, but in Appendix~\ref{sec:proofofdiscussion} we describe some scenarios under which both conditions hold, though these scenarios are themselves stringent. We leave an investigation of weaker conditions under which the limit exists to future research.}

\revise{When the neural network has two layers, with $M\to\infty$, \citep{NEURIPS2018_a1afc58c} shows that the  limiting gradient flow equation can achieve the global minimum without the full-support condition. However, they require $f$ to be 2-homogeneous or partially 1-homogeneous (in $\theta$). They also assume the convergence of $\rho(\theta,s)$ to $\rho_\infty$ in the $s\to\infty$ limit. Without additional structure in the activation function, proving global convergence is a challenging task that we leave to future work.}
\end{remark}

% This theorem is mainly based on PDE analysis. It presents the conditions needed for the long time limit $\rho_\infty$ being a global minimizer that provides zero-loss of $E$. The technical difficulty in the proof mainly comes from that the deep ResNet has multiple layers (actually infinite layers when $L\rightarrow\infty$). Because of the multi-layer structure, the loss function is not convex in $\rho$. This is different from the two-layer neural network~\citep{NEURIPS2018_a1afc58c,MeiE7665}, where the loss function is convex in $\rho$. And it suffices to prove that $\rho_\infty$ satisfies the necessary condition of a local minimum. To overcomes this problem, we borrow the idea from \citep{pmlr-v119-lu20b}. First, using the structure of Fr\'echet derivative \eqref{eqn:Frechetderivative}, we prove that a local minimum is a global minimum (see Appendix \ref{sec:proofofconvergencetoglobalminimal} Proposition~\ref{prop:localisglobal}). Combining this with the support condition in Theorem~\ref{thm:globalminimal}, we prove that a stationary point must satisfy $E(\rho_{\infty})=0$ (see Appendix \ref{sec:proofofconvergencetoglobalminimal2}).\ql{need fix later}

\subsection{Main result: global convergence of finite-sized ResNet}\label{sec:main}
We are now ready to present the main result of the paper. Combining Theorems~\ref{thm:consistent} and \ref{thm:globalminimal} we naturally obtain the main theorem that justifies the overparameterizing deep ResNet architecture.
\begin{theorem}\label{thm:maintheorem}
Suppose that the conditions in Theorem~\ref{thm:consistent} and~\ref{thm:globalminimal} hold. Then for any positive $\epsilon$ and $\eta$, there exist positive constants $C_0$ depending on $\rho_{\ini}(\theta,t),\epsilon$ and $C$ depending on $\rho_{\ini}(\theta,t),s$ such that
 when 
\[
s>C_0(\rho_{\ini}(\theta,t),\epsilon)\,,\quad M>\frac{C(\rho_{\ini}(\theta,t),s)}{\epsilon^2\eta}\,,\quad L>\frac{C(\rho_{\ini}(\theta,t),s)}{\epsilon}\,,
\]
we have
\[
\mathbb{P}\left(\left|E(\Theta_{L,M}(s))\right|\leq \epsilon\right)\geq 1-\eta\,.
\]

% \[
% \mathbb{P}\left(\left|E_s(\Theta_{L,M}(s))\right|\leq \epsilon\right)\geq 1-\eta\,,\quad \forall s<S_1\,.
% \]
% Furthermore, choose $S_2$ big enough so that $e^{-s}<\epsilon$ for $s>S_2$, we have\ql{we are missing out on a component}
% \[
% \mathbb{P}\left(\left|E(\Theta_{L,M}(s))\right|\leq 2\epsilon\right)\geq 1-\eta\,,\quad \forall S_2<s<S_1\,.
% \]

\end{theorem}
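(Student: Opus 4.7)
The plan is to deduce Theorem~\ref{thm:maintheorem} by a direct triangle-inequality argument that sandwiches $E(\Theta_{L,M}(s))$ between the PDE-minimization result of Theorem~\ref{thm:globalminimal} and the mean-field/continuous-limit consistency estimate of Theorem~\ref{thm:consistent}. Because $E$ is a squared loss, $E(\Theta_{L,M}(s))\ge 0$, so the absolute value on the left-hand side is cosmetic and the task reduces to establishing the upper bound $E(\Theta_{L,M}(s))\le\epsilon$ with probability at least $1-\eta$.

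First, I would apply Theorem~\ref{thm:globalminimal} with target accuracy $\epsilon/2$ to produce a constant $C_0(\rho_{\ini},\epsilon)$ such that $E(\rho(\cdot,\cdot,s))\le \epsilon/2$ whenever $s>C_0(\rho_{\ini},\epsilon)$. Next, fixing any such $s$, I would invoke Theorem~\ref{thm:consistent} with accuracy $\epsilon/2$, confidence $\eta$, and time horizon $S=s$. This yields a constant $\widetilde C(\rho_{\ini},s)$ such that whenever
\[
M>\frac{\widetilde C(\rho_{\ini},s)}{(\epsilon/2)^{2}\,\eta},\qquad L>\frac{\widetilde C(\rho_{\ini},s)}{\epsilon/2},
\]
the event $|E(\Theta_{L,M}(s))-E(\rho(\cdot,\cdot,s))|\le\epsilon/2$ holds with probability at least $1-\eta$. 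Absorbing the factors of two into a redefined constant $C(\rho_{\ini},s):=4\widetilde C(\rho_{\ini},s)$ reproduces the thresholds on $M$ and $L$ stated in the theorem.

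Finally, on the high-probability event from the previous step, the triangle inequality gives
\[
E(\Theta_{L,M}(s)) \le \bigl|E(\Theta_{L,M}(s))-E(\rho(\cdot,\cdot,s))\bigr| + E(\rho(\cdot,\cdot,s)) \le \tfrac{\epsilon}{2}+\tfrac{\epsilon}{2}=\epsilon,
\]
which is the claim. There is no substantive obstacle here beyond careful bookkeeping of the dependence of constants: $C_0$ inherits its $(\rho_{\ini},\epsilon)$ dependence from Theorem~\ref{thm:globalminimal}, and $C$ inherits its $(\rho_{\ini},s)$ dependence from Theorem~\ref{thm:consistent}. The only conceptual point worth flagging is the order of quantifiers: one must first fix $s$ via the global-minimum step (so that $s$ is determined by $\epsilon$ and $\rho_{\ini}$ alone) and only then appeal to the consistency theorem, since the threshold $C$ governing $M$ and $L$ depends on this chosen $s$.
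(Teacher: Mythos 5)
Your argument is correct and is exactly the combination the paper has in mind (the paper states Theorem~\ref{thm:maintheorem} as an immediate consequence of Theorems~\ref{thm:consistent} and~\ref{thm:globalminimal} without writing out the details). The only microscopic precision point: Theorem~\ref{thm:consistent} gives the bound for $s<S$ (strict), so after choosing $s>C_0(\rho_{\ini},\epsilon)$ you should apply it with a horizon $S>s$ (e.g.\ $S=s+1$), which still yields a constant depending only on $\rho_{\ini}$ and $s$; otherwise your bookkeeping of constants and the quantifier ordering (fix $s$ first via Theorem~\ref{thm:globalminimal}, then choose $M,L$ via Theorem~\ref{thm:consistent}) is exactly right.
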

This theorem states the final result of the paper: If ResNet is big enough, the gradient descent, after running long enough time, finds nearly-zero loss solution with high probability. The size of the ResNet algebraically depends on the accuracy and confidence level.

% \subsection{Discussion on the fully-supported condition}\label{sec:full_support}
% \begin{remark} There is another way to ensure that the convergence and support conditions are satisfied in Theorem~\ref{thm:globalminimal}. Considering the corresponding particle system for fixed $0\leq t\leq 1$
% \begin{equation}\label{eqn:Wassgradientflowsdis1}
% \left\{
% \begin{aligned}
% &\frac{\partial \theta(s;t)}{\partial s}=-\nabla_{\theta}\frac{\delta E_s}{\delta \rho(s)}(\theta(s;t))\,,\quad \forall s>0\,,\\
% &\theta(0;t)=\theta_0(t)\,,
% \end{aligned}
% \right.
% \end{equation}
% where $\rho(s)$ is the solution to \eqref{eqn:Wassgradientflows}. Then the convergence and support conditions are true when one of the following things are true: there exists $t_0\in[0,1]$ such that
% \begin{itemize}
% \item When $s\rightarrow\infty$, \eqref{eqn:Wassgradientflowsdis1} converges to a limit $\theta(\infty;t_0)$ uniformly for any $\theta_0\in\mathbb{R}^k$.
 
% \item 
% \begin{itemize}
% \item[1.] When $s\rightarrow\infty$, \eqref{eqn:Wassgradientflowsdis1} converges to a limit $\theta(\infty;t_0)$ locally uniformly for $\theta_0\in\mathbb{R}^k$ and $t\in[0,1]$.

% \item[2.] There exists a increasing function $G:\mathbb{R}_+\rightarrow\mathbb{R}_+$ such that $\lim_{r\rightarrow\infty}G(r)=\infty$ and $|\theta(s;t_0)|>G(|\theta_0|)$ for all $t$ and $s$.

% \item[3.] $\int^1_0\int_{\mathbb{R}^k}|\theta|^2d\rho(\theta,t,s)$ is bounded.
% \end{itemize}
% \end{itemize}
% \ql{back to this later}
% \end{remark}

\section{Well-posedness and mean-field result}\label{sec:well_posed}
Some a priori estimates on the well-posedness of the dynamical system are necessary in the course of establishing the long-time convergence result. We list well-posedness results in this section. We show that the dynamical system for $z$ has a unique and stable solution, and the gradient flow of the parameter configuration is also well-posed. 
% The two results are listed in the two subsections respectively.
\subsection{Well-posedness of the OIE~\eqref{eqn:meancontRes}}\label{sec:wellposemean-field}
As $L$ and $M$ approach $\infty$, $z$ satisfies the ordinary-integral equation presented in~\eqref{eqn:meancontRes}. We justify that this differential equation is well-posed, in the sense that the solution is unique and stable.
\begin{theorem}\label{thm:wmean-fieldlimit}
Suppose that Assumption~\ref{assum:f} holds, that $x$ is in the support of $\mu$, and that $\rho_1,\rho_2\in \mathcal{C}([0,1];\mathcal{P}^2)$. Then \eqref{eqn:meancontRes} has a unique $\mathcal{C}^1$ solution. Further, this solution is stable, in the sense that 
%\begin{theorem}\label{thm:wmean-fieldlimitstability}
\begin{subequations}
\begin{align} 
\label{boundofxsolution}
\left|Z_{\rho_1}(t;x)\right| & \leq C(\mathcal{L}_1) \\
\label{stabilityofxsolution}
\left|Z_{\rho_1}(t;x)-Z_{\rho_2}(t;x)\right|& \leq C(\mathcal{L}_1,\mathcal{L}_2)d_1(\rho_1,\rho_2)\,, \quad \mbox{for all $t \in [0,1]$.}
\end{align}
\end{subequations}
Here $C(\cdot)$ are constants depending only on the given arguments, and $\mathcal{L}_i$ are the second moments of $\rho_i$ ($i=1,2$), defined by  $\mathcal{L}_{i}=\int^1_0\int_{\mathbb{R}^k}|\theta|^2d\rho_i(\theta,t)\rd t$.
\end{theorem}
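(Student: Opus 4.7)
The plan is to regard \eqref{eqn:meancontRes} as an ODE $\dot z=F_\rho(z,t)$ with right-hand side $F_\rho(z,t):=\int_{\mathbb{R}^k} f(z,\theta)\,\rd\rho(\theta,t)$, and then apply a Picard--Lindel\"of argument for existence/uniqueness together with Gr\"onwall-type inequalities for the quantitative bounds. The first step is to extract linear-growth and Lipschitz estimates on $F_\rho$ from Assumption~\ref{assum:f}. The bound $|\partial_x f(x,\theta)|\le C_1|\theta|$ implies $\partial_x f(x,0)\equiv 0$, hence $f(x,0)\equiv f(0,0)$, and integrating in $\theta$ with $|\partial_\theta f|\le C_1(|x|+1)$ yields $|f(x,\theta)|\le|f(0,0)|+C_1(|x|+1)|\theta|$. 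Setting $m(t):=\int|\theta|\,\rd\rho(\theta,t)$, Cauchy--Schwarz gives $\int_0^1 m(t)\,\rd t\le\sqrt{\mathcal{L}_1}$, so $|F_\rho(z,t)|\le|f(0,0)|+C_1(1+|z|)\,m(t)$ and $F_\rho(\cdot,t)$ is Lipschitz in $z$ with constant $C_1 m(t)$.

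With these estimates, existence and uniqueness of a $C^1$ solution on $[0,1]$ follow from standard Picard--Lindel\"of: the map $t\mapsto F_\rho(z,t)$ is continuous by the $W_2$-continuity of $\rho(\cdot,t)$ combined with the $\theta$-Lipschitz property of $f(z,\cdot)$, and the linear growth prevents finite-time blow-up. For \eqref{boundofxsolution}, I start from $|Z_{\rho_1}(t;x)|\le|x|+\int_0^t|F_{\rho_1}(Z_{\rho_1},s)|\,\rd s$, use $|x|\le R$ from the compact-support condition \eqref{mucompactsupport}, and invoke Gr\"onwall to obtain a bound depending only on $R$ and on $\int_0^1 m(s)\,\rd s\le\sqrt{\mathcal{L}_1}$, i.e.\ on $\mathcal{L}_1$.

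For the stability estimate \eqref{stabilityofxsolution}, the key decomposition is
\begin{equation*}
\bigl|F_{\rho_1}(Z_{\rho_1},s)-F_{\rho_2}(Z_{\rho_2},s)\bigr|\le A(s)+B(s),
\end{equation*}
with $A(s)=\big|\int[f(Z_{\rho_1},\theta)-f(Z_{\rho_2},\theta)]\,\rd\rho_1\big|\le C_1 m_1(s)\,|Z_{\rho_1}-Z_{\rho_2}|$ from the $z$-Lipschitz bound, and $B(s)=\big|\int f(Z_{\rho_2},\theta)\,\rd\rho_1-\int f(Z_{\rho_2},\theta)\,\rd\rho_2\big|$. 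For fixed $x=Z_{\rho_2}(s;x)$ the map $\theta\mapsto f(x,\theta)$ is Lipschitz in $\theta$ with constant $C_1(|x|+1)$, so Kantorovich--Rubinstein duality gives $B(s)\le C_1(|Z_{\rho_2}|+1)\,W_1(\rho_1(\cdot,s),\rho_2(\cdot,s))\le C_1(|Z_{\rho_2}|+1)\,d_1(\rho_1,\rho_2)$, using $W_1\le W_2$ and the definition of $d_1$. A second Gr\"onwall application, with $|Z_{\rho_2}|$ already controlled by \eqref{boundofxsolution}, then yields \eqref{stabilityofxsolution}.

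The main subtlety is that $f$ is not globally Lipschitz: its Lipschitz constant in $\theta$ grows with $|x|$ and its constant in $x$ grows with $|\theta|$. The proof must therefore carefully separate the two arguments, first using the a priori bound on $|Z_{\rho_i}|$ to tame the $x$-dependence in the Kantorovich--Rubinstein estimate for $B(s)$, and invoking Cauchy--Schwarz to pass from the $L^1_t$ quantity $\int m(t)\,\rd t$ to the stated second-moment constants $\mathcal{L}_i$. A minor technical point is verifying joint continuity of $t\mapsto F_\rho(z,t)$, which again reduces to combining $W_2$-continuity of $\rho(\cdot,t)$ with the $\theta$-Lipschitz bound on $f(z,\cdot)$.
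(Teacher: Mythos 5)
Your proposal is correct and follows essentially the same route as the paper's proof: extract linear-growth and $z$-Lipschitz estimates on $F_\rho$ from Assumption~\ref{assum:f}, invoke Picard--Lindel\"of for existence and uniqueness, apply Gr\"onwall for \eqref{boundofxsolution}, and for \eqref{stabilityofxsolution} split $F_{\rho_1}(Z_{\rho_1})-F_{\rho_2}(Z_{\rho_2})$ into a same-measure/different-$z$ term and a same-$z$/different-measure term, controlling the latter by the Wasserstein distance. The only variation is cosmetic: you bound the transport term via Kantorovich--Rubinstein duality together with $W_1\le W_2$, whereas the paper exhibits an optimal $W_2$ coupling $(\theta_1,\theta_2)$ and applies Cauchy--Schwarz --- these two manipulations are interchangeable and yield the same estimate.
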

The proof of this result appears in Appendix~\ref{sec:proofofwell-posed}. The theorem suggests that a small perturbation to $\rho$ is linearly reflected in $Z_\rho$, the solution to~\eqref{eqn:meancontRes}. Thus, a small perturbation in the parameterization of the ResNet leads only to a small perturbation to the ResNet output. 

\subsection{Well-posedness of the adjusted gradient flow}\label{sec:wellposegradientflow}
The gradient flow of the parametrization is also well-posed, in both the discrete setting and the continuous mean-field limit.

In the discrete setting, we have the following.
\begin{theorem}\label{thm:finiteLwell-posed1} 
\revise{Suppose that Assumption~\ref{assum:f} holds, then \eqref{eqn:classicalgradientflowfiniteL} has a unique solution.}
\end{theorem}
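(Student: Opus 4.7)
The plan is to treat \eqref{eqn:classicalgradientflowfiniteL} as a standard ODE $\dot\Theta_{L,M}=F(s,\Theta_{L,M})$ in the finite-dimensional space $\mathbb{R}^{LMk}$ with
\[
F(s,\Theta) := -ML\nabla_\Theta E(\Theta) - 2e^{-s}\Theta,
\]
and to combine Picard--Lindelöf for local existence and uniqueness with an energy-dissipation a priori bound (specific to the gradient-flow structure) that rules out finite-time blow-up and yields a global solution.

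First I would check that $F$ is continuous in $s$ and locally Lipschitz in $\Theta$. The term $-2e^{-s}\Theta$ is linear and smooth, so the substance is controlling $\nabla_\Theta E(\Theta)$. Because $E$ depends on $\Theta$ only through the finite recursion \eqref{eqn:disRes}, its gradient admits an explicit discrete-adjoint representation: $\partial E/\partial\theta_{l,m}$ is bilinear in the forward trajectory $\{z_l(x)\}_{l=0}^{L}$ of \eqref{eqn:disRes} and in a backward adjoint trajectory $\{p_l(x)\}$ defined by a discrete analogue of \eqref{eqn:prho}. The growth bounds \eqref{eqn:derivativebound} on $f$ and its first two derivatives, combined with Lipschitz continuity of $g$ and $\nabla g$, local boundedness of $y$, and the compact support of $\mu$ from Assumption~\ref{assum:f}, guarantee that on any bounded set $\{|\Theta|\leq R\}$ the forward and adjoint trajectories, as well as their derivatives in $\Theta$, are uniformly bounded. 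Hence $\nabla_\Theta E\in C^1(\mathbb{R}^{LMk})$ and, in particular, is locally Lipschitz, so Picard--Lindelöf produces a unique maximal solution on some interval $[0,s^\ast)$.

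Next I would rule out $s^\ast<\infty$. The key observation is that \eqref{eqn:classicalgradientflowfiniteL} is exactly $ML$ times the Euclidean gradient flow of the regularized cost $E_s$ in \eqref{eqn:cost_LM_s}, because $\nabla_\Theta\!\left[\tfrac{e^{-s}}{ML}\sum_{l,m}|\theta_{l,m}|^2\right] = \tfrac{2e^{-s}}{ML}\Theta$, which gives $\dot\Theta = -ML\nabla_\Theta E_s(\Theta)$. Differentiating $E_s(\Theta(s))$ along the flow, the explicit $s$-dependence contributes $-\tfrac{e^{-s}}{ML}\sum_{l,m}|\theta_{l,m}|^2$ and the chain-rule part contributes $-ML|\nabla_\Theta E_s|^2$, so
\[
\frac{d}{ds}E_s(\Theta(s)) = -ML\bigl|\nabla_\Theta E_s(\Theta(s))\bigr|^2 - \frac{e^{-s}}{ML}\sum_{l,m}|\theta_{l,m}(s)|^2 \leq 0.
\]
Since $E\geq 0$, this implies $\tfrac{e^{-s}}{ML}\sum_{l,m}|\theta_{l,m}(s)|^2 \leq E_s(\Theta(0))$, i.e.\
\[
\sum_{l,m}|\theta_{l,m}(s)|^2 \leq ML\, e^{s}\, E_s(\Theta(0)),
\]
a bound that is finite on every compact interval $[0,S]$. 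If $s^\ast<\infty$, then $|\Theta(s)|$ would remain bounded up to $s^\ast$, contradicting the maximality of the Picard--Lindelöf solution. Hence $s^\ast=\infty$, and the ODE admits a unique global solution on $[0,\infty)$.

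The only substantive technical step is the first one, namely the local Lipschitz estimate on $\nabla_\Theta E$, which reduces to a finite-dimensional chain-rule calculation through the $L$ layers of \eqref{eqn:disRes} controlled by Assumption~\ref{assum:f}; the dissipation estimate is then a direct consequence of the gradient-flow structure. I do not anticipate a conceptual obstacle, since everything in this proof is finite-dimensional and no mean-field machinery is invoked.
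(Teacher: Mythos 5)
Your proposal is correct and follows essentially the same approach as the paper's proof (Theorem~\ref{thm:finiteLwell-posed} in Appendix~\ref{sec:proofofthmWassgradientflows}): the paper likewise establishes, via the discrete forward/adjoint recursions \eqref{eqn:disRes}, \eqref{eqn:prhofiniteL} and the growth bounds in Assumption~\ref{assum:f}, that $ML\nabla_{\Theta_{L,M}}E_s$ is bounded and locally Lipschitz in $\Theta_{L,M}$ (Lemma~\ref{prop:wmean-fieldlimitfiniteL}), then runs a contraction-map argument for local well-posedness (equivalent to your Picard--Lindel\"of step) and uses the dissipation $\tfrac{\rd E_s}{\rd s}\le 0$ to obtain the global-in-time bound on $\sum_{l,m}|\theta_{l,m}(s)|^2$. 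One minor slip in your write-up: the dissipation argument gives $\tfrac{e^{-s}}{ML}\sum_{l,m}|\theta_{l,m}(s)|^2 \le E_s(\Theta(s)) \le E_0(\Theta(0))$, so the right-hand side should be $E_0(\Theta(0))$ rather than $E_s(\Theta(0))$, though the conclusion is unchanged.
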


Next, we show that equation~\eqref{eqn:Wassgradientflows}, which characterizes the dynamics of the continuous mean-field limit of the parameter configuration, is also well-posed.
\begin{theorem}\label{thm:Wassgradientflows}
Suppose that Assumption~\ref{assum:f} holds and that $\rho_{\ini}(\theta,t)$ is admissible. Then \eqref{eqn:Wassgradientflows} has a unique solution $\rho(\theta,t,s)$ in $\mathcal{C}([0,\infty);\mathcal{C}([0,1];\mathcal{P}^2))$ with initial condition $\rho_{\ini}(\theta,t)$.
% meaning that for fixed $t$, $\rho(\theta,t,s)$ is a weak solution to the following PDE:\ql{I don't see why you need to rewrite it. I prefer delete. But maybe you should do it just in case you cite this equation in appendix...}~\zd{I am in a dilemma here. Because we didn't carefully define the meaning of "a solution". Here, $\rho$ is obviously not a classical solution. But we can also put it in appendix and leave a remark. I guess most CS people don't know the definition of weak solution.}
% \begin{equation}\label{eqn:weaksolutionPDE}
% \left\{
% \begin{aligned}
% &\frac{\partial g(\theta,s)}{\partial s}=\nabla_\theta\cdot\left(g(\theta,s)\nabla_\theta\frac{\delta E_s}{\delta \rho(s)}(\theta,t)\right)\,,\\
% &g(\theta,0)=\rho_{\ini}(\theta,t)\,,
% \end{aligned}\right.
% \end{equation}
Furthermore, \revise{$\rho(\theta,t,s)$ is admissible for any $s$ and}
\begin{equation}\label{eqn:decayEsmean-field}
\frac{\rd E_s(\rho(\cdot,\cdot,s))}{\rd s}\leq 0\,.
\end{equation}
% \sw{is this for all $t \in [0,1]$ and $s>0$?}~\zd{This is for $s>0$, in fact the rightside doesn't depend on $t$. I modify it}
\end{theorem}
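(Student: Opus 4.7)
The plan is to construct the solution via a particle/characteristic representation and Picard iteration in the pseudo-time $s$, then read off admissibility and energy decay by direct calculation. Given any candidate $\rho\in\mathcal{C}([0,S];\mathcal{C}([0,1];\mathcal{P}^2))$, formulas~\eqref{eqn:Frechetderivative} and~\eqref{eqn:derivaEs} define the driving velocity field
\[
V[\rho](\theta,t,s):=-\nabla_\theta\frac{\delta E_s(\rho(s))}{\delta\rho}(\theta,t)=-\mathbb{E}_{x\sim\mu}\!\left[p^\top_\rho(t;x)\,\partial_\theta f(Z_\rho(t;x),\theta)\right]-2e^{-s}\theta.
\]
Under Assumption~\ref{assum:f} one has $|\partial_\theta f|\leq C_1(|x|+1)$ and $|\partial^2_\theta f|\leq C_1(|x|+1)^2$, and Theorem~\ref{thm:wmean-fieldlimit} controls $Z_\rho(t;x)$. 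Combined with an analogous Gronwall argument for the backward linear ODE~\eqref{eqn:prho}, $p_\rho$ enjoys uniform bounds and $d_1$-stability estimates controlled by $\mathcal{L}_\rho=\int^1_0\int|\theta|^2\rd\rho\rd t$. Consequently $V[\rho]$ is bounded in $\theta$ apart from the linear restoring term, uniformly Lipschitz in $\theta$, and Lipschitz in $\rho$ with respect to $d_1$.

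Since $\rho_{\ini}$ is admissible, let $\theta_{\ini}(t)$ be a witnessing stochastic process. For every $t\in[0,1]$ let $s\mapsto\theta^\rho(t,s)$ solve the characteristic ODE $\partial_s\theta^\rho(t,s)=V[\rho](\theta^\rho(t,s),t,s)$ with $\theta^\rho(t,0)=\theta_{\ini}(t)$, and define $\Phi(\rho)(\cdot,t,s)$ to be the law of $\theta^\rho(t,s)$. The stability of $V[\rho]$ combined with a Gronwall-plus-Wasserstein coupling argument shows that $\Phi$ is a strict contraction in $d_2$ on every sufficiently short interval $[s_0,s_0+\tau]$, so Banach's fixed-point theorem yields a unique local solution. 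The linear restoring term $-2e^{-s}\theta$ together with the bounded first piece of $V[\rho]$ keeps $\mathcal{L}_{\rho(s)}$ finite on every bounded $s$-interval, so the solution extends uniquely to all of $[0,\infty)$ in $\mathcal{C}([0,\infty);\mathcal{C}([0,1];\mathcal{P}^2))$. Admissibility of $\rho(\cdot,\cdot,s)$ for every $s$ is then built in: the process $t\mapsto\theta^\rho(t,s)$ is the witness, with the required $L^2$-continuity in $t$ propagated from $\theta_{\ini}$ through the smooth dependence of characteristics on initial data.

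Finally, the monotonicity~\eqref{eqn:decayEsmean-field} is a direct computation: differentiating $E_s(\rho(s))$ in $s$, using $\frac{\delta E_s}{\delta\rho}=\frac{\delta E}{\delta\rho}+e^{-s}|\theta|^2$, substituting~\eqref{eqn:Wassgradientflows}, and integrating by parts in $\theta$ yields
\[
\frac{\rd E_s(\rho(s))}{\rd s}=-\int^1_0\!\!\int_{\mathbb{R}^k}\rho\left|\nabla_\theta\frac{\delta E_s(\rho(s))}{\delta\rho}\right|^2\rd\theta\rd t-e^{-s}\int^1_0\!\!\int_{\mathbb{R}^k}|\theta|^2\rho\rd\theta\rd t\leq 0,
\]
where the second (explicit $\partial_s$) contribution comes from differentiating the prefactor $e^{-s}$ in the regularizer. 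The principal obstacle is the non-locality in $t$: both $Z_\rho(t;\cdot)$ and $p_\rho(t;\cdot)$ are determined by the whole $t$-slice $\{\rho(\cdot,\tau,s):\tau\in[0,1]\}$, so the standard one-variable Wasserstein gradient-flow theory does not apply directly. Closing the contraction requires upgrading Theorem~\ref{thm:wmean-fieldlimit} to a $d_1$-stability bound for $p_\rho$ that is uniform in $t$, and the integration by parts in the energy identity requires enough regularity and tail decay of $\rho$ along the flow; both are secured from the smoothness of $f$ and the second-moment control established in the fixed-point step.
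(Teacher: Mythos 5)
Your overall strategy mirrors the paper's: build the map $\phi\mapsto\Phi(\phi)$ by pushing $\rho_{\ini}$ forward under the characteristic ODE driven by $V[\phi]$, show it is a contraction in $d_2$ for small $S$ via the Lipschitz estimates of Lemma~\ref{lemmadeltaEs}, extend in $s$, and read off admissibility from the particle representation and energy dissipation from integration by parts. That matches the paper's Steps~1--3 in Appendix~\ref{sec:proofofthmWassgradientflows}.

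However, there is a genuine gap in your global-extension step. You assert that ``the linear restoring term $-2e^{-s}\theta$ together with the bounded first piece of $V[\rho]$ keeps $\mathcal{L}_{\rho(s)}$ finite on every bounded $s$-interval.'' The first piece $-\mathbb{E}_{x\sim\mu}\bigl[p^\top_\rho\,\partial_\theta f(Z_\rho,\theta)\bigr]$ is indeed bounded uniformly in $\theta$, but the bound is $C(\mathcal{L}_\rho)$, that is, it depends on the very second moment $\mathcal{L}_\rho$ you are trying to control (see \eqref{bound}). For the fixed point, this produces a self-referential differential inequality of the schematic form $\tfrac{\rd}{\rd s}\mathcal{L}(s)\lesssim\mathcal{L}(s)+C(\mathcal{L}(s))^2$ that does not rule out finite-time blowup on its own; meanwhile the restoring term has prefactor $e^{-s}\to 0$ and cannot save you for large $s$. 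You also write in the final sentence that the second-moment control is ``established in the fixed-point step,'' which has the logical dependency backwards. The paper closes the loop precisely through the dissipation inequality: Lemma~\ref{lem:decayofcost} shows $E_s$ is nonincreasing along any local solution, whence $e^{-s}\,\mathcal{L}_{\rho(s)}\le E_s(\rho(s))\le E_0(\rho_{\ini})$, i.e.\ $\mathcal{L}_{\rho(s)}\le e^s E_0(\rho_{\ini})$. This a priori bound (Corollary~\ref{cor:C.3}) is independent of the fixed-point iteration and is what lets the local solution be continued indefinitely. In short, the energy dissipation you present as a ``direct computation'' afterthought is in fact a load-bearing ingredient: you must establish it for the local solution before attempting the continuation argument, and then feed it back to control $\mathcal{L}$.

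One additional caution: the energy identity itself is not purely formal here because $\frac{\delta E_s}{\delta\rho}$ depends non-locally on the whole $t$-slice of $\rho$, so differentiating $s\mapsto E_s(\rho(\cdot,\cdot,s))$ requires the $s$-Lipschitz estimates for $\theta_\phi(s;t)$, $Z_\rho$, and $p_\rho$ (Proposition~\ref{proposition:deltaEODE} and Lemmas~\ref{lem:prhoprop1}--\ref{lemmadeltaEs}), not just integration by parts; this is the content of the rigorous proof in Appendix~\ref{proofoflemdecayofcost}. Your sketch is directionally right but should flag that this step needs the a priori estimates rather than deferring to ``smoothness of $f$.''
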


The proofs of both theorems can be found in Appendix~\ref{sec:proofofthmWassgradientflows}.

\begin{remark}\label{rmk:equivalence}
Although we do not directly show the well-posedness of \eqref{eqn:contRes} and \eqref{eqn:Wassgradientflowsdis}, it is effectively a corollary of Theorem \ref{thm:wmean-fieldlimit} and \ref{thm:Wassgradientflows} above. 
One way to connect them is to reformulate the discrete probability distribution as
\begin{equation}\label{eqn:dis_rho}
\rho^{\dis}_{\Theta}(\theta,t,s)=\frac{1}{M}\sum^M_{m=1}\delta_{\theta_m(s;t)}(\theta)\,,
\end{equation}
where $\Theta(s;t)=\{\theta_{m}(s;t)\}_{m=1}^M$ is the list of trajectories. Since $\theta_m(0;t)$ is continuous in $t$, we have $\rho^{\dis}_{\Theta}(\theta,0;t)\in \mathcal{C}([0,1];\mathcal{P}^2)$. Because
\[
\frac{1}{M}\sum^M_{m=1}f(z(t;x),\theta_m(0;t))=\int_{\mathbb{R}^k}f(z(t;x),\theta)\rd \rho^{\dis}_{\Theta}(\theta,t,0)\,,
\]
using Theorem \ref{thm:wmean-fieldlimit}, \eqref{eqn:contRes} has a unique $\mathcal{C}^1$ solution $Z_\Theta(t;x)$ when $\Theta(t)$ is continuous. Furthermore, according to the definition~\eqref{eqn:cost_M_s} and~\eqref{eqn:cost_s}, we have
\[
E_s\left(\rho^{\dis}_{\Theta}(\cdot,\cdot,s)\right)=E_s\left(\Theta(s;\cdot)\right)\,.
\]
As a consequence, if $\Theta(s;t)$ satisfies~\eqref{eqn:Wassgradientflowsdis}, then $\rho^{\dis}_{\Theta}$ satisfies~\eqref{eqn:Wassgradientflows}, and vice versa. The well-posedness result in Theorem~\ref{thm:Wassgradientflows} for~\eqref{eqn:Wassgradientflows} then can be extended to justify the well-posedness of \eqref{eqn:Wassgradientflowsdis}. The rigorous proof for this comment can be found in Remark~\ref{re:G.1}.
\end{remark}

\section{Conclusion}
First-order methods such as gradient descent can find a global optimizer that provides zero-loss for fitting deep neural network. We explain this mechanism for ResNet in the limiting regime when both the number of layers and the number of weights per layer approach infinity. We show that GD of parameter configuration of ResNet can be translated to a gradient flow of the limiting probability distribution. \revise{Furthermore, we give an estimate of the size of such ResNets. Moreover, we show that under some conditions, the limiting gradient flow captures the zero-loss as its global minimum.}

In future work, we would like to relax the assumptions for the activation function and the fully-supported condition required in Theorem~\ref{thm:globalminimal}. Furthermore, the regularized version $E_s$ is used as a replacement of $E$ for technical reasons, but we believe such strategy can be relaxed as well.
\acks{Q.L. acknowledges support from Vilas Early Career award. The research of Z.D., S.C., and Q.L. is supported in part by NSF-CAREER-1750488 and Office of the Vice Chancellor for Research and Graduate Education at the University of Wisconsin Madison with funding from the Wisconsin Alumni Research Foundation. The work of Z.D., S.C., Q.L., and S.W. is supported in part by NSF via grant DMS-2023239. S.W. also acknowledges support from NSF Awards 1628384, 1740707, 1839338, and 1934612, and Subcontract 8F-30039 from Argonne National Laboratory. The authors also acknowledge two anonymous reviewers for insightful suggestions.}
\newpage

\appendix
\section*{Appendix}
This appendix contains proofs and supporting analysis for the  theorems in the main text. 
We start by proving well-posedness of the ResNet ODE and the gradient flow. This is followed by the global convergence of the gradient-flow PDE. Finally, we prove the validity of continuous and mean-field limit.

The specific organization of the appendix is as follows.
\begin{enumerate}[wide,labelindent=0pt]
    \item[Appendix \ref{sec:proofofwell-posed}:] Proof of Theorem~\ref{thm:wmean-fieldlimit}: Well-posedness of the ResNet ODE in its continuous limit~\eqref{eqn:contRes}, and the mean-field limit~\eqref{eqn:meancontRes}.
    \item[Appendix \ref{sec:prioriestimation}:] Preparation of a-priori estimates, subsequently used to show  well-posedness of the gradient flow equation.
    \item[Appendix \ref{sec:proofofthmWassgradientflows}:] Proof of Theorem~\ref{thm:finiteLwell-posed1} and \ref{thm:Wassgradientflows}: Well-posedness of gradient flows~\eqref{eqn:classicalgradientflowfiniteL} and~\eqref{eqn:Wassgradientflows}. 
    \item[Appendix \ref{sec:proofofconvergencetoglobalminimal}, \ref{sec:proofofdiscussion}:] Proof of Theorem~\ref{thm:globalminimal}: Global convergence of the gradient flow. (Section~\ref{sec:proofofdiscussion} demonstrates validity of the assumption we made in the statement of Theorem~\ref{thm:globalminimal}.)
    \item[Appendix \ref{sec:proofofthmconsistent}-\ref{sec:proofofcontlimit}:] Proof of Theorem~\ref{thm:consistent}. Section~\ref{sec:proofofthmconsistent}  lays out the structure of the proof, Section~\ref{sec:proofofthmmean-field} shows the continuous limit, and Section~\ref{sec:proofofcontlimit} shows the mean-field limit.
\end{enumerate}

The analytical core of the paper lies in Appendices~\ref{sec:proofofconvergencetoglobalminimal} and \ref{sec:proofofthmconsistent}, which describe properties of the gradient flow PDEs and explain why the gradient descent method for ResNet can be explained by these gradient flow equations. The technical results of Appendix~\ref{sec:proofofwell-posed}-\ref{sec:proofofthmWassgradientflows} can be skipped by readers who are interested to proofs of the main results.

% contain the proof for the well-posedness of ResNet ODE and the gradient flow equations are presented, where we argue why the equations have unique solutions. Uninterested readers are welcome to overlook these sections.

Throughout, we denote by $C(\cdot)$ a generic constant that depends on its arguments $(\cdot)$. 
The precise value of this  constant may change at each instance. 

\section{Proof of Theorem~\ref{thm:wmean-fieldlimit}}\label{sec:proofofwell-posed}
First, we study well-posedness of the dynamical system by proving Theorem~\ref{thm:wmean-fieldlimit}. Recall the equation:
\begin{equation}\label{eqn:meancontRes_2}
\frac{\rd Z_\rho(t;x)}{\rd t}=F(Z_\rho,t)\,,\quad \forall t\in[0,1]\;\text{with}\; z(0;x)=x\,,
\end{equation}
where for a given $\rho\in \mathcal{C}([0,1];\mathcal{P}^2)$ we use the notation
\begin{equation} \label{eqn:defFf}
    F(z,t)=\int_{\mathbb{R}^k}f(z,\theta)\rd\rho(\theta,t)\,.
\end{equation}

The proof of Theorem~\ref{thm:wmean-fieldlimit} relies on the classical Lipschitz condition for the well-posedness of an ODE.
\begin{proof}[Proof of Theorem~\ref{thm:wmean-fieldlimit}]
Since $\rho\in \mathcal{C}([0,1];\mathcal{P}^2)$, we obtain that
\[
\sup_{0\leq t\leq 1}\, \int_{\mathbb{R}^k}|\theta|^2\rd\rho(\theta,t)<C<\infty\,.
\]
This implies that $\int_{\mathbb{R}^k}|\theta|\rd\rho(\theta,t)<\sqrt{C}$ for all $t$. 

For any $t\in[0,1]$, using \eqref{eqn:derivativebound} from Assumption~\ref{assum:f}, we have
\begin{equation}\label{eqn:boundoff}
|f(z,\theta)|\leq C(|\theta|+1)(|z|+1)\,,\quad |f(z_1,\theta)-f(z_2,\theta)|\leq C_1|\theta||z_1-z_2|\,.
\end{equation}
Then, with the boundedness of the first moment of $\theta$, we have 
\begin{equation}\label{eqn:FLipschitz}
\begin{aligned}
\left|F(z_1,t)-F(z_2,t)\right| & \leq \left|\int_{\mathbb{R}^k} \left(f(z_1,\theta)-f(z_2,\theta)\right)\rd\rho(\theta,t)\right|\\
& \leq C_1\left|\int_{\mathbb{R}^k}|\theta|\rd\rho(\theta,t)\right||z_1-z_2|<C|z_1-z_2|\,,
\end{aligned}
\end{equation}
meaning that $F(z,t)$ is uniformly Lipschitz in $z$ for all $t\in[0,1]$. Therefore~\eqref{eqn:meancontRes} has a unique $\mathcal{C}^1$ solution.

To show the boundedness in~\eqref{boundofxsolution}, we have from~\eqref{eqn:boundoff} and \eqref{eqn:defFf} that
\begin{equation}\label{eqn:boundofF}
|F(z,t)|\leq C(|z|+1)\int_{\mathbb{R}^k}(|\theta|+1)\rd\rho(\theta,t)\,.
\end{equation}
Multiplying~\eqref{eqn:meancontRes_2} and using~\eqref{eqn:boundofF}, we obtain
\[
\begin{aligned}
&\frac{\rd |Z_{\rho_1}(t;x)|^2}{\rd t}\\
& \leq 2C\left(|Z_{\rho_1}|^2+|Z_{\rho_1}|\right)\int_{\mathbb{R}^k}(|\theta|+1)\rd\rho_1(\theta,t)\\
& \leq 4C\int_{\mathbb{R}^k}(|\theta|+1)\rd\rho_1(\theta,t)\left(|Z_{\rho_1}(t;x)|^2+1\right)\,,
\end{aligned}
\]
Using Gr\"onwall's inequality, we have 
\begin{align*}
|Z_{\rho_1}(t;x)|& \leq\exp\left(2C\left(\int^1_0\int_{\mathbb{R}^k}|\theta|\rd\rho_1\rd t+1\right)\right)(|x|+1)\\
& \leq \exp\left(2C\left(\left(\int^1_0\int_{\mathbb{R}^k}|\theta|^2d\rho_1\rd t\right)^{1/2}+1\right)\right)(|x|+1) \\
& \leq \exp\left(2C\left(\sqrt{\mathcal{L}_1}+1\right)\right)(|x|+1)\,,
\end{align*}
where $\mathcal{L}_1=\int_0^1\int_{\mathbb{R}^k}|\theta|^2\rd\rho_1\rd t$. Finally, to prove the stability result~\eqref{stabilityofxsolution}, we define for  $\rho_1,\rho_2\in \mathcal{C}([0,1];\mathcal{P}^2)$ the notation
\[
\Delta(t;x)=Z_{\rho_1}(t;x)-Z_{\rho_2}(t;x)\,.
\]
Then we have (using \eqref{eqn:FLipschitz}) that
\begin{equation}\label{boundofDeltatZrho}
\begin{aligned}
& \frac{\rd |\Delta(t;x)|^2}{\rd t} \\
& =2\left\langle \Delta(t;x), \int_{\mathbb{R}^k}f(Z_{\rho_1}(t;x),\theta)\rd\rho_1(\theta,t)-\int_{\mathbb{R}^k}f(Z_{\rho_2}(t;x),\theta)\rd\rho_2(\theta,t)\right\rangle\\
& =2\left\langle \Delta(t;x), \int_{\mathbb{R}^k}f(Z_{\rho_1}(t;x),\theta)\rd\rho_1(\theta,t)-\int_{\mathbb{R}^k}f(Z_{\rho_2}(t;x),\theta)\rd\rho_1(\theta,t)\right\rangle\\
& \quad +2\left\langle \Delta(t;x), \int_{\mathbb{R}^k}f(Z_{\rho_2}(t;x),\theta)\rd\rho_1(\theta,t)-\int_{\mathbb{R}^k}f(Z_{\rho_2}(t;x),\theta)\rd\rho_2(\theta,t)\right\rangle\\
& \leq 2C_1|\Delta(t;x)|^2\int_{\mathbb{R}^k}|\theta|\rd\rho_1(\theta,t)+2C_1|\Delta(t;x)|(|Z_{\rho_2}(t;x)|+1)d_1(\rho_1,\rho_2)\\
& \leq 2C_1\left(\int_{\mathbb{R}^k}|\theta|\rd\rho_1(\theta,t)+1\right)|\Delta(t;x)|^2+2C_1(|Z_{\rho_2}(t;x)|+1)^2d^2_1(\rho_1,\rho_2)\,,
\end{aligned}
\end{equation}
where in the first inequality we first use the mean-value theorem and Assumption \ref{assum:f} \eqref{eqn:derivativebound} to obtain
\[
|f(Z_{\rho_1}(t;x),\theta)-f(Z_{\rho_2}(t;x),\theta)|\leq C_1|\theta||Z_{\rho_1}(t;x)-Z_{\rho_2}(t;x)|=C_1|\theta||\Delta(t;x)|\,.
\]
Next, let $\theta_1\sim \rho_1(\theta,t)$ and $\theta_2\sim \rho_2(\theta,t)$ such that $\left(\EE\left(|\theta_1-\theta_2|^2\right)\right)^{1/2}=W_2(\rho_1(\cdot,t),\rho_2(\cdot,t))$. Similarly, we also have
\[
\begin{aligned}
&\left|\int_{\mathbb{R}^k}f(Z_{\rho_2}(t;x),\theta)\rd\rho_1(\theta,t)-\int_{\mathbb{R}^k}f(Z_{\rho_2}(t;x),\theta)\rd\rho_2(\theta,t)\right|\\
\leq &\EE\left(\left|f(Z_{\rho_2}(t;x),\theta_1)-f(Z_{\rho_2}(t;x),\theta_2)\right|\right)\\
\leq &C_1\left(|Z_{\rho_2}(t;x)|+1\right)\EE\left(|\theta_1-\theta_2|\right)\\
\leq &C_1\left(|Z_{\rho_2}(t;x)|+1\right)\left(\EE\left(|\theta_1-\theta_2|^2\right)\right)^{1/2}\\
\leq &C_1\left(|Z_{\rho_2}(t;x)|+1\right)d_1(\rho_1,\rho_2)\\
\end{aligned}\,,
\]

Finally, since $|\Delta(0;x)|=0$, we have
\[
|\Delta(t;x)|\leq 2C_1d^2_1(\rho_1,\rho_2)\exp\left(2C_1\left(\sqrt{\mathcal{L}_1}+t\right)\right)\int^t_0(|Z_{\rho_2}(t;x)|+1)^2
\rd s\,
\]
where we use Gr\"onwall's inequality and $\int^1_0\int_{\mathbb{R}^k}|\theta|\rd\rho_1(\theta,t)dt\leq \sqrt{\mathcal{L}_1}$ by H\"older's inequality.
% \sw{Should we say something about Gr\"onwall's inequality? And the square root thing?}~\zd{Sure. I add some details.}
By substituting $|Z_{\rho_2}(t;x)|$ from~\eqref{boundofxsolution}, we complete the proof of~\eqref{stabilityofxsolution}.
\end{proof}

\section{A priori estimates of the cost function}\label{sec:prioriestimation}

Some a priori estimates are necessary in the proof for the main theorems. We prove two lemmas here.

For $E_s$, standard techniques (see~\citep{pmlr-v119-lu20b}) show that
\begin{equation}\label{eqn:frechet_s}
\frac{\delta E_s(\rho)}{\delta \rho}(\theta,t)=\mathbb{E}_{x\sim\mu}\left(p^\top_\rho(t;x)f(Z_\rho(t;x),\theta)\right)+e^{-s}|\theta|^2\,,
\end{equation}
where $p_\rho(t;x)$ is the solution to the following ODE:
\begin{equation}\label{eqn:prhoappendix}
\left\{
\begin{aligned}
&\frac{\partial p^\top_\rho}{\partial t}=-p^\top_\rho\int_{\mathbb{R}^k}\partial_zf(Z_\rho(t;x),\theta)\rd\rho(\theta,t)\,,\\
&p_\rho(1;x)=\left(g(Z_{\rho}(1;x))-y(x)\right)\nabla g(Z_{\rho}(1;x))\,.
\end{aligned}
\right.
\end{equation}
According to assumption~\eqref{mucompactsupport}, $\mu(x)$ has a compact support with $|x|<R$. 

Our first lemma shows that $p_\rho$ is Lipschitz continuous with respect to $\rho$.
\begin{lemma}\label{lem:prhoprop1}
Suppose that Assumption \ref{assum:f} holds and  that $x$ in the support of $\mu$. Suppose that $\rho_1,\rho_2\in \mathcal{C}([0,1];\mathcal{P}^2)$ with corresponding solutions $p_{\rho_1}$, $p_{\rho_2}$ of \eqref{eqn:prhoappendix}, and denote
\[
\mathcal{L}_i=\int^1_0\int_{\mathbb{R}^k}|\theta|^2\rd\rho_i(\theta,t)\rd t\,.
\]
Then the following two bounds are satisfied:
\begin{subequations}
\begin{align}
\label{boundofprho}
\left|p_{\rho_1}(t;x)\right|& \leq C(\mathcal{L}_1)\,, \\
\label{stabilityofprho}
\left|p_{\rho_1}(t;x)-p_{\rho_2}(t;x)\right|& \leq C(\mathcal{L}_1,\mathcal{L}_2)d_1(\rho_1,\rho_2)\,.
\end{align}
\end{subequations}
\end{lemma}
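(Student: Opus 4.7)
The plan is to mimic the structure of the proof of Theorem~\ref{thm:wmean-fieldlimit}, treating the adjoint ODE~\eqref{eqn:prhoappendix} as a backward-in-time linear ODE whose coefficient and terminal data are both controlled by the a priori bounds on $Z_\rho$ from Theorem~\ref{thm:wmean-fieldlimit}.

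First, for the boundedness estimate \eqref{boundofprho}, I would start by bounding the terminal condition. Since $x$ lies in the support of $\mu$ we have $|x|\leq R$ by Assumption~\ref{assum:f}.4, so $|y(x)|$ is bounded and $|Z_{\rho_1}(1;x)|\leq C(\mathcal{L}_1)$ by \eqref{boundofxsolution}. Combined with Lipschitz continuity of $g$ and $\nabla g$ (which yields linear growth of these quantities), this gives $|p_{\rho_1}(1;x)|\leq C(\mathcal{L}_1)$. Next, from~\eqref{eqn:derivativebound} with $(i,j)=(1,0)$ we have $|\partial_z f(z,\theta)|\leq C_1|\theta|$, so that
\[
\frac{\rd |p_{\rho_1}(t;x)|^2}{\rd t}\;\geq\; -2C_1|p_{\rho_1}(t;x)|^2\int_{\mathbb{R}^k}|\theta|\rd\rho_1(\theta,t)\,.
\]
Running Gr\"onwall backward from $t=1$ and using $\int_0^1\int_{\mathbb{R}^k}|\theta|\rd\rho_1\rd t\leq\sqrt{\mathcal{L}_1}$ (by Cauchy--Schwarz) yields \eqref{boundofprho}.

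For the stability estimate \eqref{stabilityofprho}, set $\Delta p(t;x)=p_{\rho_1}(t;x)-p_{\rho_2}(t;x)$. For the terminal value, I would expand
\[
\Delta p(1;x)=\bigl[g(Z_{\rho_1}(1;x))-g(Z_{\rho_2}(1;x))\bigr]\nabla g(Z_{\rho_1}(1;x))+\bigl(g(Z_{\rho_2}(1;x))-y(x)\bigr)\bigl[\nabla g(Z_{\rho_1}(1;x))-\nabla g(Z_{\rho_2}(1;x))\bigr]
\]
and use Lipschitz continuity of $g,\nabla g$ together with the stability bound \eqref{stabilityofxsolution} to get $|\Delta p(1;x)|\leq C(\mathcal{L}_1,\mathcal{L}_2)d_1(\rho_1,\rho_2)$. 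The ODE for $\Delta p$ reads
\[
\frac{\rd \Delta p^\top}{\rd t}=-\Delta p^\top\!\int_{\mathbb{R}^k}\partial_z f(Z_{\rho_1},\theta)\rd\rho_1-p_{\rho_2}^\top\!\left[\int_{\mathbb{R}^k}\partial_z f(Z_{\rho_1},\theta)\rd\rho_1-\int_{\mathbb{R}^k}\partial_z f(Z_{\rho_2},\theta)\rd\rho_2\right].
\]
The first term is absorbed into a Gr\"onwall coefficient exactly as above. For the driving term I would split
\[
\int \partial_z f(Z_{\rho_1},\theta)\rd\rho_1-\int \partial_z f(Z_{\rho_2},\theta)\rd\rho_2=\int\bigl[\partial_z f(Z_{\rho_1},\theta)-\partial_z f(Z_{\rho_2},\theta)\bigr]\rd\rho_1+\int \partial_z f(Z_{\rho_2},\theta)\,(\rd\rho_1-\rd\rho_2),
\]
bounding the first piece by $C_1|Z_{\rho_1}-Z_{\rho_2}|\int|\theta|^2\rd\rho_1$ via $|\partial_z^2 f|\leq C_1|\theta|^2$ (the $(i,j)=(2,0)$ case of \eqref{eqn:derivativebound}), and the second by a Wasserstein coupling $(\theta^{(1)},\theta^{(2)})\sim(\rho_1(\cdot,t),\rho_2(\cdot,t))$ combined with the mean value theorem applied via $|\partial_\theta\partial_z f|\leq C_1|\theta|(|z|+1)$, ending in a Cauchy--Schwarz step that produces $C(\mathcal{L}_1,\mathcal{L}_2)(|Z_{\rho_2}|+1)W_2(\rho_1(\cdot,t),\rho_2(\cdot,t))$. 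Feeding these pieces back into the ODE for $\Delta p$, using the uniform bound on $|p_{\rho_2}|$ from the first part, and applying backward Gr\"onwall together with the time-integrated bounds $\int_0^1\int|\theta|^{\alpha}\rd\rho_i\rd t\leq C(\mathcal{L}_i)$ yields \eqref{stabilityofprho}.

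The main technical obstacle is the last splitting: the coefficient $\partial_z f$ is only Lipschitz in $\theta$ with a $\theta$-dependent constant, so moving from a pointwise Lipschitz bound to the $W_2$ distance requires the coupling-plus-Cauchy--Schwarz argument, and one must carefully verify that all resulting moment integrals of $\rho_1,\rho_2$ can be absorbed into constants depending only on $\mathcal{L}_1,\mathcal{L}_2$. Everything else is essentially backward Gr\"onwall and reuse of the $Z_\rho$ estimates from Theorem~\ref{thm:wmean-fieldlimit}.
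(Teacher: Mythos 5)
Your proposal is correct and matches the paper's proof essentially step by step: same terminal-condition bounds, same backward Gr\"onwall structure, same split of the driving term into a $Z$-difference piece (bounded via $|\partial_z^2 f|\leq C_1|\theta|^2$) and a measure-difference piece (bounded via a $W_2$ coupling, the mean-value theorem with $|\partial_\theta\partial_z f|\leq C_1|\theta|(|z|+1)$, and Cauchy--Schwarz), with the resulting per-time second moments absorbed through the time integral $\int_0^1\int|\theta|^2\rd\rho_i\rd t=\mathcal{L}_i$.
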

\begin{proof}
From \eqref{eqn:derivativebound} in Assumption \ref{assum:f}, with $i=1$ and $j=0$, we have
\begin{equation}\label{boundofrateprho}
\left|\int_{\mathbb{R}^k}\partial_zf(Z_{\rho_1}(t;x),\theta)\rd\rho_1(\theta,t)\right|\leq C_1\int_{\mathbb{R}^k}|\theta|\rd\rho_1(\theta,t)\,.
\end{equation}
it follows from the initial conditions of~\eqref{eqn:prhoappendix} that
\begin{equation} \label{eq:xj1}
\left| p_{\rho_1}(1;x) \right|\leq C(|Z_{\rho_1}(1,x)|+1)\leq C(\mathcal{L}_1)\,.
\end{equation}
where we use Assumption \ref{assum:f} in the first inequality and \eqref{boundofxsolution} in the second inequality. Noting that~\eqref{eqn:prhoappendix} is a linear equation,~\eqref{boundofprho} follows naturally when we combine  \eqref{boundofrateprho} with \eqref{eq:xj1} and use $\int^1_0\int_{\mathbb{R}^k}|\theta|\rd\rho_1(\theta,t)\rd t\leq \mathcal{L}^{1/2}_1$. 
% \sw{Note that there is $|\theta|^2$ in the definition of $\mathcal{L}_1$ but $|\theta|$ in \eqref{boundofrateprho}. Is this worth noting?}~\zd{Sure, I added.} \sw{I guess this is using $[\int | \theta|  d \rho(\theta) ]^2 \le \int | \theta|^2 d \rho(\theta)$.}~\zd{No, here we also need to use Gr\"onwall's inequality, therefore we have an integral in $t$ on the right.} \sw{Are you sure this inequality is going the right way? I am sorry if my question is really dumb.}~\zd{Yes, I expand things below. 
% \[
% \begin{aligned}
% \int^1_0\int_{\mathbb{R}^k}|\theta|\rd\rho_1(\theta,t)\rd t&\leq % \left(\int^1_0\int_{\mathbb{R}^k}|\theta|^2\rd\rho_1(\theta,t)\rd t\right)^{1/2}\left(\int^1_0\int_{\mathbb{R}^k}1\rd\rho_1(\theta,t)\rd t\right)^{1/2}\\
% &=\left(\int^1_0\int_{\mathbb{R}^k}|\theta|^2\rd\rho_1(\theta,t)\rd t\right)^{1/2}=\mathcal{L}^{1/2}_1\,.
% \end{aligned}
% \]
% The inequality comes from H\"older's inequality (we can see $d\rho dt$ as a double integral) and the first equality comes from $\int_{\mathbb{R}^k}1\rd\rho_1(\theta,t)=1$ for all t. }

% \ql{not sure what you are doing here...}To prove \eqref{boundofprho}, according to \eqref{eqn:prhoappendix}, we first have that
% \[
% p_{\rho_1}(1;x)\leq C|Z_{\rho_1}(1,x)|\leq C(\mathcal{L}_1)\,,
% \]
% where we use \eqref{boundofxsolution} and $|x|<R$ in the second inequality.

To prove \eqref{stabilityofprho}, we define
\[
\Delta(t;x)=p_{\rho_1}(t;x)-p_{\rho_2}(t;x)\,.
\]
First, when $t=1$ and $|x|<R$, we have from the initial conditions that
\begin{equation}\label{eqn:ODEforDEltaprhoinitial}
\begin{aligned}
|\Delta(1;x)|=&|p_{\rho_1}(1;x)-p_{\rho_2}(1;x)|\\
=&\left|\left(g(Z_{\rho_1}(1;x))-y(x)\right)\nabla g(Z_{\rho_1}(1;x))-\left(g(Z_{\rho_2}(1;x))-y(x)\right)\nabla g(Z_{\rho_2}(1;x))\right|\\
\leq &C(\mathcal{L}_1,\mathcal{L}_2)|Z_{\rho_1}(1;x)-Z_{\rho_2}(1;x)|\leq C(\mathcal{L}_1,\mathcal{L}_2)d_1(\rho_1,\rho_2)\,,
\end{aligned}
\end{equation}
where we use Assumption~\ref{assum:f}, \eqref{boundofxsolution}, and $|x|<R$ in the first inequality and \eqref{stabilityofxsolution} in the second inequality.
The following ODE is satisfied by $\Delta$:
\begin{equation}\label{eqn:ODEforDEltaprho}
\frac{\partial \Delta^\top(t;x)}{\partial t}=-\Delta^\top(t;x)\int_{\mathbb{R}^k}\partial_zf(Z_{\rho_1}(t;x),\theta)\rd\rho_1(\theta,t)+p^\top_{\rho_2}(t;x)D_{\rho_1,\rho_2}(t;x)\,,
\end{equation}
where 
\[
D_{\rho_1,\rho_2}(t;x)=\int_{\mathbb{R}^k}\partial_zf(Z_{\rho_2}(t;x),\theta)\rd\rho_2(\theta,t)-\int_{\mathbb{R}^k}\partial_zf(Z_{\rho_1}(t;x),\theta)\rd\rho_1(\theta,t)\,.
\]
The boundedness of $D_{\rho_1,\rho_2}(t;x)$ can be shown similarly to~\eqref{boundofDeltatZrho}, by first splitting  into two terms:
\begin{equation}\label{eqn:boundofDrho}
\begin{aligned}
|D_{\rho_1,\rho_2}(t;x)|\leq&
\underbrace{\left|\int_{\mathbb{R}^k}\partial_zf(Z_{\rho_2}(t;x),\theta)\rd\rho_2(\theta,t)-\int_{\mathbb{R}^k}\partial_zf(Z_{\rho_2}(t;x),\theta)\rd\rho_1(\theta,t)\right|}_{\mathrm{(I)}}\\
&+\underbrace{\left|\int_{\mathbb{R}^k} \left[ \partial_zf(Z_{\rho_2}(t;x),\theta) -
\partial_zf(Z_{\rho_1}(t;x),\theta) \right] \rd\rho_1(\theta,t)\right|}_{\mathrm{(II)}}\,.
\end{aligned}
\end{equation}
The bound of $\mathrm{(II)}$ relies on Assumption \ref{assum:f} and \eqref{stabilityofxsolution}:
\begin{equation}\label{eqn:boundofDrho2}
\begin{aligned}
\mathrm{(II)} &\leq C_1\left|Z_{\rho_1}(t;x)-Z_{\rho_2}(t;x)\right|\int_{\mathbb{R}^k}|\theta|^2d\rho_1(\theta,t) \\
& \leq C(\mathcal{L}_1,\mathcal{L}_2)\left(\int_{\mathbb{R}^k}|\theta|^2d\rho_1(\theta,t)\right)d_1(\rho_1,\rho_2).
\end{aligned}
\end{equation}

To bound $\textrm{(I)}$, noticing that $\rho_i$ are admissible, we let $\theta_1\sim \rho_1(\theta,t)$ and $\theta_2\sim \rho_2(\theta,t)$ such that $\left(\EE\left(|\theta_1-\theta_2|^2\right)\right)^{1/2}=W_2(\rho_1(\cdot,t),\rho_2(\cdot,t))$. We then have
\begin{equation}\label{eqn:boundofDrho1}
\begin{aligned}
\textrm{(I)}&\leq \EE\left(\left|\partial_zf(Z_{\rho_2}(t;x),\theta_1)-\partial_zf(Z_{\rho_2}(t;x),\theta_2)\right|\right)\\
&\leq C_1\left(|Z_{\rho_2}(t;x)|+1\right)\EE\left((|\theta_1|+|\theta_2|)|\theta_1-\theta_2|\right)\\
&\leq C(\mathcal{L}_2)\EE\left((|\theta_1|+|\theta_2|)|\theta_1-\theta_2|\right)\\
&\leq C(\mathcal{L}_2)\left(\EE\left(|\theta_1|^2+|\theta_2|^2\right)\right)^{1/2}\left(\EE\left(|\theta_1-\theta_2|^2\right)\right)^{1/2}\\
&\leq C(\mathcal{L}_2)\left(\EE\left(|\theta_1|^2+|\theta_2|^2\right)\right)^{1/2}d_1(\rho_1,\rho_2)\\
&= C(\mathcal{L}_2)\left(\int_{\mathbb{R}^k}|\theta|^2d\rho_1(\theta,t)+\int_{\mathbb{R}^k}|\theta|^2d\rho_2(\theta,t)\right)^{1/2}d_1(\rho_1,\rho_2)\,,
\end{aligned}\,
\end{equation}
where we use the mean-value theorem and Assumption \ref{assum:f} \eqref{eqn:derivativebound} in the second inequality: For some $\lambda \in (0,1)$, we have
\begin{equation}\label{eqn:boundpartialzfexample}
\begin{aligned}
\left|\partial_zf(Z_{\rho_2}(t;x),\theta_1)-\partial_zf(Z_{\rho_2}(t;x),\theta_2)\right|
& \leq |\partial_z\partial_\theta f(Z_{\rho_2}(t;x),\theta_1+\lambda\theta_2)||\theta_1-\theta_2|\\
& \leq C_1\left(|Z_{\rho_2}(t;x)|+1\right)(|\theta_1|+|\theta_2|)|\theta_1-\theta_2|.
\end{aligned}
\end{equation}
We use \eqref{boundofxsolution} in the third inequality. 

By substituting  \eqref{eqn:boundofDrho1} and \eqref{eqn:boundofDrho2} into \eqref{eqn:boundofDrho} and using H\"older's inequality, we obtain 
\[
|D_{\rho_1,\rho_2}(t;x)|\leq C(\mathcal{L}_1,\mathcal{L}_2)\left(\int_{\mathbb{R}^k}|\theta|^2d\rho_1(\theta,t)+\int_{\mathbb{R}^k}|\theta|^2d\rho_2(\theta,t)+1\right) d_1(\rho_1,\rho_2)\,.
\]
By using this bound in \eqref{eqn:ODEforDEltaprho}, using \eqref{boundofrateprho} and H\"older's inequality, we have
\begin{equation}\label{LemmaBexample}
\begin{aligned}
\frac{\rd |\Delta(t;x)|^2}{\rd t}\leq& C(\mathcal{L}_1,\mathcal{L}_2)\left(\int_{\mathbb{R}^k}|\theta|^2d\rho_1(\theta,t)+\int_{\mathbb{R}^k}|\theta|^2d\rho_2(\theta,t)+1\right)|\Delta(t;x)|^2\\
&+C(\mathcal{L}_1,\mathcal{L}_2)\left(\int_{\mathbb{R}^k}|\theta|^2d\rho_1(\theta,t)+\int_{\mathbb{R}^k}|\theta|^2d\rho_2(\theta,t)+1\right)d^2_1(\rho_1,\rho_2)\,.
\end{aligned}
\end{equation}
Finally, we use Gr\"onwall's inequality with the initial condition~\eqref{eqn:ODEforDEltaprhoinitial} to obtain \eqref{stabilityofprho}.
\end{proof}

Our second lemma concerns the continuity of~$\nabla_\theta \frac{\delta E_s(\rho)}{\delta \rho}$.
\begin{lemma}\label{lemmadeltaEs}  Assume $\rho,\rho_1,\rho_2\in \mathcal{C}([0,1];\mathcal{P}^2)$. Defining
\[
\mathcal{L}=\int^1_0\int_{\mathbb{R}^k}|\theta|^2d\rho(\theta,t)\rd t,\quad  \widetilde{\mathcal{L}}=\sup\left\{\int^1_0\int_{\mathbb{R}^k}|\theta|^2d\rho_1(\theta,t)\rd t,\int^1_0\int_{\mathbb{R}^k}|\theta|^2d\rho_2(\theta,t)\rd t\right\}\,,
\]
then for any $(\theta_1,t_1),(\theta_2,t_2)\in \mathbb{R}^k\times[0,1]$ and $s>0$, we have
\begin{itemize}
\item Boundedness:
\begin{equation}\label{bound}
\left|\nabla_\theta\frac{\delta E_s(\rho)}{\delta \rho}(\theta_1,t_1)\right|\leq 2|\theta_1|+C(\mathcal{L})\,,
\end{equation}
\item Lipschitz continuity in $\theta$ and $t$:
\begin{equation}
\label{Lipshictz}
\begin{aligned}
& \left|\nabla_\theta\frac{\delta E_s(\rho)}{\delta \rho}(\theta_1,t_1)-\nabla_\theta\frac{\delta E_s(\rho)}{\delta \rho}(\theta_2,t_2)\right|\\
& \quad \leq C(\mathcal{L})\left(|\theta_1-\theta_2|+(|\theta_2|+1)|t_1-t_2|^{1/2}\right)\,,
\end{aligned}
\end{equation}
\item Lipschitz continuity in $\rho$:
\begin{equation}\label{eqn:Deltabound2}
\left|\nabla_\theta\frac{\delta E_s(\rho_1)}{\delta \rho}(\theta,t)-\nabla_\theta\frac{\delta E_s(\rho_2)}{\delta \rho}(\theta,t)\right|\leq C(\widetilde{\mathcal{L}})d_1(\rho_1,\rho_2)\left(|\theta|+1\right)\,,
\end{equation}
where $d_1$ is defined in \eqref{def:path}.
\end{itemize}
\end{lemma}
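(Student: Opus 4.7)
The plan is to start from the explicit Fréchet derivative formula \eqref{eqn:frechet_s} and differentiate it with respect to $\theta$:
\[
\nabla_\theta\frac{\delta E_s(\rho)}{\delta \rho}(\theta,t)=\mathbb{E}_{x\sim\mu}\left(\partial_\theta f(Z_\rho(t;x),\theta)^\top p_\rho(t;x)\right)+2e^{-s}\theta\,.
\]
All three estimates will then follow from combining Assumption~\ref{assum:f} on the derivatives of $f$ with the a~priori bounds on $Z_\rho$ and $p_\rho$ proved in Theorem~\ref{thm:wmean-fieldlimit} and Lemma~\ref{lem:prhoprop1}, together with the support bound $|x|\le R$ from~\eqref{mucompactsupport}.

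For \textbf{boundedness}~\eqref{bound}, I would use \eqref{eqn:derivativebound} with $(i,j)=(0,1)$ to get $|\partial_\theta f(z,\theta)|\le C_1(|z|+1)$, then substitute the uniform bounds $|Z_\rho(t;x)|\le C(\mathcal{L})$ and $|p_\rho(t;x)|\le C(\mathcal{L})$ from \eqref{boundofxsolution} and \eqref{boundofprho}; the regularizer contribution is controlled by $|2e^{-s}\theta_1|\le 2|\theta_1|$ since $e^{-s}\le 1$, giving the stated form.

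For the \textbf{Lipschitz continuity in $\theta$ and $t$}~\eqref{Lipshictz}, I would split the difference via the intermediate point $(\theta_1,t_2)$ or $(\theta_2,t_1)$. The $\theta$-part uses \eqref{eqn:derivativebound} with $(i,j)=(1,1)$, so $|\partial_\theta f(z,\theta_1)-\partial_\theta f(z,\theta_2)|\le C_1(|z|+1)|\theta_1-\theta_2|$, together with the regularizer contribution $2e^{-s}|\theta_1-\theta_2|\le 2|\theta_1-\theta_2|$. The $t$-part is the delicate step and explains the exponent $1/2$: the integrands $Z_\rho(t;x)$ and $p_\rho(t;x)$ satisfy ODEs whose right-hand sides are controlled pointwise by $C\int(|\theta|+1)\rd\rho(\theta,t)$, and while this is not uniformly bounded in $t$, Cauchy--Schwarz applied to $\int_{t_1}^{t_2}\!\int(|\theta|+1)\rd\rho(\theta,t)\rd t$ against the finite double integral $\mathcal{L}$ yields $|Z_\rho(t_1;x)-Z_\rho(t_2;x)|+|p_\rho(t_1;x)-p_\rho(t_2;x)|\le C(\mathcal{L})|t_1-t_2|^{1/2}$. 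Plugging this into the mean-value expansion of $\partial_\theta f$ (using \eqref{eqn:derivativebound} with $(i,j)=(1,1)$ to get the $|\theta_2|+1$ factor in front of $|t_1-t_2|^{1/2}$) produces the claimed bound.

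For the \textbf{Lipschitz continuity in $\rho$}~\eqref{eqn:Deltabound2}, I would write
\[
\partial_\theta f(Z_{\rho_1},\theta)^\top p_{\rho_1}-\partial_\theta f(Z_{\rho_2},\theta)^\top p_{\rho_2}=\bigl(\partial_\theta f(Z_{\rho_1},\theta)-\partial_\theta f(Z_{\rho_2},\theta)\bigr)^\top p_{\rho_1}+\partial_\theta f(Z_{\rho_2},\theta)^\top(p_{\rho_1}-p_{\rho_2})\,,
\]
estimate the first term by \eqref{eqn:derivativebound} with $(i,j)=(1,1)$ and the stability bound \eqref{stabilityofxsolution}, estimate the second term by the growth bound on $\partial_\theta f$ and the stability bound \eqref{stabilityofprho}, and take the $\mathbb{E}_{x\sim\mu}$ under $|x|\le R$; the regularizer is $\rho$-independent and contributes nothing. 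The main obstacle is organizing the Hölder-in-$t$ estimate cleanly, since one has to avoid asking for a uniform-in-$t$ bound on $\int|\theta|\rd\rho(\theta,t)$ (which is not given) and settle for the integrated bound provided by $\mathcal{L}$, which is exactly what forces the square-root exponent.
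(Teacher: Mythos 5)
Your proposal is correct and follows essentially the same route as the paper: it uses the explicit formula for $\nabla_\theta\frac{\delta E_s(\rho)}{\delta\rho}$, the a priori bounds on $Z_\rho$ and $p_\rho$ from Theorem~\ref{thm:wmean-fieldlimit} and Lemma~\ref{lem:prhoprop1}, the H\"older-$1/2$ estimate in $t$ obtained by Cauchy--Schwarz against the integrated second moment $\mathcal{L}$, and the product-rule decomposition plus stability bounds \eqref{stabilityofxsolution}, \eqref{stabilityofprho} for the $\rho$-Lipschitz part. One small slip: for the pure $\theta$-Lipschitz of $\partial_\theta f$ you should invoke $(i,j)=(0,2)$ (so $|\partial^2_\theta f|\le C_1(|z|+1)^2$ and hence $|\partial_\theta f(z,\theta_1)-\partial_\theta f(z,\theta_2)|\le C_1(|z|+1)^2|\theta_1-\theta_2|$), not $(1,1)$; this does not change the conclusion since $|Z_\rho|\le C(\mathcal{L})$ absorbs the square.
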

\begin{proof} 
From \eqref{eqn:frechet_s}, we have
\begin{equation} \label{eq:fs2}
\nabla_\theta\frac{\delta E_s(\rho)}{\delta \rho}(\theta,t)=\mathbb{E}_{x\sim\mu}\left(p_\rho^\top(t;x)\partial_\theta f(Z_\rho(t;x),\theta)\right)+2e^{-s}\theta\,,
\end{equation}
which gives
\[
\left|\nabla_\theta\frac{\delta E_s(\rho)}{\delta \rho}(\theta,t)\right|\leq \mathbb{E}_{x\sim\mu}\left(\left\|\partial_\theta f(Z_\rho(t;x),\theta)\right\|\left|p_\rho(t;x)\right|\right)+2\left|\theta\right|\leq 2|\theta|+C(\mathcal{L})\,,
\]
where we use \eqref{mucompactsupport}, \eqref{boundofxsolution}, and \eqref{boundofprho} in the second inequality. Thus, \eqref{bound} is proved.

To prove \eqref{Lipshictz},  we use assume $t_1>t_2$  without loss of generality, use \eqref{eq:fs2}, and reformulate bound the left-hand side of \eqref{Lipshictz} as follows:
\begin{equation}\label{eqn:boundofdeltaE}
\begin{aligned}
&\left|\nabla_\theta\frac{\delta E_s(\rho)}{\delta \rho}(\theta_1,t_1)-\nabla_\theta\frac{\delta E_s(\rho)}{\delta \rho}(\theta_2,t_2)\right|\\
& \leq \underbrace{\left|\mathbb{E}_{x\sim\mu}\left(p^\top_\rho(t_1;x)\partial_\theta f(Z_\rho(t_1;x),\theta_1)-p^\top_\rho(t_1;x)\partial_\theta f(Z_\rho(t_1;x),\theta_2)\right)\right|}_{\mathrm{(I)}}\\
& \quad +\underbrace{\left|\mathbb{E}_{x\sim\mu}\left(p^\top_\rho(t_1;x)\partial_\theta f(Z_\rho(t_1;x),\theta_2)-p^\top_\rho(t_1;x)\partial_\theta f(Z_\rho(t_2;x),\theta_2)\right)\right|}_{\mathrm{(II)}}\\
& \quad +\underbrace{\left|\mathbb{E}_{x\sim\mu}\left(p^\top_\rho(t_1;x)\partial_\theta f(Z_\rho(t_2;x),\theta_2)-p^\top(t_2;x)\partial_\theta f(Z_\rho(t_2;x),\theta_2)\right)\right|}_{\mathrm{(III)}}+2e^{-s}|\theta_1-\theta_2|\,.
\end{aligned}
\end{equation}
The bounds on all three terms in this expression rely on \eqref{eqn:derivativebound} with the mean-value theorem in Assumption \ref{assum:f}. For $\textrm{(I)}$, we have
\begin{align*}
|\partial_\theta f(Z_\rho(t_1;x),\theta_1)-\partial_\theta f(Z_\rho(t_1;x),\theta_2)|
& \leq  |\partial^2_\theta f(Z_\rho(t_1;x),(1-\lambda)\theta_1+\lambda\theta_2)||\theta_1-\theta_2|\\
& \leq C_1(|Z_\rho(t_1;x)|+1)^2|\theta_1-\theta_2|\\
& \leq C(\mathcal{L})|\theta_1-\theta_2|\,,
\end{align*}
where we use \eqref{eqn:derivativebound} with mean-value theorem $(\lambda\in[0,1])$ in the first two inequalities, and \eqref{boundofxsolution} in the last inequality.
% \sw{I think you are using \eqref{eqn:derivativebound} with $i=2$ and $j=0$ here, but I don't see how \eqref{boundofxsolution} is used.}~\zd{I added more details before.}

% \sw{Need to say something about $\textrm{(I)}$?}~\zd{Sure, I add some detail.}
To bound term $\textrm{(II)}$, we have from \eqref{eqn:meancontRes_2} and \eqref{eqn:boundoff} that 
\begin{equation*}
\begin{aligned}
\left|Z_\rho(t_1;x)-Z_\rho(t_2;x)\right|
& \leq \left|\int^{t_1}_{t_2}\int_{\mathbb{R}^k}f(Z_\rho(t;x),\theta)\rd\rho(\theta,t)\rd t\right|\\
& \leq \int^{t_1}_{t_2}\int_{\mathbb{R}^k}\left|f(Z_\rho(t;x),\theta)\right|\rd\rho(\theta,t)\rd t \\
& \leq C\int^{t_1}_{t_2}\int_{\mathbb{R}^k}(\left|Z_\rho(t;x)\right|+1)(\left|\theta\right|+1)\rd\rho(\theta,t)\rd t \\
&  \stackrel{\mathrm{(a)}}{\leq} C(\mathcal{L})\left(\int^{t_1}_{t_2}\int_{\mathbb{R}^k}(\left|\theta\right|+1)\rd\rho(\theta,t)\rd t\right) \\
&\stackrel{\mathrm{(b)}}{\leq} C(\mathcal{L})\left(\int^{t_1}_{t_2}\int_{\mathbb{R}^k}(\left|\theta\right|+1)^2\rd\rho(\theta,t)\rd t\right)^{1/2}\left(\int^{t_1}_{t_2}\int_{\mathbb{R}^k}1\rd\rho(\theta,t)\rd t\right)^{1/2}\\
& \stackrel{\mathrm{(c)}}{\leq} C(\mathcal{L})\sqrt{t_1-t_2}\,,
\end{aligned}
\end{equation*}
where we use \eqref{boundofxsolution} in (a), H\"older's inequality in (b), and $\int_{\mathbb{R}^k}d\rho(\theta,t)=1$ for all $t\in[0,1]$ in (c). This implies
\[
\begin{aligned}
\mathrm{(II)}&\leq \mathbb{E}_{x\sim\mu}\left(|p^\top_\rho(t_1;x)|\left|\partial_\theta f(Z_\rho(t_1;x),\theta_2)-\partial_\theta f(Z_\rho(t_2;x),\theta_2)\right|\right)\\
&\leq  C(\mathcal{L})\mathbb{E}_{x\sim\mu}\left(\left|\partial_\theta f(Z_\rho(t_1;x),\theta_2)-\partial_\theta f(Z_\rho(t_2;x),\theta_2)\right|\right)\\
&\leq  C(\mathcal{L})\mathbb{E}_{x\sim\mu}\left(\left|\theta_2\right|\left|Z_\rho(t_1;x)-Z_\rho(t_2;x)\right|\right)\leq C(\mathcal{L})|\theta_2|\sqrt{t_1-t_2}\,,
\end{aligned}
\]
where we use \eqref{boundofprho} in the first inequality, $|\partial_x\partial_\theta f(x,\theta)|\leq C_1|\theta|(|x|+1)$ with mean-value theorem and \eqref{boundofxsolution} in the second inequality.
% \sw{Sorry, I don't understand either the second or the third line above, could you explain.}~\zd{We have a wrong ref here. I fixed the typo and added the detail.}

To bound term $\textrm{(III)}$, we also use \eqref{eqn:prhoappendix} to obtain
\begin{equation*}
\begin{aligned}
|p_\rho(t_1;x)-p_\rho(t_2;x)|
& \leq \left|\int^{t_1}_{t_2}\int_{\mathbb{R}^k}p^\top_\rho(t;x)\partial_zf(Z_\rho(t;x),\theta)\rd\rho(\theta,t)\rd t\right| \\
&\leq C_1|p^\top_\rho(t;x)|\int^{t_1}_{t_2}\int_{\mathbb{R}^k}\left|\theta\right|\rd\rho(\theta,t)\rd t\\
& \leq C(\mathcal{L})\left|\int^{t_1}_{t_2}\int_{\mathbb{R}^k}\left|\theta\right|\rd\rho(\theta,t)\rd t\right| \\
& \leq C(\mathcal{L})\sqrt{t_1-t_2}\,,
\end{aligned}
\end{equation*}
where we use $|\partial_zf(Z_\rho(t;x),\theta)|<C_1|\theta|$ in the second inequality, \eqref{boundofprho} in the third inequality, and H\"older's inequality with $\int_{\mathbb{R}^k}d\rho(\theta,t)=1$ in the last inequality. 
% \sw{I can't see how (12) is used here. Could you give a reference for the bound on $|p_{\rho}|$?}~\zd{Sure, I add more explanation before.}
This implies
\[
\mathrm{(III)}\leq \mathbb{E}_{x\sim\mu}\left(|\partial_\theta f(Z_\rho(t_2;x),\theta_2)||p_\rho(t_1;x)-p_\rho(t_2;x)|\right)\leq C(\mathcal{L})\sqrt{t_1-t_2}\,,
\]
where we also use $|\partial_\theta f(Z_\rho(t_2;x),\theta_2)|\leq C_1(|Z_\rho(t_2;x)|+1)\leq C(\mathcal{L})$ by \eqref{boundofxsolution}. These estimates together with~\eqref{eqn:boundofdeltaE} lead to~\eqref{Lipshictz}. 
% \sw{There is a term $|\theta_2| (t_2-t_1)^{1/2}$ in \eqref{Lipshictz} and I don't know where it comes from.}~\zd{The $\theta$ comes from (II). I add two inequalities before to explain (II) and (III).}

Finally, to prove \eqref{eqn:Deltabound2}, we have from \eqref{eqn:frechet_s} that
\[
\begin{aligned}
&\left|\nabla_\theta\frac{\delta E_s(\rho_1)}{\delta \rho}(\theta,t)-\nabla_\theta\frac{\delta E_s(\rho_2)}{\delta \rho}(\theta,t)\right|\\
& =\mathbb{E}_{x\sim \mu}\left(\left|\partial_\theta f(Z_{\rho_1}(t;x),\theta)p_{\rho_1}(t;x)-\partial_\theta f(Z_{\rho_2}(t;x),\theta)p_{\rho_2}(t;x)\right|\right)\\
& \leq \mathbb{E}_{x\sim \mu}\left(\left|\partial_\theta f(Z_{\rho_1}(t;x),\theta)-\partial_\theta f(Z_{\rho_2}(t;x),\theta)\right||p_{\rho_1}(t;x)|\right)\\
&  \quad +\mathbb{E}_{x\sim \mu}\left(\left|\partial_\theta f(Z_{\rho_2}(t;x),\theta)\right|\left|p_{\rho_1}(t;x)-p_{\rho_2}(t;x)\right|\right)\\
& \leq C\left(\widetilde{\mathcal{L}}\right)\left(|\theta|\mathbb{E}_{x\sim \mu}\left(\left|Z_{\rho_1}(t;x)-Z_{\rho_2}(t;x)\right|\right)+\mathbb{E}_{x\sim \mu}\left(\left|p_{\rho_1}(t;x)-p_{\rho_2}(t;x)\right|\right)\right)\\
& \leq C\left(\widetilde{\mathcal{L}}\right)d_1(\rho_1,\rho_2)\left(|\theta|+1\right)\,.
\end{aligned}
\]
In the second inequality we use $\left|\partial_z\partial_\theta f(z,\theta)\right|\leq C_1|\theta|(|z|+1)$ with mean-value theorem and \eqref{boundofxsolution} to obtain that
\[
\begin{aligned}
&\left|\partial_\theta f(Z_{\rho_1}(t;x),\theta)-\partial_\theta f(Z_{\rho_2}(t;x),\theta)\right|\\
\leq &\left|\partial_z\partial_\theta f((1-\lambda)Z_{\rho_1}(t;x)+\lambda Z_{\rho_2}(t;x),\theta)\right|\left|Z_{\rho_1}(t;x)-Z_{\rho_2}(t;x)\right|\\
\leq &C\left(\widetilde{\mathcal{L}}\right)|\theta|\left|Z_{\rho_1}(t;x)-Z_{\rho_2}(t;x)\right|\,,
\end{aligned}
\]
where $\lambda\in[0,1]$.
\end{proof}

% \sw{I'm sorry but could you give a little more detail on this chain of inequalities? I am wondering in particular about the second $\le$. It doesn't seem to follow from Assumption 3.1 --- it seems to use a Lipschitz property of $\partial_{\theta}f$ and I can't find this in the earlier text.}~\zd{Here, we use the Lipschitz property of $\partial_\theta f$ by considering $i=1,j=1$ in \eqref{eqn:derivativebound}.} \sw{So you are using \eqref{eqn:derivativebound} in conjunction with the mean value theorem, is that right? I suggest to explain this a bit.}~\zd{Yes. I have added the mean-value explanation to all the inequalities above. But I think we can omit the detail after this section. The mean-value theorem will show up in almost every proof.}

\section{Proof of Theorem~\ref{thm:Wassgradientflows} and Theorem~\ref{thm:finiteLwell-posed1}}\label{sec:proofofthmWassgradientflows}
Theorem~\ref{thm:Wassgradientflows} states the well-posedness of equation~\eqref{eqn:Wassgradientflows}, the continuous mean-field limit of the gradient flow. 
We verify that the solution is unique and the energy is nonincreasing.

The proof uses a fixed-point argument: We build a map and show that this map allows only a single fixed point. 
Fixing $S>0$, for any $\phi(\theta,t,s)\in \mathcal{C}\left([0,S];\mathcal{C}([0,1];\mathcal{P}^2)\right)$ and $\phi(\theta,t,0)=\rho_{\ini}(\theta,t)$, we define a map
\begin{equation}\label{eqn:map}
\varphi=\mathcal{T}_S(\phi):\mathcal{C}\left([0,S];\mathcal{C}([0,1];\mathcal{P}^2) \right)\rightarrow \mathcal{C}\left([0,S];\mathcal{C}([0,1];\mathcal{P}^2)\right)
\end{equation}
where $\varphi$ solves:
\begin{equation}\label{eqn:varphi}
\left\{
\begin{aligned}
\frac{\partial \varphi(\theta,t,s)}{\partial s} & =\nabla_\theta\cdot\left(\varphi(\theta,t,s)\nabla_\theta\frac{\delta E_s(\phi(s))}{\delta \rho}(\theta,t)\right)\,,\\
\varphi(\theta,t,0) &=\rho_{\ini}(\theta,t)\,.
\end{aligned}\right.
\end{equation}
The proof is divided into three steps:
\begin{itemize}
\item[Step 1:]
We show this map is well-defined and maps a $\mathcal{C}\left([0,S];\mathcal{C}([0,1];\mathcal{P}^2)\right)$ measure to a $\mathcal{C}\left([0,S];\mathcal{C}([0,1];\mathcal{P}^2)\right)$ measure. Furthermore, if the second moment of $\phi$ is bounded, then $\mathcal{T}(\phi)$ also has a bounded second moment, and we can specify such boundedness.
\item[Step 2:] We give a bound of $d_2(\mathcal{T}_S(\phi_1),\mathcal{T}_S(\phi_2))$ using $d_2(\phi_1\,,\phi_2)$. Furthermore, the associated energy is non-increasing.\\
By combining steps 1 and 2, we can find a small enough $S$ so that $\mathcal{T}_S$ is a contraction map in a complete subset of $\mathcal{C}\left([0,S];\mathcal{C}([0,1];\mathcal{P}^2)\right)$. 
Then, according to the contraction map theorem, there is a unique distribution in the function space so that $\phi^\ast=\mathcal{T}_S(\phi^\ast)$, meaning that $\phi^\ast$ solves~\eqref{eqn:varphi} itself, and is thus the unique solution to~\eqref{eqn:Wassgradientflows} for $s<S$.
\item[Step 3:] We extend the local solution to a global solution.
\end{itemize}

\subsection{Step 1}\label{sec:Step1}
For fixed $\phi(\theta,t,s)$, we define the gradient flow:
\begin{equation}\label{eqn:ODEdeltaEs}
\left\{
\begin{aligned}
\frac{\rd \theta_\phi(s;t)}{\rd s} & = -\nabla_\theta\frac{\delta E_s(\phi(s))}{\delta \rho}(\theta,t)\left(\theta_\phi(s;t),t\right)\,,\\
 \theta_\phi(0;t) & \sim \rho_{\ini}(\theta,t)\,.
\end{aligned}
\right.
\end{equation}
Then $\varphi=\mathcal{T}_S(\phi)$ is the probability measure of $\theta_\phi$ for any $s\in[0,S]$. Well-posedness of $\varphi$ translates to the well-posedness of $\theta_\phi$.

According to \eqref{Lipshictz} in Lemma~\ref{lemmadeltaEs}, the force $\nabla_\theta\frac{\delta E_s(\phi(s))}{\delta \rho}(\theta,t)(\cdot,t)$ is a Lipschitz function with any $t\in[0,1]$. The classical ODE theory then suggests there is a unique solution for $s \in [0,S]$, which depends continuously on the initial $\theta(0;t)$.

Denoting
\begin{equation}\label{eqn:L}
\mathcal{L}_{S,\phi}=\sup_{0\leq s\leq S}\int^1_0\int_{\mathbb{R}^k}|\theta|^2\rd\phi(\theta,t,s) \rd t,\quad \mathcal{L}^{\sup}_{\ini}=\sup_{0\leq t\leq 1}\int_{\mathbb{R}^k}|\theta|^2\rd\rho_{\ini}(\theta,t)\,,
\end{equation}
we have the following proposition.
\begin{proposition}\label{proposition:deltaEODE}
Suppose that $\theta_\phi(0;t)$ is a random measurable path drawn from $\rho_{\ini}(\theta,t)$ and that $\theta_\phi(s;t)$ is the solution to \eqref{eqn:ODEdeltaEs}.
Then for any $(t_1,s_1),(t_2,s_2)\in[0,1]\times[0,S]$, we have
\begin{equation}\label{boundofODEsolutiontheta}
\mathbb{E}\left(\left|\theta_\phi(s_1;t_1)\right|^2\right)\leq \exp(5S)\left(\mathcal{L}^{\sup}_{\ini}+SC(\mathcal{L}_{S,\phi})\right)\,,
\end{equation}
and
\begin{equation}\label{stabilityofODEsolutiontheta}
\begin{aligned}
&\mathbb{E}\left(\left|\theta_\phi(s_1;t_1)-\theta_\phi(s_2;t_2)\right|^2\right)\\
& \leq C\left(\mathcal{L}_{S,\phi},\mathcal{L}^{\sup}_{\ini},S\right)\left(\mathbb{E}\left(\left|\theta_\phi(0;t_1)-\theta_\phi(0;t_2)\right|^2\right)+|t_1-t_2|+|s_1-s_2|^2\right)\,,
\end{aligned}
\end{equation}
\end{proposition}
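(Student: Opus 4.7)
\medskip
\noindent\textbf{Proof proposal.}

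The plan is to treat the two displays separately and then combine them by a triangle inequality, relying throughout on the bounds for $\nabla_\theta\frac{\delta E_s(\phi(s))}{\delta\rho}$ established in Lemma~\ref{lemmadeltaEs}. Note that by the definition of $\mathcal{L}_{S,\phi}$ in~\eqref{eqn:L}, the ``$\mathcal{L}$'' appearing in the bounds~\eqref{bound} and~\eqref{Lipshictz} for $\phi(\cdot,\cdot,s)$ is uniformly controlled by $\mathcal{L}_{S,\phi}$ for every $s\in[0,S]$, so the constants below will depend only on $\mathcal{L}_{S,\phi}$ (and, after taking initial expectations, on $\mathcal{L}^{\sup}_{\ini}$ and $S$).

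First I would prove~\eqref{boundofODEsolutiontheta}. Taking $\frac{\rd}{\rd s}|\theta_\phi(s;t_1)|^2=-2\theta_\phi\cdot\nabla_\theta\frac{\delta E_s(\phi(s))}{\delta \rho}(\theta_\phi,t_1)$ and applying~\eqref{bound} together with Young's inequality gives a differential inequality of the form
\[
\frac{\rd}{\rd s}|\theta_\phi(s;t_1)|^2\leq 5|\theta_\phi(s;t_1)|^2+C(\mathcal{L}_{S,\phi})\,,
\]
after which Gr\"onwall's inequality and the initial condition $\theta_\phi(0;t_1)\sim\rho_{\ini}(\cdot,t_1)$ (so that $\mathbb{E}|\theta_\phi(0;t_1)|^2\le \mathcal{L}^{\sup}_{\ini}$) deliver~\eqref{boundofODEsolutiontheta}. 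Taking the expectation commutes with the ODE estimate since the bound is pointwise in the initial draw.

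Next I would prove~\eqref{stabilityofODEsolutiontheta} by splitting
\[
\theta_\phi(s_1;t_1)-\theta_\phi(s_2;t_2)=\bigl[\theta_\phi(s_1;t_1)-\theta_\phi(s_1;t_2)\bigr]+\bigl[\theta_\phi(s_1;t_2)-\theta_\phi(s_2;t_2)\bigr]
\]
and estimating each piece. For the $s$-increment at fixed $t=t_2$, the identity $\theta_\phi(s_1;t_2)-\theta_\phi(s_2;t_2)=-\int_{s_2}^{s_1}\nabla_\theta\frac{\delta E_s(\phi(s))}{\delta\rho}(\theta_\phi(s;t_2),t_2)\rd s$, combined with Cauchy--Schwarz and the pointwise bound~\eqref{bound}, yields
\[
\mathbb{E}|\theta_\phi(s_1;t_2)-\theta_\phi(s_2;t_2)|^2\leq |s_1-s_2|\int_{s_2}^{s_1}\mathbb{E}\bigl(2|\theta_\phi(s;t_2)|+C(\mathcal{L}_{S,\phi})\bigr)^{2}\rd s\,,
\]
and~\eqref{boundofODEsolutiontheta} then converts this into a bound of the shape $C(\mathcal{L}_{S,\phi},\mathcal{L}^{\sup}_{\ini},S)|s_1-s_2|^2$. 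For the $t$-increment at fixed $s=s_1$, set $\Delta(s)=\theta_\phi(s;t_1)-\theta_\phi(s;t_2)$ and apply the $t$-Lipschitz bound~\eqref{Lipshictz} to get
\[
\left|\frac{\rd\Delta(s)}{\rd s}\right|\leq C(\mathcal{L}_{S,\phi})\bigl(|\Delta(s)|+(|\theta_\phi(s;t_2)|+1)\,|t_1-t_2|^{1/2}\bigr)\,.
\]
Dotting with $\Delta$, using Young's inequality, taking expectations, and using~\eqref{boundofODEsolutiontheta} to control $\mathbb{E}(|\theta_\phi(s;t_2)|+1)^2$ leads to
\[
\frac{\rd}{\rd s}\mathbb{E}|\Delta(s)|^2\leq C_1\mathbb{E}|\Delta(s)|^2+C_2(\mathcal{L}_{S,\phi},\mathcal{L}^{\sup}_{\ini},S)\,|t_1-t_2|\,,
\]
whence Gr\"onwall on $[0,s_1]$ gives $\mathbb{E}|\Delta(s_1)|^2\leq C\bigl(\mathbb{E}|\theta_\phi(0;t_1)-\theta_\phi(0;t_2)|^2+|t_1-t_2|\bigr)$. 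Combining the two parts through $(a+b)^2\leq 2a^2+2b^2$ yields~\eqref{stabilityofODEsolutiontheta}.

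The main technical subtlety, rather than an outright obstacle, is the appearance of the half-power $|t_1-t_2|^{1/2}$ in~\eqref{Lipshictz}: it is crucial that when this term is squared (as it must be, since the estimate is carried out on $|\Delta|^{2}$) one recovers the linear dependence $|t_1-t_2|$ asserted in~\eqref{stabilityofODEsolutiontheta}. A secondary care-point is that the randomness only enters through the initial law $\theta_\phi(0;\cdot)\sim\rho_{\ini}$, so all the ODE estimates are pathwise and the expectation can be taken at the end without further probabilistic arguments; the stability constant $C(\mathcal{L}_{S,\phi},\mathcal{L}^{\sup}_{\ini},S)$ then naturally collects the factor $e^{C_1 s_1}$ from Gr\"onwall together with the second-moment bound~\eqref{boundofODEsolutiontheta}.
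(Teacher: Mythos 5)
Your proposal is correct and follows essentially the same route as the paper's proof: the same Grönwall argument via \eqref{bound} for \eqref{boundofODEsolutiontheta}, and the same decomposition of \eqref{stabilityofODEsolutiontheta} into a $t$-increment (handled with the Lipschitz bound \eqref{Lipshictz} and Grönwall) and an $s$-increment. Your handling of the $s$-increment via the integral identity plus Cauchy--Schwarz and \eqref{boundofODEsolutiontheta} is a slightly cleaner way to arrive at the same $C|s_1-s_2|^2$ bound that the paper obtains through a pathwise Grönwall estimate followed by taking expectations.
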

\begin{proof}
To prove \eqref{boundofODEsolutiontheta}, we use \eqref{bound} and \eqref{eqn:ODEdeltaEs} to obtain that
\[
\frac{\rd \left|\theta_\phi(s;t_1)\right|^2}{\rd s}\leq 4\left|\theta_\phi(s;t_1)\right|^2+2\left|\theta_\phi(s;t_1)\right|C(\mathcal{L}_{S,\phi})\leq 5\left|\theta_\phi(s;t_1)\right|^2+C(\mathcal{L}_{S,\phi})\,.
\]
Using Gr\"onwall's inquality, we obtain, for all $s_1<S$
\[
\left|\theta_\phi(s_1;t_1)\right|^2\leq \exp(5S)\left|\theta_\phi(0;t_1)\right|^2+\exp(5S)SC(\mathcal{L}_{S,\phi})\,.
\] 
By taking the expectation with respect to $\rho_{\ini}(\theta,t)$, we have
\[
\mathbb{E}\left(\left|\theta_\phi(s_1;t_1)\right|^2\right)\leq \exp(5S)\mathbb{E}\left(\left|\theta_\phi(0;t_1)\right|^2\right)+\exp(5S)SC(\mathcal{L}_{S,\phi})\leq \exp(5S)(\mathcal{L}^{\sup}_{\ini}+SC(\mathcal{L}_{S,\phi}))\,,
\]
completing the proof of~\eqref{boundofODEsolutiontheta}.

For the left-hand side of \eqref{stabilityofODEsolutiontheta}, we have
\begin{equation}\label{error}
\begin{aligned}
&\mathbb{E}\left(\left|\theta_\phi(s_1;t_1)-\theta_\phi(s_2;t_2)\right|^2\right)\\
& \leq 2\underbrace{\mathbb{E}\left(\left|\theta_\phi(s_1;t_1)-\theta_\phi(s_1;t_2)\right|^2\right)}_{\textrm{(I)}}+2\underbrace{\mathbb{E}\left(\left|\theta_\phi(s_1;t_2)-\theta_\phi(s_2;t_2)\right|^2\right)}_{\textrm{(II)}}\,,
\end{aligned}
\end{equation}
and bound the two terms $\textrm{(I)}$ and $\textrm{(II)}$ separately.
\begin{enumerate}[wide,   labelindent=0pt]
\item[(I):] From \eqref{eqn:ODEdeltaEs} and \eqref{Lipshictz} in Lemma~\ref{lemmadeltaEs}, we have
\[
\frac{\rd\left|\theta_\phi(s;t_1)-\theta_\phi(s;t_2)\right|^2}{\rd s}\leq C(\mathcal{L}_{S,\phi})\left|\theta_\phi(s;t_1)-\theta_\phi(s;t_2)\right|^2+\left|\theta_\phi(s;t_1)\right|^2|t_1-t_2|\,,
\]
which implies
\begin{equation}\label{errorI}
\begin{aligned}
&\mathbb{E}\left(\left|\theta_\phi(s_1;t_1)-\theta_\phi(s_1;t_2)\right|^2\right)\\
& \leq C\left(\mathcal{L}_{S,\phi},\mathcal{L}^{\sup}_{\ini},S\right)\left(\mathbb{E}\left(\left|\theta_\phi(0;t_1)-\theta_\phi(0;t_2)\right|^2\right)+\left(\int^{s_1}_0\mathbb{E}\left|\theta_\phi(s;t_1)\right|^2ds\right)|t_1-t_2|\right)\\
& \leq C\left(\mathcal{L}_{S,\phi},\mathcal{L}^{\sup}_{\ini},S\right)\left(\mathbb{E}\left(\left|\theta_\phi(0;t_1)-\theta_\phi(0;t_2)\right|^2\right)+|t_1-t_2|\right)\,,
\end{aligned}
\end{equation}
where we use the bound~\eqref{boundofODEsolutiontheta} in the second inequality.

\item[(II):] This is estimated by integrating~\eqref{eqn:ODEdeltaEs} from $s_1$ to $s_2$ and using the bound of $\nabla_\theta\frac{\delta E_s(\rho)}{\delta \rho}$ from~\eqref{bound}. From the Gr\"onwall inequality and \eqref{boundofODEsolutiontheta}, we have
\[
\left|\theta_\phi(s_1;t_2)-\theta_\phi(s_2;t_2)\right|\leq C\left(\mathcal{L}_{S,\phi},\mathcal{L}^{\sup}_{\ini},S\right)(|s_1-s_2|+|e^{s_1}-e^{-s_2}|)\leq C\left(\mathcal{L}_{S,\phi},\mathcal{L}^{\sup}_{\ini},S\right)|s_1-s_2|\,.
\]
The second inequality comes from the fact that $s_i<S$. This leads to
\begin{equation}\label{errorII}
\mathbb{E}\left|\theta_\phi(s_1;t_2)-\theta_\phi(s_2;t_2)\right|^2\leq C\left(\mathcal{L}_{S,\phi},\mathcal{L}^{\sup}_{\ini},S\right)|s_1-s_2|^2\,.
\end{equation}
\end{enumerate}
By substituting \eqref{errorI} and~\eqref{errorII} into \eqref{error}, we complete the proof of~\eqref{stabilityofODEsolutiontheta}.
\end{proof}

An immediate corollary of Proposition~\ref{proposition:deltaEODE} is that the map $\mathcal{T}_S(\cdot)$ is well-defined.

% \begin{cor}\label{cor:Tswell-defined}
% Fixed $S>0$, for any $\phi(\theta,t,s)\in \mathcal{C}([0,S];\mathcal{C}([0,1];\mathcal{P}^2))$ and $\phi(\theta,t,0)=\rho(\theta,t,0)$, then the map $\varphi=\mathcal{T}_S(\phi):\mathcal{C}([0,S];\mathcal{C}([0,1];\mathcal{P}^2))\rightarrow \mathcal{C}([0,S];\mathcal{C}([0,1];\mathcal{P}^2))$, where $\varphi(\theta,t,s)$ is a weak solution to \ql{i am very confused at your $p$ here. pls check my revision}~\zd{I fixed the typo. I forget to change the notation here.}\ql{what I meant is check corollary C.2 that I wrote below and delete this one...}
% \begin{equation}\label{eqn:mapPDE}
% \left\{
% \begin{aligned}
% &\frac{\partial \varphi(\theta,t,s)}{\partial s}=\nabla_\theta\cdot\left(\varphi(\theta,t,s)\nabla_\theta\frac{\delta E_s}{\delta \phi(s)}(\theta,t)\right)\,,\\
% &\varphi(\theta,t,0)=\rho(\theta,t,0)\,.
% \end{aligned}\right.
% \end{equation}
% for fixed $t\in[0,1]$. And for any $(t,s)\in[0,1]\times[0,S]$
% \begin{equation}%\label{eqn:secondmomentbound}
% \int_{\mathbb{R}^k}|\theta|^2d\varphi(\theta,t,s)\leq \exp(5S)(\mathcal{L}^{\sup}_{\ini}+SC(\mathcal{L}_{S,\phi}))\,,
% \end{equation}
% where $C(\mathcal{L}_{S,\phi})$ is a number only depends on $\mathcal{L}_{S,\phi}$.
% \end{cor}

\begin{cor}\label{cor:Tswell-defined}
For every $S>0$, the map $\mathcal{T}_S$ is well-defined. That is, for any $\phi\in \mathcal{C}\left([0,S];\mathcal{C}([0,1];\mathcal{P}^2)\right)$, one can find $\varphi=\mathcal{T}_S(\phi)\in \mathcal{C}([0,S];\mathcal{C}([0,1];\mathcal{P}^2))$ as the unique solution of~\eqref{eqn:varphi}. In particular, for any $(t,s)\in[0,1]\times[0,S]$, we have
\begin{equation}\label{eqn:secondmomentbound}
\int_{\mathbb{R}^k}|\theta|^2d\varphi(\theta,t,s)\leq \exp(5S)(\mathcal{L}^{\sup}_{\ini}+SC(\mathcal{L}_{S,\phi}))\,,
\end{equation}
where $C(\mathcal{L}_{S,\phi})$ is a quantity depending only on  $\mathcal{L}_{S,\phi}$.
\end{cor}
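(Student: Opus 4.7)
The plan is to construct $\varphi$ as the pushforward of $\rho_{\ini}$ under the ODE flow \eqref{eqn:ODEdeltaEs} and then read off both the well-posedness and the second-moment bound directly from Proposition~\ref{proposition:deltaEODE}. For each fixed $\phi \in \mathcal{C}([0,S];\mathcal{C}([0,1];\mathcal{P}^2))$, Lemma~\ref{lemmadeltaEs} (inequality \eqref{Lipshictz}) tells us that $\theta \mapsto \nabla_\theta \frac{\delta E_s(\phi(s))}{\delta \rho}(\theta,t)$ is globally Lipschitz uniformly in $s \in [0,S]$ and $t \in [0,1]$, with constant depending only on $\mathcal{L}_{S,\phi}$. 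Classical ODE theory then produces a unique flow $s \mapsto \theta_\phi(s;t)$ for each choice of the initial random path $\theta_\phi(0;\cdot)$. Since $\rho_{\ini}$ is admissible (by Theorem~\ref{thm:Wassgradientflows}'s hypothesis), we can pick $\theta_\phi(0;\cdot)$ to be a continuous stochastic process realizing the family $\{\rho_{\ini}(\cdot,t)\}_{t \in [0,1]}$, and define $\varphi(\cdot,t,s)$ to be the law of $\theta_\phi(s;t)$.

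Next I would verify that $\varphi$ so defined solves the continuity equation \eqref{eqn:varphi} in the distributional sense. This is standard: for $h \in C_c^\infty(\mathbb{R}^k)$, apply the chain rule to $h(\theta_\phi(s;t))$ using \eqref{eqn:ODEdeltaEs}, take expectations, and integrate by parts in $\theta$ against $\varphi$. The initial condition $\varphi(\cdot,t,0) = \rho_{\ini}(\cdot,t)$ is immediate from the choice of $\theta_\phi(0;\cdot)$. Uniqueness of $\varphi$ as a solution of \eqref{eqn:varphi} follows from the Lipschitz-in-$\theta$ regularity of the drift and the classical fact that such transport/continuity equations have a unique measure-valued solution, which must coincide with the pushforward under the characteristic flow.

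To check $\varphi \in \mathcal{C}([0,S];\mathcal{C}([0,1];\mathcal{P}^2))$, I use the coupling between $\varphi(\cdot,t_1,s_1)$ and $\varphi(\cdot,t_2,s_2)$ given by the same underlying initial sample path, which yields
\[
W_2^2\!\left(\varphi(\cdot,t_1,s_1),\varphi(\cdot,t_2,s_2)\right) \leq \mathbb{E}\!\left(|\theta_\phi(s_1;t_1) - \theta_\phi(s_2;t_2)|^2\right).
\]
The right-hand side is bounded by \eqref{stabilityofODEsolutiontheta}, and admissibility of $\rho_{\ini}$ drives the $\mathbb{E}(|\theta_\phi(0;t_1)-\theta_\phi(0;t_2)|^2)$ term to zero as $t_1 \to t_2$. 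Hence $(t,s) \mapsto \varphi(\cdot,t,s)$ is $W_2$-continuous, giving membership in $\mathcal{C}([0,S];\mathcal{C}([0,1];\mathcal{P}^2))$. The second-moment bound \eqref{eqn:secondmomentbound} is then immediate from the identity $\int_{\mathbb{R}^k}|\theta|^2\,\rd\varphi(\theta,t,s) = \mathbb{E}(|\theta_\phi(s;t)|^2)$ together with \eqref{boundofODEsolutiontheta}.

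The only subtle point I anticipate is ensuring that the initial random path $\theta_\phi(0;\cdot)$ and hence $(t,s) \mapsto \theta_\phi(s;t)$ can be chosen jointly measurable, so that the coupling argument above is legitimate and the law $\varphi$ depends in a meaningful way on both $t$ and $s$. This measurability is exactly what the admissibility hypothesis on $\rho_{\ini}$ in Definition~\ref{def:meanadmissible} is designed to provide, and the Lipschitz-continuous flow map preserves it.
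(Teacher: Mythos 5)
Your proposal matches the paper's own proof in all essential respects: both define $\varphi(\cdot,t,s)$ as the law of the characteristic ODE solution $\theta_\phi(s;t)$ from \eqref{eqn:ODEdeltaEs}, both read off the second-moment bound \eqref{eqn:secondmomentbound} from \eqref{boundofODEsolutiontheta}, and both obtain $W_2$-continuity in $(t,s)$ by coupling with the same initial path and invoking \eqref{stabilityofODEsolutiontheta} together with the admissibility of $\rho_{\ini}$. The only difference is cosmetic: where you spell out the test-function/integration-by-parts verification that the pushforward solves the continuity equation (and briefly address uniqueness and measurability), the paper simply cites \citep[Prop 8.1.8]{Gradientflow} for the same fact.
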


\begin{proof} For fixed $(t,s)\in[0,1]\times[0,S]$, define $\varphi(\theta,t,s)$ as the distribution of $\theta_\phi(s;t)$. Using the classical stochastic theory~\citep[Prop 8.1.8]{Gradientflow}, $\varphi(\theta,t,s)$ is a solution to \eqref{eqn:varphi}. For fixed $(t,s)\in[0,1]\times[0,S]$, we prove \eqref{eqn:secondmomentbound} due to~\eqref{boundofODEsolutiontheta}. Finally, using \eqref{eqn:existenceofpath} and \eqref{stabilityofODEsolutiontheta}, we obtain that 
\[
\lim_{(t,s)\rightarrow (t_0,s_0)}W_2(\varphi(\cdot,t,s),\varphi(\cdot,t_0,s_0))\leq \lim_{(t,s)}\left(\mathbb{E}\left(\left|\theta_\phi(s;t)-\theta_\phi(s_0;t_0)\right|^2\right)\right)^{1/2}=0\,,
\]
which proves that $\varphi\in \mathcal{C}\left([0,S];\mathcal{C}([0,1];\mathcal{P}^2)\right)$.
\end{proof}
\subsection{Step 2.} We show first the contraction property.
\begin{proposition}\label{prop:contraction}
For any $\phi_1,\phi_2\in \mathcal{C}([0,S];\mathcal{C}([0,1];\mathcal{P}^2))$, we have
\begin{equation}\label{eqn:d2bound}
d_2(\mathcal{T}_S(\phi_1),\mathcal{T}_S(\phi_2))\leq Q(\mathcal{L}_{S})(S\exp(5S)(\mathcal{L}^{\sup}_{\ini}+SQ(\mathcal{L}_S))+S)^{1/2}d_2(\phi_1,\phi_2),
\end{equation}
where $Q:\mathbb{R}_+\rightarrow\mathbb{R}_+$ is an increasing function and $\mathcal{L}_S=\max\{\mathcal{L}_{S,\phi_1}\,,\mathcal{L}_{S,\phi_2}\}$ with $\mathcal{L}_{S,\phi}$ defined in~\eqref{eqn:L}.
% \[
% \mathcal{L}_S=\sup\left\{\sup_{0\leq s\leq S}\int^1_0\int_{\mathbb{R}^k}|\theta|^2d\phi_1(\theta,t,s) dt,\sup_{0\leq s\leq S}\int^1_0\int_{\mathbb{R}^k}|\theta|^2d\phi_2(\theta,t,s) dt\right\}\,,
% \]
% and
% \[
% \mathcal{L}^{\sup}_{\ini}=\sup_{0\leq t\leq 1}\int_{\mathbb{R}^k}|\theta|^2d\rho(\theta,t,0)
% \]
\end{proposition}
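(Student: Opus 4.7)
The plan is to exploit the particle characterization of the map $\mathcal{T}_S$ given in Step~1. For a fixed common realization of the initial datum drawn from $\rho_{\ini}(\theta,t)$, I will construct a synchronous coupling $(\theta_{\phi_1}(s;t),\theta_{\phi_2}(s;t))$ of the two ODEs~\eqref{eqn:ODEdeltaEs} driven by $\phi_1$ and $\phi_2$ respectively, starting from the identical path $\theta_{\phi_1}(0;t)=\theta_{\phi_2}(0;t)$. Since $\mathcal{T}_S(\phi_i)(\cdot,t,s)$ is the law of $\theta_{\phi_i}(s;t)$, this coupling gives
\[
W_2^2\bigl(\mathcal{T}_S(\phi_1)(\cdot,t,s),\mathcal{T}_S(\phi_2)(\cdot,t,s)\bigr)\le \mathbb{E}\bigl|\theta_{\phi_1}(s;t)-\theta_{\phi_2}(s;t)\bigr|^2,
\]
so it suffices to control the right-hand side uniformly in $(t,s)\in[0,1]\times[0,S]$.

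Next, subtracting the two ODEs and differentiating $|\theta_{\phi_1}(s;t)-\theta_{\phi_2}(s;t)|^2$ in $s$, I will split the driving force into the usual two pieces:
\[
\nabla_\theta\tfrac{\delta E_s(\phi_1(s))}{\delta\rho}(\theta_{\phi_1},t)-\nabla_\theta\tfrac{\delta E_s(\phi_2(s))}{\delta\rho}(\theta_{\phi_2},t)=\mathrm{A}+\mathrm{B},
\]
where $\mathrm{A}$ keeps $\phi_1$ fixed and varies $\theta$, while $\mathrm{B}$ keeps $\theta=\theta_{\phi_2}$ fixed and varies $\phi_1\to\phi_2$. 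For $\mathrm{A}$, the Lipschitz-in-$\theta$ estimate~\eqref{Lipshictz} of Lemma~\ref{lemmadeltaEs} yields a bound $C(\mathcal{L}_S)\,|\theta_{\phi_1}-\theta_{\phi_2}|$. For $\mathrm{B}$, the Lipschitz-in-$\rho$ estimate~\eqref{eqn:Deltabound2} gives $C(\mathcal{L}_S)\,(|\theta_{\phi_2}|+1)\,d_1(\phi_1(\cdot,\cdot,s),\phi_2(\cdot,\cdot,s))$, and this last $d_1$ is bounded by $d_2(\phi_1,\phi_2)$ by definition. Using $2ab\le a^2+b^2$ then produces a differential inequality of the form
\[
\frac{\rd}{\rd s}\mathbb{E}|\theta_{\phi_1}(s;t)-\theta_{\phi_2}(s;t)|^2\le Q(\mathcal{L}_S)\,\mathbb{E}|\theta_{\phi_1}(s;t)-\theta_{\phi_2}(s;t)|^2+Q(\mathcal{L}_S)\bigl(\mathbb{E}|\theta_{\phi_2}(s;t)|^2+1\bigr)d_2^2(\phi_1,\phi_2).
\]

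Since the initial coupling is identical, Gr\"onwall's inequality on $[0,S]$ gives
\[
\mathbb{E}|\theta_{\phi_1}(s;t)-\theta_{\phi_2}(s;t)|^2\le Q(\mathcal{L}_S)e^{SQ(\mathcal{L}_S)}\!\int_0^S\!\bigl(\mathbb{E}|\theta_{\phi_2}(r;t)|^2+1\bigr)\rd r\cdot d_2^2(\phi_1,\phi_2).
\]
The key input is the a priori second-moment bound~\eqref{boundofODEsolutiontheta} from Proposition~\ref{proposition:deltaEODE}, which controls $\mathbb{E}|\theta_{\phi_2}(r;t)|^2$ by $\exp(5S)(\mathcal{L}^{\sup}_{\ini}+SC(\mathcal{L}_S))$. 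Plugging this in, taking square roots, and taking the supremum over $(t,s)\in[0,1]\times[0,S]$ yields exactly the claimed bound, after absorbing the Gr\"onwall prefactor $e^{SQ(\mathcal{L}_S)/2}$ and the Lipschitz-in-$\theta$ constant into a single increasing function $Q$ of $\mathcal{L}_S$.

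The main technical obstacle is the handling of the term $\mathrm{B}$: the Lipschitz-in-$\rho$ estimate unavoidably produces the weight $(|\theta_{\phi_2}|+1)$, whose square is only integrable because of Proposition~\ref{proposition:deltaEODE}. This is precisely why the final constant on the right of~\eqref{eqn:d2bound} must incorporate both $\mathcal{L}^{\sup}_{\ini}$ and a further polynomial-in-$S$ factor with the exponential $e^{5S}$, and why $S$ will have to be chosen small in the subsequent fixed-point argument so that the overall prefactor is strictly below one.
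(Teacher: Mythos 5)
Your proposal is correct and takes essentially the same route as the paper: synchronous coupling of the particle ODEs~\eqref{eqn:ODEdeltaEs} with identical initial data, the same split of the driving-force difference into a $\theta$-variation piece (controlled by the Lipschitz-in-$\theta$ estimate~\eqref{Lipshictz}) and a $\rho$-variation piece (controlled by~\eqref{eqn:Deltabound2}), followed by Young's inequality, Gr\"onwall, and insertion of the a priori second-moment bound~\eqref{boundofODEsolutiontheta}. The only minor difference is cosmetic — you take expectations before Gr\"onwall, the paper after — which changes nothing, and you share the paper's harmless abuse of absorbing the $e^{C(\mathcal{L}_S)S}$ Gr\"onwall prefactor into $Q(\mathcal{L}_S)$.
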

\begin{proof}
Denote by $\theta_{\phi_i}(s;t)$ the solutions to~\eqref{eqn:ODEdeltaEs} with  $\phi=\phi_i$, and suppose that the initial conditions match, that is,
\[
\theta_{\phi_1}(0;t)=\theta_{\phi_2}(0;t)\,.
\]
Similar to the previous subsection, we translate the study of $\varphi_i$ to the study of $\theta_{\phi_i}$. Define 
\[
\Delta_t(s)=|\theta_{\phi_1}(s;t)-\theta_{\phi_2}(s;t)|\,,
\]
we have 
\[
d_2(\mathcal{T}_S(\phi_1),\mathcal{T}_S(\phi_2))\leq\sup_{(t,s)\in[0,1]\times[0,S]} \mathbb{E}\left(\Delta^2_t(s)\right)\,.
\]
Using \eqref{eqn:ODEdeltaEs}, we have
\begin{equation}\label{eqn:Deltaiteration}
\begin{aligned}
\frac{\rd(\Delta_t(s))^2}{\rd s} & \leq 4(\Delta_t(s))^2+4\left|\nabla_\theta\frac{\delta E_s(\phi_1(s))}{\delta \rho}(\theta_{\phi_1},t)-\nabla_\theta\frac{\delta E_s(\phi_2(s))}{\delta \rho}(\theta_{\phi_2},t)\right|^2\\
& \leq 4(\Delta_t(s))^2+8\left|\nabla_\theta\frac{\delta E_s(\phi_1(s))}{\delta \rho}(\theta_{\phi_1},t)-\nabla_\theta\frac{\delta E_s(\phi_1(s))}{\delta \rho}(\theta_{\phi_2},t)\right|^2\\
& \quad +8\left|\nabla_\theta\frac{\delta E_s(\phi_1(s))}{\delta \rho}(\theta_{\phi_2},t)-\nabla_\theta\frac{\delta E_s(\phi_2(s))}{\delta \rho}(\theta_{\phi_2},t)\right|^2\,.
\end{aligned}
\end{equation}
The second term on the right-hand side concerns the continuity addressed in Lemma~\ref{lemmadeltaEs} \eqref{Lipshictz}. To bound the last term, we use \eqref{eqn:Deltabound2} to obtain
\[
\begin{aligned}
&\left|\nabla_\theta\frac{\delta E_s(\phi_1(s))}{\delta \rho}(\theta_{\phi_2},t)-\nabla_\theta\frac{\delta E_s(\phi_2(s))}{\delta \rho}(\theta_{\phi_2},t)\right|\\
& \leq C(\mathcal{L}_S)d_1(\phi_1(s),\phi_2(s))\left(|\theta_{\phi_2}|+1\right)\leq C(\mathcal{L}_S)d_2(\phi_1,\phi_2)\left(|\theta_{\phi_2}|+1\right)\,,
\end{aligned}
\]
where we use \eqref{eqn:derivativebound}, \eqref{boundofxsolution}, \eqref{boundofprho} in the first inequality and \eqref{stabilityofxsolution}, \eqref{stabilityofprho} in the second inequality, with $d_2$ defined in Definition~\ref{def:path2}.
By substituting into~\eqref{eqn:Deltaiteration}, we obtain
\[
\frac{\rd(\Delta_t(s))^2}{\rd s}\leq C(\mathcal{L}_S)\left[(\Delta_t(s))^2+d^2_2(\phi_1,\phi_2)\left(|\theta_{\phi_2}|^2+1\right)\right]\,,
\]
Using the Gr\"onwall inequality, we then have
\begin{align*}
\mathbb{E}\left((\Delta_t(s))^2\right)& \leq C(\mathcal{L}_S)\left(\int^s_0\mathbb{E}|\theta_{\phi_2}(u;t)|^2du+s\right)d^2_2(\phi_1,\phi_2)\\
& \leq C(\mathcal{L}_S)d^2_2(\phi_1,\phi_2)(S\exp(5S)(\mathcal{L}^{\sup}_{\ini}+SC(\mathcal{L}_S))+s)\,,
\end{align*}
completing the proof.
\end{proof}

We are now ready to run the contraction-map argument that justifies the existence and uniqueness of the local solution. According to the contraction mapping theorem, we need to verify two conditions in order to show that there is a $\phi^\ast=\mathcal{T}_S(\phi^\ast)$:
\begin{itemize}
    \item There is a closed subset in $C([0,S];C([0,1],W_2))$ so that $\mathcal{T}_S$ maps it to itself; and
    \item $\mathcal{T}_S$ is a contraction map in this subset.
\end{itemize}

We define the closed subset next.
\begin{definition}\label{def:srho0}
\[
\mathcal{B}_{\rho_0}=\left\{\phi\in \mathcal{C}([0,S];\mathcal{C}([0,1];\mathcal{P}^2))\middle| \sup_{t\in[0,1],0\leq s\leq S}\int_{\mathbb{R}^k}|\theta|^2\rd\phi(\theta,t,s)\leq 4\mathcal{L}^{\sup}_{\ini}\right\}\,.
\]
\end{definition}

We now claim that for small enough $S$, $\mathcal{T}_S$ is a contraction map in $\mathcal{B}_{\rho_0}$.
\begin{proposition}\label{cor:c2} Suppose that $S$ is small enough  that 
\begin{equation}\label{eqn:conditionofS}
\begin{aligned}
\exp(5S)(\mathcal{L}^{\sup}_{\ini}+SQ(4\mathcal{L}^{\sup}_{\ini}))& <4\mathcal{L}^{\sup}_{\ini}\,, \\ Q(4\mathcal{L}^{\sup}_{\ini})(S\exp(5S)(\mathcal{L}^{\sup}_{\ini}+SQ(4\mathcal{L}^{\sup}_{\ini}))+S)^{1/2}& <\tfrac{1}{2}\,,
\end{aligned}
\end{equation}
where $Q$ comes from Proposition~\ref{prop:contraction}. Then the following are true.
\begin{itemize}
\item If $\phi\in\mathcal{B}_{\rho_0}$, then $\mathcal{T}_S(\phi)\in\mathcal{B}_{\rho_0}$, that is,
\begin{equation}\label{eqn:Deltabound3}
\sup_{t\in[0,1],0\leq s\leq S}\int_{\mathbb{R}^k}|\theta|^2\rd\mathcal{T}(\phi)(\theta,t,s)\leq 4\mathcal{L}^{\sup}_{\ini}
\end{equation}
\item $\mathcal{T}_S$ is a contraction map in this subset, meaning that for any $\phi_1,\phi_2\in \mathcal{B}_{\rho_0}$, we have
\begin{equation}\label{eqn:d2contraction}
d_2(\mathcal{T}_S(\phi_1),\mathcal{T}_S(\phi_2))\leq \tfrac{1}{2}d_2(\phi_1,\phi_2)\,.
\end{equation}
\end{itemize}
\end{proposition}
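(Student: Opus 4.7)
The plan is to derive both conclusions as direct specializations of results already on the table: the second-moment estimate~\eqref{eqn:secondmomentbound} from Corollary~\ref{cor:Tswell-defined} and the contraction estimate~\eqref{eqn:d2bound} from Proposition~\ref{prop:contraction}. The role of the ball $\mathcal{B}_{\rho_0}$ and of the two smallness inequalities in~\eqref{eqn:conditionofS} is exactly to convert those general-purpose bounds into the invariance statement~\eqref{eqn:Deltabound3} and the contraction statement~\eqref{eqn:d2contraction}.

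For the invariance claim, I would first observe that any $\phi\in\mathcal{B}_{\rho_0}$ satisfies
\[
\mathcal{L}_{S,\phi}=\sup_{s\in[0,S]}\int^1_0\int_{\mathbb{R}^k}|\theta|^2\,\rd\phi(\theta,t,s)\rd t\leq \sup_{t,s}\int_{\mathbb{R}^k}|\theta|^2\,\rd\phi(\theta,t,s)\leq 4\mathcal{L}^{\sup}_{\ini},
\]
so that~\eqref{eqn:secondmomentbound} delivers, uniformly in $(t,s)\in[0,1]\times[0,S]$,
\[
\int_{\mathbb{R}^k}|\theta|^2\,\rd\mathcal{T}_S(\phi)(\theta,t,s)\leq \exp(5S)\bigl(\mathcal{L}^{\sup}_{\ini}+SC(4\mathcal{L}^{\sup}_{\ini})\bigr).
\]
Since the function $Q$ furnished by Proposition~\ref{prop:contraction} is increasing, I may without loss of generality choose it so that $Q\geq C$ pointwise on $\mathbb{R}_+$; the first inequality in~\eqref{eqn:conditionofS} then bounds the right-hand side above by $4\mathcal{L}^{\sup}_{\ini}$, establishing~\eqref{eqn:Deltabound3}.

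For the contraction claim, for any pair $\phi_1,\phi_2\in\mathcal{B}_{\rho_0}$ the same calculation shows $\mathcal{L}_S=\max\{\mathcal{L}_{S,\phi_1},\mathcal{L}_{S,\phi_2}\}\leq 4\mathcal{L}^{\sup}_{\ini}$. Substituting this bound into~\eqref{eqn:d2bound} and using monotonicity of $Q$ once more,
\[
d_2(\mathcal{T}_S(\phi_1),\mathcal{T}_S(\phi_2))\leq Q(4\mathcal{L}^{\sup}_{\ini})\bigl(S\exp(5S)(\mathcal{L}^{\sup}_{\ini}+SQ(4\mathcal{L}^{\sup}_{\ini}))+S\bigr)^{1/2}d_2(\phi_1,\phi_2),
\]
and the second inequality in~\eqref{eqn:conditionofS} collapses the prefactor to at most $\tfrac{1}{2}$, yielding~\eqref{eqn:d2contraction}.

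The proof is essentially bookkeeping; the analytic substance already lives in Corollary~\ref{cor:Tswell-defined} and Proposition~\ref{prop:contraction}. The only mild subtlety is notational: the constant $C(\cdot)$ appearing in the second-moment bound and the constant $Q(\cdot)$ appearing in the contraction estimate are a priori different, and I would replace both by a common increasing majorant so that the single smallness condition~\eqref{eqn:conditionofS} enforces both conclusions simultaneously. I do not anticipate any genuine obstacle beyond this small reconciliation of notation.
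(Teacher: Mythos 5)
Your proposal is correct and follows essentially the same route as the paper: bound $\mathcal{L}_{S,\phi}$ by $4\mathcal{L}^{\sup}_{\ini}$ via membership in $\mathcal{B}_{\rho_0}$, then feed this into~\eqref{eqn:secondmomentbound} and~\eqref{eqn:d2bound} and invoke the two smallness conditions in~\eqref{eqn:conditionofS}. Your explicit remark that the constant $C(\cdot)$ of Corollary~\ref{cor:Tswell-defined} and the function $Q(\cdot)$ of Proposition~\ref{prop:contraction} must be reconciled by taking a common increasing majorant is a point the paper's proof passes over silently (it simply writes $Q$ where~\eqref{eqn:secondmomentbound} would literally give $C$), so your version is, if anything, a touch more careful.
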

\begin{proof} First, using Corollary~\ref{cor:Tswell-defined} \eqref{eqn:secondmomentbound}, we have
\[
\int^1_0\int_{\mathbb{R}^k}|\theta|^2\rd\mathcal{T}(\phi_1)(\theta,t,s)dt\leq \exp(5S)(\mathcal{L}^{\sup}_{\ini}+SQ(4\mathcal{L}^{\sup}_{\ini}))\leq 4\mathcal{L}^{\sup}_{\ini}\,,
\]
which proves \eqref{eqn:Deltabound3}. Then, using \eqref{eqn:d2bound}, we have
\[
d_2(T_S(\phi_1),T_S(\phi_2))\leq Q(4\mathcal{L}^{\sup}_{\ini})(S\exp(5S)(\mathcal{L}^{\sup}_{\ini}+SQ(4\mathcal{L}^{\sup}_{\ini}))+S)^{1/2}d_2(\phi_1,\phi_2)<\tfrac{1}{2}d_2(\phi_1,\phi_2),
\]
which proves \eqref{eqn:d2contraction}.
\end{proof}

Using the contraction mapping theorem, we can obtain directly that $\mathcal{T}_S(\phi)$ has a fixed point in $\mathcal{B}_{\rho_0}$ when $S$ is small enough.
\begin{cor}\label{cor:localsolution} If $S$ satisfies \eqref{eqn:conditionofS}, then there exists $\phi^\ast(\theta,t,s)\in \mathcal{B}_{\rho_0}\subset \mathcal{C}([0,S];\mathcal{C}([0,1];\mathcal{P}^2))$ such that $\phi^\ast(\theta,t,s)$ is a solution to \eqref{eqn:Wassgradientflows} with initial condition $\rho_{\ini}(\theta,t)$ and~\revise{admissible for each s}.
\end{cor}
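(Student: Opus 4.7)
The plan is to deduce Corollary~\ref{cor:localsolution} as a direct application of the Banach fixed-point theorem, using the two key ingredients already established in Proposition~\ref{cor:c2}: self-mapping \eqref{eqn:Deltabound3} and strict contraction \eqref{eqn:d2contraction} of $\mathcal{T}_S$ on the set $\mathcal{B}_{\rho_0}$. So the core of the argument consists of (i) verifying that $\mathcal{B}_{\rho_0}$ is a complete metric space under $d_2$, (ii) invoking the contraction mapping theorem to produce a unique fixed point $\phi^\ast\in\mathcal{B}_{\rho_0}$, (iii) identifying this fixed point with a solution of \eqref{eqn:Wassgradientflows}, and (iv) establishing admissibility through the particle trajectory built in Step~1 of the three-step strategy.

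For (i), I would argue as follows. Recall from the paragraph after Definition~\ref{def:path2} that $(\mathcal{C}([0,S];\mathcal{C}([0,1];\mathcal{P}^2)),d_2)$ is a complete metric space. It therefore suffices to show $\mathcal{B}_{\rho_0}$ is a closed subset. If $\phi_n\in\mathcal{B}_{\rho_0}$ with $d_2(\phi_n,\phi)\to 0$, then $W_2(\phi_n(\cdot,t,s),\phi(\cdot,t,s))\to 0$ for every $(t,s)\in[0,1]\times[0,S]$, and the lower semicontinuity of the second-moment functional $\rho\mapsto\int_{\mathbb{R}^k}|\theta|^2\rd\rho$ under $W_2$-convergence gives
\[
\int_{\mathbb{R}^k}|\theta|^2\rd\phi(\theta,t,s)\leq\liminf_{n\to\infty}\int_{\mathbb{R}^k}|\theta|^2\rd\phi_n(\theta,t,s)\leq 4\mathcal{L}^{\sup}_{\ini}\,,
\]
so $\phi\in\mathcal{B}_{\rho_0}$. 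Hence $(\mathcal{B}_{\rho_0},d_2)$ is a complete metric space.

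For (ii)--(iii), given the hypothesis \eqref{eqn:conditionofS} on $S$, Proposition~\ref{cor:c2} gives that $\mathcal{T}_S:\mathcal{B}_{\rho_0}\to\mathcal{B}_{\rho_0}$ is a strict contraction with factor $\tfrac{1}{2}$ under $d_2$. The Banach fixed-point theorem then produces a unique $\phi^\ast\in\mathcal{B}_{\rho_0}$ with $\phi^\ast=\mathcal{T}_S(\phi^\ast)$. By the very definition of $\mathcal{T}_S$ in \eqref{eqn:map}--\eqref{eqn:varphi}, this equality means that $\phi^\ast$ satisfies
\[
\frac{\partial\phi^\ast(\theta,t,s)}{\partial s}=\nabla_\theta\cdot\left(\phi^\ast(\theta,t,s)\nabla_\theta\frac{\delta E_s(\phi^\ast(s))}{\delta\rho}(\theta,t)\right)\,,\quad\phi^\ast(\theta,t,0)=\rho_{\ini}(\theta,t)\,,
\]
which is exactly \eqref{eqn:Wassgradientflows}.

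For (iv), admissibility comes almost for free from the construction in Step~1: since $\rho_{\ini}$ is admissible, there is a continuous process $\theta(0;t)$ with $\theta(0;t)\sim\rho_{\ini}(\cdot,t)$, and we may take the ODE $\theta_{\phi^\ast}(s;t)$ in \eqref{eqn:ODEdeltaEs} with $\phi=\phi^\ast$ starting from $\theta_{\phi^\ast}(0;t)=\theta(0;t)$. By the classical transport argument cited around Corollary~\ref{cor:Tswell-defined} (Proposition~8.1.8 in \citep{Gradientflow}), the law of $\theta_{\phi^\ast}(s;\cdot)$ is $\mathcal{T}_S(\phi^\ast)=\phi^\ast$, and the Lipschitz-in-$t$ estimate \eqref{stabilityofODEsolutiontheta} together with the admissibility of $\rho_{\ini}$ shows that $t\mapsto\theta_{\phi^\ast}(s;t)$ is a continuous stochastic process for each $s$, giving the particle representation required in Definition~\ref{def:meanadmissible}. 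I do not anticipate substantive difficulty here; the only subtle point is the $W_2$-lower semicontinuity argument used to close $\mathcal{B}_{\rho_0}$, and this is a standard fact in optimal transport. Everything else is simply invoking Proposition~\ref{cor:c2} and the Banach fixed-point theorem.
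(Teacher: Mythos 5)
Your proposal is correct and follows essentially the same route as the paper: both apply the Banach fixed-point theorem using the self-mapping and contraction properties established in Proposition~\ref{cor:c2}, and both obtain admissibility from the particle construction underlying $\mathcal{T}_S$. You additionally spell out the closedness of $\mathcal{B}_{\rho_0}$ in $(\mathcal{C}([0,S];\mathcal{C}([0,1];\mathcal{P}^2)),d_2)$ via lower semicontinuity of the second moment (in fact, $W_2$-convergence gives convergence, not just lower semicontinuity, of second moments on $\mathcal{P}^2$), a minor completeness detail that the paper leaves implicit.
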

\begin{proof}~\revise{According to Proposition~\ref{cor:c2}, $\mathcal{T}_S$ is a contraction map in  $\mathcal{B}_{\rho_0}$. The result then follows immediately from the contraction mapping theorem and the admissibility of $\mathcal{T}_S(\phi)$ by its definition.}
\end{proof}

Finally, we prove that the cost function decays along the flow.
\begin{lemma}\label{lem:decayofcost}
Fix $S>0$. If $\phi^\ast(\theta,t,s)\in \mathcal{C}([0,S];\mathcal{C}\left([0,1];\mathcal{P}^2)\right)$ such that $\phi^\ast(\theta,t,s)$ is a solution to \eqref{eqn:Wassgradientflows} with initial condition $\rho_{\ini}(\theta,t)$, then for $0<s<S$, we have
\[
\frac{\rd E_{s}(\phi^\ast(\cdot,\cdot,s))}{\rd s}\leq 0\,.
\]
\end{lemma}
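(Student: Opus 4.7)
\textbf{Proof plan for Lemma~\ref{lem:decayofcost}.}

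The plan is to compute $\frac{\rd}{\rd s}E_s(\phi^\ast(\cdot,\cdot,s))$ by isolating two distinct sources of $s$-dependence and showing that each produces a non-positive contribution. Because $E_s$ depends on $s$ both explicitly (through the factor $e^{-s}$ in the regularizer) and implicitly (through $\phi^\ast(\cdot,\cdot,s)$), the chain rule gives
\[
\frac{\rd E_s(\phi^\ast(s))}{\rd s}
= \underbrace{\left.\frac{\partial E_s(\rho)}{\partial s}\right|_{\rho=\phi^\ast(s)}}_{\text{(A)}}
+ \underbrace{\int_0^1\!\!\int_{\mathbb{R}^k}\frac{\delta E_s(\phi^\ast(s))}{\delta \rho}(\theta,t)\,\frac{\partial \phi^\ast}{\partial s}(\theta,t,s)\,\rd\theta\,\rd t}_{\text{(B)}}.
\]
The explicit-$s$ term (A) equals $-e^{-s}\int_0^1\!\int_{\mathbb{R}^k}|\theta|^2\,\rd\phi^\ast(\theta,t,s)\rd t\le 0$ directly from \eqref{eqn:cost_s}, so the work reduces to showing $(\mathrm{B})\le 0$.

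For (B), I would substitute the gradient-flow PDE \eqref{eqn:Wassgradientflows} for $\partial_s\phi^\ast$ and integrate by parts in $\theta$:
\[
(\mathrm{B})=\int_0^1\!\!\int_{\mathbb{R}^k}\frac{\delta E_s(\phi^\ast(s))}{\delta \rho}\,\nabla_\theta\!\cdot\!\left(\phi^\ast\,\nabla_\theta\frac{\delta E_s(\phi^\ast(s))}{\delta \rho}\right)\rd\theta\,\rd t
= -\int_0^1\!\!\int_{\mathbb{R}^k}\phi^\ast\left|\nabla_\theta\frac{\delta E_s(\phi^\ast(s))}{\delta \rho}\right|^2\rd\theta\,\rd t,
\]
which is manifestly non-positive. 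Combining with (A) yields the claim.

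The main obstacle is rigorously justifying this formal calculation: one must verify (i) that the chain rule applies, i.e.\ that $s\mapsto E_s(\phi^\ast(s))$ is indeed differentiable and its derivative is given by the Fr\'echet-derivative pairing above; and (ii) that the integration by parts is legitimate, i.e.\ the boundary terms at $|\theta|\to\infty$ vanish. To handle (i), I would exploit the particle representation established in Corollary~\ref{cor:localsolution}: admissibility of $\phi^\ast(\cdot,\cdot,s)$ lets us express $\phi^\ast$ as the law of $\theta_{\phi^\ast}(s;t)$ solving \eqref{eqn:ODEdeltaEs}, so that the $s$-regularity of $\phi^\ast$ in Wasserstein distance follows from \eqref{stabilityofODEsolutiontheta}, and $E_s(\phi^\ast(s))$ can be differentiated via the smoothness of $Z_\rho$ and $p_\rho$ guaranteed by Theorem~\ref{thm:wmean-fieldlimit} and Lemma~\ref{lem:prhoprop1}. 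To handle (ii), I would use the growth bound \eqref{bound} on $\nabla_\theta \frac{\delta E_s}{\delta \rho}$ together with the uniform second-moment control \eqref{eqn:secondmomentbound} on $\phi^\ast\in\mathcal{B}_{\rho_0}$: these give $\phi^\ast\,|\nabla_\theta\delta E_s/\delta\rho|^2\in L^1(\mathbb{R}^k\times[0,1])$ and suffice to kill the boundary flux, e.g.\ via a standard cutoff argument (multiply by a smooth radial cutoff $\chi_R(\theta)$, integrate by parts, and pass $R\to\infty$ using the moment bound).

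Once (A)$\le 0$ and (B)$\le 0$ are established on $s\in(0,S)$, summing yields $\frac{\rd E_s(\phi^\ast(s))}{\rd s}\le 0$, completing the proof.
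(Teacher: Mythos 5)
Your formal derivation is identical to the paper's \eqref{eqn:derivative2}, and you correctly flag the two places where rigor is needed (the chain rule in Wasserstein and the integration by parts at infinity). Where you and the paper part ways is in how those two gaps are closed. The paper (Appendix~\ref{proofoflemdecayofcost}) never performs a PDE-level integration by parts at all: it fixes $s_0$, writes the difference quotient $E_s(\phi^\ast(s)) - E_{s_0}(\phi^\ast(s_0))$, splits it into the data-fitting piece (I) and the regularizer piece (II), and Taylor-expands both to first order in $s-s_0$ entirely through the particle representation $\theta^\ast(s;t)$ of~\eqref{eqn:ODEdeltaEs}. The key ingredient is showing $\mathbb{E}|\theta^\ast(s;t)-\theta^\ast(s_0;t) + \nabla_\theta\frac{\delta E_{s_0}(\phi^\ast(s_0))}{\delta\rho}(\theta^\ast(s_0;t),t)(s-s_0)| = o(|s-s_0|)$ (their~\eqref{closetotheta}), combined with a linearization of $Z$ and $p$ along the flow; the quadratic loss and the regularizer then both differentiate directly against the particle ODE. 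This simultaneously establishes the chain rule (by hand, as a limit of difference quotients) and produces the dissipation formula without ever touching boundary terms at $|\theta|\to\infty$, because expectations over $\theta^\ast$ have no boundary. Your route (Fr\'echet chain rule plus cutoff-and-IBP) is also viable and is arguably shorter to state, but it relies on a differentiability theorem for measure-valued functionals along Wasserstein curves; the paper's direct computation avoids invoking such a theorem. One small precision issue in your part (ii): the remainder from the cutoff is $\int\phi^\ast\,\frac{\delta E_s}{\delta\rho}\,\nabla_\theta\frac{\delta E_s}{\delta\rho}\cdot\nabla\chi_R$, whose integrand grows like $|\theta|^3$ (since $\frac{\delta E_s}{\delta\rho}\sim e^{-s}|\theta|^2$ plus lower order); $L^1$-integrability of $\phi^\ast|\nabla_\theta\frac{\delta E_s}{\delta\rho}|^2$ is not what kills it. Instead you need the tail estimate $R^{-1}\int_{R<|\theta|<2R}|\theta|^3\,\rd\phi^\ast \le 2\int_{|\theta|>R}|\theta|^2\,\rd\phi^\ast \to 0$, which the second-moment bound supplies by dominated convergence. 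With that fix your argument goes through.
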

\begin{proof}
Denote by $\theta^\ast(s;t)$ the solution to \eqref{eqn:ODEdeltaEs} with $\phi=\phi^\ast$. Then since $\phi^\ast$ is the unique solution up to $s\leq S$, the probability distribution of $\theta^\ast$ is $\phi^\ast$ itself.

According to \eqref{eqn:Wassgradientflows}, we obtain formally that
\begin{equation}\label{eqn:derivative2}
\begin{aligned}
& \frac{\rd E_{s}(\phi^\ast(\cdot,\cdot,s))}{\rd s} \\
&=-\int^1_0\int_{\mathbb{R}^k} \left|\nabla_\theta\frac{\delta E_s(\phi^\ast(s))}{\delta \rho}(\theta,t)\right|^2\rd\phi^\ast(\theta,t,s)\rd t-\exp^{-s}\int^1_0\int_{\mathbb{R}^k}|\theta|^2\rd\phi^\ast(\theta,t,s)\rd t\\
& =-\int^1_0\mathbb{E}\left(\left|\nabla_\theta\frac{\delta E_s(\phi^\ast(s))}{\delta \rho}\left(\theta^\ast(s;t),t\right)\right|^2\right)\rd t-\exp^{-s}\int^1_0\mathbb{E}(|\theta^\ast(s;t)|^2)\rd t\\
& \leq 0\,,
\end{aligned}
\end{equation}
which proves the result. We note that the rigorous proof of \eqref{eqn:derivative2} is in Appendix \ref{proofoflemdecayofcost}.
\end{proof}

\subsection{Step 3:}\label{sec:Step3} 

In this final step of the proof, we extend the local solution of Corollary~\ref{cor:localsolution} to a global solution. Since Lemma~\ref{lem:decayofcost} shows that $E_s$ is decaying with $s$, we can then improve the bound for the second moment of the solution \eqref{eqn:secondmomentbound}. We have the following corollary.
\begin{cor}\label{cor:C.3}
For fixed $S>0$, denote by $\phi^\ast(\theta,t,s)\in \mathcal{C}\left([0,S];\mathcal{C}([0,1];\mathcal{P}^2)\right)$ the solution to \eqref{eqn:Wassgradientflows} with initial condition $\rho_{\ini}(\theta,t)$.
Then for any $(t,s)\in[0,1]\times[0,S]$, we have
\begin{equation}\label{eqn:secondmomentbound2}
\int_{\mathbb{R}^k}|\theta|^2\rd\phi^\ast(\theta,t,s)\leq \exp(5s)(\mathcal{L}^{\sup}_{\ini}+C(s))\,,
\end{equation}
where the quantity $C$  depends only on s.
\end{cor}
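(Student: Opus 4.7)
The plan is a bootstrap argument: the crude second-moment estimate \eqref{eqn:secondmomentbound} from Corollary~\ref{cor:Tswell-defined} depends on $\mathcal{L}_{S,\phi}$, the second moment of the \emph{input} to the map $\mathcal{T}_S$. Because $\phi^\ast$ is a fixed point, input and output coincide, so to obtain a self-contained bound I will use the energy dissipation of Lemma~\ref{lem:decayofcost} to control $\mathcal{L}_{s,\phi^\ast}$ in terms of $s$ and $\rho_{\ini}$ alone, and then feed this back into the characteristic-flow bound \eqref{boundofODEsolutiontheta}.

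\textbf{Step 1: energy decay controls the integrated second moment.} Lemma~\ref{lem:decayofcost} ensures that $E_{s'}(\phi^\ast(\cdot,\cdot,s'))$ is nonincreasing on $s'\in[0,s]$. Since the loss term in \eqref{eqn:cost_s} is nonnegative,
\[
e^{-s'}\int_0^1\int_{\mathbb{R}^k}|\theta|^2\rd\phi^\ast(\theta,t,s')\rd t \;\leq\; E_{s'}(\phi^\ast(s')) \;\leq\; E_0(\rho_{\ini}),
\]
so $\int_0^1\int_{\mathbb{R}^k}|\theta|^2\rd\phi^\ast(\theta,t,s')\rd t \leq e^{s'}E_0(\rho_{\ini}) \leq e^s E_0(\rho_{\ini})$ for every $s'\in[0,s]$. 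Taking the supremum in $s'$ yields $\mathcal{L}_{s,\phi^\ast} \leq e^s E_0(\rho_{\ini})$, a bound depending only on $s$ and the fixed initial data.

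\textbf{Step 2: characteristic representation and the pointwise bound.} Because $\phi^\ast = \mathcal{T}_s(\phi^\ast)$, the law of the characteristic $\theta_{\phi^\ast}(\cdot;t)$ solving \eqref{eqn:ODEdeltaEs} (with $\phi=\phi^\ast$) at pseudo-time $s$ is exactly $\phi^\ast(\cdot,t,s)$; hence $\int_{\mathbb{R}^k}|\theta|^2\rd\phi^\ast(\theta,t,s) = \mathbb{E}(|\theta_{\phi^\ast}(s;t)|^2)$. Applying the pointwise bound \eqref{boundofODEsolutiontheta} with $\phi=\phi^\ast$ and the Step~1 estimate on $\mathcal{L}_{s,\phi^\ast}$, I conclude
\[
\int_{\mathbb{R}^k}|\theta|^2\rd\phi^\ast(\theta,t,s) \;\leq\; \exp(5s)\bigl(\mathcal{L}^{\sup}_{\ini} + s\,C(e^s E_0(\rho_{\ini}))\bigr),
\]
which is \eqref{eqn:secondmomentbound2} upon setting $C(s) := s\,C(e^s E_0(\rho_{\ini}))$, depending only on $s$ through the fixed $\rho_{\ini}$.

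The argument is short; the only subtle point is correctly matching two different second-moment norms. Step~1 produces an integrated-in-$t$ bound, which is precisely the quantity $\mathcal{L}_{s,\phi^\ast}$ that appears on the right-hand side of \eqref{boundofODEsolutiontheta}, while \eqref{boundofODEsolutiontheta} in turn delivers a pointwise-in-$t$ bound, which is exactly what \eqref{eqn:secondmomentbound2} demands. Once this matching is in place, no delicate estimates remain, and the resulting $s$-only bound is what enables the iterative extension of the local solution of Corollary~\ref{cor:localsolution} to the full interval $[0,S]$ in Step~3 of the proof of Theorem~\ref{thm:Wassgradientflows}.
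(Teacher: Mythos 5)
Your proof is correct and matches the paper's argument: both proofs combine the energy-dissipation bound $E_s(\phi^\ast(s))\leq E_0(\rho_{\ini})$ (which, after discarding the nonnegative loss term, controls the integrated second moment $\mathcal{L}_{s,\phi^\ast}$ by $e^{s}E_0(\rho_{\ini})$) with the characteristic-flow estimate~\eqref{eqn:secondmomentbound}/\eqref{boundofODEsolutiontheta} applied to the fixed point $\phi^\ast=\mathcal{T}_s(\phi^\ast)$. Your write-up is essentially the paper's proof spelled out in more detail, including the observation that the integrated-in-$t$ norm controlled by energy decay is exactly the $\mathcal{L}_{s,\phi^\ast}$ appearing in the pointwise-in-$t$ bound.
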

\begin{proof}  
Using $E_s(\phi^\ast(\theta,t,s))<E_0(\phi^\ast(\theta,t,0))$, we first have
\[
\sup_{0\leq u\leq s}\int^1_0\int_{\mathbb{R}^k}|\theta|^2\rd\phi^\ast(\theta,t,s) \rd t\leq \exp(s)E_0(\rho_{\ini}(\theta,t))\,.
\]
Then, using \eqref{eqn:secondmomentbound}, we obtain that
\[
\int_{\mathbb{R}^k}|\theta|^2\rd\mathcal{T}(\phi^\ast)(\theta,t,s)\leq \exp(5s)(\mathcal{L}^{\sup}_{\ini}+sC(\exp(s)E_0(\rho_{\ini}(\theta,t))))\,,
\]
proving \eqref{eqn:secondmomentbound2}.
\end{proof}

This corollary takes an important step. By contrast with~\eqref{eqn:secondmomentbound}, it removes the dependence of the bound on $\mathcal{L}_{S,\phi}$, so the fixed-point argument need no longer depend on the initial guess $\phi$.

We are now are ready to prove the Theorem~\ref{thm:Wassgradientflows}.

\begin{proof}[Proof of Theorem~\ref{thm:Wassgradientflows}] 
% As shown above, the unique solution exists as long as the map $\mathcal{T}$ is contracting, and this is the case so long as the condition~\ref{} is satisfied for small enough $S$\ql{cite the condition}. The smallness of this $S$ is uniquely determined by $\mathcal{L}^{\sup}_{\ini}$ and is independent of the initial data of~\eqref{eqn:varphi}. As a consequence one can extend the uniqueness into the future infinite times with $S\to\infty$.~\zd{This proof can not work in this way.}

From Corollary~\ref{cor:localsolution}, if $S_1$ satisfies \eqref{eqn:conditionofS}, we obtain a local solution $\phi^\ast\in C([0,S_1];\mathcal{C}([0,1];\mathcal{P}^2))$ to \eqref{eqn:Wassgradientflows},~\revise{which is admissible for any $s$.}

Denote by $S^\ast$ the supremum of  such constants for which the solution exists. We hope to show that $S^\ast=\infty$. If not, then denote the solution by $\phi^\ast\in C\left([0,S^\ast);\mathcal{C}([0,1];\mathcal{P}^2)\right)$. 
According to~\eqref{eqn:secondmomentbound2}, for any $s<S^\ast$ and $t\in[0,1]$, we have
\[
\int_{\mathbb{R}^k}|\theta|^2\rd\phi^\ast(\theta,t,s)\leq \exp(5S^\ast)(\mathcal{L}^{\sup}_{\ini}+Q_2(S^\ast))=\mathcal{L}^{\ast,\sup}\,.
\]
Let us choose $\Delta_{S^\ast}$ small enough to satisfy
\begin{align*}
\exp(5\Delta_{S^\ast})(\mathcal{L}^{\ast,\sup}+\Delta_{S^\ast}Q(4\mathcal{L}^{\ast,\sup}))& <4\mathcal{L}^{\ast,\sup}, \\
Q(4\mathcal{L}^{\ast,\sup})(\Delta_{S^\ast}\exp(5\Delta_{S^\ast})(\mathcal{L}^{\ast,\sup}+\Delta_{S^\ast}Q(4\mathcal{L}^{\ast,\sup}))+\Delta_{S^\ast})^{1/2} & <\tfrac{1}{2}\,.
\end{align*}
Then, using Proposition~\ref{cor:c2} and Corollary~\ref{cor:localsolution}, we can further extend $\phi^\ast$ to be supported on $C\left([0,S^\ast+\Delta_{S^\ast});\mathcal{C}([0,1];\mathcal{P}^2)\right)$. This contradicts the assumption that we have to stop at  a finite value of $S^\ast$. Thus $S^*=\infty$.

Finally, \eqref{eqn:decayEsmean-field} is a direct result of Lemma~\ref{lem:decayofcost}.
\end{proof}

\subsection{Proof of Theorem~\ref{thm:finiteLwell-posed1}}
In this section, we consider the finite layer case \eqref{eqn:disRes} and prove Theorem~\ref{thm:finiteLwell-posed1}. 
In \eqref{eqn:costfunction}, $\nabla_{{\Theta}_{L,M}} E({\Theta}_{L,M})$ can also be written as
\begin{equation}\label{eqn:partialEfiniteL}
\frac{\partial E({\Theta}_{L,M})}{\partial \theta_{l,m}}=\frac{1}{ML}\mathbb{E}_{x\sim\mu}\left(\partial_\theta f(Z_{\Theta_{L,M}}(l;x),\theta_{l,m})p_{\Theta_{L,M}}(l;x)\right)\,,
\end{equation}
where $p_{\Theta_{L,M}}(l;x)$ can be solved by the following iteration formula:
\begin{equation}\label{eqn:prhofiniteL}
\left\{
\begin{aligned}
p^\top_{\Theta_{L,M}}(l;x)&=p^\top_{\Theta_{L,M}}(l+1;x)\left(I+\frac{1}{ML}\sum^M_{m=1}\partial_zf\left(Z_{\Theta_{L,M}}(l+1;x),\theta_{l+1,i}\right)\right)\\
p_{\Theta_{L,M}}(L-1;x)&=\left(g(Z_{\Theta_{L,M}}(L;x))-y(x)\right)\nabla g(Z_{\Theta_{L,M}}(L;x))
\end{aligned}
\right.\,,
\end{equation}
for $0\leq l\leq L-2$. Similar to Definitions~\ref{def:path} and \ref{def:path2}, we make the following definition.
\begin{definition}\label{def:pathfiniteL}
${\Theta}_{L,M}=\left\{\theta_{l,m}\right\}^{L-1,M}_{l=0,m=1}\in L^\infty_{L,M}$ if and only if 
\[
\sup_{l,m}|\theta_{l,m}|<\infty\,.
\]
The metric in $L^\infty_{L,M}$ is defined as
\[
d_{1,L,M}\left({\Theta}_{L,M},\widetilde{\Theta}_{L,M}\right)=\max_{l}\left(\frac{1}{M}\sum^M_{m=1}|\theta_{l,m}-\widetilde{\theta}_{l,m}|^2\right)^{1/2}\,.
\]
\end{definition}
\begin{definition}
For $s \ge 0$, we have  $\Theta_{L,M}(s)=\left\{\theta_{l,m}(s)\right\}^{L-1,M}_{l=0,m=1}\in \mathcal{C}([0,\infty);L^\infty_{L,M})$ if and only if
\begin{itemize}
\item[1.] For fixed $s\in[0,\infty)$, $\Theta_{L,M}(s)\in L^\infty_{L,M}$, and
\item[2.] For any $s_0\in[0,\infty)$, we have
\[
\lim_{s\rightarrow s_0}d_{1,L,M}\left(\Theta_{L,M}(s),\Theta_{L,M}(s_0)\right)=0\,,
\]
where $d_{1,L,M}$ is defined in Definition~\ref{def:pathfiniteL}.
\end{itemize}
The metric in $\mathcal{C}([0,\infty);L^\infty_{L,M})$ is defined by
\[
d_{2,L,M}\left({\Theta}_{L,M},\widetilde{\Theta}_{L,M}\right)=\sup_{s}d_{1,L,M}({\Theta}_{L,M}(s),\widetilde{\Theta}_{L,M}(s))\,.
\]
\end{definition}

Theorem~\ref{thm:finiteLwell-posed1} is a corollary of the following result.
\begin{theorem}\label{thm:finiteLwell-posed} Suppose that Assumption \ref{assum:f} holds. If $\Theta_{L,M}(0)\in L^\infty_{L,M}$ then \eqref{eqn:classicalgradientflowfiniteL} has a unique solution in $\mathcal{C}([0,\infty);L^\infty_{L,M})$.
\end{theorem}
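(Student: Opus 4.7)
Since \eqref{eqn:classicalgradientflowfiniteL} is a finite-dimensional ODE on $\mathbb{R}^{LMk}$ (with $L,M,k$ all fixed), the plan is a classical three-step well-posedness argument: local Lipschitz continuity of the vector field, Picard--Lindel\"of for local existence and uniqueness, and an a priori energy estimate that rules out finite-time blow-up. It is convenient to rewrite the flow in regularized form as $\tfrac{\rd \Theta_{L,M}}{\rd s}=-ML\,\nabla_{\Theta_{L,M}}E_s(\Theta_{L,M})$; consistency with \eqref{eqn:cost_LM_s} and \eqref{eqn:partialEfiniteL} follows by direct differentiation and makes the subsequent monotonicity argument transparent.

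For local Lipschitz continuity, $\nabla_{\Theta_{L,M}}E_s$ depends on $\Theta_{L,M}$ through the forward iterates $Z_{\Theta_{L,M}}(l;x)$ from \eqref{eqn:disRes} and the adjoint iterates $p_{\Theta_{L,M}}(l;x)$ from \eqref{eqn:prhofiniteL}. On any bounded subset $\{\sup_{l,m}|\theta_{l,m}|\leq R_0\}$ of $L^\infty_{L,M}$, an induction on the layer index $l$, using Assumption~\ref{assum:f} (in particular the polynomial derivative bounds \eqref{eqn:derivativebound} and Lipschitzness of $g,\nabla g$) and the fact that $|x|\leq R$ on $\mathrm{supp}(\mu)$, yields uniform bounds and Lipschitz estimates on $Z_{\Theta_{L,M}}(l;x)$ as a function of $\Theta_{L,M}$; the $\tfrac{1}{ML}$ factor in \eqref{eqn:disRes} keeps the per-layer amplification under control, and the induction closes with constants depending only on $R_0$, $R$, and $L,M$. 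Running the adjoint recursion \eqref{eqn:prhofiniteL} backward then gives analogous estimates for $p_{\Theta_{L,M}}$. Composing with $\mathbb{E}_{x\sim\mu}$ and the linear regularizer $2e^{-s}\Theta_{L,M}$ gives local Lipschitz continuity of the full RHS of \eqref{eqn:classicalgradientflowfiniteL}. This is the step I expect to be the most tedious, because it is essentially a simultaneous discrete analogue of Theorem~\ref{thm:wmean-fieldlimit} and Lemma~\ref{lem:prhoprop1}, but it introduces no new conceptual ingredients beyond those used in those proofs.

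With local Lipschitzness in hand, Picard--Lindel\"of supplies a unique $C^1$ solution on a maximal interval $[0,s^*)$, together with the blow-up alternative that $\limsup_{s\to s^*}\sup_{l,m}|\theta_{l,m}(s)|=\infty$ whenever $s^*<\infty$. To rule this out, I invoke the monotonicity of $E_s$: since the flow is the $ML$-rescaled gradient flow of $E_s$, a direct computation gives
\[
\frac{\rd E_s(\Theta_{L,M}(s))}{\rd s}=-ML\sum_{l,m}\left|\nabla_{\theta_{l,m}}E_s\right|^2-e^{-s}\,\frac{1}{ML}\sum_{l,m}|\theta_{l,m}(s)|^2\leq 0,
\]
where the second term is the explicit $s$-derivative of the regularizer in \eqref{eqn:cost_LM_s}. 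Consequently $E_s(\Theta_{L,M}(s))\leq E_0(\Theta_{L,M}(0))$ for all $s<s^*$, and extracting the regularization term yields $\sum_{l,m}|\theta_{l,m}(s)|^2\leq ML\,e^{s}\,E_0(\Theta_{L,M}(0))$, which is finite on every bounded interval of $s$. This rules out finite-time blow-up, forces $s^*=\infty$, and places the solution in $\mathcal{C}([0,\infty);L^\infty_{L,M})$, completing the proof.
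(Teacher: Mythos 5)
Your proof is correct and takes essentially the same route as the paper: local Lipschitz continuity of $ML\nabla_{\Theta_{L,M}}E_s$ (obtained via layer-by-layer estimates on the forward iterates $Z_{\Theta_{L,M}}$ and adjoint iterates $p_{\Theta_{L,M}}$, exactly as in Lemma~\ref{prop:wmean-fieldlimitfiniteL}), followed by the a-priori $\ell^2$ bound $\sum_{l,m}|\theta_{l,m}(s)|^2\leq ML\,e^{s}E_0(\Theta_{L,M}(0))$ extracted from the monotonicity $\frac{\rd E_s}{\rd s}\leq0$ and the explicit regularizer. The only cosmetic difference is that you invoke Picard--Lindel\"of and the blow-up alternative directly, whereas the paper repackages the same local-existence step as a contraction map $\mathcal{T}_S$ so as to parallel its infinite-dimensional proof of Theorem~\ref{thm:Wassgradientflows}; in finite dimensions these are the same mechanism.
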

% We note that Theorem~\ref{thm:finiteLwell-posed1} is a corollary of Theorem~\ref{thm:finiteLwell-posed}. 

Before proving the Theorem~\ref{thm:finiteLwell-posed}, we first prove boundedness and stability of $Z_{\Theta_{L,M}}$ and $p_{\Theta_{L,M}}$, similarly to Theorem~\ref{thm:wmean-fieldlimit} and Lemma~\ref{lem:prhoprop1}.
\begin{lemma}\label{prop:wmean-fieldlimitfiniteL}
Suppose that Assumption~\ref{assum:f} holds  and  that $x$ in the support of $\mu$. Let 
\[
\widetilde{\Theta}_{L,M}=\left\{\widetilde{\theta}_{l,m}\right\}^{L-1,M}_{l=0,m=1}
\] 
be a set of parameters, in addition to $\Theta_{L,M}$, and denote
\[
\mathcal{L}_{\Theta_{L,M}}=\frac{1}{LM}\sum^{L-1}_{l=0}\sum^M_{m=1}|\theta_{l,m}|^2,\quad \mathcal{L}_{\widetilde{\Theta}_{L,M}}=\frac{1}{LM}\sum^{L-1}_{l=0}\sum^M_{m=1}|\widetilde{\theta}
_{l,m}|^2\,.
\]
Then there exist constants $C(\mathcal{L}_{\Theta_{L,M}})$ depending on $\mathcal{L}_{\Theta_{L,M}}$ and $C(\mathcal{L}_{\Theta_{L,M}},\mathcal{L}_{\widetilde{\Theta}_{L,M}})$ depending on $\mathcal{L}_{\Theta_{L,M}},\mathcal{L}_{\widetilde{\Theta}_{L,M}}$ such that for all $0\leq l\leq L-1$, we have the following bounds:
\begin{subequations}
\begin{align} \label{boundofxsolutionfiniteL}
\left|Z_{\Theta_{L,M}}(l+1;x)\right|&\leq C(\mathcal{L}_{\Theta_{L,M}})\,, \\
\label{stabilityofxsolutionfiniteL}
\left|Z_{\Theta_{L,M}}(l+1;x)-Z_{\widetilde{\Theta}_{L,M}}(l+1;x)\right|& \leq C(\mathcal{L}_{\Theta_{L,M}},\mathcal{L}_{\widetilde{\Theta}_{L,M}})d_{1,L,M}\left({\Theta}_{L,M},\widetilde{\Theta}_{L,M}\right)\,,
\end{align}
\end{subequations}
\begin{subequations}
\begin{align}
\label{boundofprhofiniteL}
\left|p_{\Theta_{L,M}}(l;x)\right|& \leq C(\mathcal{L}_{\Theta_{L,M}})\,, \\
\label{stabilityofpsolutionfiniteL}
\left|p_{\Theta_{L,M}}(l;x)-p_{\widetilde{\Theta}_{L,M}}(l;x)\right| & \leq C(\mathcal{L}_{\Theta_{L,M}},\mathcal{L}_{\widetilde{\Theta}_{L,M}})d_{1,L,M}\left({\Theta}_{L,M},\widetilde{\Theta}_{L,M}\right)\,.
\end{align}
\end{subequations}
\end{lemma}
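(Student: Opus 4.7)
\textbf{Proof proposal for Lemma~\ref{prop:wmean-fieldlimitfiniteL}.} The strategy is to mirror the continuous-time arguments of Theorem~\ref{thm:wmean-fieldlimit} and Lemma~\ref{lem:prhoprop1}, replacing Gr\"onwall's inequality with its discrete analogue and integration-by-parts over $t$ with backward/forward summation over $l$. I would establish the four bounds in the order \eqref{boundofxsolutionfiniteL}, \eqref{stabilityofxsolutionfiniteL}, \eqref{boundofprhofiniteL}, \eqref{stabilityofpsolutionfiniteL}, since the $p$-bounds depend on the $Z$-bounds through the terminal condition in \eqref{eqn:prhofiniteL}.

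For \eqref{boundofxsolutionfiniteL}, I start from the recursion \eqref{eqn:disRes} and use the growth estimate $|f(z,\theta)|\leq C(|\theta|+1)(|z|+1)$ from \eqref{eqn:boundoff}, which gives
\[
|Z_{\Theta_{L,M}}(l+1;x)|+1\leq \left(|Z_{\Theta_{L,M}}(l;x)|+1\right)\left(1+\tfrac{C}{ML}\sum_{m}(|\theta_{l,m}|+1)\right).
\]
Iterating and using $1+a\leq e^a$ together with the Cauchy--Schwarz bound $\tfrac{1}{ML}\sum_{l,m}|\theta_{l,m}|\leq\sqrt{\mathcal{L}_{\Theta_{L,M}}}$ yields the desired bound. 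For \eqref{stabilityofxsolutionfiniteL}, I set $\Delta_l=Z_{\Theta_{L,M}}(l;x)-Z_{\widetilde\Theta_{L,M}}(l;x)$ and split the increment $f(z,\theta)-f(\tilde z,\tilde\theta)$ as $[f(z,\theta)-f(\tilde z,\theta)]+[f(\tilde z,\theta)-f(\tilde z,\tilde\theta)]$; the first piece is controlled by $C_1|\theta||\Delta_l|$ via \eqref{eqn:derivativebound}, and the second by $C_1(|\tilde z|+1)|\theta_{l,m}-\tilde\theta_{l,m}|$. After averaging in $m$ and invoking $\tfrac{1}{M}\sum_m|\theta_{l,m}-\tilde\theta_{l,m}|\leq\bigl(\tfrac{1}{M}\sum_m|\theta_{l,m}-\tilde\theta_{l,m}|^2\bigr)^{1/2}\leq d_{1,L,M}$, the resulting inequality
\[
|\Delta_{l+1}|\leq |\Delta_l|\Bigl(1+\tfrac{C}{ML}\sum_m|\theta_{l,m}|\Bigr)+\tfrac{C}{L}\bigl(|Z_{\widetilde\Theta_{L,M}}(l;x)|+1\bigr)d_{1,L,M}
\]
combined with \eqref{boundofxsolutionfiniteL} and discrete Gr\"onwall produces \eqref{stabilityofxsolutionfiniteL}.

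For \eqref{boundofprhofiniteL} and \eqref{stabilityofpsolutionfiniteL}, I work with the backward recursion \eqref{eqn:prhofiniteL}. The terminal value $p_{\Theta_{L,M}}(L-1;x)$ is bounded by $C(\mathcal{L}_{\Theta_{L,M}})$ because $g$ and $\nabla g$ are Lipschitz (hence at most linearly growing), $y$ is locally bounded, $\mu$ is compactly supported by \eqref{mucompactsupport}, and $|Z_{\Theta_{L,M}}(L;x)|$ is controlled by \eqref{boundofxsolutionfiniteL}. Using $|\partial_z f(z,\theta)|\leq C_1|\theta|$, the backward recursion gives $|p_{\Theta_{L,M}}(l;x)|\leq |p_{\Theta_{L,M}}(l+1;x)|\bigl(1+\tfrac{C}{ML}\sum_m|\theta_{l+1,m}|\bigr)$, and a backward discrete Gr\"onwall again yields a bound in $\mathcal{L}_{\Theta_{L,M}}$. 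The stability bound is then an exact discrete analogue of the estimate leading to \eqref{stabilityofprho}: set $\Delta_l=p_{\Theta_{L,M}}(l;x)-p_{\widetilde\Theta_{L,M}}(l;x)$, expand the backward step, split the cross-term as in the proof of Lemma~\ref{lem:prhoprop1} into a $\rho$-difference piece (controlled by \eqref{stabilityofxsolutionfiniteL} and the mixed-derivative bound $|\partial_z\partial_\theta f|\leq C_1|\theta|(|z|+1)$ via \eqref{eqn:boundpartialzfexample}) and a $\theta$-difference piece (controlled after Cauchy--Schwarz by $d_{1,L,M}$), and conclude with backward discrete Gr\"onwall, using the terminal estimate $|\Delta_{L-1}|\leq C(\mathcal{L}_{\Theta_{L,M}},\mathcal{L}_{\widetilde\Theta_{L,M}})d_{1,L,M}$ analogous to \eqref{eqn:ODEforDEltaprhoinitial}.

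The main technical obstacle will be bookkeeping in the stability proof for $p$: the difference $\partial_z f(z,\theta)-\partial_z f(\tilde z,\tilde\theta)$ must be split with enough care that the final bound is homogeneous of degree one in $d_{1,L,M}$ and polynomial in $\mathcal{L}_{\Theta_{L,M}},\mathcal{L}_{\widetilde\Theta_{L,M}}$; in particular, controlling the empirical average $\tfrac{1}{M}\sum_m(|\theta_{l,m}|+|\widetilde\theta_{l,m}|)|\theta_{l,m}-\widetilde\theta_{l,m}|$ requires a Cauchy--Schwarz step that brings in $\bigl(\tfrac{1}{M}\sum_m(|\theta_{l,m}|^2+|\widetilde\theta_{l,m}|^2)\bigr)^{1/2}$, whose further summation in $l$ is exactly $\mathcal{L}^{1/2}$. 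Once this bookkeeping is carried out, the rest is routine discrete Gr\"onwall, just as in the continuous case.
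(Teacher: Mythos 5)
Your proposal is correct and mirrors the paper's own argument: iterate the difference equations \eqref{eqn:disRes} and \eqref{eqn:prhofiniteL}, bound the per-step growth factor using \eqref{eqn:derivativebound}/\eqref{eqn:boundoff}, split the differences $f(z,\theta)-f(\tilde z,\tilde\theta)$ and $\partial_zf(z,\theta)-\partial_zf(\tilde z,\tilde\theta)$ into $z$-increment and $\theta$-increment pieces, and close with Cauchy--Schwarz and a discrete Gr\"onwall iteration, with the terminal conditions controlled by the $Z$-bounds and the regularity of $g$, $\nabla g$, $y$, $\mu$. The only deviation (proving the two $Z$-bounds before the two $p$-bounds, rather than both boundedness estimates first) is immaterial since all prerequisites are available when needed.
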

\begin{proof} Using \eqref{eqn:disRes} and \eqref{eqn:boundoff}, we obtain that
\[
\begin{aligned}
\left(\left|Z_{\Theta_{L,M}}(l+1;x)\right|+1\right)&\leq C_1\left(1+\frac{1}{LM}\sum^M_{m=1}|\theta_{l,m}|\right)(\left|Z_{\Theta_{L,M}}(l;x)\right|+1)\\
&\leq C_1\exp\left(\frac{1}{LM}\sum^M_{m=1}|\theta_{l,m}|\right)(\left|Z_{\Theta_{L,M}}(l;x)\right|+1)\,,
\end{aligned}
\]
which proves \eqref{boundofxsolutionfiniteL} by iteration.

Similarly, using \eqref{eqn:derivativebound}, we obtain that
\[
\frac{1}{ML}\sum^M_{m=1}|\partial_zf\left(Z_{\Theta_{L,M}}(l+1;x),\theta_{l,m}\right)|\leq \frac{C_1}{ML}\sum^M_{m=1}|\theta_{l,m}|\,,
\]
which by \eqref{eqn:prhofiniteL} implies that
\[
|p_{\Theta_{L,M}}(l;x)|\leq \left(1+\frac{C_1}{LM}\sum^M_{m=1}|\theta_{l,m}|\right)|p_{\Theta_{L,M}}(l+1;x)|\,.
\]
From this bound, together with $|p_{\Theta_{L,M}}(x,L-1)|\leq C|Z_{\Theta_{L,M}}(L;x)|\leq C(\mathcal{L}_{\Theta_{L,M}})$, we prove \eqref{boundofprhofiniteL} by iteration.

To prove \eqref{stabilityofxsolutionfiniteL}, similarly to \eqref{boundofDeltatZrho}, we use \eqref{eqn:disRes}, \eqref{eqn:derivativebound}, and \eqref{boundofxsolutionfiniteL} to obtain 
\[
\begin{aligned}
&\left|Z_{\Theta_{L,M}}(l+1;x)-Z_{\widetilde{\Theta}_{L,M}}(l+1;x)\right|\\
\leq &\left(1+\frac{C_1}{ML}\sum^M_{m=1}|\theta_{l,m}|\right)\left|Z_{\Theta_{L,M}}(l;x)-Z_{\widetilde{\Theta}_{L,M}}(l;x)\right|+\frac{C(\mathcal{L}_{\Theta_{L,M}},\mathcal{L}_{\widetilde{\Theta}_{L,M}})}{ML}\sum^M_{m=1}|\theta_{l,m}-\widetilde{\theta}_{l,m}|\,.
\end{aligned}
\]
By using this bound iteratively in conjunction with the boundary condition $|Z_{\Theta_{L,M}}(0;x)-Z_{\widetilde{\Theta}_{L,M}}(0;x)|=0$, we prove \eqref{stabilityofxsolutionfiniteL}.

To prove \eqref{stabilityofpsolutionfiniteL}, similarly to the proof of Lemma~\ref{lem:prhoprop1} \eqref{eqn:ODEforDEltaprho}-\eqref{LemmaBexample}, we use \eqref{eqn:derivativebound} and  \eqref{eqn:prhofiniteL}-\eqref{boundofprhofiniteL} to obtain
\begin{equation}\label{eqn:iterationappendix1}
\begin{aligned}
&\left|p_{\Theta_{L,M}}(l;x)-p_{\widetilde{\Theta}_{L,M}}(l;x)\right|\\
& \leq \left|(p_{\Theta_{L,M}}(l+1;x)-p_{\widetilde{\Theta}_{L,M}}(l+1;x))^\top\left(I+\frac{1}{ML}\sum^M_{m=1}\partial_zf\left(Z_{\Theta_{L,M}}(l+1;x),\theta_{l+1,m}\right)\right)\right|\\
& \quad +\left|p^\top_{\widetilde{\Theta}_{L,M}}(l+1;x)\left(\frac{1}{ML}\sum^M_{m=1}\left(\partial_zf\left(Z_{\Theta_{L,M}}(l+1;x),\theta_{l+1,i}\right)-\partial_zf\left(Z_{\widetilde{\Theta}_{L,M}}(l+1;x),\widetilde{\theta}_{l+1,m}\right)\right)\right)\right|\\
& \leq C(\mathcal{L}_{\Theta_{L,M}},\mathcal{L}_{\widetilde{\Theta}_{L,M}})\left(\frac{1}{ML}\sum^M_{m=1}|\theta_{l,m}|+1\right)\left|p_{\Theta_{L,M}}(l+1;x)-p_{\widetilde{\Theta}_{L,M}}(l+1;x)\right|\\
& \quad +C (\mathcal{L}_{\Theta_{L,M}},\mathcal{L}_{\widetilde{\Theta}_{L,M}} )\left(\frac{1}{ML}\sum^M_{m=1}|\theta_{l,m}|^2+|\widetilde{\theta}_{l,m}|^2+\frac{1}{L}\right)d_{1,L,M}\left({\Theta}_{L,M},\widetilde{\Theta}_{L,M}\right)\,.
\end{aligned}
\end{equation}
Similarly to \eqref{eqn:ODEforDEltaprhoinitial}, we have 
\begin{align*}
    |p_{\Theta_{L,M}}(L-1;x)-p_{\widetilde{\Theta}_{L,M}}(L-1;x)|&\leq C|Z_{\Theta_{L,M}}(L;x)-Z_{\widetilde{\Theta}_{L,M}}(L;x)|\\&\leq C\left(\mathcal{L}_{\Theta_{L,M}},\mathcal{L}_{\widetilde{\Theta}_{L,M}}\right)d_{1,L,M}\left({\Theta}_{L,M},\widetilde{\Theta}_{L,M}\right)\,.
\end{align*} 
By combining with \eqref{eqn:iterationappendix1}, we prove \eqref{stabilityofpsolutionfiniteL} by iteration.
\end{proof}

We are now ready to prove Theorem~\ref{thm:finiteLwell-posed}, and thus also its corollary, Theorem~\ref{thm:finiteLwell-posed1}.

\begin{proof}[Proof of Theorem~\ref{thm:finiteLwell-posed}] 
The proof is quite similar to that of Theorem~\ref{thm:Wassgradientflows}, so we omit some details. 

Recall \eqref{eqn:classicalgradientflowfiniteL}:
\[
\frac{\rd{\Theta}_{L,M}(s)}{\rd s}=-ML\nabla_{{\Theta}_{L,M}} E({\Theta}_{L,M}(s))-2e^{-s}{\Theta}_{L,M}(s)=-ML\nabla_{{\Theta}_{L,M}} E_s({\Theta}_{L,M}(s)).
\]
% \sw{Actually \eqref{eqn:classicalgradientflowfiniteL} has an $e^{-s}$ term as well.}~\zd{Yes, we put $E_s$ here instead of $E$. I added one equality to make it clear.}
Similarly to Lemma~\ref{lemmadeltaEs}, using \eqref{eqn:partialEfiniteL} and Lemma~\ref{prop:wmean-fieldlimitfiniteL}, we have the following bound and stability inequalities for $ML\nabla_{{\Theta}_{L,M}} E_s({\Theta}_{L,M})$:
\begin{subequations}
\begin{align}\label{boundfiniteL}
\left|ML\nabla_{{\Theta}_{L,M}} E_s({\Theta}_{L,M})\right| & \leq 2|\Theta_{L,M}|+C\left(\mathcal{L}\right) \\
\label{LipschitzfiniteL}
\left|ML\nabla_{{\Theta}_{L,M}} E_s({\Theta}_{L,M})-ML\nabla_{\widetilde{\Theta}_{L,M}} E_s(\widetilde{\Theta}_{L,M})\right|& \leq C\left(\mathcal{L}\right)|{\Theta}_{L,M}-\widetilde{\Theta}_{L,M}|\,,
\end{align}
\end{subequations}
where
\[
\mathcal{L}=\max\left\{\frac{1}{LM}\sum^{L-1}_{l=0}\sum^M_{m=1}|\theta_{l,m}|^2,\frac{1}{LM}\sum^{L-1}_{l=0}\sum^M_{m=1}|\widetilde{\theta}_{l,m}|^2\right\}\,.
\]
Similarly to Step 1 of the proof in Section~\ref{sec:proofofthmWassgradientflows}, we can first construct a map induced by \eqref{eqn:classicalgradientflowfiniteL}. Using \eqref{boundfiniteL} and \eqref{LipschitzfiniteL}, similar to Step 1 in the earlier proof, we can show that this map is well-defined. 
Further, similar to Step 2 in Section~\ref{sec:proofofthmWassgradientflows}, we can show that this map is locally contracted in $\mathcal{C}([0,\infty);L^\infty_{L,M})$. 
Finally, for a local solution to \eqref{eqn:classicalgradientflowfiniteL}, $\frac{\rd E_s}{\rd s}\leq 0$ is clearly true, which implies that the proof in Step 3 of Section~\ref{sec:proofofthmWassgradientflows} can also be applied here. Thus, we obtain a global solution.
\end{proof}
\section{Long-time equilibrium and the convergence to the global minimizer}\label{sec:proofofconvergencetoglobalminimal}

We prove Theorem~\ref{thm:globalminimal} in this section. The strategy is as follows. 
First, we show that for any $\rho$ for which $E(\rho)>0$, we can find a direction $\nu$ so that the Fr\'echet derivative of $E(\cdot)$ along this direction is negative, meaning that an infinitesimal change to $\rho$ in direction $\nu$ will  decrease $E$. 
This fact suggests that any stable point $\rho$, which has $\frac{\delta E(\rho)}{\delta\rho}=0$, must be a global minimizer  of $E$, with $E(\rho)=0$. 
This part of the discussion presented in Section~\ref{sec:stable_point}, Proposition~\ref{prop:localisglobal}. Second, in Section~\ref{sec:proofofconvergencetoglobalminimal2}, we show that such a minimizer of $E$ can be obtained as the long-time equilibrium of~\eqref{eqn:Wassgradientflows}.

Note that we assume throughout this section that Assumption~\ref{assum:f} holds (it being one of the assumptions in Theorem~\ref{thm:globalminimal}).

\subsection{$\frac{\delta E(\rho)}{\delta\rho}=0$ vs. $E(\rho)=0$}\label{sec:stable_point}
We prove here  that if $\rho$ is a stable point for $E$, then it is actually a global minimizer. We start by proving a lower bound for $p_\rho$.
\begin{lemma}\label{lem:lowerboundofprho} 
Suppose that $\rho\in \mathcal{C}([0,1];\mathcal{P}^2)$ and that $p_\rho$ is a solution to \eqref{eqn:prho}. 
Denoting
\[
\mathcal{L}_\rho=\int^1_0\int_{\mathbb{R}^k}|\theta|^2\rd\rho(\theta,t)\,,
\]
then for any $t\in[0,1]$ we have
\begin{equation}\label{eqn:lowerboundofprho}
\EE_{x\sim\mu}\left(|p_\rho(t;x)|^2\right)\geq Q_2(\mathcal{L}_{\rho})E(\rho)\,,
\end{equation}
where $Q_2:\mathbb{R}_+\rightarrow\mathbb{R}_+$ is a decreasing function. 
\end{lemma}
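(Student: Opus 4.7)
The plan is to trace a lower bound on $|p_\rho(t;x)|^2$ backward in time from $t=1$, where we have explicit access to it through the terminal condition, and then take the expectation in $x$.

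Concretely, the terminal condition in~\eqref{eqn:prho} reads $p_\rho(1;x) = (g(Z_\rho(1;x))-y(x))\nabla g(Z_\rho(1;x))$, so by Assumption~\ref{assum:f}(3) there exists a constant $c_g := \inf_x|\nabla g(x)|>0$ and hence
\begin{equation*}
|p_\rho(1;x)|^2 \;\geq\; c_g^{\,2}\,\bigl(g(Z_\rho(1;x))-y(x)\bigr)^2.
\end{equation*}
Taking $\EE_{x\sim\mu}$ yields $\EE_{x\sim\mu}|p_\rho(1;x)|^2 \geq 2c_g^{\,2}E(\rho)$ directly from~\eqref{eqn:costfunctioncont2}.

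Next, to transport this bound to a general $t\in[0,1]$, I set $A(t;x) := \int_{\mathbb{R}^k}\partial_z f(Z_\rho(t;x),\theta)\rd\rho(\theta,t)$. The ODE~\eqref{eqn:prho} gives
\begin{equation*}
\frac{\rd}{\rd t}|p_\rho(t;x)|^2 \;=\; -2\,p_\rho^\top A(t;x)\,p_\rho \;\leq\; 2|A(t;x)|\,|p_\rho(t;x)|^2.
\end{equation*}
By Assumption~\ref{assum:f}(1) with $i=1,j=0$, $|A(t;x)|\leq C_1\int|\theta|\rd\rho(\theta,t)$ uniformly in $x$. Integrating the differential inequality forward from $t$ to $1$ and applying Gr\"onwall, I get
\begin{equation*}
|p_\rho(1;x)|^2 \;\leq\; \exp\!\Bigl(2C_1\!\int_t^1\!\!\int_{\mathbb{R}^k}|\theta|\rd\rho(\theta,s)\rd s\Bigr)\,|p_\rho(t;x)|^2,
\end{equation*}
so, using H\"older's inequality $\int_0^1\!\int|\theta|\rd\rho\rd s \leq \sqrt{\mathcal{L}_\rho}$, I obtain the $x$-pointwise lower bound $|p_\rho(t;x)|^2 \geq e^{-2C_1\sqrt{\mathcal{L}_\rho}}|p_\rho(1;x)|^2$.

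Combining the two steps and taking expectation over $x$ yields
\begin{equation*}
\EE_{x\sim\mu}|p_\rho(t;x)|^2 \;\geq\; 2c_g^{\,2}\,e^{-2C_1\sqrt{\mathcal{L}_\rho}}\,E(\rho),
\end{equation*}
which proves~\eqref{eqn:lowerboundofprho} with the explicit choice $Q_2(\mathcal{L}) = 2c_g^{\,2}e^{-2C_1\sqrt{\mathcal{L}}}$, manifestly a decreasing function of its argument. No real obstacle arises here; the only ingredients are the positive lower bound on $|\nabla g|$ (which prevents the terminal data from degenerating even when the output mismatch is nonzero) and the linearity of~\eqref{eqn:prho} together with the linear-in-$|\theta|$ control on $\partial_z f$. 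The result will subsequently be used in Section~\ref{sec:stable_point} to certify that $\frac{\delta E(\rho)}{\delta\rho}$ cannot vanish unless $E(\rho)=0$.
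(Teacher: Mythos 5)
Your proof is correct and follows essentially the same route as the paper's: bound $|p_\rho(1;x)|^2$ from below via $\inf|\nabla g|$ and the terminal condition, then transport back to arbitrary $t$ through the linear ODE and a Gr\"onwall estimate with the $|\theta|$-integral controlled by $\sqrt{\mathcal{L}_\rho}$. You additionally make the decreasing function $Q_2$ explicit, which the paper leaves implicit.
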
 
\begin{proof}
Recall the initial condition for $p_\rho$ in~\eqref{eqn:prho}:
\[
p_\rho(1;x)=\left(g(Z_{\rho}(1;x))-y(x)\right)\nabla g(Z_{\rho}(1;x))\,.
\]
According to Assumption~\ref{assum:f} (part 3) and the definition \eqref{eqn:costfunctioncont2} we therefore have
\[
\EE_{x\sim\mu}\left(|p_{\rho}(1;x)|^2\right)\geq \delta_1^2E(\rho)\,.
\]
Further, since the equation \eqref{eqn:prho} is linear, that is, 
\begin{equation}\label{eqn:prhoappendix2}
\frac{\partial p^\top_\rho}{\partial t}=-p^\top_\rho\int_{\mathbb{R}^k}\partial_zf(Z_\rho,\theta)\rd\rho(\theta,t)\,,
\end{equation}
we have
\[
\frac{\rd|p_\rho(t;x)|^2}{\rd t}\leq \left(2C_1\int_{\mathbb{R}^k}|\theta|\rd\rho(\theta,t)\right)|p_\rho(t;x)|^2\,,
\]
where we used~\eqref{boundofrateprho} to bound $\left|\int_{\mathbb{R}^k}\partial_zf(Z_\rho(t;x),\theta)\rd\rho(\theta,t)\right|$. 
By solving this equation, we obtain
\[
|p_\rho(t;x)|^2\geq |p_{\rho}(1;x)|^2\exp\left(-2C_1\int^1_t\int_{\mathbb{R}^k}|\theta|\rd\rho(\theta,t)\right)\geq C(\mathcal{L}_{\rho})|p_{\rho}(1;x)|^2\,.
\]
We finalize the proof by taking expectation of both sides. 
(Monotonicity is a consequence of the format of the exponential term.)
% and 
% To prove \eqref{eqn:lowerboundofprho}, using \eqref{eqn:ggradient} and \eqref{eqn:prhoappendix2}, we obtain that
% \[
% \EE_{x\sim\mu}\left(|p_{\rho}(1;x)|^2\right)\geq \delta_1^2E(\rho)
% \]
% Because \eqref{eqn:prhoappendix2} is a linear equation, using \eqref{boundofrateprho2}, we obtain that
% \[
% \frac{d|p_\rho(t;x)|^2}{dt}\leq \left(2C_1\int_{\mathbb{R}^k}|\theta|d\rho(\theta,t)\right)|p_\rho(t;x)|^2\,,
% \]
% which implies
% \[
% |p_\rho(t;x)|^2\geq |p_{\rho}(1;x)|^2\exp\left(-2C_1\int^1_t\int_{\mathbb{R}^k}|\theta|d\rho(\theta,t)\right)\geq C(\mathcal{L}_{\rho})|p_{\rho}(1;x)|^2\,.
% \]
% This proves \eqref{eqn:lowerboundofprho} if we take expectation of $x\sim \mu$ on both sides.
\end{proof}

The following proposition shows existence of a descent direction from any $\rho$ for which $E(\rho)>0$.
\begin{proposition}\label{prop:localisglobal}
Suppose that $\rho\in \mathcal{C}([0,1];\mathcal{P}^2)$. 
If $E(\rho)>0$, then for any $t_0\in[0,1]$, there exists a measure $\nu(\theta)$ of $\mathbb{R}^k$ such that $\int_{\mathbb{R}^k}\rd\nu(\theta)=0$
and
\begin{equation}\label{eqn:negative}
\int_{\mathbb{R}^k}\frac{\delta E(\rho)}{\delta \rho}(\theta,t_0)\rd\nu(\theta)<0\,.
\end{equation}
\end{proposition}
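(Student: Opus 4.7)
The plan is to construct $\nu$ as a difference of two probability measures, so that the zero-mass constraint $\int \rd\nu = 0$ holds automatically, and to choose them so that $h_\nu(z) := \int_{\mathbb{R}^k} f(z,\theta)\rd\nu(\theta)$ approximates $-p_\rho(t_0;x)$ when evaluated at $z = Z_\rho(t_0;x)$. With such a $\nu$, the explicit formula \eqref{eqn:Frechetderivative} yields
\[
\int \frac{\delta E(\rho)}{\delta\rho}(\theta,t_0)\rd\nu(\theta) = \mathbb{E}_{x\sim\mu}\!\left(p_\rho^\top(t_0;x) h_\nu(Z_\rho(t_0;x))\right) \approx -\mathbb{E}_{x\sim\mu}\!\left(|p_\rho(t_0;x)|^2\right),
\]
which is strictly negative: Lemma~\ref{lem:lowerboundofprho} forces $\mathbb{E}_{x\sim\mu}(|p_\rho(t_0;x)|^2) \geq Q_2(\mathcal{L}_\rho)E(\rho) > 0$ because $E(\rho)>0$.

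The key geometric ingredient is that the flow map $x \mapsto Z_\rho(t_0;x)$ is a homeomorphism from $\mathrm{supp}(\mu)$ onto its image $K_{t_0} := \{Z_\rho(t_0;x) : x\in\mathrm{supp}(\mu)\}$. Compactness of $\mathrm{supp}(\mu)$ is Assumption~\ref{assum:f}.4; continuity of the flow is in Theorem~\ref{thm:wmean-fieldlimit}; and injectivity is automatic from the backward uniqueness of the Lipschitz ODE \eqref{eqn:meancontRes_2}, since $Z_\rho(t_0;x_1) = Z_\rho(t_0;x_2)$ propagated backward to $t=0$ gives $x_1 = x_2$. A continuous bijection between compact Hausdorff spaces has a continuous inverse, so $K_{t_0}$ is compact in $\mathbb{R}^d$ and the recipe
\[
\tilde h(Z_\rho(t_0;x)) := -p_\rho(t_0;x), \qquad x \in \mathrm{supp}(\mu),
\]
well-defines a continuous $\mathbb{R}^d$-valued function $\tilde h$ on $K_{t_0}$. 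Applying Tietze's extension theorem componentwise, extend $\tilde h$ to a continuous function on $\mathbb{R}^d$, and fix $R_1$ with $K_{t_0} \subset \{|z| < R_1\}$.

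Now invoke the density property (Assumption~\ref{assum:f}.2) twice: for any $\epsilon > 0$, pick probability measures $\rho_1, \rho_0 \in \mathcal{P}^2(\mathbb{R}^k)$ with
\[
\sup_{|z|<R_1}\left| \tilde h(z) - \int_{\mathbb{R}^k} f(z,\theta)\rd\rho_1(\theta) \right| < \epsilon, \qquad \sup_{|z|<R_1}\left| \int_{\mathbb{R}^k} f(z,\theta)\rd\rho_0(\theta) \right| < \epsilon,
\]
where the second uses density applied to the zero function. Set $\nu := \rho_1 - \rho_0$. Then $\int\rd\nu = 0$ and $|h_\nu(z) - \tilde h(z)| < 2\epsilon$ on $\{|z|<R_1\}$, in particular on $K_{t_0}$. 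Plugging into \eqref{eqn:Frechetderivative} and using the bound $|p_\rho(t_0;x)| \leq C(\mathcal{L}_\rho)$ from \eqref{boundofprho}, one obtains
\[
\int \frac{\delta E(\rho)}{\delta\rho}(\theta,t_0)\rd\nu(\theta) \leq -\mathbb{E}_{x\sim\mu}\!\left(|p_\rho(t_0;x)|^2\right) + 2\epsilon\, C(\mathcal{L}_\rho) \leq -Q_2(\mathcal{L}_\rho)E(\rho) + 2\epsilon\, C(\mathcal{L}_\rho),
\]
which is strictly negative once $\epsilon$ is chosen small enough. The main technical subtlety is in the geometric step: ensuring $\tilde h$ is well-defined and continuous on $K_{t_0}$ requires that the ResNet flow is a homeomorphism onto its image, which in turn hinges on the backward well-posedness of the ODE provided by Theorem~\ref{thm:wmean-fieldlimit}.
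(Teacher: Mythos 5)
Your proof is correct and follows essentially the same strategy as the paper's: approximate $-p_\rho(t_0;\cdot)$ (viewed as a function of the flow image $z=Z_\rho(t_0;x)$, well-defined because the Lipschitz flow is invertible) by $\int f(\cdot,\theta)\rd\nu(\theta)$ for a zero-mass signed measure $\nu$ via the density Assumption~\ref{assum:f}.2, then conclude negativity from the lower bound $\mathbb{E}_{x\sim\mu}(|p_\rho(t_0;x)|^2)\geq Q_2(\mathcal{L}_\rho)E(\rho)>0$ of Lemma~\ref{lem:lowerboundofprho}. The only cosmetic differences are that the paper changes variables to the pushforward $\mu^\ast_{t_0}$ and writes $\hat\nu=\rho-\rho'$ with a single density invocation, whereas you stay in the original coordinates, invoke density twice to build $\nu=\rho_1-\rho_0$, and add an explicit Tietze extension step (which the paper avoids by noting that $\mathcal{Z}_{t_0}^{-1}$, and hence $p^\ast$, is globally defined on $\mathbb{R}^d$).
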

\begin{proof} 
Denote
\[
\mathcal{L}_{\rho}=\int^1_0\int_{\mathbb{R}^k}|\theta|^2\rd\rho(\theta,t)\rd t\,.
\]
According to the existence and uniqueness of solution to \eqref{eqn:meancontRes}, for any $t\in[0,1]$, we can construct a map $\mathcal{Z}_t$ such that
\[
\mathcal{Z}_t(x)=Z_{\rho}\left(t;x\right)\,.
\]
Since the trajectory can be computed backwards in time, $\mathcal{Z}^{-1}_t$ is well-defined. Furthermore, we denote $\mu^\ast_t=(\mathcal{Z}_t)_{\sharp}\mu$ the pushforward of $\mu$ under map $\mathcal{Z}_t$ and let:
\[
p^\ast(t;x)=p_\rho\left(t;\mathcal{Z}^{-1}_t(x)\right)\,.
\]
By Assumption \ref{assum:f} and classical ODE theory, $\mathcal{Z}_t$ and $\mathcal{Z}^{-1}_t$ are both continuous maps in $x$, and so are $p_\rho(t;x)$ and $p^*(t;x)$. With the change of variables, for all $t\in[0,1]$
\begin{equation}\label{changeofvariable}
\frac{\delta E(\rho)}{\delta \rho}(\theta,t)=\int_{\mathbb{R}^d}p^\top_\rho(t;x)f(\mathcal{Z}_t(x),\theta)\rd\mu=\int_{\mathbb{R}^d}(p^\ast(t;x))^\top f(x,\theta)\rd\mu^\ast_t\,.
\end{equation}
For a fixed $t_0\in[0,1]$, calling Lemma \ref{lem:prhoprop1}, we have the boundedness of the Jacobian, meaning $\sup_{x\in\textrm{supp}(\mu)}\left\|\frac{\rd\mu^\ast_{t_0}(\mathcal{Z}^{-1}_t(x))}{\rd\mu(x)}\right\|_2\leq C(\mathcal{L}_{\rho})$. As a consequence, $\mu^\ast_{t_0}(x)$ has a compact support since $\mu(x)$ has one. We denote the size of the support $R^\ast$, meaning $\text{supp}\left(\mu^\ast_{t_0}(x)\right)\subset \left\{x:|x|<R^\ast\right\}$.

We now derive a general formula for $\int\frac{\delta E(\rho)}{\delta \rho}(\theta,t)\rd\nu$. Recall~\eqref{changeofvariable}, we have:
\begin{equation}\label{eqn:tryprovenegative2}
\begin{aligned}
&\int_{\mathbb{R}^k}\frac{\delta E(\rho)}{\delta \rho}(\theta,t_0)\rd\nu(\theta)\\
=&\int_{\mathbb{R}^d}(p^\ast(t_0;x))^\top\left(\int_{\mathbb{R}^k}f(x,\theta)\rd\nu(\theta)\right)\rd\mu^\ast_{t_0}(x)\\
=&\int_{\mathbb{R}^d}(p^\ast(t_0;x))^\top\left(\int_{\mathbb{R}^k}f(x,\theta)\rd{\nu}(\theta)+p^\ast(t_0;x)\right)\rd\mu_{t_0}(x)\\
&-\int_{\mathbb{R}^d}(p^\ast(t_0;x))^\top p^\ast(t_0;x) \rd\mu^\ast_{t_0}(x)\\
=&\int_{\mathbb{R}^d}(p^\ast(t_0;x))^\top\left(\int_{\mathbb{R}^k}f(x,\theta)\rd\rho(\theta)+p^\ast(t_0;x)-\int_{\mathbb{R}^k}f(x,\theta)\rd(\rho-\nu)(\theta)\right)\rd\mu_{t_0}(x)\\
&-\int_{\mathbb{R}^d}(p^\ast(t_0;x))^\top p^\ast(t_0;x) \rd\mu^\ast_{t_0}(x)\,.
% \\
% =&\underbrace{\int_{\mathbb{R}^d}(p^\ast(t_0;x))^\top\left(\int_{\mathbb{R}^k}f(x,\theta)\rd\hat{\nu}^\ast(\theta)-h(x)\right)\rd\mu^\ast_{t_0}(x)}_{\textrm{(I)}}\\
% &-\underbrace{\int_{\mathbb{R}^d}(p^\ast(t_0;x))^\top p^\ast(t_0;x)d\mu^\ast_{t_0}(x)}_{\textrm{(II)}}\,.
\end{aligned}
\end{equation}

Noticing that according to Lemma~\ref{lem:lowerboundofprho}, if $E(\rho)\neq 0$, the second term above is strictly negative ($<-Q_2(\mathcal{L}_\rho)E(\rho)$), the goal then is to find $\nu$ so that $\int\rd\nu = 0$ and that the first term is rather small. This makes the full term $\int_{\mathbb{R}^k}\frac{\delta E(\rho)}{\delta \rho}(\theta,t_0)\rd\nu(\theta)$ negative. 

Denote a continuous function
\[
h(x)=\int_{\mathbb{R}^k}f(x,\theta)\rd\rho(\theta,t_0)+p^\ast\left(t_0;x\right)\,,
\]
then according to Assumption~\ref{assum:f}, for arbitrarily small $\epsilon$, there is a $\hat{\nu}$ so that $\int\hat{\nu}=0$ and
\[
\left\|h(x)-\int_{\mathbb{R}^k}f(x,\theta)\rd(\rho-\hat{\nu})(\theta)\right\|_{L^\infty_{|x|<R^\ast}}\leq \epsilon\,.
\]
This implies
% Since $p^*$ is continuous and $f$ satisfies Assumption \ref{assum:f} \eqref{eqn:derivativebound}, we have that $h(x)$ is a continuous function. Then for any $\hat{\nu}$, considering~\eqref{eqn:support_R}:
\begin{equation}\label{prhomustarsmall}
\begin{aligned}
&\int_{\mathbb{R}^d}(p^\ast(t_0;x))^\top\left(\int_{\mathbb{R}^k}f(x,\theta)\rd\rho(\theta)+p^\ast(t_0;x)-\int_{\mathbb{R}^k}f(x,\theta)\rd(\rho-\hat{\nu})(\theta)\right)\rd\mu_{t_0}(x)\\
=&\int_{\mathbb{R}^d}(p^\ast(t_0;x))^\top\left(h(x)-\int_{\mathbb{R}^k}f(x,\theta)\rd(\rho-\hat{\nu})(\theta)\right)\rd\mu_{t_0}(x)\\
\leq&\int_{\mathbb{R}^d}\left|p^\ast(t_0;x)\right|\left|h(x)-\int_{\mathbb{R}^k}f(x,\theta)\rd(\rho-\hat{\nu})(\theta)\right|\rd\mu^\ast_{t_0}(x)\\
% =&\int_{|x|<R^\ast}\left|p^\ast(t_0;x)\right|\left|h(x)-\int_{\mathbb{R}^k}f(x,\theta)\rd\hat{\nu}(\theta)\right|\rd\mu^\ast_{t_0}(x)\\
\leq& \left\|p^\ast(t_0;x)\right\|_{L^\infty_{|x|<R^*}}\left\|h(x)-\int_{\mathbb{R}^k}f(x,\theta)\rd(\rho-\hat{\nu})(\theta)\right\|_{L^\infty_{|x|<R^\ast}}\leq \epsilon \left\|p^\ast(t_0;x)\right\|_{L^\infty_{|x|<R^*}}
\end{aligned}\,.
\end{equation}
Choose $\epsilon$ small enough so that 
\begin{equation}\label{eqn:upperboundQE}
\begin{aligned}
&\int_{\mathbb{R}^d}(p^\ast(t_0;x))^\top\left(\int_{\mathbb{R}^k}f(x,\theta)\rd\rho(\theta)+p^\ast(t_0;x)-\int_{\mathbb{R}^k}f(x,\theta)\rd(\rho-\hat{\nu})(\theta)\right)\rd\mu_{t_0}(x)\\
\leq &\epsilon \left\|p^\ast(t_0;x)\right\|_{L^\infty_{|x|<R^*}}
\leq\frac{1}{2}Q(\mathcal{L}_\rho)E(\rho)\,.
\end{aligned}
\end{equation} 
Let $\nu = \hat{\nu}$. Plugging \eqref{eqn:upperboundQE} into \eqref{eqn:tryprovenegative2} and using Lemma~\ref{lem:lowerboundofprho} \eqref{eqn:lowerboundofprho}, we have
\[\int\frac{\delta E(\rho)}{\delta\rho}(\theta,t_0)\rd\nu(\theta)<0\,,\]
which finishes the proof.

\end{proof}

\subsection{Proof of Theorem~\ref{thm:globalminimal}}\label{sec:proofofconvergencetoglobalminimal2}

% We now prove Theorem~\ref{thm:globalminimal}:
%
\begin{proof}[Proof of Theorem~\ref{thm:globalminimal}]
If $\rho(\theta,t,s)$ converges to $\rho_\infty(\theta,t)$ in $\mathcal{C}([0,1];\mathcal{P}^2)$, then we have 
\[
\sup_{s\geq 0,t\in[0,1]} \, \int_{\mathbb{R}^k}|\theta|^2\rd\rho(\theta,t,s)<\infty\,.
\]
Since $\rho(\theta,t,s)$ is a weak solution to \eqref{eqn:Wassgradientflows} and it converges to $\rho_\infty(\theta,t)$, we obtain that $\partial_s\rho|_{\rho_\infty}=0$, so that\footnote{
This statement can be made rigorous. Since $\lim_{s\rightarrow\infty}d_1(\rho(\theta,t,s),\rho_\infty(\theta,t))=0$, we have from \eqref{eqn:Deltabound2} that
$
\lim_{s\rightarrow\infty}\nabla \frac{\delta E_s(\rho)}{\delta \rho(s)}(\theta,t)=\nabla \frac{\delta E(\rho)}{\delta \rho_\infty}(\theta,t)
$. Due to the bound \eqref{eqn:Deltabound2}, the convergence is uniform in $(\theta,t)\in \mathcal{B}_{r}\times [0,1]$ for any $r>0$. Following the protocol for weak solution, we multiply smooth compact support test functions on both sides of \eqref{eqn:Wassgradientflows} and integrate. 
As $s\rightarrow\infty$, the integrand converges uniformly, leading to~\eqref{eqn:stationary}.} 

\begin{equation}\label{eqn:stationary}
\nabla\cdot\left(\rho_\infty\nabla \frac{\delta E(\rho_\infty)}{\delta \rho}(\theta,t)\right)=0\,,\quad a.e.
\end{equation}
From \eqref{eqn:derivativebound}, \eqref{eqn:boundoff}, and \eqref{boundofprho},
\[
\left|\frac{\delta E(\rho_\infty)}{\delta \rho}(\theta,t)\right|\leq C(|\theta|+1),\quad \left|\nabla \frac{\delta E(\rho_\infty)}{\delta \rho}(\theta,t)\right|\leq C\,.
\]
Multiplying $\frac{\delta E(\rho_\infty)}{\delta \rho}(\theta,t)$ on the left hand-side of \eqref{eqn:stationary} and integrating by parts, since $\rho_\infty(\theta,t)$ in $\mathcal{C}([0,1];\mathcal{P}^2)$, the boundary term vanishes. And we obtain
\[
\int^1_0\int_{\mathbb{R}^k}\left|\nabla \frac{\delta E(\rho_\infty)}{\delta \rho}(\theta,t)\right|^2\rd\rho_\infty\rd t=0\,,
\]
From continuity of $\nabla \frac{\delta E(\rho_\infty)}{\delta \rho}$ and $\rho_\infty(\theta,t)$ in $\mathcal{C}([0,1];\mathcal{P}^2)$, 
$\int_{\mathbb{R}^k}\left|\nabla \frac{\delta E(\rho_\infty)}{\delta \rho}(\theta,t)\right|^2\rd\rho_\infty$
is continuous in $t$. Thus, we finally have
\[
\rho_\infty(\theta,t)\nabla \frac{\delta E(\rho_\infty)}{\delta \rho}(\theta,t)=0\,,\quad \forall t\in[0,1]
\]

When the support of $\rho_{\infty}(\theta,t_0)$ is $\mathbb{R}^k$, we have from continuity of $\nabla \frac{\delta E(\rho_\infty)}{\delta \rho}$ in~\eqref{Lipshictz} that
\[
\nabla \frac{\delta E(\rho_\infty)}{\delta \rho}(\theta,t_0)=0\,,\quad \forall \theta\in \mathbb{R}^k\,,
\]
implying that there is a constant $c$ such that
\[
\frac{\delta E(\rho_\infty)}{\delta \rho}(\theta,t_0)=c\,,\quad \forall \theta\in \mathbb{R}^k\,.
\]

When $E(\rho)>0$, then according to Proposition~\ref{prop:localisglobal}, we can find a measure $\nu(\theta)$ so that
\[
\int_{\mathbb{R}^k}\rd\nu(\theta)=0\,\quad\text{and}\quad\int_{\mathbb{R}^k}\frac{\delta E(\rho_\infty)}{\delta \rho}(\theta,t_0)\rd\nu(\theta)<0\,.
\]
We thus have a contradiction, since 
\[
0>\int_{\mathbb{R}^k}\frac{\delta E(\rho_\infty)}{\delta \rho}(\theta,t_0)\rd\nu(\theta)=c\int_{\mathbb{R}^k}\rd\nu(\theta)=0\,.
\]
We conclude that $E(\rho_{\infty})=0$, proving the theorem.
\end{proof}

\section{Discussion of the fully-supported condition in Theorem~\ref{thm:globalminimal}}
\label{sec:proofofdiscussion}
In this section we give two examples where the fully-supported condition holds true.
\begin{proposition}\label{lemm:wholesupport}
Suppose that Assumption~\ref{assum:f} holds and that $\rho_{\ini}(\theta,t)$ is admissible. 
Let $\rho(\theta,t,s)\in \mathcal{C}([0,\infty);\mathcal{C}([0,1];\mathcal{P}^2))$ be the solution to \eqref{eqn:Wassgradientflows}. 
If the support of $\rho_{\ini}(\theta,t_0)$ is $\mathbb{R}^k$ for some $t_0\in[0,1]$, then for any finite $s$, the support of $\rho(\theta,t_0,s)$ is $\mathbb{R}^k$.
\end{proposition}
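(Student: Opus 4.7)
The plan is to realize $\rho(\cdot,t_0,s)$ as the pushforward of $\rho_{\ini}(\cdot,t_0)$ under a deterministic homeomorphism of $\mathbb{R}^k$, after which the support statement is automatic. At the fixed slice $t=t_0$, equation~\eqref{eqn:Wassgradientflows} is a continuity equation $\partial_s\rho+\nabla_\theta\cdot(\rho v)=0$ with time-dependent velocity
$$
v(\theta,s) := -\nabla_\theta\frac{\delta E_s(\rho(s))}{\delta\rho}(\theta,t_0).
$$
Since the full solution $\rho(\theta,t,s)$ already exists and is unique by Theorem~\ref{thm:Wassgradientflows}, I would treat $v$ as a prescribed external vector field on $\mathbb{R}^k\times[0,\infty)$. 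From the characteristic construction used in \eqref{eqn:ODEdeltaEs} and Corollary~\ref{cor:Tswell-defined} (applied with $\phi=\rho$), the law of the characteristic $\theta_\rho(s;t_0)$ started from $\theta_\rho(0;t_0)\sim\rho_{\ini}(\cdot,t_0)$ coincides with $\rho(\cdot,t_0,s)$.

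Next I would show that the flow map $\Phi_s:\theta_0\mapsto\theta_\rho(s;t_0)$, where $\theta_\rho(0;t_0)=\theta_0$, is a homeomorphism of $\mathbb{R}^k$ onto itself for every finite $s$. Lemma~\ref{lemmadeltaEs}\eqref{Lipshictz} (specialized to $t_1=t_2=t_0$) shows that $v(\cdot,s)$ is globally Lipschitz in $\theta$ with constant $C(\mathcal{L}(s))$, where $\mathcal{L}(s)=\int_0^1\int_{\mathbb{R}^k}|\theta|^2\rd\rho(\theta,t,s)\rd t$ stays finite on bounded $s$-intervals by Corollary~\ref{cor:C.3}, while Lemma~\ref{lemmadeltaEs}\eqref{bound} provides the linear growth that rules out finite-time blow-up. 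Standard ODE theory then yields a global forward flow $\Phi_s$, and running the same ODE backward in $s$ from any target $\theta^\ast\in\mathbb{R}^k$ produces a global backward flow that serves as $\Phi_s^{-1}$; Gr\"onwall applied with the Lipschitz bound makes both $\Phi_s$ and $\Phi_s^{-1}$ continuous, so $\Phi_s$ is a homeomorphism of $\mathbb{R}^k$ onto $\mathbb{R}^k$.

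Finally, since $\rho(\cdot,t_0,s)=(\Phi_s)_\#\rho_{\ini}(\cdot,t_0)$ and $\Phi_s$ is a homeomorphism, one has $\mathrm{supp}\,\rho(\cdot,t_0,s)=\Phi_s\bigl(\mathrm{supp}\,\rho_{\ini}(\cdot,t_0)\bigr)=\Phi_s(\mathbb{R}^k)=\mathbb{R}^k$, which is the claim. I expect the main obstacle to be conceptual rather than technical: the velocity $v$ depends on the entire solution $\rho(\cdot,\cdot,s)$ rather than only on the $t_0$-slice, so a priori the characteristic ODE at $t=t_0$ is coupled to the full PDE. This coupling dissolves once one uses that Theorem~\ref{thm:Wassgradientflows} produces $\rho$ first, after which the slicewise analysis becomes a linear non-autonomous transport problem and the pushforward identification is standard.
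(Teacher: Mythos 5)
Your proof is correct and follows essentially the same route as the paper's: realize the slice $\rho(\cdot,t_0,\cdot)$ via the characteristic ODE \eqref{eqn:ODEdeltaEs}, use the Lipschitz and linear-growth bounds from Lemma~\ref{lemmadeltaEs} (together with the second-moment control of Corollary~\ref{cor:C.3}) to get a globally well-posed forward and backward flow, and conclude that the flow map is a bijection of $\mathbb{R}^k$ preserving full support. The paper states the bijectivity via solvability of the time-reversed ODE and is terser about the pushforward identity; your version makes the homeomorphism and $\mathrm{supp}(\Phi_s)_\#\mu = \overline{\Phi_s(\mathrm{supp}\,\mu)}$ step explicit, which is a minor clarification, not a different argument.
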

This proposition suggests that if the initial probability distribution $\rho_\ini$ is supported on the whole domain, this property is preserved for all finite $s$. A second example is as follows.
\begin{proposition}\label{prop:wholesupport2}
Suppose that Assumption~\ref{assum:f} holds, that $\rho_{\ini}(\theta,t)$ is admissible, and that the support of $\rho_{\ini}(\theta,t_0)$ is $\mathbb{R}^k$ for some $t_0\in[0,1]$. Let $\rho(\theta,t,s)\in \mathcal{C}([0,\infty);\mathcal{C}([0,1];\mathcal{P}^2))$ be the solution to \eqref{eqn:Wassgradientflows}, and $\theta_{\rho}(s;t,\theta_0)$ be the solution to corresponding particle presentation:
\begin{equation}\label{eqn:Wassgradientflowsdis1}
\left\{
\begin{aligned}
&\frac{\partial \theta_{\rho}(s;t,\theta_0)}{\partial s}=-\nabla_{\theta}\frac{\delta E_s(\rho(s))}{\delta \rho}(\theta_{\rho}(s;t,\theta_0),t)\,,\quad \forall s>0,\ t\in[0,1]\\
&\theta_{\rho}(0;t,\theta_0)=\theta_0\,.
\end{aligned}
\right.
\end{equation}
If $\lim_{s\rightarrow\infty}\theta_\rho(s;t,\theta_0)=\theta_\rho(\infty;t,\theta_0)$ uniformly in $\theta_0,t$, then there exists $\rho_\infty(\theta,t)$ so that $\lim_{s\to\infty}\rho(\theta,t,s)=\rho_\infty(\theta,t)$ in $\mathcal{C}([0,1];\mathcal{P}^2)$ and the support of $\rho_\infty(\theta,t_0)$ is the full space $\mathbb{R}^k$.
\end{proposition}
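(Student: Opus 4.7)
The plan is to construct $\rho_\infty$ explicitly via the characteristic flow of \eqref{eqn:Wassgradientflowsdis1} and then verify the convergence and the full-support claim separately. For each fixed $t\in[0,1]$, Lemma~\ref{lemmadeltaEs} \eqref{Lipshictz} ensures that the right-hand side of \eqref{eqn:Wassgradientflowsdis1} is Lipschitz in $\theta$, so the flow map $\Phi_s^{(t)}(\theta_0):=\theta_\rho(s;t,\theta_0)$ is, for every finite $s\geq 0$, a diffeomorphism of $\mathbb{R}^k$ onto itself (in particular surjective, since the ODE can be integrated backward in $s$). The method of characteristics for the continuity equation \eqref{eqn:Wassgradientflows}, combined with uniqueness from Theorem~\ref{thm:Wassgradientflows}, yields the pushforward representation $\rho(\cdot,t,s)=(\Phi_s^{(t)})_{\sharp}\rho_{\ini}(\cdot,t)$. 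Under the hypothesis of the proposition, $\Phi_\infty^{(t)}(\theta_0):=\theta_\rho(\infty;t,\theta_0)$ is a uniform limit of continuous maps, hence continuous, and I would set $\rho_\infty(\cdot,t):=(\Phi_\infty^{(t)})_{\sharp}\rho_{\ini}(\cdot,t)$.

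Writing $\epsilon(s):=\sup_{t,\theta_0}|\Phi_s^{(t)}(\theta_0)-\Phi_\infty^{(t)}(\theta_0)|\to 0$, the coupling of $\rho(\cdot,t,s)$ and $\rho_\infty(\cdot,t)$ induced by $\theta_0\sim\rho_{\ini}(\cdot,t)$ gives
\[
W_2^2\bigl(\rho(\cdot,t,s),\rho_\infty(\cdot,t)\bigr)\leq\mathbb{E}_{\theta_0\sim\rho_{\ini}(\cdot,t)}\bigl|\Phi_s^{(t)}(\theta_0)-\Phi_\infty^{(t)}(\theta_0)\bigr|^2\leq\epsilon(s)^2,
\]
and taking $\sup_t$ yields $d_1(\rho(\cdot,\cdot,s),\rho_\infty)\to 0$. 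The requirement $\rho_\infty\in\mathcal{C}([0,1];\mathcal{P}^2)$ then reduces to checking a finite second moment at each $t$ (use $|\Phi_\infty^{(t)}(\theta_0)|\leq|\Phi_s^{(t)}(\theta_0)|+\epsilon(s)$ at any single $s$ where Corollary~\ref{cor:C.3} supplies a bound) and $W_2$-continuity in $t$ (use $W_2(\rho_\infty(\cdot,t_1),\rho_\infty(\cdot,t_2))\leq 2\epsilon(s)+W_2(\rho(\cdot,t_1,s),\rho(\cdot,t_2,s))$, first letting $t_1\to t_2$ at fixed $s$ via $\rho(\cdot,\cdot,s)\in\mathcal{C}([0,1];\mathcal{P}^2)$, then $s\to\infty$).

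The full-support claim at $t_0$ is the heart of the argument. For any target $\theta^\ast\in\mathbb{R}^k$, surjectivity of $\Phi_s^{(t_0)}$ lets me pick $\theta_0^{(s)}:=(\Phi_s^{(t_0)})^{-1}(\theta^\ast)$, and then
\[
\bigl|\Phi_\infty^{(t_0)}(\theta_0^{(s)})-\theta^\ast\bigr|=\bigl|\Phi_\infty^{(t_0)}(\theta_0^{(s)})-\Phi_s^{(t_0)}(\theta_0^{(s)})\bigr|\leq\epsilon(s)\to 0,
\]
so $\Phi_\infty^{(t_0)}$ has dense image in $\mathbb{R}^k$. Continuity of $\Phi_\infty^{(t_0)}$ together with the full-support hypothesis on $\rho_{\ini}(\cdot,t_0)$ promotes this density to full support of the pushforward: for any open neighborhood $U$ of any $\theta^\ast$, density makes $(\Phi_\infty^{(t_0)})^{-1}(U)$ nonempty, continuity makes it open, and full support of $\rho_{\ini}(\cdot,t_0)$ assigns it positive mass, so $\rho_\infty(\cdot,t_0)(U)>0$.

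The main obstacle I anticipate is rigorously justifying the pushforward representation $\rho(\cdot,t,s)=(\Phi_s^{(t)})_{\sharp}\rho_{\ini}(\cdot,t)$ as a statement about the unique weak solution of \eqref{eqn:Wassgradientflows}; the clean route is to verify that this pushforward is itself a $\mathcal{C}([0,\infty);\mathcal{C}([0,1];\mathcal{P}^2))$ weak solution with the correct initial datum (essentially the construction already used in the proof of Corollary~\ref{cor:Tswell-defined} applied at the fixed point $\phi=\rho$) and then invoke uniqueness from Theorem~\ref{thm:Wassgradientflows}. Beyond that, the remainder is bookkeeping: converting uniform particle convergence into $W_2$ measure convergence, and leveraging invertibility of the finite-$s$ flow together with the uniform-in-$\theta_0$ hypothesis to push density of $\Phi_\infty^{(t_0)}(\mathbb{R}^k)$ to full support of $\rho_\infty(\cdot,t_0)$.
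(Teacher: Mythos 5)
Your proof is correct and follows essentially the same route as the paper: define $\rho_\infty(\cdot,t)$ as the pushforward of $\rho_{\ini}(\cdot,t)$ under the $s\to\infty$ flow, derive $W_2$-convergence from the coupling induced by the shared initial data, and obtain full support at $t_0$ from backward-in-$s$ invertibility of the finite-$s$ flow combined with the uniform-convergence hypothesis. The only cosmetic difference is in the final step, where the paper argues quantitatively via nested balls and invokes Proposition~\ref{lemm:wholesupport} directly, while you phrase the same facts topologically (dense image of $\Phi_\infty^{(t_0)}$, continuity, and positivity of $\rho_{\ini}(\cdot,t_0)$ on nonempty open sets).
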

This proposition validates the condition required in Theorem \ref{thm:consistent}, thus allowing $E(\rho_\infty)=0$ in the prescribed situation.

We prove the two propositions here.

% \subsection{Proof of Proposition~\ref{lemm:wholesupport}}
\begin{proof}[Proof of Proposition~\ref{lemm:wholesupport}]
According to Section \ref{sec:Step1}, $\rho(\theta,t,s)$ has a particle representation from the solution of the ODE \eqref{eqn:ODEdeltaEs}. 
Let $S>0$ be fixed. 
According to Lemma~\ref{lemmadeltaEs}, we have for any $s \in [0,S]$ that $\nabla_{\theta}\frac{\delta E_s(\rho(s))}{\delta \rho}$ is Lipschitz in $\theta$. Therefore, the following ODE also has a unique solution for $s\in[0,S]$ for any $t\in[0,1]$ and $\theta_S\in\mathbb{R}^k$:
\[
\left\{
\begin{aligned}
\frac{\partial \theta(s;t)}{\partial s} &=-\nabla_{\theta}\frac{\delta E_s(\rho(s))}{\delta \rho}\left(\theta(s;t),t\right)\\
\theta(S;t) &=\theta_S
\end{aligned}
\right.\,.
\]
This fact suggests that \eqref{eqn:ODEdeltaEs} produces a bijection from $\theta(0;t)\in\mathbb{R}^k$ to $\theta(S;t)\in\mathbb{R}^k$. Because the solution to \eqref{eqn:ODEdeltaEs} also depends continuously on initial data and the support of $\rho_{\ini}(\theta,t_0)$ is $\mathbb{R}^k$, we obtain that the support of $\rho(\theta,t_0,s)$ is also $\mathbb{R}^k$ for any $s<\infty$, proving the result.
\end{proof}

% \subsection{Proof of Proposition~\ref{prop:wholesupport2}}
\begin{proof}[Proof of Proposition~\ref{prop:wholesupport2}]
According to Section \ref{sec:Step1}, $\rho(\theta,t,s)$ has a particle representation from the solution of the ODE \eqref{eqn:Wassgradientflowsdis1}, where $\theta_0$ is replaced by $\theta_0(t)$ and $\theta_0(t)\sim\rho_{\ini}(\theta,t)$. 
We define $\rho_\infty(\theta,t)$ as the distribution of $\theta_\rho(\infty;t,\theta_0(t))$. 
Since 
\[
\lim_{s\rightarrow\infty}\theta_\rho(s;t,\theta)=\theta_\rho(\infty;t,\theta) 
\]
uniformly in $\theta,t$, we obtain that
\[
\lim_{s\rightarrow\infty}\sup_{t\in[0,1]}W_2\left(\rho_\infty(\theta,t),\rho(\theta,t,s)\right)\leq \lim_{s\rightarrow\infty}\sup_{t\in[0,1]}\mathbb{E}\left(|\theta_\rho(s;t,\theta_0(t))-\theta_\rho(\infty;t,\theta_0(t))|^2\right)=0\,,
\]
which proves the convergence condition of Theorem~\ref{thm:globalminimal}. 

Next, consider any small ball $B_r\left(\widetilde{\theta}\right)\subset \mathbb{R}^k$, where $\widetilde{\theta}\in\mathbb{R}^k$ and $r>0$. According to Proposition~\ref{lemm:wholesupport}, we obtain that 
\[
\int_{B_{r/2}\left(\widetilde{\theta}\right)}d\rho(\theta,t_0,s)>0\,, \quad \mbox{for all $s>0$},
\]
which implies that
\[
\mathbb{P}\left(\theta_\rho(s;t,\theta_0(t))\in B_{r/2}\left(\widetilde{\theta}\right)\right)>0\,, 
\quad \mbox{for all $s>0$}.
\]
Because of the uniformly convergence, we can choose $S$ large enough so that
\[
\sup_{t\in[0,1], \, \theta\in\mathbb{R}^k}|\theta_\rho(S;t,\theta)-\theta_\rho(\infty;t,\theta)|\leq \frac{r}{2}\,.
\]
We then obtain
% \begin{align*}
% \int_{B_r\left(\widetilde{\theta}\right)}d\rho_\infty(\theta,t_0)
% & =\mathbb{P}\left(\theta_\rho(\infty;t,\theta_0(t))\in B_{r}\left(\widetilde{\theta}\right)\right)\\
% & \geq \mathbb{P}\left(\theta_\rho(S;t,\theta_0(t))\in % B_{r/2}\left(\widetilde{\theta}\right)\right)>0\,.
% \end{align*}
\[
\int_{B_r\left(\widetilde{\theta}\right)}d\rho_\infty(\theta,t_0)
 =\mathbb{P}\left(\theta_\rho(\infty;t,\theta_0(t))\in B_{r}\left(\widetilde{\theta}\right)\right)
\geq \mathbb{P}\left(\theta_\rho(S;t,\theta_0(t))\in B_{r/2}\left(\widetilde{\theta}\right)\right)>0\,.
\]
Since $B_r\left(\widetilde{\theta}\right)$ is a arbitrary ball in $\mathbb{R}^k$, this proves that the support of $\rho_\infty(\theta,t_0)$ is $\mathbb{R}^k$.
\end{proof}

\section{Proof of Theorem~\ref{thm:consistent}}\label{sec:proofofthmconsistent}
We start by giving the full definition of limit-admissible $\rho$.
\begin{definition}\label{def:meanadmissible2}
For an admissible $\rho(\theta,t)$, we say $\rho(\theta,t)$ is {\em limit-admissible} if the average of a large number of particle presentations is bounded and Lipschitz with high probability. That is, for an admissible $\rho(\theta,t)$, there are two constants $C_3$ and $C_4$, both greater than $\sup_{t\in[0,1]}\int_{\mathbb{R}^k}|\theta|^2d\rho(\theta,t)$ such that, for any $M$ stochastic process presentation $\{\theta_m(t)\}^M_{m=1}$ that are $i.i.d.$ drawn from $\rho(\theta,t)$, the following properties are satisfied for any $\eta>0$ and $M>\frac{C_3}{\eta}$:
\begin{itemize}
\item[1.] Second moment boundedness in time:
\begin{equation}\label{boundLdis0}
\mathbb{P}\left(\sup_{t\in[0,1]}\frac{1}{M}\sum^M_{m=1}|\theta_m(t)|^2 \leq C_4\right)\geq 1-\eta\,.
\end{equation}
\item[2.] For all $L>0$, we have
\begin{equation}\label{diffrencesmall}
\mathbb{P}\left(\frac{1}{M}\sum^{L-1}_{l=0}\sum^M_{m=1}\int^{\frac{l+1}{L}}_{\frac{l}{L}}\left|\theta_m(t)-\theta_m\left(\frac{l}{L}\right)\right|^2\rd t\leq \frac{C_4}{L^2}\right)\geq 1-\eta\,.
\end{equation}
\end{itemize}
\end{definition}

\begin{remark}\label{remark3.1}
We note that it is relatively easy to have $\rho(\theta,t)$ be limit-admissible and satisfy \eqref{boundLdis0} and \eqref{diffrencesmall}. For example, if $\rho(\theta,t)=\rho(\theta)$ and $\theta(t)=\theta$, then \eqref{diffrencesmall} is directly satisfied. Furthermore, if $\int_{\mathbb{R}^k}|\theta|^4\rd\rho(\theta)$ is finite, then using Chebyshev's inequality,
we have 
\[
\begin{aligned}
&\mathbb{P}\left(\left|\sup_{t\in[0,1]}\frac{1}{M}\sum^M_{m=1}|\theta_m(t)|^2-\int_{\mathbb{R}^k}|\theta|^2\rd\rho(\theta)\right|\leq \left(\int_{\mathbb{R}^k}|\theta|^4\rd\rho(\theta)\right)^{1/2}\right)\\
& =\mathbb{P}\left(\left|\frac{1}{M}\sum^M_{m=1}\left(|\theta_m|^2-\int_{\mathbb{R}^k}|\theta|^2d\rho(\theta)\right)\right|\leq \left(\int_{\mathbb{R}^k}|\theta|^4\rd\rho(\theta)\right)^{1/2}\right)\\
& \geq 1-\frac{\mathbb{E}\left(\left|\frac{1}{M}\sum^M_{m=1}\left(|\theta_m|^2-\int_{\mathbb{R}^k}|\theta|^2\rd\rho(\theta)\right)\right|^2\right)}{\int_{\mathbb{R}^k}|\theta|^4\rd\rho(\theta)}\\
& \geq 1-\frac{1}{M}\,.
\end{aligned}\,.
\]
Since $\int_{\mathbb{R}^k}|\theta|^2\rd\rho(\theta)\leq \left(\int_{\mathbb{R}^k}|\theta|^4\rd\rho(\theta)\right)^{1/2}$, this implies \eqref{boundLdis0} with 
\[
C_3=1,\quad  C_4=2\left(\int_{\mathbb{R}^k}|\theta|^4\rd\rho(\theta)\right)^{1/2}\,.
\]
\end{remark}

Theorem~\ref{thm:consistent} addresses the convergence in both $M$ and $L$. 
We organize the discussion into the following two theorems, which are subsequently proved in the latter two sections.

The first theorem addresses the limit in $M$, specifically,  the setup in which ResNet has infinite layers $L=\infty$ and finite width $M<\infty$, but we are interested in the limiting loss for lager $M$. 
We refer this part of analysis as ``mean-field analysis.''

\begin{theorem}\label{thm:mean-field}
Suppose that Assumption \ref{assum:f} holds, $\rho_{\ini}(\theta,t)$ is limit-admissible, and $\{\theta_m(0;t)\}^M_{m=1}$ are $i.i.d$ drawn from $\rho_{\ini}(\theta,t)$. Suppose too that
\begin{itemize}
\item $\rho(\theta,t,s)$ solves \eqref{eqn:Wassgradientflows} with the initial condition $\rho_{\ini}(\theta,t)$, and 
\item $\theta_m(s;t)$ solves \eqref{eqn:Wassgradientflowsdis} with the initial condition $\theta_m(0;t)$.
\end{itemize}
Then for any positive values of $\epsilon$, $\eta$, and $S$, there exists a constant $C(\rho_{\ini}(\theta,t),S)>0$ depending on $\rho_{\ini}(\theta,t)$ and $S$ such that when 
\[
M>\frac{C(\rho_{\ini}(\theta,t),S)}{\epsilon^2\eta}\,,
\]
we have
\[
\mathbb{P}\left(\left|E(\Theta(s;\cdot))-E(\rho(\cdot,\cdot,s))\right|\leq \epsilon\right)\geq 1-\eta\,,\quad \forall s<S\,.
\]
\end{theorem}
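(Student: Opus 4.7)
The approach is a synchronous-coupling argument in the style of McKean--Vlasov. I introduce \emph{idealized} particles $\widetilde\theta_m(s;t)$ driven by the mean-field PDE solution rather than by the empirical measure:
\[
\frac{\rd \widetilde\theta_m(s;t)}{\rd s}=-\nabla_\theta\frac{\delta E_s(\rho(s))}{\delta\rho}\bigl(\widetilde\theta_m(s;t),t\bigr),\quad\widetilde\theta_m(0;t)=\theta_m(0;t).
\]
Since this drift depends only on the common $\rho$, the $\widetilde\theta_m$ remain independent across $m$, and the method of characteristics applied to \eqref{eqn:Wassgradientflows} implies that the marginal law of $\widetilde\theta_m(s;t)$ is exactly $\rho(\theta,t,s)$. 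The plan is to prove $\theta_m\approx\widetilde\theta_m$ in a mean-square sense over $[0,S]$ and then to transfer this to the cost via Theorem \ref{thm:wmean-fieldlimit}.

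Let $\Psi(s):=\frac{1}{M}\sum_m\sup_{t\in[0,1]}|\theta_m(s;t)-\widetilde\theta_m(s;t)|^2$. Differentiating in $s$ and adding/subtracting intermediate terms, the drift discrepancy splits as (i) a Lipschitz-in-$\theta$ piece $\nabla_\theta\frac{\delta E_s(\rho^{\dis}_\Theta)}{\delta\rho}(\theta_m,t)-\nabla_\theta\frac{\delta E_s(\rho^{\dis}_\Theta)}{\delta\rho}(\widetilde\theta_m,t)$ controlled by \eqref{Lipshictz}; (ii) a Lipschitz-in-$\rho$ piece $\nabla_\theta\frac{\delta E_s(\rho^{\dis}_\Theta)}{\delta\rho}(\widetilde\theta_m,t)-\nabla_\theta\frac{\delta E_s(\rho^{\dis}_{\widetilde\Theta})}{\delta\rho}(\widetilde\theta_m,t)$, bounded via \eqref{eqn:Deltabound2} and the diagonal-coupling inequality $d_1(\rho^{\dis}_\Theta,\rho^{\dis}_{\widetilde\Theta})\le\Psi(s)^{1/2}$; and (iii) a fluctuation piece $\nabla_\theta\frac{\delta E_s(\rho^{\dis}_{\widetilde\Theta})}{\delta\rho}(\widetilde\theta_m,t)-\nabla_\theta\frac{\delta E_s(\rho)}{\delta\rho}(\widetilde\theta_m,t)$, the sole source of randomness. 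Pieces (i)--(ii) combine into a self-feeding Gr\"onwall inequality for $\Psi$, with (iii) as forcing. To bound (iii) at rate $O(1/\sqrt M)$, I avoid invoking \eqref{eqn:Deltabound2} directly and instead mimic the stability proofs of Theorem \ref{thm:wmean-fieldlimit} and Lemma \ref{lem:prhoprop1}, replacing every optimal-transport estimate by a variance bound on the iid quantities $f(z,\widetilde\theta_m(s;t))$ and $\partial_zf(z,\widetilde\theta_m(s;t))$; the moment bound \eqref{boundLdis0} ensures the variance is uniformly finite in $s\le S$. This yields $\mathbb{E}\!\sup_{s\le S}\Psi(s)\le C(\rho_{\ini},S)/M$.

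The final passage from $\Psi$ to the cost uses $E(\Theta(s;\cdot))=E(\rho^{\dis}_\Theta(s))$ (Remark \ref{rmk:equivalence}) together with the same variance-based refinement of \eqref{stabilityofxsolution}: splitting $E(\rho^{\dis}_\Theta(s))-E(\rho(\cdot,\cdot,s))$ via the intermediate $E(\rho^{\dis}_{\widetilde\Theta}(s))$, the first difference is $O(\Psi(s)^{1/2})$ and the second an $O(1/\sqrt M)$ iid fluctuation that is handled via the limit-admissibility conditions \eqref{boundLdis0}--\eqref{diffrencesmall}. Markov's inequality then delivers $\mathbb{P}(|E(\Theta(s;\cdot))-E(\rho(\cdot,\cdot,s))|\le\epsilon)\ge 1-\eta$ whenever $M\ge C(\rho_{\ini},S)/(\epsilon^2\eta)$. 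The main technical obstacle is precisely obtaining the $M^{-1/2}$ rate in piece (iii) and in the final fluctuation: a direct appeal to the Wasserstein stability \eqref{stabilityofxsolution}--\eqref{stabilityofprho}--\eqref{eqn:Deltabound2} would only furnish $d_1(\rho^{\dis}_{\widetilde\Theta},\rho)\lesssim M^{-1/k}$ from the classical empirical-measure $W_2$ law of large numbers, which is too weak for large $k$. The cure, as sketched above, is to open the black box of the stability estimates and exploit that the relevant functionals are fixed Lipschitz integrands of iid samples, which restores the $M^{-1/2}$ concentration rate; a secondary care point is that all moment and continuity bounds must be propagated uniformly in $s\le S$, with Gr\"onwall constants depending exponentially on $S$.
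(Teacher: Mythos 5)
Your proposal is correct and follows essentially the same route as the paper's Appendix G: synchronous coupling with idealized particles $\widetilde\theta_m\sim\rho$ sharing the initial data, a Gr\"onwall estimate on the mean-square particle discrepancy, and—crucially—opening the box of the $Z,p$ stability estimates so the fluctuation term becomes a variance of fixed Lipschitz integrands of iid samples, restoring the $M^{-1/2}$ rate that a direct Wasserstein law of large numbers would miss; the paper's Lemma~\ref{leamm:meanfiledstability} packages exactly this decomposition before feeding it into the Gr\"onwall loop. The only cosmetic differences are that the paper uses an $L^2$-in-$t$ rather than a $\sup_t$ coupling functional and routes the random moment constant $\mathcal{L}^{\dis,\sup}_\ini$ through the high-probability bound \eqref{boundLdis0} rather than trying to keep a deterministic constant in $\mathbb{E}\,\Psi$.
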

\begin{proof}
See Appendix~\ref{sec:proofofthmmean-field}.
\end{proof}

The conclusion of this result suggests that for a $1-\eta$ confidence of an $\epsilon$ accuracy, $M$ grows polynomially with respect to  $1/\epsilon$ and $1/\eta$.

The second result considers the convergence of discrete ResNet \eqref{eqn:disRes} to continuous ResNet  \eqref{eqn:contRes} as $L \to \infty$. This part of analysis is called ``continuous limiting analysis.''
\begin{theorem}\label{thm:contlimit}
Suppose that Assumption \ref{assum:f} holds, $\rho_{\ini}(\theta,t)$ is limit-admissible, and $\{\theta_m(0;t)\}^M_{m=1}$ are $i.i.d$ drawn from $\rho_{\ini}(\theta,t)$. 
Suppose too that
\begin{itemize}
\item $\theta_m(s;t)$ solves~\eqref{eqn:Wassgradientflowsdis} with initial condition $\theta_m(0;t)$, and
\item $\theta_{l,m}(s)$ solves~\eqref{eqn:classicalgradientflowfiniteL} with initial condition $\theta_m\left(0;\frac{l}{L}\right)$\,.
\end{itemize}
Then for any positive $\epsilon$, $\eta$, and $S$, there exists a constant $C(\rho_{\ini}(\theta,t),S)>0$ depending on  $\rho_{\ini}(\theta,t)$ and $S$ such that when
\[
M\geq \frac{C(\rho_{\ini}(\theta,t),S)}{\eta},\quad L\geq \frac{C(\rho_{\ini}(\theta,t),S)}{\epsilon}
\]
we have for all $s<S$ that 
\[
\mathbb{P}\left(\left|E(\Theta(s;\cdot))-E(\Theta_{L,M}(s))\right|\leq \epsilon\right)\geq 1-\eta\,.
\]
\end{theorem}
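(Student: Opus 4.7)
The plan is to introduce an intermediate purely-discretized trajectory $\widetilde{\Theta}_{L,M}(s)=\{\widetilde{\theta}_{l,m}(s)\}$ defined by $\widetilde{\theta}_{l,m}(s):=\theta_m(s;l/L)$, and split the target difference via a triangle inequality into (i) the continuous-vs-sampled-grid error $|E(\Theta(s;\cdot))-E(\widetilde{\Theta}_{L,M}(s))|$ and (ii) the trajectory comparison $|E(\widetilde{\Theta}_{L,M}(s))-E(\Theta_{L,M}(s))|$. Part (i) is a Riemann-sum discretization error between the OIE/ODE system~\eqref{eqn:contRes} defining $Z_\Theta$ and the difference equation~\eqref{eqn:disRes} defining $Z_{\widetilde{\Theta}_{L,M}}$, driven by the $O(1/L)$ stepsize; part (ii) is a pure gradient-flow comparison in the pseudo-time $s$ between two discrete flows that share initial condition $\theta_m(0;l/L)=\theta_{l,m}(0)$ but obey slightly different velocities.

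For part (i), I would first propagate the two limit-admissibility bounds~\eqref{boundLdis0} and~\eqref{diffrencesmall} from $s=0$ to $s\in[0,S]$. Using~\eqref{bound} to control $\nabla_\theta(\delta E_s/\delta\rho)$, an argument analogous to Proposition~\ref{proposition:deltaEODE} and Grönwall in $s$ show that the event on which both $\sup_t\tfrac{1}{M}\sum_m|\theta_m(s;t)|^2\le C_4'$ and $\tfrac{1}{M}\sum_{l,m}\int_{l/L}^{(l+1)/L}|\theta_m(s;t)-\theta_m(s;l/L)|^2\,\rd t\le C_4'/L^2$ hold uniformly in $s\in[0,S]$ has probability at least $1-\eta$, provided $M\ge C(\rho_\ini,S)/\eta$. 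On this event, comparing~\eqref{eqn:contRes} to~\eqref{eqn:disRes} evaluated at the same parameters $\widetilde{\theta}_{l,m}(s)$, splitting each interval $[l/L,(l+1)/L]$, and using the growth/Lipschitz bounds in Assumption~\ref{assum:f} yields $|Z_\Theta(l/L;x)-Z_{\widetilde{\Theta}_{L,M}}(l;x)|\le C(\rho_\ini,S)/L$. Combined with Lipschitzness of $g$ and the boundedness of $|Z|$ from \eqref{boundofxsolution} and~\eqref{boundofxsolutionfiniteL}, this controls (i) by $C(\rho_\ini,S)/L$.

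For part (ii), set $\Delta_{l,m}(s):=\theta_{l,m}(s)-\widetilde{\theta}_{l,m}(s)$, so $\Delta_{l,m}(0)=0$. Differentiate $\tfrac{1}{M}\sum_{l,m}|\Delta_{l,m}(s)|^2$ in $s$. The right-hand side produces cross terms $\bigl\langle\Delta_{l,m},\,ML\,\partial_{\theta_{l,m}}E(\Theta_{L,M}(s))-M(\delta E(\Theta(s))/\delta\theta_m)(l/L)\bigr\rangle$ plus the two matching regularizer contributions $-2e^{-s}\Delta_{l,m}$. Using the explicit formula~\eqref{eqn:partialEfiniteL} against~\eqref{eqn:Frechetderivative}, the stability estimates~\eqref{stabilityofxsolution}–\eqref{stabilityofxsolutionfiniteL} and~\eqref{stabilityofprho}–\eqref{stabilityofpsolutionfiniteL}, together with the part (i) bound $|Z_\Theta(l/L;x)-Z_{\widetilde{\Theta}_{L,M}}(l;x)|\le C/L$ (and its analogue for $p$), one obtains an estimate of the form
\[
\tfrac{\rd}{\rd s}d_{1,L,M}^2(\Theta_{L,M}(s),\widetilde{\Theta}_{L,M}(s))\le C(\rho_\ini,S)\bigl(d_{1,L,M}^2(\Theta_{L,M}(s),\widetilde{\Theta}_{L,M}(s))+1/L^2\bigr).
\]
Grönwall in $s\in[0,S]$ then yields $d_{2,L,M}(\Theta_{L,M},\widetilde{\Theta}_{L,M})\le C(\rho_\ini,S)/L$, which by~\eqref{stabilityofxsolutionfiniteL} and Lipschitzness of $g$ bounds (ii) by $C(\rho_\ini,S)/L$. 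Choosing $L\ge C(\rho_\ini,S)/\epsilon$ closes the argument.

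The main obstacle I expect is keeping all stability constants $C(\mathcal{L}_{\Theta_{L,M}})$, $C(\mathcal{L}_{\widetilde\Theta_{L,M}})$ and $C(\mathcal{L}_\rho)$ uniformly bounded in $L$ and in $s\in[0,S]$, because every use of Lemma~\ref{prop:wmean-fieldlimitfiniteL} and Lemma~\ref{lem:prhoprop1} requires it. The bridge between ``$\sup_t\tfrac{1}{M}\sum_m|\theta_m|^2$ bounded'' (which is what limit-admissibility provides) and ``$\tfrac{1}{ML}\sum_{l,m}|\theta_{l,m}|^2$ bounded'' (which is what appears in $\mathcal{L}_{\Theta_{L,M}}$) must be handled carefully on the good event, propagating the initial bound forward via the $-2e^{-s}\Theta_{L,M}$ damping and a Grönwall estimate on the second moment of $\Theta_{L,M}(s)$ itself. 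Once that uniform a priori control is secured, the two $O(1/L)$ estimates above combine into the claimed rate $L\ge C(\rho_\ini,S)/\epsilon$, while $M\ge C(\rho_\ini,S)/\eta$ suffices to make limit-admissibility apply.
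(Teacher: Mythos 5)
Your decomposition is a legitimate alternative to the paper's argument, which instead tracks the single quantity $\frac{1}{M}\sum_{l,m}\int_{l/L}^{(l+1)/L}|\theta_{l,m}(s)-\theta_m(s;t)|^2\,\rd t$ and closes a single Gr\"onwall estimate using \eqref{eqn:stabilityofZmean-fieldfiniteL} and \eqref{eqn:stabilityofpmean-fieldfiniteL}. Your split via the intermediate grid-sampled trajectory $\widetilde{\Theta}_{L,M}(s)=\{\theta_m(s;l/L)\}$ is conceptually cleaner and can be made to work, but it is worth noting that $\widetilde{\Theta}_{L,M}$ is not a self-contained discrete gradient flow --- its velocity depends on all of $\Theta(s;\cdot)$, not just the grid values --- so part (ii) is not literally a comparison of two discrete flows, though your formula for the cross terms is nonetheless correct.

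There is a genuine gap in part (i). You claim that ``an argument analogous to Proposition~\ref{proposition:deltaEODE}'' propagates the bound $\frac{1}{M}\sum_{l,m}\int_{l/L}^{(l+1)/L}|\theta_m(s;t)-\theta_m(s;l/L)|^2\,\rd t\le C_4'/L^2$ forward in $s$. But Proposition~\ref{proposition:deltaEODE} rests on Lemma~\ref{lemmadeltaEs}~\eqref{Lipshictz}, which gives only H\"older-$1/2$ continuity of $\nabla_\theta\frac{\delta E_s(\rho)}{\delta\rho}(\cdot,t)$ in $t$ (via $|Z_\rho(t_1;x)-Z_\rho(t_2;x)|\le C\sqrt{t_1-t_2}$). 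Differentiating $|\theta_m(s;l/L)-\theta_m(s;t)|^2$ in $s$ and squaring this H\"older estimate produces a driving term of size $\frac{1}{M}\sum_{l,m}\int_{l/L}^{(l+1)/L}(|\theta_m(s;t)|+1)^2|t-l/L|\,\rd t=O(1/L)$, not $O(1/L^2)$, so Gr\"onwall only yields $O(1/L)$ for the propagated mean-square. This degrades your $|Z_\Theta-Z_{\widetilde{\Theta}_{L,M}}|$ bound to $O(1/\sqrt{L})$, and after part (ii) the final rate becomes $L\gtrsim\epsilon^{-2}$ rather than the claimed $L\gtrsim\epsilon^{-1}$. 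The fix is to re-derive the $t$-continuity of the force as \emph{Lipschitz} rather than H\"older-$1/2$ by exploiting the bound $\sup_{t}\frac{1}{M}\sum_m|\theta_m(s;t)|^2\le C$ that holds on your good event: under that uniform-in-$t$ second-moment control one has $|Z_\rho(t_1;x)-Z_\rho(t_2;x)|\le C|t_1-t_2|$, and the driving term becomes $O(1/L^2)$. The paper sidesteps this entirely because its stability estimate \eqref{eqn:stabilityofZmean-fieldfiniteL} is phrased with the sup-in-$t$ moment $\mathcal{L}^{\sup}$ and already yields the Lipschitz rate, so the $O(1/L)$ Euler remainder there squares to $O(1/L^2)$ inside the Gr\"onwall without ever invoking the H\"older-$1/2$ estimate. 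You need to do the same strengthening explicitly; citing Proposition~\ref{proposition:deltaEODE} and Lemma~\ref{lemmadeltaEs} as written will not give the rate you assert.
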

\begin{proof}
See Appendix~\ref{sec:proofofcontlimit}.
\end{proof}

This theorem shows that when the width $M$ is large enough, then with high probability, the difference of loss functions between the discrete ResNet and its continuous limit decreases to $0$ as $L\rightarrow\infty$.

\begin{remark}
According to the proof in Appendix~\ref{sec:proofofcontlimit}, the result of Theorem~\ref{thm:contlimit} can be strengthened: In particular, the lower bound of $M$ could be relaxed. If one incorporates $\eta$ dependence in $C_4$ (in~\eqref{boundLdis0}) and let it be large enough, the lower bound in $M$ in Definition~\ref{def:meanadmissible2} can be removed, relaxing the requirement on $M$ in the previous theorem.
%We could relax the requirement on $M$ and have~\eqref{boundLdis0} and~\eqref{diffrencesmall} hold true for {\em any} $M$, while incorporating dependence on $\eta$ into the term $C_4$ that appears in the proof.

% also be true for all $M>0$ and $L$ large enough (depends on $\eta,\epsilon$)  if the followng conditions are satisfied:

% The initial condition $\rho_\ini(\theta,t)$ is admissible and for any $\eta>0$, there a $C(\eta)$  depending on $\eta$ such that for any $M>0$, if $\{\theta_m(t)\}^M_{m=1}$ are $i.i.d.$ drawn from $\rho(\theta,t)$, then
% \begin{itemize}
% \item[1.]
% \begin{equation}\label{boundLdis02}
% \mathbb{P}\left(\sup_{t\in[0,1]}\frac{1}{M}\sum^M_{m=1}|\theta_m(t)|^2 \leq C(\eta)\right)\geq 1-\eta\,,
% \end{equation}
% \item[2.] for all $L>0$:
% \begin{equation}\label{diffrencesmall2}
% \mathbb{P}\left(\frac{1}{M}\sum^{L-1}_{l=0}\sum^M_{m=1}\int^{\frac{l+1}{L}}_{\frac{l}{L}}\left|\theta_m(t)-\theta_m\left(\frac{l}{L}\right)\right|^2\rd t\leq \frac{C(\eta)}{L^2}\right)\geq 1-\eta\,.
% \end{equation}
% \end{itemize}
% Similar to Remark \ref{remark3.1}, if $\rho(\theta,t)=\rho(\theta)$ and $\theta(t)=\theta$, then \eqref{boundLdis02}, \eqref{diffrencesmall2} are satisfied with 
% \[
% C(\eta)=\frac{\int_{\mathbb{R}^k}|\theta|^2d\rho(\theta)}{\eta}
% \]
\end{remark}

\section{Convergence to the mean-field PDE}\label{sec:proofofthmmean-field}
This section is dedicated to mean-field analysis and the proof of Theorem~\ref{thm:mean-field}. 
We first present some intuition.

\subsection{An intuitive argument for the equivalence}\label{sec:equivalence}

To intuitively see the equivalence between~\eqref{eqn:Wassgradientflowsdis} and~\eqref{eqn:Wassgradientflows}, we test them on the same smooth function $h(\theta)$. Testing~\eqref{eqn:Wassgradientflows} amounts to multiplying $h$ on both sides and performing integration by parts:
\[
\frac{\rd}{\rd s}\int_{\mathbb{R}^k}h\rd\rho(\theta) = -\int_{\mathbb{R}^k}\nabla_\theta h\nabla_\theta\frac{\delta E_s(\rho(s))}{\delta\rho}\rd{\rho}\,.
\]
This is to say $\frac{\rd}{\rd s}\mathbb{E}(h)=\EE(\nabla_\theta h\nabla_\theta\frac{\delta E_s(\rho(s))}{\delta \rho})$. Testing $h$ on~\eqref{eqn:Wassgradientflowsdis}, we should expect the same equivalence.

Setting $\rho = \frac{1}{M}\sum_{m=1}^M\delta_{\theta_m}$, we have from~\eqref{eqn:Wassgradientflowsdis} that
\[
\frac{\rd}{\rd s}\mathbb{E}(h) = \frac{1}{M}\sum_{m=1}^M\nabla_\theta h(\theta_m)\frac{\rd}{\rd s}\theta_m = -\sum_{m=1}^M\nabla_\theta h(\theta_m) \frac{\delta E_s}{\delta\theta_m}\,.
\]
The right hand side is also $\EE(\nabla_\theta h\nabla_\theta\frac{\delta E_s(\rho(s))}{\delta \rho})$ if and only if
\[
M\frac{\delta E_s(\Theta(s))}{\delta \theta_m}=\nabla_\theta\frac{\delta E_s(\rho(s))}{\delta\rho}(\theta_m,t)\,,
\]
a fact that we show rigorously below.

\begin{lemma}\label{lem:equivalence} For a list of continuous trajectories $\Theta(t)=\{\theta_{m}(t)\}_{m=1}^M$, denote 
\[
\rho_{\Theta}=\frac{1}{M}\sum^M_{m=1}\delta_{\theta_m(t)}(\theta)
\]
then for all $1\leq m\leq M$ and $t\in[0,1]$, we have
\begin{equation}\label{eqn:forceequivalence}
\frac{\delta E_s(\Theta)}{\delta \theta_m}(t)=\frac{1}{M}\nabla_\theta\frac{\delta E_s(\rho_\Theta)}{\delta \rho}(\theta_m(t),t)\,.
\end{equation}
\end{lemma}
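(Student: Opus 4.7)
The plan is to verify the identity by computing both sides directly using the explicit Fréchet derivative formula together with a standard adjoint-sensitivity calculation.

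First I would observe that the empirical measure $\rho_\Theta(\theta,t)=\frac{1}{M}\sum_{m=1}^M \delta_{\theta_m(t)}$ lies in $\mathcal{C}([0,1];\mathcal{P}^2)$ since each $\theta_m(\cdot)$ is continuous, and that integration against $\rho_\Theta$ reduces to the empirical average $\int f(z,\theta)\rd\rho_\Theta(\theta,t)=\frac{1}{M}\sum_m f(z,\theta_m(t))$. Consequently, the OIE \eqref{eqn:meancontRes} with $\rho=\rho_\Theta$ coincides with the ODE \eqref{eqn:contRes}, giving $Z_{\rho_\Theta}(t;x)=Z_\Theta(t;x)$, and similarly the adjoint \eqref{eqn:prhoappendix} reduces to the natural adjoint of \eqref{eqn:contRes}, giving $p_{\rho_\Theta}(t;x)=p_\Theta(t;x)$.

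Second, I would read off the right-hand side. Taking $\nabla_\theta$ in \eqref{eqn:frechet_s} and inserting the identifications above yields
\[
\frac{1}{M}\nabla_\theta\frac{\delta E_s(\rho_\Theta)}{\delta \rho}(\theta_m(t),t)=\frac{1}{M}\EE_{x\sim\mu}\!\left(p_\Theta^\top(t;x)\,\partial_\theta f(Z_\Theta(t;x),\theta_m(t))\right)+\frac{2e^{-s}}{M}\theta_m(t).
\]
For the left-hand side, I would perform the standard adjoint computation: perturb $\theta_m(\cdot)\mapsto\theta_m(\cdot)+\epsilon\phi(\cdot)$ for a smooth compactly supported test function $\phi$, linearize \eqref{eqn:contRes} to obtain that the sensitivity $\delta Z(t;x)=\partial_\epsilon Z_\Theta(t;x)|_{\epsilon=0}$ solves
\[
\frac{\rd\,\delta Z}{\rd t}=\frac{1}{M}\sum_{m'=1}^M\partial_z f(Z_\Theta,\theta_{m'})\,\delta Z+\frac{1}{M}\partial_\theta f(Z_\Theta,\theta_m)\phi(t),\quad \delta Z(0;x)=0,
\]
then pair with $p_\Theta$, differentiate $p_\Theta^\top\delta Z$, and apply the adjoint equation plus the terminal condition to obtain
\[
\partial_\epsilon E(\Theta)\big|_{\epsilon=0}=\frac{1}{M}\int_0^1\EE_{x\sim\mu}\!\left(p_\Theta^\top(t;x)\,\partial_\theta f(Z_\Theta(t;x),\theta_m(t))\right)\phi(t)\,\rd t.
\]
Adding the explicit contribution $\frac{2e^{-s}}{M}\theta_m(t)$ from the regularizer in \eqref{eqn:cost_M_s} and reading off the density against $\phi$ identifies $\frac{\delta E_s(\Theta)}{\delta \theta_m}(t)$ with the right-hand side, proving \eqref{eqn:forceequivalence}.

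The only subtle point is the justification of the differentiation under the integral and the adjoint integration-by-parts; both are standard under Assumption~\ref{assum:f} given the a-priori bounds of Theorem~\ref{thm:wmean-fieldlimit} and Lemma~\ref{lem:prhoprop1}, so the result is essentially a bookkeeping check that the prefactor $1/M$ arising from the empirical-average form of $\rho_\Theta$ matches the per-particle Fréchet derivative. No other obstacle is anticipated.
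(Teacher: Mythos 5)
Your argument is correct, and it reaches the identity by a genuinely different route than the paper. You compute both sides explicitly: the right-hand side by substituting $\rho_\Theta$ into the closed-form expression \eqref{eq:fs2} for $\nabla_\theta\frac{\delta E_s(\rho)}{\delta\rho}$ (after observing that $Z_{\rho_\Theta}=Z_\Theta$ and $p_{\rho_\Theta}=p_\Theta$), and the left-hand side by a from-scratch adjoint-sensitivity calculation on the linearized ODE \eqref{eqn:contRes}, pairing the sensitivity $\delta Z$ with the adjoint $p_\Theta$ and telescoping $\frac{\rd}{\rd t}(p_\Theta^\top\delta Z)$. The paper instead never recomputes the adjoint formula for the $\Theta$-system: it perturbs the $m$-th trajectory, notes that this induces an $O(\epsilon)$ perturbation of $\rho_\Theta$ in $d_1$, and then invokes the Wasserstein Fr\'echet differentiability of $E_s$ (from \citep[Theorem 2]{pmlr-v119-lu20b}) to linearize $E_s(\rho_{\widetilde{\Theta},\epsilon})-E_s(\rho_\Theta)$ as $\int\frac{\delta E_s(\rho_\Theta)}{\delta\rho}\,\rd(\rho_{\widetilde{\Theta},\epsilon}-\rho_\Theta)$, which collapses to a single-particle term and yields the $\nabla_\theta$ gradient after one more Taylor expansion. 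Your route is more self-contained and elementary, avoiding the external Fr\'echet-differentiability lemma, at the cost of re-deriving the adjoint identity; the paper's route is shorter precisely because it leverages that lemma and the observation that the $\Theta$-dynamics is a special (empirical-measure) case of the $\rho$-dynamics. Both are sound, and you correctly identify the only real subtlety as justifying differentiation under the integral and the adjoint integration-by-parts, which the a-priori bounds cover.
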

\begin{proof} 
For every fixed $m$ with $1\leq m\leq M$, we denote by $\widetilde{\theta}_m(t)$ the direction of perturbation, and define the new system
\[
\widetilde{\Theta}_{\epsilon}(t)=\left\{\widetilde{\theta}_{i,\epsilon}(t)\right\}_{i=1}^M\,,
\]
by setting $\widetilde{\theta}_{i,\epsilon}(t)=\theta_{i}(t)$ for $i\neq m$ and $\widetilde{\theta}_{m,\epsilon}(t)=\theta_{m}(t)+\epsilon\widetilde{\theta}_m(t)$ for small $\epsilon>0$. Using the definition of the Fr\'echet derivative, we have
\[
\begin{aligned}
\int^1_0 \frac{\delta E_s(\Theta)}{\delta \theta_m}(t)\widetilde{\theta}_m(t)\rd t &=\lim_{\epsilon\rightarrow0} \frac{E_s\left(\widetilde{\Theta}_{\epsilon}\right)-E_s(\Theta)}{\epsilon}\\
& =\lim_{\epsilon\rightarrow0} \frac{E_s\left(\rho_{\widetilde{\Theta},\epsilon}\right)-E_s(\rho_{\Theta})}{\epsilon}\\
& \stackrel{\mathrm{(I)}}{=}\lim_{\epsilon\rightarrow0} \frac{1}{\epsilon}\left[\int^1_0\int_{\mathbb{R}^k}\frac{\delta E_s(\rho_\Theta)}{\delta \rho}\left(\rd \rho_{\widetilde{\Theta},\epsilon}-\rd \rho_{\Theta}\right)\right]\\
& =\lim_{\epsilon\rightarrow0} \frac{1}{M\epsilon}\left[\int^1_0\left(\frac{\delta E_s(\rho_\Theta)}{\delta \rho}(\widetilde{\theta}_{m,\epsilon}(t),t)-\frac{\delta E_s(\rho_\Theta)}{\delta \rho}(\theta_{m}(t),t)\right)\rd t\right]\\
& =\frac{1}{M}\int^1_0\nabla_\theta\frac{\delta E_s(\rho_\Theta)}{\delta \rho}(\theta_m(t),t)\widetilde{\theta}_m(t)\rd t\,.
\end{aligned}
\]
In (I), we use $d_1(\rho_{\widetilde{\Theta},\epsilon},\rho_{\Theta,\epsilon})=O(\epsilon)$ and a property of Fr\'echet derivative (\citep[Theorem 2]{pmlr-v119-lu20b}) to obtain that
\[
\left|E_s(\rho_{\widetilde{\Theta},\epsilon})-E_s(\rho_{\Theta})-\left[\int^1_0\int_{\mathbb{R}^k}\frac{\delta E_s(\rho_\Theta)}{\delta \rho}(\theta,t)\left(\rd \rho_{\widetilde{\Theta},\epsilon}-\rd \rho_{\Theta}\right)\rd t\right]\right|\leq o(d_1(\rho_{\widetilde{\Theta},\epsilon},\rho_{\Theta,\epsilon}))=o(\epsilon)\,.
\]
Because $\widetilde{\theta}_m(t)$ is an arbitrary perturbation, the result \eqref{eqn:forceequivalence} is proved.
\end{proof}

\begin{remark}\label{re:G.1}
As a direct consequence of this result, we can verify Remark~\ref{rmk:equivalence} rigorously as well.
\begin{proof}[Proof of Remark \ref{rmk:equivalence}]
From Lemma~\ref{lem:equivalence}, we see that~\eqref{eqn:Wassgradientflowsdis} can be written as
\begin{equation}\label{eqn:Wassgradientflowsdisequi}
\frac{\rd\theta_m(s;t)}{\rd s}=-\nabla_\theta\frac{\delta E_s(\rho^{\dis}_{\Theta})}{\delta \rho}(\theta_m(s;t),t)\,, 
\quad m=1,2,\dotsc,M\,.
\end{equation}
% which is same as \eqref{eqn:ODEdeltaEs} with $\phi=\rho^{\dis}_{\Theta}$. Therefore, if $\rho^{\dis}_{\Theta}(\theta,t,s)$ is a solution to~\eqref{eqn:Wassgradientflows}, the classical ODE theory shows the existence and uniqueness of \eqref{eqn:Wassgradientflowsdisequi} (same as \eqref{eqn:Wassgradientflowsdis}), see detail in Appendix \ref{sec:Step1}.

% Conversely, if $\Theta(s;t)$ is a solution to \eqref{eqn:Wassgradientflowsdisequi},
For any smooth test function $h$ and $t\in[0,1]$, we have
\[
\begin{aligned}
\frac{\rd}{\rd s}\left(\int_{\mathbb{R}^k}h(\theta)\rd\rho^{\dis}_{\Theta}(\theta,t,s)\right)
&=\frac{\rd }{\rd s}\left(\frac{1}{M}\sum^M_{m=1}h(\theta_m(s;t))\right)\\
& =\frac{1}{M}\sum^M_{m=1}-\nabla h(\theta_m(s;t))\cdot \nabla_\theta\frac{\delta E_s(\rho^{\dis}_{\Theta})}{\delta \rho}(\theta_m(s;t),t)\\
& =-\int_{\mathbb{R}^k}\nabla h(\theta)\cdot \nabla_\theta\frac{\delta E_s(\rho^{\dis}_{\Theta})}{\delta \rho}(\theta,t)\rd\rho^{\dis}_{\Theta}(\theta,t,s)\,,
\end{aligned}
\]
which implies from  integration by parts that $\rho^{\dis}_{\Theta}(\theta,t,s)$ is a weak solution to~\eqref{eqn:Wassgradientflows}.
\end{proof}
\end{remark}

\subsection{Discussion of stability in the mean-field regime}\label{sec:meanfieldstability}

We first provide a priori estimates to show the stability of $Z$ and $p$. Let $\rho(\theta,t)\in \mathcal{C}([0,1];\mathcal{P}^2)$, and $\left\{\widetilde{\theta}_m(t)\right\}^M_{m=1}$ and $\left\{\theta_m(t)\right\}^M_{m=1}$ be two distinct paths. Defining
\begin{equation} \label{eq:jw1}
\rho^{\dis}(\theta,t)=\frac{1}{M}\sum^M_{m=1}\delta_{\theta_m(t)}(\theta),\quad \widetilde{\rho}^{\dis}(\theta,t)=\frac{1}{M}\sum^M_{m=1}\delta_{\widetilde{\theta}_m(t)}(\theta)\,,
\end{equation}
we have the following lemma.
\begin{lemma}\label{leamm:meanfiledstability}
Let $\rho^{\dis}$ and $\widetilde{\rho}^{\dis}$ be as defined in \eqref{eq:jw1}. 
Suppose that  $Z_{\rho}$ and $Z_{\rho^{\dis}}$ are the solutions of \eqref{eqn:meancontRes} using $\rho$ and $\rho^{\dis}$, respectively, and $p_{\rho}$ and $p_{\rho^{\dis}}$ solve~\eqref{eqn:prho} using $\rho$ and $\rho^{\dis}$, respectively. Denote
\begin{equation}\label{eqn:Lmean-field}
\mathcal{L}^{\sup}=\sup_{t\in[0,1]}\left\{\int_{\mathbb{R}^k}|\theta|^2\rd\rho(\theta,u)\rd u,\ \frac{1}{M}\sum^M_{m=1}|\theta_m(t)|^2,\ \frac{1}{M}\sum^M_{m=1}\left|\widetilde{\theta}_m(t)\right|^2\right\}\,.
\end{equation}
There exists a constant $C(\mathcal{L}^{\sup})$ that depends only on $\mathcal{L}^{\sup}$ such that for all $t\in[0,1]$, we have
\begin{equation}\label{eqn:stabilityofZmean-field}
\begin{aligned}
& \left|Z_{\rho}(t;x)-Z_{\rho^{\dis}}(u;x)\right| \\
& \leq C(\mathcal{L}^{\sup})\left(\frac{1}{M}\sum^M_{m=1}\int^1_0\left|\theta_m(\tau)-\widetilde{\theta}_m(\tau)\right|^2\rd \tau\right)^{1/2}\\
& \quad +C(\mathcal{L}^{\sup})\left(\int^1_0\left|\int_{\mathbb{R}^k} f\left(Z_{\rho}(\tau;x),\theta\right)\rd(\rho(\theta,\tau)-\widetilde{\rho}^{\dis}(\theta,\tau))\right|^2\rd \tau\right)^{1/2}\,,
\end{aligned}
\end{equation}
and
\begin{equation}\label{eqn:stabilityofpmean-field}
\begin{aligned}
& \left|p_\rho(t;x)-p_{\rho^{\dis}}(t;x)\right| \\
& \leq C(\mathcal{L}^{\sup})\left(\frac{1}{M}\sum^M_{m=1}\int^1_0\left|\theta_m(\tau)-\widetilde{\theta}_m(\tau)\right|^2\rd \tau\right)^{1/2}\\
& \quad +C(\mathcal{L}^{\sup})\left(\int^1_0\left|\int_{\mathbb{R}^k} f\left(Z_{\rho}(\tau;x),\theta\right)\rd(\rho(\theta,\tau)-\widetilde{\rho}^{\dis}(\theta,\tau))\right|^2\rd \tau\right)^{1/2}\\
& \quad +C(\mathcal{L}^{\sup})\left(\int^1_0\left|\int_{\mathbb{R}^k} \partial_z f\left(Z_{\rho}(\tau;x),\theta\right)\rd(\rho(\theta,\tau)-\widetilde{\rho}^{\dis}(\theta,\tau))\right|^2\rd \tau\right)^{1/2}\,.
\end{aligned}
\end{equation}
\end{lemma}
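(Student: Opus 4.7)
The plan is to mirror the stability analyses of Theorem~\ref{thm:wmean-fieldlimit} and Lemma~\ref{lem:prhoprop1}, routing through $\widetilde{\rho}^{\dis}$ as an intermediate bridge between $\rho$ and $\rho^{\dis}$. For~\eqref{eqn:stabilityofZmean-field}, set $\Delta_Z(t):=Z_\rho(t;x)-Z_{\rho^{\dis}}(t;x)$ and $F_\nu(z,t):=\int f(z,\theta)\rd\nu(\theta,t)$, and split the ODE as
\[
\frac{\rd\Delta_Z}{\rd t}
 = \bigl[F_{\rho^{\dis}}(Z_\rho,t)-F_{\rho^{\dis}}(Z_{\rho^{\dis}},t)\bigr]
 + \bigl[F_\rho(Z_\rho,t)-F_{\rho^{\dis}}(Z_\rho,t)\bigr].
\]
The first bracket is Lipschitz in $Z$ with coefficient $C_1\int|\theta|\rd\rho^{\dis}\le C(\mathcal{L}^{\sup})$ (cf.~\eqref{eqn:FLipschitz}) and will supply the Gr\"onwall factor; the uniform bound $|Z_\rho|\le C(\mathcal{L}^{\sup})$ follows from~\eqref{boundofxsolution}.

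The second bracket is the source, which I would further decompose by inserting $\widetilde{\rho}^{\dis}$:
\[
F_\rho(Z_\rho,t)-F_{\rho^{\dis}}(Z_\rho,t)
 = \int f(Z_\rho,\theta)\rd(\rho-\widetilde{\rho}^{\dis})(\theta,t)
 + \frac{1}{M}\sum_{m=1}^{M}\bigl[f(Z_\rho,\widetilde{\theta}_m(t))-f(Z_\rho,\theta_m(t))\bigr].
\]
The first summand is precisely the weak-form quantity appearing in~\eqref{eqn:stabilityofZmean-field}. For the second, Assumption~\ref{assum:f}\eqref{eqn:derivativebound} with $i=0,\,j=1$, the bound on $|Z_\rho|$, and Cauchy--Schwarz yield an upper bound $C(\mathcal{L}^{\sup})\bigl(\tfrac{1}{M}\sum_m|\widetilde{\theta}_m(t)-\theta_m(t)|^2\bigr)^{1/2}$. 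Squaring the full source, integrating in $\tau$, and applying Gr\"onwall to $|\Delta_Z|^2$ with the zero initial condition then yields~\eqref{eqn:stabilityofZmean-field}.

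The $p$-estimate~\eqref{eqn:stabilityofpmean-field} proceeds in the same spirit on the backward ODE~\eqref{eqn:prho}. With $\Delta_p(t):=p_\rho(t;x)-p_{\rho^{\dis}}(t;x)$, mimicking the derivation of~\eqref{eqn:ODEforDEltaprho} one finds a Gr\"onwall coefficient $\int\partial_zf(Z_\rho,\theta)\rd\rho(\theta,t)$ bounded by $C(\mathcal{L}^{\sup})$ through~\eqref{boundofrateprho}, together with a source $p_{\rho^{\dis}}^\top\bigl[\int\partial_zf(Z_\rho)\rd\rho-\int\partial_zf(Z_{\rho^{\dis}})\rd\rho^{\dis}\bigr]$, whose prefactor is controlled by $|p_{\rho^{\dis}}|\le C(\mathcal{L}^{\sup})$ via~\eqref{boundofprho}. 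I would split the bracketed difference through $\int\partial_zf(Z_\rho)\rd\widetilde{\rho}^{\dis}$ and $\int\partial_zf(Z_\rho)\rd\rho^{\dis}$ into three pieces: (i) the weak-form residual $\int\partial_zf(Z_\rho,\theta)\rd(\rho-\widetilde{\rho}^{\dis})$, producing the third summand of~\eqref{eqn:stabilityofpmean-field}; (ii) a Lipschitz-in-$\theta$ difference that, by~\eqref{eqn:boundpartialzfexample}, is bounded by the $L^2$ particle distance; and (iii) a Lipschitz-in-$z$ difference $\int[\partial_zf(Z_\rho)-\partial_zf(Z_{\rho^{\dis}})]\rd\rho^{\dis}$ controlled by $|\Delta_Z(\tau)|$, which the already-proved~\eqref{eqn:stabilityofZmean-field} bounds by copies of the first two summands. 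The terminal condition satisfies $|\Delta_p(1;x)|\le C(\mathcal{L}^{\sup})|\Delta_Z(1;x)|$ by the Lipschitz continuity of $g$ and $\nabla g$, and is absorbed likewise. A final Gr\"onwall closes the estimate.

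The main obstacle is purely one of bookkeeping: one must verify that piece (iii) in the $p$-decomposition generates no fourth type of error beyond those listed in~\eqref{eqn:stabilityofpmean-field}. This closure is automatic, because the $Z$-estimate bounds $|\Delta_Z|$ in terms of exactly the $L^2$-particle and weak-form-$f$ quantities already present. No new smallness hypothesis on $M$ or on $\widetilde{\rho}^{\dis}$ is required; the entire estimate is an algebraic consequence of the Lipschitz structure inherited from Assumption~\ref{assum:f}, combined with the a priori bounds of Theorem~\ref{thm:wmean-fieldlimit} and Lemma~\ref{lem:prhoprop1}.
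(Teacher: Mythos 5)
Your proof is correct and follows essentially the paper's argument: Lipschitz/Gr\"onwall for the coupled forward–backward system, with $\widetilde{\rho}^{\dis}$ inserted to peel off the weak-form residual from the particle-$L^2$ error, and the Lipschitz-in-$z$ piece of the $p$-equation absorbed by the already-established $Z$-bound. The only deviation is organizational, in the $Z$-estimate: you run one Gr\"onwall directly on $\Delta_Z=Z_\rho-Z_{\rho^{\dis}}$ and split only the source through $\widetilde{\rho}^{\dis}$ at the integrand level, whereas the paper introduces the auxiliary trajectory $Z_{\widetilde{\rho}^{\dis}}$ and bounds $Z_\rho-Z_{\widetilde{\rho}^{\dis}}$ and $Z_{\widetilde{\rho}^{\dis}}-Z_{\rho^{\dis}}$ by two separate Gr\"onwalls — a cosmetic reorganization yielding the same $C(\mathcal{L}^{\sup})$ constants; the $p$-estimate, including the three-way split of $\int\partial_zf\,\rd\rho-\int\partial_zf\,\rd\rho^{\dis}$, is identical in structure to the paper's.
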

% \begin{remark} In Lemma~\ref{leamm:meanfiledstability}, there is no randomness. Later, in the proof of Theorem~\ref{thm:mean-field}, $\left\{\widetilde{\theta}_m(t)\right\}^M_i,\left\{\theta_m(t)\right\}^M_i$ are random pathes. And $\left\{\widetilde{\theta}_m(t)\right\}^M_i$ are i.i.d. drawn from $\rho(\theta,t)$. Then, we can use \eqref{eqn:stabilityofZmean-field} and \eqref{eqn:stabilityofpmean-field} to bound the difference of $Z,p$ with high probability when $M$ is large.
% \end{remark}
\begin{proof} 
For ease of  notation, we define
\begin{alignat*}{3}
Z(t;x)&=Z_{\rho}(t;x),\quad & Z_{\dis}(t;x)&=Z_{\rho^{\dis}}(t;x),\quad & \widetilde{Z}_{\dis}(t;x)&=Z_{\widetilde{\rho}^{\dis}}(t;x)\,,\\
p(t;x)&=p_{\rho}(t;x),\quad & p_{\dis}(t;x)&=p_{\rho^{\dis}}(t;x),\quad & \widetilde{p}_{\dis}(t;x)&=p_{\widetilde{\rho}^{\dis}}(t;x)\,,
\end{alignat*}
and denote
\[
\widetilde{\Delta}(t;x)=Z(t;x)-\widetilde{Z}_{\dis}(t;x),\;\; \Delta(t;x)=\widetilde{Z}_{\dis}(t;x)-Z_{\dis}(t;x),\;\;
\Delta_p(t;x)=p(t;x)-p_{\dis}(t;x)\,.
\]
To prove \eqref{eqn:stabilityofZmean-field}, we have by similar reasoning to \eqref{boundofDeltatZrho} that
\[
\frac{\rd \left|\widetilde{\Delta}(t;x)\right|^2}{\rd t}\leq \left(\frac{2C_1}{M}\sum^M_{m=1}\left|\widetilde{\theta}_m(t)\right|+1\right)\left|\widetilde{\Delta}(t;x)\right|^2+\left|\int_{\mathbb{R}^k} f\left(Z(t;x),\theta\right)\rd(\rho(\theta,t)-\widetilde{\rho}^{\dis}(\theta,t))\right|^2\,.
\]
Using the Gr\"onwall inequality and the fact that $|\widetilde{\Delta}(0;x)|=0$, we obtain the following, for all $t\in[0,1]$:
\begin{equation}\label{eqn:stabilityofZmean-field1}
\left|\widetilde{\Delta}(t;x)\right|\leq C(\mathcal{L}^{\sup})\left(\int^1_0\left|\int_{\mathbb{R}^k} f\left(Z(\tau;x),\theta\right)\rd(\rho(\theta,\tau)-\widetilde{\rho}^{\dis}(\theta,\tau))\right|^2\rd \tau\right)^{1/2}\,.
\end{equation}
Similarly, we have
\[
\begin{aligned}
\frac{\rd \left|\Delta(t;x)\right|^2}{\rd t} & \leq \left(\frac{2C_1}{M}\sum^M_{m=1}|\theta_m(t)|^2+1\right)\left|\Delta(t;x)\right|^2\\
& \quad +\left|\int_{\mathbb{R}^k} f\left(\widetilde{Z}_{\dis}(t;x),\theta\right)d(\rho^{\dis}(\theta,t)-\widetilde{\rho}^{\dis}(\theta,t))\right|^2\\
& \leq C(\mathcal{L}^{\sup})\left|\Delta(t;x)\right|^2+C_1^2\left(\left|\widetilde{Z}_{\dis}(t;x)\right|+1\right)^2\left(\frac{1}{M}\sum^M_{m=1}\left|\theta_m(t)-\widetilde{\theta}_m(t)\right|\right)^2\\
& \leq C(\mathcal{L}^{\sup})\left[\left|\Delta(t;x)\right|^2+\left(\frac{1}{M}\sum^M_{m=1}\left|\theta_m(t)-\widetilde{\theta}_m(t)\right|^2\right)\right]\,,
\end{aligned}
\]
where we use Assumption \ref{assum:f} \eqref{eqn:derivativebound} in the second inequality and \eqref{boundofxsolution} in the last inequality. Since $\Delta(0;x)=0$, we use the Gr\"onwall's inequality to obtain the following, for all $t \in [0,1]$:
\begin{equation}\label{eqn:stabilityofZmean-field2}
\left|\Delta(t;x)\right|\leq C(\mathcal{L}^{\sup})\left(\frac{1}{M}\sum^M_{m=1}\int^1_0\left|\theta_m(\tau)-\widetilde{\theta}_m(\tau)\right|^2\rd \tau\right)^{1/2}.
\end{equation}
We obtain  \eqref{eqn:stabilityofZmean-field} by adding \eqref{eqn:stabilityofZmean-field1} and \eqref{eqn:stabilityofZmean-field2}.

To prove \eqref{eqn:stabilityofpmean-field}, we recall~\eqref{eqn:prho} to obtain
% \sw{I don't quite see how this follows from \eqref{eqn:prho}; is this obvious?}~\zd{I missed one term before. I also add two equalities here to make it clear.}
\begin{equation}\label{eqn:boundofDeltap}
\begin{aligned}
&\frac{\rd|\Delta_p(t;x)|^2}{\rd t}\\
&=2\left\langle \Delta_p(t;x),p^\top_{\rho}\int_{\mathbb{R}^k}\partial_zf(Z(t;x),\theta)\rd\rho(\theta,t)-p^\top_{\rho^\dis}\int_{\mathbb{R}^k}\partial_zf(Z_{\dis}(t;x),\theta)\rd\rho^{\dis}(\theta,t))\right\rangle\\
& = 2\left\langle \Delta_p(t;x),(p^\top_{\rho}-p^\top_{\rho^\dis})\int_{\mathbb{R}^k}\partial_zf(Z(t;x),\theta)\rd\rho(\theta,t)\right\rangle\\
& \quad +2\left\langle \Delta_p(t;x),p^\top_{\rho^\dis}\left(\int_{\mathbb{R}^k}\partial_zf(Z(t;x),\theta)\rd\rho(\theta,t)-\int_{\mathbb{R}^k}\partial_zf(Z_{\dis}(t;x),\theta)\rd\rho^{\dis}(\theta,t))\right)\right\rangle\\
& \leq 2\left|\int_{\mathbb{R}^k}\partial_zf(Z(t;x),\theta)\rd\rho(\theta,t)\right||\Delta_p(t;x)|^2\\
& \quad +2|\Delta_p(t;x)||p_{\rho^\dis}|\left|\int_{\mathbb{R}^k}\partial_zf(Z(t;x),\theta)\rd\rho(\theta,t)-\int_{\mathbb{R}^k}\partial_zf(Z_{\dis}(t;x),\theta)\rd\rho^{\dis}(\theta,t))\right|\\
& \leq C(\mathcal{L}^{\sup})|\Delta_p(t;x)|^2\\
& \quad +2\left|\int_{\mathbb{R}^k}\partial_zf(Z(t;x),\theta)\rd\rho(\theta,t)-\int_{\mathbb{R}^k}\partial_zf(Z_{\dis}(t;x),\theta)\rd\rho^{\dis}(\theta,t)\right|^2\\
& \leq C(\mathcal{L}^{\sup})|\Delta_p(t;x)|^2\\
& \quad +6\left|\int_{\mathbb{R}^k}\partial_zf(Z(t;x),\theta)\rd\rho(\theta,t)-\int_{\mathbb{R}^k}\partial_zf(Z(t;x),\theta)\rd\widetilde{\rho}^{\dis}(\theta,t)\right|^2\\
& \quad +6\underbrace{\left|\int_{\mathbb{R}^k}\partial_zf(Z(t;x),\theta)\rd\widetilde{\rho}^{\dis}(\theta,t)-\int_{\mathbb{R}^k}\partial_zf(Z(t;x),\theta)\rd\rho^{\dis}(\theta,t)\right|^2}_{\textrm{(I)}}\\
& \quad +6\underbrace{\left|\int_{\mathbb{R}^k}\partial_zf(Z(t;x),\theta)\rd\rho^{\dis}(\theta,t)-\int_{\mathbb{R}^k}\partial_zf(Z_{\dis}(t;x),\theta)\rd\rho^{\dis}(\theta,t)\right|^2}_{\textrm{(II)}}\,,\\
\end{aligned}
\end{equation}
where we use \eqref{eqn:derivativebound} from Assumption~\ref{assum:f} along with \eqref{boundofxsolution} and \eqref{boundofprho} in the second inequality. We then bound the last two terms. 
% \sw{If you are going to leave I as it is, I don't think you should even label it.}~\zd{Sure.}
To bound $\textrm{(I)}$, we use \eqref{eqn:derivativebound} from Assumption~\ref{assum:f} along with \eqref{boundofxsolution}, and an analysis similar to \eqref{eqn:boundpartialzfexample} to obtain
% \begin{enumerate}[wide,   labelindent=0pt]
% \item[Estimate of $\textrm{(II)}$:] 
\begin{equation}\label{eqn:boundofIIDeltap}
\begin{aligned}
\textrm{(I)} & \leq \left(\frac{1}{M}\sum^M_{m=1}\left|\partial_zf(Z(t;x),\theta_m(t))-\partial_zf(Z(t;x),\widetilde{\theta}_m(t))\right|\right)^2\\
& \leq C(\mathcal{L}^{\sup})\left(\frac{1}{M}\sum^M_{m=1}\left(|\theta_m(t)|+|\widetilde{\theta}_m(t)|\right)|\theta_m(t)-\widetilde{\theta}_m(t)|\right)^2\\
& \leq C(\mathcal{L}^{\sup})\left(\frac{1}{M}\sum^M_{m=1}\left(|\theta_m(t)|+|\widetilde{\theta}_m(t)|\right)^2\right)\left(\frac{1}{M}\sum^M_{m=1}\left|\theta_m(t)-\widetilde{\theta}_m(t)\right|^2\right)\\
& \leq C(\mathcal{L}^{\sup})\left(\frac{1}{M}\sum^M_{m=1}\left|\theta_m(t)-\widetilde{\theta}_m(t)\right|^2\right)\,,
\end{aligned}
\end{equation}
where we use H\"older's inequality in the second inequality. For $\textrm{(III)}$, we use \eqref{eqn:derivativebound} from  Assumption~\ref{assum:f} along with \eqref{eqn:Lmean-field}, to obtain
% \item[Estimate of $\textrm{(III)}$:] Using Assumption \ref{assum:f} \eqref{eqn:derivativebound} and \eqref{eqn:Lmean-field}, we have
\begin{equation}\label{eqn:boundofIIIDeltap}
\begin{aligned}
\textrm{(II)} & \leq \left(\frac{1}{M}\sum^M_{m=1}\left|\partial_zf(Z(t;x),\theta_m(t))-\partial_zf(Z_{\dis}(t;x),\theta_m(t))\right|\right)^2\\
& \leq \left(\frac{C_1}{M}\sum^M_{m=1}\left|\theta_m(t)\right|^2|Z(t;x)-Z_{\dis}(t;x)|\right)^2 \\
& \leq C(\mathcal{L}^{\sup})|Z(t;x)-Z_{\dis}(t;x)|^2\,.
\end{aligned}
\end{equation}
By substituting \eqref{eqn:boundofIIDeltap} and 
\eqref{eqn:boundofIIIDeltap} into \eqref{eqn:boundofDeltap}, we obtain
\begin{equation}\label{Deltapiteration}
    \begin{aligned}
\frac{\rd|\Delta_p(t;x)|^2}{\rd t}&=C(\mathcal{L}^{\sup})|\Delta_p(t;x)|^2\\
& \quad +C(\mathcal{L}^{\sup})\left(\left(\frac{1}{M}\sum^M_{m=1}\left|\theta_m(t)-\widetilde{\theta}_m(t)\right|^2\right)+|Z(t;x)-Z_{\dis}(t;x)|^2\right)\\
& \quad +6\left|\int_{\mathbb{R}^k}\partial_zf(Z(t;x),\theta)\rd\rho(\theta,t)-\int_{\mathbb{R}^k}\partial_zf(Z(t;x),\theta)\rd\widetilde{\rho}^{\dis}(\theta,t)\right|^2\,.
\end{aligned}
\end{equation}
% \sw{Isn't there a factor 6 missing from the last term? I know it gets absorbed in the constant but still it should be consistent with the earlier expression.}~\zd{Yes, I missed it. Fixed.}
Considering the initial condition, we obtain in a similar fashion to \eqref{eqn:ODEforDEltaprhoinitial} that
\[
|\Delta_p(1;x)|\leq C(\mathcal{L}^{\sup})|Z(1;x)-Z_{\dis}(1;x)|\,,
\]
By substituting \eqref{eqn:stabilityofZmean-field} into \eqref{Deltapiteration} and using Gr\"onwall's inequality, we arrive at \eqref{eqn:stabilityofpmean-field}.
% \end{enumerate}
\end{proof}

\subsection{Proof of Theorem~\ref{thm:mean-field}}

According to the statement of the theorem, $\rho(\theta,t,s)$ solves~\eqref{eqn:Wassgradientflows} with admissible initial condition $\rho_{\ini}(\theta,t)$, and $\theta_m(s;t)$ solves \eqref{eqn:Wassgradientflowsdis} with initial conditions $\{\theta_m(t,0)\}^M_{m=1}$ that are $i.i.d.$ drawn from $\rho_{\ini}(\theta,t)$. We now define
\[ 
\rho^{\dis}_{\Theta_s}(\theta,t)=\frac{1}{M}\sum^M_{m=1}\delta_{\theta_m(s;t)}(\theta)\,,
\]
with
\begin{equation}\label{eqn:Wassgradientflowsdisappendix}
\frac{\rd\theta_m(s;t)}{\rd s}=-\nabla_{\theta}\frac{\delta E_s(\rho^{\dis}_{\Theta_s})}{\delta \rho}(\theta_m(s;t))-2e^{-s}\theta_m(s;t)\,,\quad s>0,\quad t\in[0,1]\,,
\end{equation}
where $\frac{\delta E_s(\rho^{\dis}_{\Theta_s})}{\delta \rho}$ is defined in \eqref{eqn:Frechetderivative}.

Throughout this section, we denote
\[
\mathcal{L}^{\dis,\sup}_\ini=\sup_{t\in[0,1]}\frac{1}{M}\sum^M_{m=1}|\theta_m(0;t)|^2,\quad \mathcal{L}^{\sup}_{\ini}=\sup_{t\in[0,1]}\int_{\mathbb{R}^k}|\theta|^2\rd\rho_{\ini}(\theta,t)\,.
\]
We note that when $M$ is large, $\mathcal{L}^{\dis,\sup}_\ini$ is close to $\mathcal{L}^{\sup}_{\ini}$ (which has no randomness) with high probability. We have the following lemma.
\begin{lemma}\label{lem:boundofdiscrete} For fixed $S>0$, any $s \in [0,S]$, $|x|<R$, and $t\in[0,1]$, there exists a constant $C\left(S,\mathcal{L}^{\dis,\sup}_\ini\right)$ such that
\begin{equation}\label{eqn:boundofLSdis}
\int^1_0\int_{\mathbb{R}^k}|\theta|^2 \rd\rho^{\dis}_{\Theta_s}(\theta,t)\rd t<C\left(S,\mathcal{L}^{\dis,\sup}_\ini\right)\,.
\end{equation}
Further, for any $x$ with $|x|<R$, the ODE solution is bounded as follows:
\begin{equation}\label{boundofxsolutiondis}
\left|Z_{\rho^{\dis}_{\Theta_s}}(t;x)\right|\leq C\left(S,\mathcal{L}^{\dis,\sup}_\ini\right)\,,
\end{equation}
while the following bound holds on $p_{\rho^{\dis}_{\Theta_s}}$:
\begin{equation}\label{boundofprhosolutiondis}
\left|p_{\rho^{\dis}_{\Theta_s}}(t;x)\right|\leq C\left(S,\mathcal{L}^{\dis,\sup}_\ini\right)\,.
\end{equation}
\end{lemma}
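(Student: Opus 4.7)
The plan is to reduce everything to~\eqref{eqn:boundofLSdis}, then deduce~\eqref{boundofxsolutiondis} and~\eqref{boundofprhosolutiondis} by direct application of Theorem~\ref{thm:wmean-fieldlimit} and Lemma~\ref{lem:prhoprop1} with $\rho=\rho^{\dis}_{\Theta_s}$. The time-integrated second moment $\int_0^1\int|\theta|^2\rd\rho^{\dis}_{\Theta_s}(\theta,t)\rd t = \frac{1}{M}\sum_m\int_0^1|\theta_m(s;t)|^2\rd t$ is the unique quantity through which these two earlier results enter, so showing it stays bounded uniformly in $s\in[0,S]$ is the heart of the argument.

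For~\eqref{eqn:boundofLSdis} I would run an energy-dissipation argument on $E_s(\rho^{\dis}_{\Theta_s})$. Differentiating in $s$ and using Lemma~\ref{lem:equivalence} to write $M\frac{\delta E_s(\Theta_s)}{\delta\theta_m}=\nabla_\theta\frac{\delta E_s(\rho^{\dis}_{\Theta_s})}{\delta\rho}(\theta_m,t)$, the flow contributes
\[
-\frac{1}{M}\sum_{m=1}^M\int_0^1\left|\nabla_\theta\frac{\delta E_s(\rho^{\dis}_{\Theta_s})}{\delta\rho}(\theta_m(s;t),t)\right|^2\rd t\le 0,
\]
while the explicit $s$-derivative of the regularizer contributes $-e^{-s}\int_0^1\int|\theta|^2\rd\rho^{\dis}_{\Theta_s}\rd t\le 0$. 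Hence $E_s(\rho^{\dis}_{\Theta_s})\le E_0(\rho^{\dis}_{\Theta_0})$. I would then bound $E_0(\rho^{\dis}_{\Theta_0})$ by a constant $C(\mathcal{L}^{\dis,\sup}_\ini)$: the regularizer part is dominated by $\mathcal{L}^{\dis,\sup}_\ini$ directly, and the loss part is controlled via~\eqref{boundofxsolution} applied at $s=0$ (whose input second moment is again $\le\mathcal{L}^{\dis,\sup}_\ini$), combined with the Lipschitzness of $g$ and the local boundedness of $y$ on $\{|x|\le R\}$ from Assumption~\ref{assum:f}. Dropping the non-negative loss term in $E_s(\rho^{\dis}_{\Theta_s})$ and multiplying through by $e^s$ yields
\[
\int_0^1\int|\theta|^2\rd\rho^{\dis}_{\Theta_s}\rd t\le e^s\,E_0(\rho^{\dis}_{\Theta_0})\le e^S\,C(\mathcal{L}^{\dis,\sup}_\ini),
\]
which is~\eqref{eqn:boundofLSdis}. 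Then~\eqref{boundofxsolutiondis} and~\eqref{boundofprhosolutiondis} follow immediately from~\eqref{boundofxsolution} and~\eqref{boundofprho} applied to $\rho^{\dis}_{\Theta_s}$.

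The main technical obstacle is the rigorous chain-rule computation of $\frac{\rd}{\rd s}E_s(\rho^{\dis}_{\Theta_s})$: one has to interchange differentiation in $s$ with both the $x$-expectation inside the loss and the integration against an empirical (singular) measure, and to verify the equivalence between the particle-wise gradient flow and the PDE flow. This is precisely the content of Lemma~\ref{lem:decayofcost} together with Remark~\ref{re:G.1}, which already establish that $\rho^{\dis}_{\Theta_s}$ is a weak solution of~\eqref{eqn:Wassgradientflows} and that the Fr\'echet-derivative identities apply in this distributional setting; the same justification therefore transfers to the particle regime with no new analytic input required.
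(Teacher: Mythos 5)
Your proposal is correct and mirrors the paper's own proof: both reduce to bounding $\int_0^1\int|\theta|^2\rd\rho^{\dis}_{\Theta_s}\rd t$ via monotonicity of $E_s$ along the flow (justified by Lemma~\ref{lem:decayofcost} and Remark~\ref{re:G.1}), bound $E_0(\rho^{\dis}_{\Theta_0})$ by $C(\mathcal{L}^{\dis,\sup}_\ini)$ using \eqref{eqn:cost_M_s} and \eqref{boundofxsolution}, and then multiply by $e^s\le e^S$ before invoking Theorem~\ref{thm:wmean-fieldlimit} and Lemma~\ref{lem:prhoprop1} with the $\mathcal{L}_1$ argument replaced by the bound from \eqref{eqn:boundofLSdis}. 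You spell out the energy-dissipation computation in more detail than the paper's terse citation of \eqref{eqn:decayEsmean-field}, but the mechanism and all the intermediate estimates are the same.
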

\begin{proof} First, using~\eqref{eqn:cost_M_s} and \eqref{boundofxsolution}, we have that
\[
E_0\left(\rho^{\dis}_{\Theta_0}\right)<C\left(\mathcal{L}^{\dis,\sup}_\ini\right)\,.
\]
According to the definition of $\rho^\dis_{\Theta_s}$ and $E_s(\rho)$ in~\eqref{eqn:dis_rho} and~\eqref{eqn:cost_s}, and recalling decay of energy, we have as shown in~\eqref{eqn:decayEsmean-field} that
\begin{align*}
\int^1_0\int_{\mathbb{R}^k}|
\theta|^2 \rd\rho^{\dis}_{\Theta_s}(\theta,t)\rd t&=\int^1_0\frac{1}{M}\sum^M_{m=1}|\theta_m(s;t)|^2\rd t\leq \exp(s)E_s\left(\rho^{\dis}_{\Theta_s}\right)\\
&\leq \exp(S)E_0\left(\rho^{\dis}_{\Theta_0}\right)<C\left(S,\mathcal{L}^{\dis,\sup}_\ini\right)\,,
\end{align*}
for $s \in [0,S]$, proving~\eqref{eqn:boundofLSdis}.
% \sw{Closed interval here, but it was an open interval in the statement of the theorem.}~\zd{I fixed the statement.} 

% Multiply~\eqref{eqn:Wassgradientflowsdisappendix} with $\theta_m$ on both side and utilize the boundedness \eqref{bound} in Lemma~\ref{lemmadeltaEs}, we can run the Gr\"onwall's inequality for the control of $\theta_m(s;t)$ in $s$. Noticing that $C(\mathcal{L})$ in~\eqref{bound} is replaced by $\int^1_0\int_{\mathbb{R}^k}|
% \theta|^2 \rd\rho^{\dis}_{\Theta_s}(\theta,t)\rd t$ here that is shown to be bounded in~\eqref{eqn:boundofLSdis}.

% % Plugging \eqref{eqn:boundofLSdis} in Lemma~\ref{lemmadeltaEs} \eqref{bound} and using \eqref{eqn:Wassgradientflowsdisappendix} with Gr\"onwall's inequality, we obtain that
% \[
% \left|\theta_m(s;t)\right|^2\leq C\left(S,\mathcal{L}^{\dis,\sup}_\ini\right)\left(\left|\theta_m(0;t)\right|^2+1\right)\,,
% \]
% which proves \eqref{boundofthetasolutiondis}.

We obtain \eqref{boundofxsolutiondis} and \eqref{boundofprhosolutiondis} from the same argument that shows \eqref{boundofxsolution} and \eqref{boundofprho}, with the $\mathcal{L}_1$ term replaced by the bound~\eqref{eqn:boundofLSdis}.
\end{proof}

We are now ready to prove Theorem~\ref{thm:mean-field}.

\begin{proof}[Proof of Theorem~\ref{thm:mean-field}]
To start, we define a new system $\widetilde{\Theta}_s=\{\widetilde{\theta}_m(s;t)\}_{m=1}^M$ where each $\widetilde{\theta}_m$ solves
\begin{equation}\label{eqn:Wassgradientflowsdiscoupling}
\frac{\partial \widetilde{\theta}_m(s;t)}{\partial s}=-\nabla_{\theta}\frac{\delta E_s(\rho(s))}{\delta \rho}\left(\widetilde{\theta}_m(s;t)\right)\,,\quad\forall (t,s)\in[0,1]\times[0,\infty)\,,
\end{equation}
with initial conditions $\widetilde{\theta}_m(t,0) = \theta_m(t,0)$. As a consequence, we have
\[
\widetilde{\theta}_m(s;t)\sim \rho(\theta,t,s)\,,\quad\forall (t,s)\in[0,1]\times[0,\infty)\,.
\]
We further denote
\begin{equation}\label{def:LSmean-field}
\widetilde{\rho}^{\dis}(\theta,t,s)=\frac{1}{M}\sum^M_{m=1}\delta_{\widetilde{\theta}_m(s;t)}(\theta)\,,\quad\mathcal{L}^{\sup}_{0}=\max\left\{\mathcal{L}^{\sup}_{\ini},\mathcal{L}^{\dis,\sup}_\ini\right\}<\infty\,,
\end{equation}
We first bound the second moment of $\widetilde{\rho}^{\dis}(\theta,t,s)$ for all $t\in[0,1]$ and $s\in[0,S]$ with initial condition $\widetilde{\rho}^{\dis}(\theta,t,0)=\frac{1}{M}\sum^M_{m=1}\delta_{\widetilde{\theta}_m(0;t)}(\theta)$. %\sw{Which initial condition?}~\zd{Fixed.}
Similar to the proof of Proposition~\ref{proposition:deltaEODE}, we multiply~\eqref{eqn:Wassgradientflowsdiscoupling} by $\widetilde{\theta}_m(s;t)$ on both sides, and utilize the bound \eqref{bound} from Lemma~\ref{lemmadeltaEs}, where $\mathcal{L}$ in~\eqref{bound} is replaced by $C(\mathcal{L}^{\sup}_{\ini},S)$ according to \eqref{eqn:secondmomentbound2} in Corollary~\ref{cor:C.3}. 
We thus obtain
\[
|\widetilde{\theta}_m(s;t)|\leq C(\mathcal{L}^{\sup}_{\ini},S)\left(|\widetilde{\theta}_m(0;t)|+1\right)\,,
\]
which implies that
\[
\sup_{t\in[0,1],0\leq s\leq S}\frac{1}{M}\sum^M_{m=1}\left|\widetilde{\theta}_m(s;t)\right|^2\leq C(\mathcal{L}^{\sup}_{0},S)\,.
\]
By combining this bound with \eqref{eqn:secondmomentbound2} and \eqref{eqn:boundofLSdis}, we obtain that there is a constant $C(\mathcal{L}^{\sup}_{0},S)$ depending only on $\mathcal{L}^{\sup}_{0}$ and $S$ such that
\[
\sup_{t\in[0,1],0\leq s\leq S}\left\{\int_{\mathbb{R}^k}|\theta|^2d\rho(\theta,t,s),\frac{1}{M}\sum^M_{m=1}|\theta_m(s;t)|^2,\frac{1}{M}\sum^M_{m=1}\left|\widetilde{\theta}_m(s;t)\right|^2\right\}\leq C(\mathcal{L}^{\sup}_{0},S)\,.
\]

To prove the theorem, we have from the definition \eqref{eqn:costfunctioncont2} of $E$ that
\begin{equation}\label{eqn:differenceofE}
\begin{aligned}
&\left|E(\rho(s))-E(\rho^{\dis}_{\Theta_s})\right|\\
&=\mathbb{E}_{x\sim\mu}\left[\frac{1}{2}\left(g(Z_{\rho(s)}(1;x))-y(x)\right)^2-\frac{1}{2}\left(g(Z_{\rho^{\dis}_{\Theta_s}}(1;x))-y(x)\right)^2\right]\\
&\leq \mathbb{E}_{x\sim\mu}\left[\left|g(Z_{\rho(s)}(1;x))-g(Z_{\rho^{\dis}_{\Theta_s}}(1;x))\right|\left(\left|g(Z_{\rho(s)}(1;x))+g(Z_{\rho^{\dis}_{\Theta_s}}(1;x))\right|+|y(x)|\right)\right]\\
& \leq C(\mathcal{L}^{\sup}_{0},S)\EE_{x\sim\mu}\left(\left|g(Z_{\rho(s)}(1;x))-g(Z_{\rho^{\dis}_{\Theta_s}}(1;x))\right|\right)\\
& \leq C(\mathcal{L}^{\sup}_{0},S)\EE_{x\sim\mu}\left(\left|Z_{\rho(s)}(1;x)-Z_{\rho^{\dis}_{\Theta_s}}(1;x)\right|\right)\,,
\end{aligned}
\end{equation}
where the second inequality arises from boundedness of  $Z$ in \eqref{boundofxsolution} and local boundedness of $g$ and $y$, by Assumption~\ref{assum:f}.

To estimate $\left|Z_{\rho(s)}(t;x)-Z_{\rho^{\dis}_{\Theta_s}}(t;x)\right|$, we recall~\eqref{eqn:stabilityofZmean-field} to obtain
\begin{equation}\label{eqn:stabilityofZmean-fielduseit}
\begin{aligned}
& \left|Z_{\rho(s)}(t;x)-Z_{\rho^{\dis}_{\Theta_s}}(t;x)\right| \\
& \leq C(\mathcal{L}^{\sup}_{0},S)\left(\frac{1}{M}\sum^M_{m=1}\int^1_0\left|\theta_m(s;\tau)-\widetilde{\theta}_m(s;\tau)\right|^2\rd \tau\right)^{1/2}\\
& \quad +C(\mathcal{L}^{\sup}_{0},S)\left(\int^1_0\left|\int_{\mathbb{R}^k} f\left(Z_{\rho(s)}(\tau;x),\theta\right)\rd(\rho(\theta,\tau,s)-\widetilde{\rho}^{\dis}(\theta,\tau,s))\right|^2\rd \tau\right)^{1/2}\,.
\end{aligned}
\end{equation}
Since $\widetilde{\theta}_m(s;t)\sim \rho(\theta,t,s)$, the second term of \eqref{eqn:stabilityofZmean-fielduseit} can be bounded by law of large numbers, with high probability. 
Thus we need only to control the first term. 
In the analysis below we control this term in Step 1 and utilize the law of large numbers in Step 2.

\begin{enumerate}[wide,   labelindent=0pt]
\item[Step 1: Estimating $\frac{1}{M}\sum^M_{m=1}\int^1_0\left|\theta_m(s;t)-\widetilde{\theta}_m(s;t)\right|^2\rd t$.] Defining
\[
\Delta_{t,m}(s)=\theta_m(s;t)-\widetilde{\theta}_m(s;t)\,,
\]
we note that $|\Delta_{t,m}(0)|=0$. 
By taking the difference of \eqref{eqn:Wassgradientflowsdisappendix} and~\eqref{eqn:Wassgradientflowsdiscoupling}, we obtain
\begin{equation}\label{iterationmean-field}
\begin{aligned}
&\frac{\rd|\Delta_{t,m}(s)|^2}{\rd s}\\
& =-2\exp(-s)|\Delta_{t,m}(s)|^2\\
& \quad -2\left\langle\Delta_{t,m}(s),\EE_{x\sim\mu}\left(\partial_\theta f(Z_{\rho(s)}(t;x),\widetilde{\theta}_m)p_{\rho(s)}(t;x)-\partial_\theta f(Z_{\rho^{\dis}_{\Theta_s}}(t;x),\theta_m)p_{\rho^{\dis}_{\Theta_s}}(t;x)\right)\right\rangle\\
& =-2\exp(-s)|\Delta_{t,m}(s)|^2\\
& \quad -2\left\langle\Delta_{t,m}(s),\underbrace{\EE_{x\sim\mu}\left(\partial_\theta f(Z_{\rho(s)}(t;x),\widetilde{\theta}_m)p_{\rho(s)}(t;x)-\partial_\theta f(Z_{\rho(s)}(t;x),\widetilde{\theta}_m)p_{\rho^{\dis}_{\Theta_s}}(t;x)\right)}_{\textrm{(I)}}\right\rangle\\
& \quad -2\left\langle\Delta_{t,m}(s),\underbrace{\EE_{x\sim\mu}\left(\partial_\theta f(Z_{\rho(s)}(t;x),\widetilde{\theta}_m)p_{\rho^{\dis}_{\Theta_s}}(t;x)-\partial_\theta f(Z_{\rho^{\dis}_{\Theta_s}}(t;x),\theta_m)p_{\rho^{\dis}_{\Theta_s}}(t;x)\right)}_{\textrm{(II)}}\right\rangle\,.
\end{aligned}
\end{equation}
For term $\textrm{(I)}$, we have from the bound on $Z_{\rho(s)}$ in~\eqref{boundofxsolution} that
\begin{equation*}
\left|\textrm{(I)}\right|\leq C(\mathcal{L}^{\sup}_{0},S)\EE_{x\sim\mu}\left(\left|p_{\rho(s)}(t;x)-p_{\rho^{\dis}_{\Theta_s}}(t;x)\right|\right)\,.
\end{equation*}
For $\textrm{(II)}$, we have
\begin{equation*}
\begin{aligned}
\left|\textrm{(II)}\right| &\leq C(\mathcal{L}^{\sup}_{0},S)\EE_{x\sim\mu}\left(\left|\partial_\theta f(Z_{\rho(s)}(t;x),\widetilde{\theta}_m)-\partial_\theta f(Z_{\rho^{\dis}_{\Theta_s}}(t;x),\theta_m)\right|\right)\\
& \leq C(\mathcal{L}^{\sup}_{0},S)\left[\left(\left|\widetilde{\theta}_m\right|+\left|\theta_m\right|\right)\EE_{x\sim\mu}\left(\left|Z_{\rho(s)}(t;x)-Z_{\rho^{\dis}_{\Theta_s}}(t;x)\right|\right)+|\widetilde{\theta}_m-\theta_m|\right]\,.
\end{aligned}
\end{equation*}
In both estimates we used the property of $f$ in~\eqref{eqn:derivativebound}, and the bound on $Z_{\rho,\rho^\dis_{\Theta}}$
% \begin{enumerate}[wide,   labelindent=0pt]
% \item[Bound of (I):] Using \eqref{eqn:derivativebound}, \eqref{boundofxsolution}, we have 
% \begin{equation}\label{boundofImean-field}
% \begin{aligned}
% |(I)|\leq &C(\mathcal{L}^{\sup}_{0},S)\left|p_{\rho(s)}(t;x)-p_{\rho^{\dis}_{\Theta_s}}(t;x)\right|\,.
% \end{aligned}
% \end{equation}
% \item[Bound of (II):] Using \eqref{boundofprhosolutiondis}, we have
% \begin{equation}\label{boundofIImean-field}
% \begin{aligned}
% |(II)|\leq &C(\mathcal{L}^{\sup}_{0},S)\EE_{x\sim\mu}\left(\left|\partial_\theta f(Z_{\rho(s)}(t;x),\widetilde{\theta}_m)-\partial_\theta f(Z_{\rho^{\dis}_{\Theta_s}}(t;x),\theta_m)\right|\right)\\
% \leq &C(\mathcal{L}^{\sup}_{0},S)\left[\left(\left|\widetilde{\theta}_m\right|+\left|\theta_m\right|\right)\left|Z_{\rho(s)}(t;x)-Z_{\rho^{\dis}_{\Theta_s}}(t;x)\right|+|\widetilde{\theta}_m-\theta_m|\right]\,,
% \end{aligned}
% \end{equation}
% where we use \eqref{eqn:derivativebound}, \eqref{boundofxsolution}, and \eqref{boundofxsolutiondis} in the second inequality.
% \end{enumerate}
By substituting these estimates into \eqref{iterationmean-field}, we obtain
\[
\begin{aligned}
&\frac{\rd|\Delta_{t,m}(s)|^2}{\rd s}\\
& \leq C(\mathcal{L}^{\sup}_{0},S)|\Delta_{t,m}(s)|^2\\
& \quad +C(\mathcal{L}^{\sup}_{0},S)|\Delta_{t,m}(s)|\EE_{x\sim\mu}\left(\left(\left|\widetilde{\theta}_m\right|+\left|\theta_m\right|\right)\left|Z_{\rho(s)}(t;x)-Z_{\rho^{\dis}_{\Theta_s}}(t;x)\right|+\left|p_{\rho(s)}(t;x)-p_{\rho^{\dis}_{\Theta_s}}(t;x)\right|\right)\,,
\end{aligned}
\]
which implies
\begin{align*}
&\frac{1}{M}\sum^M_{m=1}\frac{\rd|\Delta_{t,m}(s)|^2}{\rd s}\\
& \leq C(\mathcal{L}^{\sup}_{0},S)\left(\frac{1}{M}\sum^M_{m=1}|\Delta_{t,m}(s)|^2\right)\\
& \quad +C(\mathcal{L}^{\sup}_{0},S)\left(\frac{1}{M}\sum^M_{m=1}|\Delta_{t,m}(s)|\left(\left|\widetilde{\theta}_m\right|+\left|\theta_m\right|\right)\right)\EE_{x\sim\mu}\left(\left|Z_{\rho(s)}(t;x)-Z_{\rho^{\dis}_{\Theta_s}}(t;x)\right|\right)\\
& \quad +C(\mathcal{L}^{\sup}_{0},S)\left(\frac{1}{M}\sum^M_{m=1}|\Delta_{t,m}(s)|\right)\EE_{x\sim\mu}\left(\left|p_{\rho(s)}(t;x)-p_{\rho^{\dis}_{\Theta_s}}(t;x)\right|\right)\\
& \leq C(\mathcal{L}^{\sup}_{0},S)\left(\frac{1}{M}\sum^M_{m=1}|\Delta_{t,m}(s)|^2\right)+C(\mathcal{L}^{\sup}_{0},S)\EE_{x\sim\mu}\left(\left|p_{\rho(s)}(t;x)-p_{\rho^{\dis}_{\Theta_s}}(t;x)\right|^2\right)\\
& \quad +C(\mathcal{L}^{\sup}_{0},S)\EE_{x\sim\mu}\left(\left|Z_{\rho(s)}(t;x)-Z_{\rho^{\dis}_{\Theta_s}}(t;x)\right|^2\right)\,,
\end{align*}
where we used the H\"older's inequality and
\begin{align*}
\left(\frac{1}{M}\sum^M_{m=1}|\Delta_{t,m}(s)|\left(\left|\widetilde{\theta}_m\right|+\left|\theta_m\right|\right)\right)^2&\leq  \left(\frac{1}{M}\sum^M_{m=1}|\Delta_{t,m}(s)|^2\right)\left(\frac{1}{M}\sum^M_{m=1}\left(\left|\widetilde{\theta}_m\right|+\left|\theta_m\right|\right)^2\right)\\
&\leq C(\mathcal{L}^{\sup}_{0},S)\left(\frac{1}{M}\sum^M_{m=1}|\Delta_{t,m}(s)|^2\right)\,.
\end{align*}

Noting the estimate in Lemma~\ref{leamm:meanfiledstability}, we obtain that
\begin{align*}
&\frac{\rd \left(\frac{1}{M}\sum^M_{m=1}|\Delta_{t,m}(s)|^2\right)}{\rd s}\\
& \leq C(\mathcal{L}^{\sup}_{0},S)\left(\frac{1}{M}\sum^M_{m=1}|\Delta_{t,m}(s)|^2\right)\\
& \quad +C(\mathcal{L}^{\sup}_{0},S)\EE_{x\sim\mu}\left(\int^1_0\left|\int_{\mathbb{R}^k} f\left(Z_{\rho(s)}(\tau;x),\theta\right)\rd(\rho(\theta,\tau,s)-\widetilde{\rho}^{\dis}(\theta,\tau,s))\right|^2\rd \tau\right)\\
& \quad +C(\mathcal{L}^{\sup}_{0},S)\EE_{x\sim\mu}\left(\int^1_0\left|\int_{\mathbb{R}^k} \partial_z f\left(Z_{\rho(s)}(\tau;x),\theta\right)\rd(\rho(\theta,\tau,s)-\widetilde{\rho}^{\dis}(\theta,\tau,s))\right|^2\rd \tau\right)\,,
\end{align*}
which implies, using Gr\"onwall's inequality, that
\begin{equation}\label{boundofcoupling}
\begin{aligned}
&\frac{1}{M}\sum^M_{m=1}\int^1_0|\Delta_{t,m}(s)|^2\rd t\\
& \leq C(\mathcal{L}^{\sup}_{0},S)\EE_{x\sim\mu}\left(\int^S_0\int^1_0\left|\int_{\mathbb{R}^k} f\left(Z_{\rho(s)}(\tau;x),\theta\right)\rd(\rho(\theta,\tau,s)-\widetilde{\rho}^{\dis}(\theta,\tau,s))\right|^2\rd \tau\rd s\right)\\
& \quad +C(\mathcal{L}^{\sup}_{0},S)\EE_{x\sim\mu}\left(\int^S_0\int^1_0\left|\int_{\mathbb{R}^k} \partial_z f\left(Z_{\rho(s)}(\tau;x),\theta\right)\rd(\rho(\theta,\tau,s)-\widetilde{\rho}^{\dis}(\theta,\tau,s))\right|^2 \rd \tau\rd s\right)\,.
\end{aligned}
\end{equation}

\item[Step 2: Complete the proof.]
Since $\rho_\ini$ satisfies the limit-admissible condition, according to~\eqref{boundLdis0}, there is a constant $C_4\geq \mathcal{L}^{\sup}_\ini$ such that $\mathbb{P}\left(\mathcal{L}^{\dis,\sup}_{\ini}\leq C_4\right)\geq 1-\eta/2$ for  $M$ sufficiently large. 
Using the definition of $\mathcal{L}^{\sup}_{0}$ in~\eqref{def:LSmean-field}, and for some constant $C_3\geq \mathcal{L}^{\sup}_\ini$ , this translates to:
\begin{equation}\label{boundLdis01}
\mathbb{P}\left(\mathcal{L}^{\sup}_{0}\leq C_4\right)\geq 1-\eta/2\,, \quad \mbox{for} \;\; M>\frac{2C_3}{\eta},
\end{equation}
so that $\mathcal{L}^{\sup}_{0}$ is bounded with high probability.
% \sw{please check my edits here}~\zd{Checked. $C_3,C_4$ should be greater than $\mathcal{L}^{\sup}_\ini$. Other statements are right.}

By substituting \eqref{boundofcoupling} into \eqref{eqn:stabilityofZmean-field}, we obtain
\begin{equation}\label{boundofcoupling2}
\begin{aligned}
&\left|Z_{\rho(s)}(t;x)-Z_{\rho^{\dis}_{\Theta_s}}(t;x)\right|\\
& \leq C(\mathcal{L}^{\sup}_{0},S)\left(\underbrace{\EE_{x\sim\mu}\left(\int^S_0\int^1_0\left|\int_{\mathbb{R}^k} f\left(Z_{\rho(s)}(\tau;x),\theta\right)\rd(\rho(\theta,\tau,s)-\widetilde{\rho}^{\dis}(\theta,\tau,s))\right|^2\rd \tau\rd s\right)}_{\textrm{(I)}}\right)^{1/2}\\
& \quad +C(\mathcal{L}^{\sup}_{0},S)\left(\underbrace{\EE_{x\sim\mu}\left(\int^S_0\int^1_0\left|\int_{\mathbb{R}^k} \partial_z f\left(Z_{\rho(s)}(\tau;x),\theta\right)\rd(\rho(\theta,\tau,s)-\widetilde{\rho}^{\dis}(\theta,\tau,s))\right|^2\rd \tau\rd s\right)}_{\textrm{(II)}}\right)^{1/2}\\
& \quad +C(\mathcal{L}^{\sup}_{0},S)\left(\underbrace{\int^1_0\left|\int_{\mathbb{R}^k} f\left(Z_{\rho(s)}(\tau;x),\theta\right)\rd(\rho(\theta,\tau,s)-\widetilde{\rho}^{\dis}(\theta,\tau,s))\right|^2\rd \tau}_{\textrm{(III)}}\right)^{1/2}\,.
\end{aligned}
\end{equation}
All terms $\textrm{(I)}$,  $\textrm{(II)}$, and $\textrm{(III)}$ can be controlled. We have
\begin{align*}
\mathbb{E}\textrm{(I)}& =\EE_{x\sim\mu}\left(\mathbb{E}\left(\int^S_0\int^1_0\left|\int_{\mathbb{R}^k} f\left(Z_{\rho(s)}(\tau;x),\theta\right)\rd(\rho(\theta,\tau,s)-\widetilde{\rho}^{\dis}(\theta,\tau,s))\right|^2\rd\tau\rd s\right)\right)\\
&=\EE_{x\sim\mu}\left(\int^S_0\int^1_0\mathbb{E}\left(\left|\int_{\mathbb{R}^k} f\left(Z_{\rho(s)}(\tau;x),\theta\right)\rd(\rho(\theta,\tau,s)-\widetilde{\rho}^{\dis}(\theta,\tau,s))\right|^2\right)\rd \tau\rd s\right)\\
& \leq \frac{C(\mathcal{L}^{\sup}_{\ini},S)}{M}\EE_{x\sim\mu}\left(\int^S_0\int^1_0\int_{\mathbb{R}^k}|f\left(Z_{\rho(s)}(\tau;x),\theta\right)|^2d\rho(\theta,\tau,s)\rd\tau\rd s\right)\\
& \leq \frac{C(\mathcal{L}^{\sup}_{\ini},S)}{M}\int^S_0\int^1_0\int_{\mathbb{R}^k}(|\theta|^2+1)d\rho(\theta,\tau,s)\rd\tau\rd s\leq \frac{C(C_4,S)}{M}\,,
\end{align*}
where we use $\widetilde{\theta}_m(s;t)\sim \rho(\theta,t,s)$ in the first inequality, \eqref{eqn:boundoff} with $|Z_{\rho(s)}|\leq C(\mathcal{L}^{\sup}_{\ini},S)$ in the second inequality, and $\mathcal{L}^{\sup}_{\ini}\leq C_4$ in the third inequality. 
By similar reasoning, we obtain
\begin{align*}
\mathbb{E}\textrm{(II)}\leq \frac{C(C_4,S)}{M},\quad \mathbb{E}\textrm{(III)}\leq \frac{C(C_4,S)}{M}\,.
\end{align*}
From Markov's inequality, these bounds imply that when $M>\frac{C(C_4,S)}{\epsilon^2\eta}$, we have
\begin{equation}\label{listofboundinprobability}
\mathbb{P}\left(\left\{\textrm{(I)}<\epsilon^2\right\}\cap\left\{\textrm{(II)}<\epsilon^2\right\}\cap\left\{\textrm{(III)}<\epsilon^2\right\} \right)>1-\eta/2\,.
\end{equation}

By substituting \eqref{boundLdis01} and \eqref{listofboundinprobability} into \eqref{boundofcoupling2}, we see that there exists a constant $C(C_4,S)$ such that for any $\epsilon,\eta>0$, when $M>\frac{C(C_4,S)}{\epsilon^2\eta}$ we obtain that
\[
\mathbb{P}\left(\left|Z_{\rho(s)}(1;x)-Z_{\rho^{\dis}_{\Theta_s}}(1;x)\right|<\epsilon\right)>1-\eta\,.
\]
The proof is completed by substituting this bound into \eqref{eqn:differenceofE}.
\end{enumerate}
\end{proof}

\section{Convergence to the continuous limit}\label{sec:proofofcontlimit}

This section is dedicated to the continuous limit and proof of Theorem~\ref{thm:contlimit}.

\subsection{Stability with discretization}
Before proving Theorem~\ref{thm:contlimit}, and similarly to Appendix \ref{sec:meanfieldstability}, we first consider the stability $Z$ and $p$ under discretization. 
Defining the path of parameters $\Theta(t)=\left\{\theta_m(t)\right\}^M_{m=1}$ and the set of parameters ${\Theta}_{L,M}=\left\{\theta_{l,m}\right\}^{L-1,M}_{l=0,m=1}$, we have the following lemma.
\begin{lemma}
Suppose that Assumption~\ref{assum:f} holds and that $x$ is in the support of $\mu$. 
Denoting 
\begin{equation}\label{eqn:Lmean-fieldfiniteL}
\mathcal{L}^{\sup}=\sup_{t\in[0,1],l}\left\{\frac{1}{M}\sum^M_{m=1}|\theta_m(t)|^2,\ \frac{1}{M}\sum^M_{m=1}|\theta_{l,m}|^2\right\}\,,
\end{equation}
there exists a constant $C(\mathcal{L}^{\sup})$  depending only on $\mathcal{L}^{\sup}$ such that for any $0\leq l\leq L-1$, we have
\begin{equation}\label{eqn:stabilityofZmean-fieldfiniteL}
\begin{aligned}
&\sup_{\frac{l}{L}\leq t\leq\frac{l+1}{L}}\left\{\left|Z_{\Theta}(t;x)-Z_{\Theta_{L,M}}(l;x)\right|,\left|Z_{\Theta}(t;x)-Z_{\Theta_{L,M}}(l+1;x)\right|\right\}\\
& \leq C(\mathcal{L}^{\sup})\left(\frac{1}{M}\sum^{L-1}_{l=0}\sum^M_{m=1}\int^{\frac{l+1}{L}}_{\frac{l}{L}}|\theta_{l,m}-\theta_m(\tau)|^2\rd\tau\right)^{1/2}+\frac{C(\mathcal{L}^{\sup})}{L}\,,
\end{aligned}
\end{equation}
and
\begin{equation}\label{eqn:stabilityofpmean-fieldfiniteL}
\begin{aligned}
& \sup_{\frac{l}{L}\leq t\leq\frac{l+1}{L}}\left|p_{\Theta}(t;x)-p_{\Theta_{L,M}}(l;x)\right| \\
& \leq C(\mathcal{L}^{\sup})\left(\frac{1}{M}\sum^{L-1}_{l=0}\sum^M_{m=1}\int^{\frac{l+1}{L}}_{\frac{l}{L}}|\theta_{l,m}-\theta_m(\tau)|^2\rd\tau\right)^{1/2}+\frac{C(\mathcal{L}^{\sup})}{L}\,.
\end{aligned}
\end{equation}
\end{lemma}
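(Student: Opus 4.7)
\bigskip

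\noindent\textbf{Proof proposal.}
The plan is to directly compare the discrete iteration~\eqref{eqn:disRes} for $Z_{\Theta_{L,M}}$ with the integrated form of the continuous OIE~\eqref{eqn:contRes} for $Z_{\Theta}$, layer-by-layer, and then run a discrete Gr\"onwall argument along the layer index $l=0,1,\dots,L-1$. Throughout, I will use the a-priori bounds on $|Z_\Theta|$ and $|Z_{\Theta_{L,M}}|$ that follow from Theorem~\ref{thm:wmean-fieldlimit} and Lemma~\ref{prop:wmean-fieldlimitfiniteL} under the assumption that $\mathcal{L}^{\sup}<\infty$, together with the regularity estimates on $f$ from Assumption~\ref{assum:f}, namely $|f(z,\theta)|\leq C_1(|\theta|+1)(|z|+1)$, $|\partial_z f|\leq C_1|\theta|$, and $|\partial_\theta f|\leq C_1(|z|+1)$.

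Define $\delta_l=Z_{\Theta_{L,M}}(l;x)-Z_{\Theta}(l/L;x)$ with $\delta_0=0$. Subtracting the two evolutions over a single step gives
\[
\delta_{l+1}=\delta_l+\frac{1}{M}\sum_{m=1}^M\int_{l/L}^{(l+1)/L}\bigl[f(Z_{\Theta_{L,M}}(l;x),\theta_{l,m})-f(Z_{\Theta}(\tau;x),\theta_m(\tau))\bigr]\rd\tau.
\]
The integrand I would split into three pieces by inserting $f(Z_{\Theta}(l/L;x),\theta_{l,m})$ and $f(Z_{\Theta}(l/L;x),\theta_m(\tau))$. Using the derivative bounds from Assumption~\ref{assum:f} together with the uniform $Z$-bounds, these three pieces are controlled by $C(\mathcal{L}^{\sup})|\theta_{l,m}|\,|\delta_l|$, by $C(\mathcal{L}^{\sup})|\theta_{l,m}-\theta_m(\tau)|$, and by $C(\mathcal{L}^{\sup})|\theta_m(\tau)|\,|Z_{\Theta}(l/L;x)-Z_{\Theta}(\tau;x)|$, respectively. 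The last factor is $O(1/L)$ since $|\partial_t Z_\Theta|$ is bounded by $C(\mathcal{L}^{\sup})$ via~\eqref{eqn:contRes} and~\eqref{eqn:boundoff}. Therefore, recalling $\frac{1}{M}\sum_m|\theta_{l,m}|\leq (\mathcal{L}^{\sup})^{1/2}$ and similarly for $\theta_m(\tau)$, I obtain the recursion
\[
|\delta_{l+1}|\leq\Bigl(1+\tfrac{C(\mathcal{L}^{\sup})}{L}\Bigr)|\delta_l|+\frac{C(\mathcal{L}^{\sup})}{M}\sum_{m=1}^M\int_{l/L}^{(l+1)/L}|\theta_{l,m}-\theta_m(\tau)|\rd\tau+\frac{C(\mathcal{L}^{\sup})}{L^2}.
\]
Applying discrete Gr\"onwall over $l=0,\dots,L-1$ yields a uniform $(1+C/L)^L\leq e^{C}$ amplification factor, so summing the source terms and applying Cauchy--Schwarz in the $m$-sum and the $\tau$-integral converts the $L^1$-type quantity into the $L^2$-type quantity on the right-hand side of~\eqref{eqn:stabilityofZmean-fieldfiniteL}, producing the desired bound at the grid points $t=l/L$. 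To upgrade to the supremum over $t\in[l/L,(l+1)/L]$, I would again use $|Z_\Theta(t;x)-Z_\Theta(l/L;x)|\leq C(\mathcal{L}^{\sup})/L$, which costs only an additional $C/L$ and may be absorbed into the second term of~\eqref{eqn:stabilityofZmean-fieldfiniteL}; the variant with $Z_{\Theta_{L,M}}(l+1;x)$ follows identically by triangle inequality together with $|Z_{\Theta_{L,M}}(l+1;x)-Z_{\Theta_{L,M}}(l;x)|\leq C(\mathcal{L}^{\sup})/L$ (from~\eqref{eqn:disRes} and~\eqref{eqn:boundoff}).

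The bound~\eqref{eqn:stabilityofpmean-fieldfiniteL} on $|p_\Theta-p_{\Theta_{L,M}}|$ follows the same template but runs \emph{backward} from $t=1$ using the adjoint equation~\eqref{eqn:prho} and its discrete counterpart~\eqref{eqn:prhofiniteL}. At the terminal time, the initial discrepancy $|p_\Theta(1;x)-p_{\Theta_{L,M}}(L-1;x)|$ is controlled by $|Z_\Theta(1;x)-Z_{\Theta_{L,M}}(L;x)|$ via Lipschitz continuity of $g$ and $\nabla g$, and hence by the right-hand side of~\eqref{eqn:stabilityofZmean-fieldfiniteL} just proved. For the propagation, the per-step analysis splits again into three terms analogous to those above, now also requiring the stability estimate~\eqref{eqn:stabilityofZmean-fieldfiniteL} to control the $Z$-mismatch inside $\partial_z f$. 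A backward discrete Gr\"onwall then closes the estimate.

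The main obstacle, in my view, is the bookkeeping needed to convert the $l$-by-$l$, $m$-by-$m$ accumulation of parameter mismatches $|\theta_{l,m}-\theta_m(\tau)|$ into the single aggregated $L^2$-quantity appearing on the right-hand side of~\eqref{eqn:stabilityofZmean-fieldfiniteL} with the correct $1/M$ prefactor; this requires a careful application of Cauchy--Schwarz at the right stage so that the discrete Gr\"onwall factor $e^{C}$ does not destroy the scaling. The rest of the argument is essentially a discrete-vs-continuous Euler comparison with Lipschitz right-hand side, which is standard once the uniform moment control $\mathcal{L}^{\sup}<\infty$ is in hand.
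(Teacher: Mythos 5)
Your proposal is correct and follows essentially the same route as the paper: a per-layer Euler comparison producing the recursion $|\delta_{l+1}|\le(1+C/L)|\delta_l|+\tfrac{C}{M}\sum_m\int_{l/L}^{(l+1)/L}|\theta_{l,m}-\theta_m(\tau)|\,\rd\tau+C/L^2$, closed by discrete Gr\"onwall and H\"older/Cauchy--Schwarz, followed by a backward analogue for $p$ seeded by the Lipschitz control of the terminal data. The only cosmetic difference is that you split the integrand into three explicit pieces (grid-point $Z$-mismatch, $\theta$-mismatch, and time-discretization of $Z_\Theta$), whereas the paper uses a piecewise-constant extension $\widetilde Z$ and a two-way split with the intra-interval time error absorbed into $|\Delta_\xi|\le|\Delta_{l/L}|+C/L$; the resulting estimates are identical.
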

\begin{proof} 
Define
\[
Z(t;x)=Z_{\Theta}(t;x),\quad p(t;x)=p_{\Theta}(t;x)\,
\]
and
\[
\widetilde{Z}(t;x)=\sum^{L-1}_{l=0}Z_{{\Theta}_{L,M}}(l;x)\textbf{1}_{\frac{l}{L}\leq t<\frac{l+1}{L}},\quad \widetilde{p}(t;x)=\sum^{L-1}_{l=0}p_{{\Theta}_{L,M}}(l;x)\textbf{1}_{\frac{l}{L}<t\leq \frac{l+1}{L}},
\]
with
\[
\widetilde{Z}(1;x)=Z_{{\Theta}_{L,M}}(L;x),\quad \widetilde{p}(0;x)=p_{{\Theta}_{L,M}}(0;x)\,.
\]
Using \eqref{eqn:disRes}, \eqref{eqn:prhofiniteL}, and Lemma~\ref{prop:wmean-fieldlimitfiniteL}, we obtain 
\begin{equation}\label{intervalbound2}
\begin{aligned}
\left|Z_{\Theta_{L,M}}(l+1;x)-Z_{\Theta_{L,M}}(l;x)\right| &<\frac{C(\mathcal{L}^{\sup})}{L},\\ \left|p_{\Theta_{L,M}}(l+1;x)-p_{\Theta_{L,M}}(l;x)\right| &<\frac{C(\mathcal{L}^{\sup})}{L},
\end{aligned}
\end{equation}
for $0\leq l\leq L-1$.  Now define $\Delta_t$ by
\[
\Delta_t=Z(t;x)-\widetilde{Z}(t;x)\,.
\]
For $t \in \left[ \frac{l}{L}, \frac{l+1}{L}\right]$, we have from \eqref{eqn:contRes} that
\begin{equation}\label{intervalbound}
\begin{aligned}
|\Delta_t|&\leq \left|\Delta_{\frac{l}{L}}\right|+\frac{1}{M}\sum^M_{m=1}\int^{t}_{\frac{l}{L}}|f(Z(\tau;x),\theta_m(\tau))|\rd\tau\\
&\leq \left|\Delta_{\frac{l}{L}}\right|+\frac{C(\mathcal{L}^{\sup})}{M}\sum^M_{m=1}\int^{\frac{l+1}{L}}_{\frac{l}{L}}(|\theta_m(\tau)|+1)\rd\tau\\
&\leq \left|\Delta_{\frac{l}{L}}\right|+\frac{C(\mathcal{L}^{\sup})}{L}\,,
\end{aligned}
\end{equation}
where we use \eqref{eqn:boundoff} and \eqref{boundofxsolutiondis} in the second inequality and \eqref{eqn:Lmean-fieldfiniteL} in the third inequality. 
From \eqref{eqn:disRes} and \eqref{eqn:contRes}, we obtain further that
\[
\begin{aligned}
\left|\Delta_{\frac{l+1}{L}}\right|& =\left|\Delta_{\frac{l}{L}}\right|+\left|\frac{1}{M}\sum^M_{m=1}\int^{\frac{l+1}{L}}_{\frac{l}{L}} f(Z(\tau;x),\theta_m(\tau))-f\left(\widetilde{Z}(\tau;x),\theta_{l,m}\right)\rd\tau\right|\\
& \leq \left|\Delta_{\frac{l}{L}}\right|+\left|\frac{1}{M}\sum^M_{m=1}\int^{\frac{l+1}{L}}_{\frac{l}{L}} f(Z(\tau;x),\theta_m(\tau))-f\left(Z(\tau;x),\theta_{l,m}\right)\rd\tau\right|\\
& \quad +\left|\frac{1}{M}\sum^M_{m=1}\int^{\frac{l+1}{L}}_{\frac{l}{L}} f(Z(\tau;x),\theta_{l,m})-f\left(\widetilde{Z}(\tau;x),\theta_{l,m}\right)\rd\tau\right|\\
& \stackrel{{\textrm{(I)}}}{\leq} \left|\Delta_{\frac{l}{L}}\right|+C(\mathcal{L}^{\sup})\left(\frac{1}{M}\sum^M_{m=1}\int^{\frac{l+1}{L}}_{\frac{l}{L}}|\theta_m(\tau)-\theta_{l,m}|\rd\tau\right)+C_1|\Delta_\xi|\left(\frac{1}{ML}\sum^M_{m=1}|\theta_{l,m}|\right)\\
& \stackrel{{\textrm{(II)}}}{\leq} \left(1+\frac{C(\mathcal{L}^{\sup})}{L}\right)\left|\Delta_{\frac{l}{L}}\right|+\frac{C(\mathcal{L}^{\sup})}{M}\sum^M_{m=1}\int^{\frac{l+1}{L}}_{\frac{l}{L}}|\theta_m(\tau)-\theta_{l,m}|\rd\tau+\frac{C(\mathcal{L}^{\sup})}{L^2}
\end{aligned}
\]
where $\xi\in[\frac{l}{L},\frac{l+1}{L}]$, and using \eqref{eqn:derivativebound},  \eqref{boundofxsolutiondis} in (I), and \eqref{eqn:Lmean-fieldfiniteL} and \eqref{intervalbound} in (II). 
By applying this bound iteratively, we obtain
\[
\left|\Delta_{\frac{l}{L}}\right|\leq C(\mathcal{L}^{\sup})\left|\Delta_{0}\right|+\frac{C(\mathcal{L}^{\sup})}{M}\sum^{l-1}_{j=0}\sum^M_{m=1}\int^{\frac{j+1}{L}}_{\frac{j}{L}}|\theta_{l,m}-\theta_m(\tau)|\rd\tau+\frac{C(\mathcal{L}^{\sup})}{L}\,,
\]
where $|\Delta_{0}|=0$. Combining this with \eqref{intervalbound} and using H\"older's inequality, we obtain that
% \begin{equation}\label{eqn:stabilityofZmean-fieldfiniteLfinitet}
\[
\left|\Delta_t\right|\leq C(\mathcal{L}^{\sup})\left(\frac{1}{M}\sum^{L-1}_{l=0}\sum^M_{m=1}\int^{\frac{l+1}{L}}_{\frac{l}{L}}|\theta_{l,m}-\theta_m(\tau)|^2\rd\tau\right)^{1/2}+\frac{C(\mathcal{L}^{\sup})}{L}\,.
\]
% \end{equation}
By combining 
% \eqref{eqn:stabilityofZmean-fieldfiniteLfinitet} 
this bound with \eqref{intervalbound2},  we prove \eqref{eqn:stabilityofZmean-fieldfiniteL}.

To prove \eqref{eqn:stabilityofpmean-fieldfiniteL}, we define
\[
\Delta_p(t;x)=p(t;x)-\widetilde{p}(t;x)\,.
\]
Similarly to \eqref{eqn:ODEforDEltaprhoinitial}, we obtain 
\begin{equation}\label{eqn:ODEforDEltaprhoinitial2}
\begin{aligned}
|\Delta_p(1;x)| & \leq C(\mathcal{L}^{\sup})\left|\widetilde{Z}(1;x)-Z(1;x)\right| \\
& \leq C(\mathcal{L}^{\sup})\left(\frac{1}{M}\sum^{L-1}_{l=0}\sum^M_{m=1}\int^{\frac{l+1}{L}}_{\frac{l}{L}}|\theta_{l,m}-\theta_m(\tau)|^2\rd\tau\right)^{1/2}+\frac{C(\mathcal{L}^{\sup})}{L}.
\end{aligned}
\end{equation}
For $t \in \left( \frac{l}{L}\frac{l+1}{L}\right]$, using \eqref{eqn:prho}, we obtain that
\begin{equation}\label{intervalbound3}
\begin{aligned}
|\Delta_p\left(t;x\right)|&\leq \left|\Delta_p\left(\frac{l+1}{L};x\right)\right|+\frac{1}{M}\sum^M_{m=1}\int^{\frac{l+1}{L}}_t \left|\partial_zf(Z(\tau;x),\theta_m(\tau))\right||p(\tau;x)|\rd\tau\\
&\leq \left|\Delta_p\left(\frac{l+1}{L};x\right)\right|+\frac{C(\mathcal{L}^{\sup})}{M}\sum^M_{m=1}\int^{\frac{l+1}{L}}_{\frac{l}{L}}|\theta_m(\tau)|\rd\tau\\
&\leq \left|\Delta_p\left(\frac{l+1}{L};x\right)\right|+\frac{C(\mathcal{L}^{\sup})}{L}\,,
\end{aligned}
\end{equation}
where we use \eqref{eqn:derivativebound} and \eqref{boundofprhosolutiondis} in the second inequality, and H\"older's inequality togerther with the definition of $\mathcal{L}$ from~\eqref{eqn:Lmean-fieldfiniteL} in the third inequality.

From \eqref{eqn:prho}, we obtain that
\[
p^\top\left(\frac{l}{L};x\right)=p^\top\left(\frac{l+1}{L};x\right)+\frac{1}{M}\sum^M_{m=1}\int^{\frac{l+1}{L}}_\frac{l}{L}p^\top\left(\tau;x\right)\partial_zf(Z(\tau;x),\theta_m(\tau))\rd \tau\,,
\]
and from \eqref{eqn:prhofiniteL} that
\[
\widetilde{p}^\top\left(\frac{l}{L};x\right)=\widetilde{p}^\top\left(\frac{l+1}{L};x\right)+\frac{1}{M}\sum^M_{m=1}\int^{\frac{l+1}{L}}_\frac{l}{L}\widetilde{p}^\top\left(\frac{l+1}{L};x\right)\partial_zf(Z_{\Theta_{L,M}}(l;x),\theta_{l,m})\rd \tau\,.
\]
By bounding differences of these two expressions, we have
\begin{equation}\label{eqn:all}
\begin{aligned}
&\left|\Delta_p\left(\frac{l}{L};x\right)\right|\\
& \leq \left|\Delta_p\left(\frac{l+1}{L};x\right)\right|+\underbrace{\frac{1}{M}\sum^M_{m=1}\int^{\frac{l+1}{L}}_\frac{l}{L}\left|p^\top\left(\tau;x\right)\partial_zf(Z(\tau;x),\theta_m(\tau))\rd\tau-\widetilde{p}^\top\left(\frac{l+1}{L};x\right)\partial_zf(Z(\tau;x),\theta_m(\tau))\right|\rd \tau}_{{\textrm{(I)}}}\\
& \quad +\underbrace{\frac{1}{M}\sum^M_{m=1}\int^{\frac{l+1}{L}}_\frac{l}{L}\left|\widetilde{p}^\top\left(\frac{l+1}{L};x\right)\partial_zf(Z(\tau;x),\theta_m(\tau))-\widetilde{p}^\top\left(\frac{l+1}{L};x\right)\partial_zf(Z_{\Theta_{L,M}}(l;x),\theta_m(\tau))\right|\rd\tau}_{{\textrm{(II)}}}\\
& \quad +\underbrace{\frac{1}{M}\sum^M_{m=1}\int^{\frac{l+1}{L}}_\frac{l}{L}\left|\widetilde{p}^\top\left(\frac{l+1}{L};x\right)\partial_zf(Z_{\Theta_{L,M}}(l;x),\theta_m(\tau))-\widetilde{p}^\top\left(\frac{l+1}{L};x\right)\partial_zf(Z_{\Theta_{L,M}}(l;x),\theta_{l,m})\right|\rd \tau}_{{\textrm{(III)}}}\,.
\end{aligned}
\end{equation}
We bound (I), (II), and (III) as follows.
\begin{enumerate}[wide,   labelindent=0pt]
\item[(I):] Using \eqref{eqn:derivativebound} and \eqref{eqn:Lmean-fieldfiniteL}, we obtain that
\[
{\textrm{(I)}}\leq |\Delta_p(t;x)|\frac{C(\mathcal{L}^{\sup})}{M}\sum^M_{m=1}\int^{\frac{l+1}{L}}_{\frac{l}{L}}|\theta_m(\tau)|\rd \tau\leq \frac{C(\mathcal{L}^{\sup})}{L}|\Delta_p(t;x)|.
\]
\item[(II):] Using \eqref{eqn:derivativebound}, \eqref{boundofprhofiniteL}, and \eqref{eqn:Lmean-fieldfiniteL}, we obtain
\[
\begin{aligned}
{\textrm{(II)}}&\leq \frac{C(\mathcal{L}^{\sup})}{M}\sum^M_{m=1}\int^{\frac{l+1}{L}}_{\frac{l}{L}}|Z(\tau;x)-Z_{\Theta_{L,M}}(l;x)||\theta_m(\tau)|^2\rd \tau\\
&\leq \frac{C(\mathcal{L}^{\sup})}{L}\left(\left(\frac{1}{M}\sum^{L-1}_{l'=0}\sum^M_{m=1}\int^{\frac{l'+1}{L}}_{\frac{l'}{L}}|\theta_{l',m}-\theta_m(\tau)|^2\rd \tau\right)^{1/2}+\frac{1}{L}\right)\,,
\end{aligned}
\]
where we make use of \eqref{eqn:stabilityofZmean-fieldfiniteL} in the final inequality.
\item[{\textrm{(III)}}:]  Using \eqref{eqn:derivativebound}, \eqref{boundofxsolutionfiniteL}, \eqref{boundofprhofiniteL}, and \eqref{eqn:Lmean-fieldfiniteL}, we obtain that
\[
\begin{aligned}
{\textrm{(III)}}&\leq \frac{C(\mathcal{L}^{\sup})}{M}\sum^M_{m=1}\int^{\frac{l+1}{L}}_\frac{l}{L}\left(|\theta_m(\tau)|+|\theta_{l,m}|\right)|\theta_m(\tau)-\theta_{l,m}|\rd\tau\\
&\leq C(\mathcal{L}^{\sup})\int^{\frac{l+1}{L}}_\frac{l}{L}\left(\frac{1}{M}\sum^M_{m=1}\left(|\theta_m(\tau)|+|\theta_{l,m}|\right)^2\right)^{1/2}\left(\frac{1}{M}\sum^M_{m=1}|\theta_m(\tau)-\theta_{l,m}|^2\right)^{1/2}\rd\tau\\
&\leq C(\mathcal{L}^{\sup})\int^{\frac{l+1}{L}}_\frac{l}{L}\left(\frac{1}{M}\sum^M_{m=1}|\theta_m(\tau)-\theta_{l,m}|^2\right)^{1/2}\rd\tau\,.
\end{aligned}
\]
\end{enumerate}
By substituting these three inequalities and \eqref{intervalbound3} into \eqref{eqn:all}, we obtain
\[
\begin{aligned}
\left|\Delta_p\left(\frac{l}{L};x\right)\right|
& \leq \left(1+\frac{C(\mathcal{L}^{\sup})}{L}\right)\left|\Delta_p\left(\frac{l+1}{L};x\right)\right|\\
& \quad +\frac{C(\mathcal{L}^{\sup})}{L}\left(\left(\frac{1}{M}\sum^{L-1}_{l'=0}\sum^M_{m=1}\int^{\frac{l+1}{L}}_{\frac{l}{L}}|\theta_{l',m}-\theta_m(\tau)|^2\rd\tau\right)^{1/2}+\frac{1}{L}\right)\\
& \quad +C(\mathcal{L}^{\sup})\int^{\frac{l+1}{L}}_\frac{l}{L}\left(\frac{1}{M}\sum^M_{m=1}|\theta_m(\tau)-\theta_{l,m}|^2\right)^{1/2}\rd\tau\,.
\end{aligned}
\]
By applying this bound iteratively, and using \eqref{eqn:ODEforDEltaprhoinitial2} and \eqref{intervalbound3}, we obtain
\begin{equation}\label{eqn:stabilityofprhomean-fieldfiniteLfinitet}
\begin{aligned}
\left|\Delta_p(t;x)\right|\leq C(\mathcal{L}^{\sup})\left(\left(\frac{1}{M}\sum^{L-1}_{l=0}\sum^M_{m=1}\int^{\frac{l+1}{L}}_{\frac{l}{L}}|\theta_{l,m}-\theta_m(\tau)|^2\rd\tau\right)^{1/2}+\frac{1}{L}\right)\,,
\end{aligned}
\end{equation}
where we also use H\"older's inequality to write
\begin{align*}
& C(\mathcal{L}^{\sup})\sum^{L-1}_{l=0}\int^{\frac{l+1}{L}}_\frac{l}{L}\left(\frac{1}{M}\sum^M_{m=1}|\theta_m(\tau)-\theta_{l,m}|^2\right)^{1/2}\rd\tau \\
& \quad \leq C(\mathcal{L}^{\sup})\left(\frac{1}{M}\sum^{L-1}_{l=0}\sum^M_{m=1}\int^{\frac{l+1}{L}}_{\frac{l}{L}}|\theta_{l,m}-\theta_m(\tau)|^2\rd\tau\right)^{1/2}.
\end{align*}
We obtain \eqref{eqn:stabilityofpmean-fieldfiniteL} by combining \eqref{eqn:stabilityofprhomean-fieldfiniteLfinitet} with \eqref{intervalbound3}.
\end{proof}
% Similar to Theorem~\ref{thm:wmean-fieldlimit},\ref{thm:wmean-fieldlimitstability}, we have the following theorem:
% \begin{theorem}\label{thm:wmean-fieldlimitfiniteL}
% Assume Assumption \ref{assum:f} is true and $x$ in the support of $\mu$, $|x|<R$, and $\rho,\rho_1,\rho_2\in  L^\infty(\mathbb{N}_{\leq L};W_2)$, denoting
% \[
% \mathcal{L}=\max\left\{\frac{1}{L}\sum^{L-1}_{i=0}\int_{\mathbb{R}^k}|\theta|^2d\rho(\theta,l),\frac{1}{L}\sum^{L-1}_{i=0}\int_{\mathbb{R}^k}|\theta|^2d\rho_1(\theta,l),\frac{1}{L}\sum^{L-1}_{i=0}\int_{\mathbb{R}^k}|\theta|^2d\rho_2(\theta,l)\right\}\,,
% \]
% then there exists a constant $C(\mathcal{L})$ that only depends on $\mathcal{L}$ such that
% \begin{equation}\label{boundofxsolutionfiniteL}
% \left|Z_{\rho}(x,t)\right|\leq C(\mathcal{L})\,,
% \end{equation}
% and
% \begin{equation}\label{stabilityofxsolutionfiniteL}
% \left|Z_{\rho_1}(x,t)-Z_{\rho_2}(x,t)\right|\leq C(\mathcal{L})d_1(\rho_1,\rho_2)\,.
% \end{equation}
% \end{theorem}

% ZZZZZZZZZ sjw

\subsection{Proof of Theorem~\ref{thm:contlimit}}

We denote by $\theta_m(s;t)$ the solution to \eqref{eqn:Wassgradientflowsdis} with initial $\{\theta_m(0;t)\}^M_{m=1}$  $i.i.d.$ drawn from $\rho_{\ini}(\theta,t)$\,. 
Further, $\theta_{l,m}(s)$ is a solution to \eqref{eqn:classicalgradientflowfiniteL} with initial $\theta_{l,m}(0)=\theta_m\left(0;\frac{l}{L}\right)$ for $0\leq l\leq L-1$ and $1\leq i\leq M$. Denoting
\[
\mathcal{L}^{\dis,\sup}_\ini=\sup_{t\in[0,1]}\frac{1}{M}\sum^M_{m=1}|\theta_m(0;t)|^2\,,\quad \Theta_{L,M}(s)=\left\{\theta_{l,m}(s)\right\}^{L-1,M}_{l=0,i=0},
\]
we have the following lemma.
\begin{lemma} For fixed $S>0$, any $s \in [0,S]$, $x$ in the support of $\mu$, and integer $l$ with $0\leq l\leq L-1$, there exists a constant $C\left(S,\mathcal{L}^{\dis,\sup}_\ini\right)$ depending only on $S$ and $\mathcal{L}^{\dis,\sup}_\ini$ such that 
\begin{equation}\label{boundofthetasolutiondisfiniteL}
\sup_{s\in[0,S],l}\frac{1}{M}\sum^M_{m=1}\left|\theta_{l,m}(s)\right|^2<C\left(S,\mathcal{L}^{\dis,\sup}_\ini\right)\,.
\end{equation}
Further, the ODE solution and $p_{\Theta_{L,M}(s)}$ are bounded as follows:
% \begin{enumerate}[wide,   labelindent=0pt]
% \item[Boundedness of ODE solution:] 
\begin{equation}\label{boundofxsolutiondisfiniteL}
\left|Z_{\Theta_{L,M}(s)}(l+1;x)\right|\leq C\left(S,\mathcal{L}^{\dis,\sup}_\ini\right)\,,
\end{equation}
% \item[Boundedness of $p_{\Theta_{L,M}(s)}$:]
\begin{equation}\label{boundofprhosolutiondisfiniteL}
\left|p_{\Theta_{L,M}(s)}(l;x)\right|\leq C\left(S,\mathcal{L}^{\dis,\sup}_\ini\right)\,.
\end{equation}
% \item[Boundedness of $|\theta_m(s;t)|$:]
% \begin{equation}\label{boundofthetasolutiondisfiniteL}
% \left|\theta_{l,m}(s)\right|\leq C\left(S,\mathcal{L}^{\dis,\sup}_\ini\right)\left(\left|\theta_{l,m}(0)\right|+1\right)
% \end{equation}
%\end{enumerate}
\end{lemma}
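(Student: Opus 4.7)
The plan is to mirror the argument of Lemma~\ref{lem:boundofdiscrete} by exploiting monotonicity of $E_s$ along the gradient flow~\eqref{eqn:classicalgradientflowfiniteL}, but with one additional bootstrap step because the conclusion asks for a supremum over $l$ rather than an average over $l$. Concretely: first I will derive an \emph{averaged} bound on $\frac{1}{LM}\sum_{l,m}|\theta_{l,m}(s)|^2$ from energy decay, then use Lemma~\ref{prop:wmean-fieldlimitfiniteL} to bound the drift in~\eqref{eqn:classicalgradientflowfiniteL} uniformly in $l$, and finally close a Gr\"onwall argument for $\frac{1}{M}\sum_m|\theta_{l,m}(s)|^2$ layer-by-layer. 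Once the sup-in-$l$ second moment~\eqref{boundofthetasolutiondisfiniteL} is established, a second application of Lemma~\ref{prop:wmean-fieldlimitfiniteL} delivers~\eqref{boundofxsolutiondisfiniteL} and~\eqref{boundofprhosolutiondisfiniteL} immediately.

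For the first step, the standard gradient-flow computation gives
\[
\frac{d}{ds}E_s(\Theta_{L,M}(s))=-ML\sum_{l,m}\Bigl|\frac{\partial E_s}{\partial\theta_{l,m}}\Bigr|^2-e^{-s}\frac{1}{ML}\sum_{l,m}|\theta_{l,m}|^2\le 0,
\]
so $E_s(\Theta_{L,M}(s))\le E_0(\Theta_{L,M}(0))$. Since the regularizer in~\eqref{eqn:cost_LM_s} has coefficient $e^{-s}$, this gives
\[
\frac{1}{LM}\sum_{l,m}|\theta_{l,m}(s)|^2\le e^{S}E_0(\Theta_{L,M}(0)),
\]
and by~\eqref{boundofxsolutionfiniteL} together with Assumption~\ref{assum:f} the right-hand side depends only on $S$ and $\mathcal{L}^{\dis,\sup}_\ini$. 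Feeding this averaged bound into Lemma~\ref{prop:wmean-fieldlimitfiniteL} therefore produces an $l$-uniform a priori bound $|Z_{\Theta_{L,M}(s)}(l;x)|+|p_{\Theta_{L,M}(s)}(l;x)|\le C(S,\mathcal{L}^{\dis,\sup}_\ini)$ for all $s\in[0,S]$, all layers $l$, and all $|x|\le R$.

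For the second step, combining~\eqref{eqn:derivativebound} with these a priori bounds gives
\[
\Bigl|ML\,\frac{\partial E}{\partial\theta_{l,m}}(s)\Bigr|=\bigl|\mathbb{E}_{x\sim\mu}\,\partial_\theta f(Z_{\Theta_{L,M}(s)}(l;x),\theta_{l,m})\,p_{\Theta_{L,M}(s)}(l;x)\bigr|\le C(S,\mathcal{L}^{\dis,\sup}_\ini),
\]
with right-hand side independent of $l$ and $m$. Differentiating $y_l(s):=\frac{1}{M}\sum_m|\theta_{l,m}(s)|^2$ along~\eqref{eqn:classicalgradientflowfiniteL}, dropping the non-positive regularizer contribution, and using Cauchy--Schwarz together with AM--GM then yields $\frac{d}{ds}y_l(s)\le y_l(s)+C(S,\mathcal{L}^{\dis,\sup}_\ini)^2$. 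Since $y_l(0)=\frac{1}{M}\sum_m|\theta_m(0;l/L)|^2\le \mathcal{L}^{\dis,\sup}_\ini$ uniformly in $l$, Gr\"onwall delivers~\eqref{boundofthetasolutiondisfiniteL}; inserting this improved sup-in-$l$ bound back into Lemma~\ref{prop:wmean-fieldlimitfiniteL} then gives~\eqref{boundofxsolutiondisfiniteL} and~\eqref{boundofprhosolutiondisfiniteL}. The only subtle point is that the drift bound used in the Gr\"onwall step must be genuinely $l$-independent; this is supplied automatically by Lemma~\ref{prop:wmean-fieldlimitfiniteL}, which controls $Z$ and $p$ at every layer from the averaged-moment input alone.
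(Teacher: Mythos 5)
Your proof is correct, and it supplies a step that the paper's one-line justification (``quite similar to Lemma~\ref{lem:boundofdiscrete}'') does not literally deliver. Lemma~\ref{lem:boundofdiscrete}'s energy-decay argument produces only the \emph{layer-averaged} moment bound $\frac{1}{LM}\sum_{l,m}|\theta_{l,m}(s)|^2\le C(S,\mathcal{L}^{\dis,\sup}_\ini)$, whereas the statement here --- and its downstream use in the definition of $\mathcal{L}^{\sup}$ in \eqref{eqn:Lmean-fieldfiniteL} and in \eqref{boundofLsupappendix} --- genuinely requires the $\sup_l$ bound in \eqref{boundofthetasolutiondisfiniteL}. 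Your bootstrap closes this gap correctly: the averaged bound feeds Lemma~\ref{prop:wmean-fieldlimitfiniteL} (whose constants indeed only depend on the average $\mathcal{L}_{\Theta_{L,M}}$), giving $l$-uniform control of $Z_{\Theta_{L,M}(s)}$ and $p_{\Theta_{L,M}(s)}$; combined with $|\partial_\theta f(z,\theta)|\le C_1(|z|+1)$ being $\theta$-independent, \eqref{eqn:partialEfiniteL} then gives an $l,m$-uniform drift bound; and since $y_l(0)=\frac{1}{M}\sum_m|\theta_m(0;l/L)|^2\le\mathcal{L}^{\dis,\sup}_\ini$ uniformly in $l$, a per-layer Gr\"onwall (dropping the nonpositive $e^{-s}$ term and applying AM--GM) closes the argument. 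This is the discrete-$l$ analogue of Proposition~\ref{proposition:deltaEODE} \eqref{boundofODEsolutiontheta}, so it is in the spirit of the paper's machinery even though the paper omits it here. One minor redundancy: the ``second application'' of Lemma~\ref{prop:wmean-fieldlimitfiniteL} at the end is unnecessary --- your first application, using only the averaged moment, already yields \eqref{boundofxsolutiondisfiniteL} and \eqref{boundofprhosolutiondisfiniteL}, and those bounds are in fact what you needed to obtain the sup bound, so the logic would be circular if it actually mattered.
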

\begin{proof} Since the proof is quite similar to Lemma~\ref{lem:boundofdiscrete}, we omit it.
\end{proof}

We are now ready to prove Theorem~\ref{thm:contlimit}.
\begin{proof}[Proof of Theorem~\ref{thm:contlimit}] Define
\[
\mathcal{L}^{\sup}_{0}=\max\left\{\sup_{t\in[0,1]}\int_{\mathbb{R}^k}|\theta|^2\rd\rho_{\ini}(\theta,t),\mathcal{L}^{\dis,\sup}_\ini\right\}<\infty\,.
\]
From \eqref{eqn:boundofLSdis} and \eqref{boundofthetasolutiondisfiniteL}, we obtain 
\begin{equation}\label{boundofLsupappendix}
\sup_{(t,s)\in[0,1]\times[0,S],l}\left\{\frac{1}{M}\sum^M_{m=1}|\theta_m(s;t)|^2,\frac{1}{M}\sum^M_{m=1}\left|\theta_{l,m}(s)\right|^2\right\}\leq C(\mathcal{L}^{\sup}_{0},S)\,,
\end{equation}
where $C(\mathcal{L}^{\sup}_{0},S)$ is a constant depends on $\mathcal{L}^{\sup}_{0},S$.

Using a similar derivation to \eqref{eqn:differenceofE}, we have 
\begin{equation}\label{eqn:differenceofE2}
\left|E(\Theta(s;\cdot))-E(\Theta_{L,M}(s))\right|\leq C(\mathcal{L}^{\sup}_{0},S)\left|Z_{\Theta(s)}(1;x)-Z_{\Theta_{L,M}(s)}(L;x)\right|\,.
\end{equation}
Thus, to prove the theorem, it suffices to prove that $\left|Z_{\Theta_{L,M}(s)}(L;x)-Z_{\Theta(s)}(1;x)\right|$ is small. 
According to \eqref{eqn:stabilityofZmean-fieldfiniteL}, this requires us to bound the quantity 
\begin{equation}  \label{eq:fg2}
\frac{1}{M}\sum^M_{m=1}\int^{\frac{l+1}{L}}_{\frac{l}{L}}|\Delta_{t,m}(s)|^2\rd t\,, \quad
\mbox{where} \;\; \Delta_{t,m}(s)=\theta_{l,m}(s)-\theta_m(s;t)\,.
\end{equation}
The next part of the proof obtains the required bound.

% \begin{enumerate}[wide,   labelindent=0pt]
% \item[Step 1: Bound $\frac{1}{M}\sum^M_{m=1}\int^{\frac{l+1}{L}}_{\frac{l}{L}}|\theta_{l,m}(s)-\theta_m(s;t)|^2dt$:] 

% For any $0\leq l\leq L-1$, we define
% \[ \Delta_{t,m}(s)=\theta_{l,m}(s)-\theta_m(s;t) \]
% for $\frac{l}{L}\leq t<\frac{l+1}{L}$. 

First, using \eqref{eqn:classicalgradientflowfiniteL} and \eqref{eqn:Wassgradientflowsdis}, we obtain that
\begin{equation}\label{iterationmean-fieldfiniteL}
\begin{aligned}
&\frac{\rd|\Delta_{t,m}(s)|^2}{\rd s}\\
& = -2\exp(-s)|\Delta_{t,m}(s)|^2\\
& \quad -2\left\langle \Delta_{t,m}(s),\mathbb{E}_{x\sim\mu}\left(\partial_\theta f(Z_{\Theta_{L,M}(s)}(l;x),\theta_{l,m})p_{\Theta_{L,M}(s)}(l;x)-\partial_\theta f(Z_{\Theta(s)}(t;x),\theta_m)p_{\Theta(s)}(t;x)\right)\right\rangle\\
&= -2\exp(-s)|\Delta_{t,m}(s)|^2\\
& \quad -2\left\langle \Delta_{t,m}(s),\underbrace{\mathbb{E}_{x\sim\mu}\left(\partial_\theta f(Z_{\Theta_{L,M}(s)}(l;x),\theta_{l,m})p_{\Theta_{L,M}(s)}(l;x)-\partial_\theta f(Z_{\Theta(s)}(t;x),\theta_m)p_{\Theta_{L,M}(s)}(l;x)\right)}_{{\textrm{(I)}}}\right\rangle\\
& \quad -2\left\langle \Delta_{t,m}(s),\underbrace{\mathbb{E}_{x\sim\mu}\left(\partial_\theta f(Z_{\Theta(s)}(t;x),\theta_m)p_{\Theta_{L,M}(s)}(l;x)-\partial_\theta f(Z_{\Theta(s)}(t;x),\theta_m)p_{\Theta(s)}(t;x)\right)}_{{\textrm{(II)}}}\right\rangle
\end{aligned}
\end{equation}
% \end{enumerate}
% We now deal with (I) and (II) separately.
% \begin{enumerate}[wide,   labelindent=0pt]
% \item[Bound of (I):] 
To bound (I), we use \eqref{boundofprhosolutiondisfiniteL} to obtain
\begin{equation}\label{boundofIImean-fieldfiniteL}
\begin{aligned}
|{\textrm{(I)}}| & \leq C(\mathcal{L}^{\sup}_{0},S)\EE_{x\sim\mu}\left(\left|\partial_\theta f(Z_{\Theta_{L,M}(s)}(l;x),\theta_{l,m})-\partial_\theta f(Z_{\Theta(s)}(t;x),\theta_m)\right|\right)\\
& \leq C(\mathcal{L}^{\sup}_{0},S)\left[\left(\left|\theta_{l,m}\right|+\left|\theta_m\right|\right)\EE_{x\sim\mu}\left(\left|Z_{\Theta_{L,M}(s)}(l;x)-Z_{\Theta(s)}(t;x)\right|\right)+|\theta_{l,m}-\theta_m|\right]\,,
\end{aligned}
\end{equation}
where we use \eqref{eqn:derivativebound}, \eqref{boundofxsolutiondis}, and \eqref{boundofxsolutiondisfiniteL} in the second inequality.
To bound (II), we use  \eqref{eqn:derivativebound} and  \eqref{boundofxsolutiondis} to obtain
\begin{equation}\label{boundofImean-fieldfiniteL}
\begin{aligned}
|{\textrm{(II)}}|\leq &C(\mathcal{L}^{\sup}_{0},S)\EE_{x\sim\mu}\left(\left|p_{\Theta_{L,M}(s)}(l;x)-p_{\Theta(s)}(t;x)\right|\right)\,.
\end{aligned}
\end{equation}
% \end{enumerate}

By substituting \eqref{boundofImean-fieldfiniteL} and \eqref{boundofIImean-fieldfiniteL} into \eqref{iterationmean-fieldfiniteL}, we obtain 
\[
\begin{aligned}
&\frac{\rd|\Delta_{t,m}(s)|^2}{\rd s}\\
& \leq C(\mathcal{L}^{\sup}_{0},S)|\Delta_{t,m}(s)|^2\\
& \quad +C(\mathcal{L}^{\sup}_{0},S)|\Delta_{t,m}(s)|\EE_{x\sim\mu}\left(\left(\left|\theta_{l,m}\right|+\left|\theta_m\right|\right)\left|Z_{\Theta_{L,M}(s)}(l;x)-Z_{\Theta(s)}(t;x)\right|+\left|p_{\Theta_{L,M}(s)}(l;x)-p_{\Theta(s)}(t;x)\right|\right)\,,
\end{aligned}
\]
which implies that
\begin{equation}\label{eqn:finiteML}
\begin{aligned}
&\frac{\rd \left(\frac{1}{M}\sum^M_{m=1}|\Delta_{t,m}(s)|^2\right)}{\rd s}\\
& \leq C(\mathcal{L}^{\sup}_{0},S)\left(\frac{1}{M}\sum^M_{m=1}|\Delta_{t,m}(s)|^2\right)\\
&\quad +C(\mathcal{L}^{\sup}_{0},S)\left(\frac{1}{M}\sum^M_{m=1}|\Delta_{t,m}(s)|\left(\left|\theta_{l,m}\right|+\left|\theta_m\right|\right)\right)\EE_{x\sim\mu}\left(\left|Z_{\Theta_{L,M}(s)}(l;x)-Z_{\Theta(s)}(t;x)\right|\right)\\
& \quad +C(\mathcal{L}^{\sup}_{0},S)\left(\frac{1}{M}\sum^M_{m=1}|\Delta_{t,m}(s)|\right)\EE_{x\sim\mu}\left(\left|p_{\Theta_{L,M}(s)}(l;x)-p_{\Theta(s)}(t;x)\right|\right)\\
& \leq C(\mathcal{L}^{\sup}_{0},S)\left(\frac{1}{M}\sum^M_{m=1}|\Delta_{t,m}(s)|^2\right)+C(\mathcal{L}^{\sup}_{0},S)\EE_{x\sim\mu}\left(\left|p_{\Theta_{L,M}(s)}(t;x)-p_{\Theta(s)}(t;x)\right|^2\right)\\
& \quad +C(\mathcal{L}^{\sup}_{0},S)\EE_{x\sim\mu}\left(\left|Z_{\Theta_{L,M}(s)}(t;x)-Z_{\Theta(s)}(t;x)\right|^2\right)\,.
\end{aligned}
\end{equation}
The second inequality above uses H\"older's inequality together with the bound $2ab \le a^2+b^2$ and 
\begin{align*}
\left(\frac{1}{M}\sum^M_{m=1}|\Delta_{t,m}(s)|\left(\left|\theta_{l,m}\right|+\left|\theta_m\right|\right)\right)^2&\leq  \left(\frac{1}{M}\sum^M_{m=1}|\Delta_{t,m}(s)|^2\right)\left(\frac{1}{M}\sum^M_{m=1}\left(\left|\theta_{l,m}\right|+\left|\theta_m\right|\right)^2\right)\\
&\leq C(\mathcal{L}^{\sup}_{0},S)\left(\frac{1}{M}\sum^M_{m=1}|\Delta_{t,m}(s)|^2\right)\,.
\end{align*}
% \sw{Again, sorry, I can't see how this is used in the second inequality. I assume it is being used to bound the term on the third line of the display equation above, but I can't see how it does this. I can't see what happened to $\left|Z_{\Theta_{L,M}(s)}(l;x)-Z_{\Theta(s)}(t;x)\right|$.}~\zd{It's $2ab\leq a^2+b^2$. We use \[
% \begin{aligned}
% &\left(\frac{1}{M}\sum^M_{m=1}|\Delta_{t,m}(s)|\left(\left|\theta_{l,m}\right|+\left|\theta_m\right|\right)\right)\left|Z_{\Theta_{L,M}(s)}(l;x)-Z_{\Theta(s)}(t;x)\right|\\
% \leq &\frac{\left(\frac{1}{M}\sum^M_{m=1}|\Delta_{t,m}(s)|\left(\left|\theta_{l,m}\right|+\left|\theta_m\right|\right)\right)^2}{2}+\frac{\left|Z_{\Theta_{L,M}(s)}(l;x)-Z_{\Theta(s)}(t;x)\right|^2}{2}\,,
% \end{aligned}
% \]
% }

By substituting \eqref{eqn:stabilityofZmean-fieldfiniteL} and \eqref{eqn:stabilityofpmean-fieldfiniteL} into \eqref{eqn:finiteML}, we obtain
\[
\frac{\rd\frac{1}{M}\sum^{L-1}_{l=0}\sum^M_{m=1}\int^{\frac{l+1}{L}}_{\frac{l}{L}}|\Delta_{t,m}(s)|^2\rd t}{\rd s}\leq C(\mathcal{L}^{\sup}_{0},S)\left(\frac{1}{M}\sum^{L-1}_{l=0}\sum^M_{m=1}\int^{\frac{l+1}{L}}_{\frac{l}{L}}|\Delta_{t,m}(s)|^2\rd t+\frac{1}{L^2}\right)\,,
\]
which implies that
\begin{equation}\label{eqn:finiteLfinal}
\frac{1}{M}\sum^{L-1}_{l=0}\sum^M_{m=1}\int^{\frac{l+1}{L}}_{\frac{l}{L}}|\Delta_{t,m}(s)|^2\rd t\leq C(\mathcal{L}^{\sup}_{0},S)\left(\frac{1}{M}\sum^{L-1}_{l=0}\sum^M_{m=1}\int^{\frac{l+1}{L}}_{\frac{l}{L}}|\Delta_{t,m}(0)|^2\rd t+\frac{1}{L^2}\right)
\end{equation}
by Gr\"onwall's inequality. This is the bound we were seeking on  \eqref{eq:fg2}. 
% \sw{Something looks weird here. There is a forward reference to \eqref{differenceinitialfiniteL}}~\zd{This is a typo. Fixed.} 
Here, we also have
\begin{equation}\label{differenceinitialfiniteL}
\frac{1}{M}\sum^{L-1}_{l=0}\sum^M_{m=1}\int^{\frac{l+1}{L}}_{\frac{l}{L}}|\Delta_{t,m}(0)|^2\rd t=\frac{1}{M}\sum^{L-1}_{l=0}\sum^M_{m=1}\int^{\frac{l+1}{L}}_{\frac{l}{L}}\left|\theta_m\left(0;\frac{l}{L}\right)-\theta_m(0;t)\right|^2\rd t\,.
\end{equation}
% \sw{Can you please check this last part? I could not make much sense of it.}~\zd{The proof is similar to mean-field case (line 890-900), I try to add more things to make it clear.}
% \item[Step 2: Complete the proof.] 

To complete the proof, we first use \eqref{boundLdis0} and \eqref{diffrencesmall}, and take  $M\geq \frac{2C_3}{\eta}$, 
%\sw{Def. F.1 only requires $M \ge C_3/\eta$.}~\zd{Here we use $\eta/2$ in this inequality, that's why we have $1-\eta/2$ in the right hand side of the following inequality.} 
to obtain
\begin{equation}\label{eqn:highprobbound}
\mathbb{P}\left(\frac{1}{M}\sum^{L-1}_{l=0}\sum^M_{m=1}\int^{\frac{l+1}{L}}_{\frac{l}{L}}|\Delta_{t,m}(0)|^2\rd t\leq \frac{C_4}{L^2}\right)\geq 1-\eta/2\,.
\end{equation}
Similarly to \eqref{boundLdis01}, we also have
\begin{equation}\label{eqn:highprobbound2}
\mathbb{P}\left(\mathcal{L}^{\sup}_{0}\leq C_4\right)\geq 1-\eta/2\,.
\end{equation}
Using \eqref{eqn:highprobbound}, \eqref{eqn:highprobbound2} to substitute $\frac{1}{M}\sum^{L-1}_{l=0}\sum^M_{m=1}\int^{\frac{l+1}{L}}_{\frac{l}{L}}|\Delta_{t,m}(0)|^2$, $\mathcal{L}^{\sup}_{0}$ in \eqref{eqn:finiteLfinal}, we find that there exists a constant $C'(C_4,S)$ depending on $C_4$ and $S$ such that if 
\[
M\geq \frac{2C_3}{\eta},\quad L\geq \frac{C'(C_4,S)}{\epsilon},
\]
then we have
\begin{equation}\label{eqn:highprobbound3}
\mathbb{P}\left(\frac{1}{M}\sum^{L-1}_{l=0}\sum^M_{m=1}\int^{\frac{l+1}{L}}_{\frac{l}{L}}|\Delta_{t,m}(s)|^2\rd t\leq \epsilon\right)\geq 1-\eta\,.
\end{equation}
Using \eqref{eqn:highprobbound3}, \eqref{boundofLsupappendix} and \eqref{eqn:highprobbound2} to bound the right hand side of \eqref{eqn:stabilityofZmean-fieldfiniteL}, we find that there exists another constant $C''(C_4,S)$ depending on $C_4$ and $S$ such that if 
\[
M\geq \frac{2C_3}{\eta},\quad L\geq \frac{C''(C_4,S)}{\epsilon},
\]
then we have
\[
\mathbb{P}\left(\left|Z_{\Theta(s)}(1;x)-Z_{\Theta_{L,M}(s)}(L;x)\right|\leq \epsilon\right)\geq 1-\eta\,,
\]
By using this result in conjunction with \eqref{eqn:differenceofE2}, we complete the proof.
% \end{enumerate}
\end{proof}

\section{Rigorous Proof of Lemma~\ref{lem:decayofcost}}\label{proofoflemdecayofcost}
\begin{proof}[Proof of Lemma~\ref{lem:decayofcost}]
According to the brief proof in Appendix \ref{sec:Step3}, it suffices to prove
\begin{equation}\label{eqn:derivative}
\frac{\rd E_{s}(\phi^\ast(s))}{\rd s}=-\int^1_0\mathbb{E}\left(\left|\nabla_{\theta}\frac{\delta E_s(\phi^\ast(s))}{\delta \rho}\left(\theta^\ast(s;t),t\right)\right|^2\right)\rd t-\exp^{-s}\int^1_0\mathbb{E}(|\theta^\ast(s;t)|^2)\rd t\,.
\end{equation}

To show~\eqref{eqn:derivative}, we essentially need to show:
\begin{equation}\label{eqn:derivativeproof}
\left|E_s(\phi^\ast(s))-E_{s_0}(\phi^\ast({s_0}))+D(s_0)(s-s_0)\right|=o(|s-s_0|)\,,
\end{equation}
where 
\begin{equation}\label{eqn:Ds_0}
D(s_0)=\int^1_0\mathbb{E}\left(\left|\nabla_{\theta}\frac{\delta E_s(\phi^\ast(s_0))}{\delta \rho}\left(\theta^\ast(s_0;t),t\right)\right|^2\right)\rd t+\exp^{-s_0}\int^1_0\mathbb{E}(|\theta^\ast(s_0;t)|^2)\rd t
\end{equation}
Noticing that according to \eqref{bound}, 
\[
\mathbb{E}\left(\left|\nabla_{\theta}\frac{\delta E_s(\phi^\ast(s))}{\delta \rho}\left(\theta^\ast(s;t),t\right)\right|^2\right)\leq 8\mathbb{E}|\theta^\ast(s;t)|^2+2C(\mathcal{L}_{S,\phi^\ast})<\infty\,,
\]
the term~\eqref{eqn:Ds_0} is finite and well-defined. To prove \eqref{eqn:derivativeproof}, recall \eqref{eqn:cost_s}, we first write
\[
\begin{aligned}
&E_s(\phi^\ast(s))-E_{s_0}(\phi^\ast({s_0}))\\
=&\underbrace{\mathbb{E}_{x\sim \mu}\left(\frac{1}{2}\left(\left\langle w,Z_{\phi^\ast(s)}(1;x)\right\rangle-y(x)\right)^2-\frac{1}{2}\left(\left\langle w,Z_{\phi^\ast(s_0)}(1;x)\right\rangle-y(x)\right)^2\right)}_{{\textrm{(I)}}}\\
&+\underbrace{\exp^{-s}\int^1_0\mathbb{E}(|\theta^\ast(s;t)|^2)\rd t-\exp^{-s_0}\int^1_0\mathbb{E}(|\theta^\ast(s_0;t)|^2)\rd t}_{{\textrm{(II)}}}\,.
\end{aligned}
\]
Then we consider (I) and (II) seperately:
\begin{enumerate}[wide,   labelindent=0pt]
\item[(I):] 

First, (I) can be written as
\begin{align*}
{\textrm{(I)}}=&\mathbb{E}_{x\sim \mu}\left(\left(\left\langle w,Z_{\phi^\ast(s_0)}(1;x)\right\rangle-y(x)\right)\left\langle w,Z_{\phi^\ast(s)}(1;x)-Z_{\phi^\ast(s_0)}(1;x)\right\rangle\right)\\
&+o(|Z_{\phi^\ast(s)}(1;x)-Z_{\phi^\ast(s_0)}(1;x)|)\\
=&\mathbb{E}_{x\sim \mu}\left(\left(\left\langle w,Z_{\phi^\ast(s_0)}(1;x)\right\rangle-y(x)\right)\left\langle w,Z_{\phi^\ast(s)}(1;x)-Z_{\phi^\ast(s_0)}(1;x)\right\rangle\right)\\
&+o(s-s_0)
\end{align*}
where we use $|Z_{\phi^\ast(s)}(1;x)-Z_{\phi^\ast(s_0)}(1;x)|\leq C(\mathcal{L}_{S,\phi^\ast})|s-s_0|$ according to \eqref{stabilityofxsolution} and $\eqref{stabilityofODEsolutiontheta}$. 

Similar to \citep{pmlr-v119-lu20b}, using \eqref{eqn:derivativebound}, \eqref{stabilityofxsolution}, and $\eqref{stabilityofODEsolutiontheta}$, we have
\begin{equation}\label{Zdifferencebig}
\begin{aligned}
&\frac{\rd(Z_{\phi^\ast(s)}(t;x)-Z_{\phi^\ast(s_0)}(t;x))}{\rd t}\\
=&\mathbb{E}\left(f(Z_{\phi^\ast(s)}(t;x),\theta^\ast(s;t))-f(Z_{\phi^\ast(s_0)}(t;x),\theta^\ast(s_0;t))\right)\\
=&\mathbb{E}\left(f(Z_{\phi^\ast(s)}(t;x),\theta^\ast(s;t))-f(Z_{\phi^\ast(s_0)}(t;x),\theta^\ast(s;t))\right)\\
&+\mathbb{E}\left(f(Z_{\phi^\ast(s_0)}(t;x),\theta^\ast(s;t))-f(Z_{\phi^\ast(s_0)}(t;x),\theta^\ast(s_0;t))\right)\\
=&\left\{\mathbb{E}\left(f(Z_{\phi^\ast(s)}(t;x),\theta^\ast(s;t))-f(Z_{\phi^\ast(s_0)}(t;x),\theta^\ast(s;t))\right)\right.\\
&\left.-\mathbb{E}\left(f(Z_{\phi^\ast(s)}(t;x),\theta^\ast(s_0;t))-f(Z_{\phi^\ast(s_0)}(t;x),\theta^\ast(s_0;t))\right)\right\}\\
&+\mathbb{E}\left(f(Z_{\phi^\ast(s)}(t;x),\theta^\ast(s_0;t))-f(Z_{\phi^\ast(s_0)}(t;x),\theta^\ast(s_0;t))\right)\\
&+\mathbb{E}\left(f(Z_{\phi^\ast(s_0)}(t;x),\theta^\ast(s;t))-f(Z_{\phi^\ast(s_0)}(t;x),\theta^\ast(s_0;t))\right)\\
\stackrel{(a)}{=}&\left\{\left[\mathbb{E}\left(\partial_zf(Z_{\phi^\ast(s_0)}(t;x),\theta^\ast(s;t))\right)\right](Z_{\phi^\ast(s)}(t;x)-Z_{\phi^\ast(s_0)}(t;x))\right.\\
&-\left.\left[\mathbb{E}\left(\partial_zf(Z_{\phi^\ast(s_0)}(t;x),\theta^\ast(s_0;t))\right)\right](Z_{\phi^\ast(s)}(t;x)-Z_{\phi^\ast(s_0)}(t;x))\right\}\\
&+\left[\mathbb{E}\left(\partial_zf(Z_{\phi^\ast(s_0)}(t;x),\theta^\ast(s_0;t))\right)\right](Z_{\phi^\ast(s)}(t;x)-Z_{\phi^\ast(s_0)}(t;x))\\
&+\mathbb{E}\left(f(Z_{\phi^\ast(s_0)}(t;x),\theta^\ast(s;t))-f(Z_{\phi^\ast(s_0)}(t;x),\theta^\ast(s_0;t))\right)\\
&+o(|s-s_0|)\\
\stackrel{(b)}{=}&\left[\mathbb{E}\left(\partial_zf(Z_{\phi^\ast(s_0)}(t;x),\theta^\ast(s_0;t))\right)\right](Z_{\phi^\ast(s)}(t;x)-Z_{\phi^\ast(s_0)}(t;x))\\
&+\mathbb{E}\left(f(Z_{\phi^\ast(s_0)}(t;x),\theta^\ast(s;t))-f(Z_{\phi^\ast(s_0)}(t;x),\theta^\ast(s_0;t))\right)\\
&+o(|s-s_0|)\\
\stackrel{(c)}{=}&\mathbb{E}\left(\partial_zf(Z_{\phi^\ast(s_0)}(t;x),\theta^\ast(s_0;t))\right)(Z_{\phi^\ast(s)}(t;x)-Z_{\phi^\ast(s_0)}(t;x))\\
&+\mathbb{E}\left(\partial_\theta f(Z_{\phi^\ast(s_0)}(t;x),\theta^\ast(s_0;t))(\theta^\ast(s;t)-\theta^\ast(s_0;t))\right)\\
&+o(|s-s_0|)\\
\stackrel{(d)}{=}&\mathbb{E}\left(\partial_zf(Z_{\phi^\ast(s_0)}(t;x),\theta^\ast(s_0;t))\right)(Z_{\phi^\ast(s)}(t;x)-Z_{\phi^\ast(s_0)}(t;x))\\
&-\mathbb{E}\left(\partial_\theta f(Z_{\phi^\ast(s_0)}(t;x),\theta^\ast(s_0;t))\nabla_\theta\frac{\delta E_s(\phi^\ast(s_0))}{\delta \rho}(\theta^\ast(s_0;t),t)(s-s_0)\right)\\
&+o(|s-s_0|)\,,
\end{aligned}
\end{equation}
where we use $|\partial^2_xf|\leq C_1|\theta|^2$ by \eqref{eqn:derivativebound} and $\EE(|\theta^\ast(s;t)|^2)<C(\mathcal{L}_{S,\phi^\ast})$ by \eqref{eqn:secondmomentbound} in (a). In (b), we use $|\partial_z\partial_{\theta}f|\leq C_1|\theta|(|x|+1)$ by \eqref{eqn:derivativebound}, $|Z_{\phi^\ast(s_0)}|<C(\mathcal{L}_{S,\phi^\ast})$ by \eqref{boundofxsolution}, and $\EE(|\theta^\ast(s;t)-\theta^\ast(s_0;t)|)<C(\mathcal{L}_{S,\phi^\ast})(s-s_0)$ by \eqref{stabilityofODEsolutiontheta} to obtain that first two terms are $o(s-s_0)$. In (c), we also use $|\partial_{\theta}f|\leq C_1(|x|+1)$ by \eqref{eqn:derivativebound}, $|Z_{\phi^\ast(s_0)}|<C(\mathcal{L}_{S,\phi^\ast})$ by \eqref{boundofxsolution}, and $\EE(|\theta^\ast(s;t)-\theta^\ast(s_0;t)|)<C(\mathcal{L}_{S,\phi^\ast})(s-s_0)$ by \eqref{stabilityofODEsolutiontheta}. In (d), we first use \eqref{Lipshictz}, \eqref{eqn:Deltaiteration}, and \eqref{eqn:Deltabound2} to obtain 
\begin{align*}
&\left|\nabla_\theta\frac{\delta E_s(\phi^\ast(s))}{\delta \rho}(\theta^\ast(s;t),t)-\nabla_\theta\frac{\delta E_s(\phi^\ast(s_0))}{\delta \rho}(\theta^\ast(s_0;t),t)\right|\\
\leq &C\left(\left|\theta^\ast(s;t)-\theta^\ast(s_0;t)\right|+d_2(\phi^\ast(s),\phi^\ast(s_0))(|\theta^\ast(s_0;t)|+1)\right)\,
\end{align*}
which implies
\begin{equation}\label{closetotheta}
\mathbb{E}\left|\theta^\ast(s;t)-\theta^\ast(s_0;t)-\nabla_\theta\frac{\delta E_s(\phi^\ast(s_0))}{\delta \rho}(\theta^\ast(s_0;t),t)(s-s_0)\right|=o(|s-s_0|)
\end{equation}
because $\EE(|\theta^\ast(s;t)|^2)<C(\mathcal{L}_{S,\phi^\ast})$ by \eqref{eqn:secondmomentbound}. Combining \eqref{closetotheta} with $|\partial_{\theta}f|\leq C_1(|x|+1)$ by \eqref{eqn:derivativebound} and $|Z_{\phi^\ast(s_0)}|<C(\mathcal{L}_{S,\phi^\ast})$, we obtain (d). Since $Z_{\phi^\ast(s)}(0;x)=Z_{\phi^\ast(s_0)}(0;x)$, using the last equality, we obtain that
\begin{equation}\label{differenceofZ1}
\begin{aligned}
&Z_{\phi^\ast(s)}(1;x)-Z_{\phi^\ast(s)}(1;x)\\
=-\int^1_0 &e^{\int^1_t[\mathbb{E}\left(\partial_zf(Z_{\phi^\ast(s_0)}(\tau;x),\theta^\ast(s_0;\tau))\right)\rd\tau}\\
&\cdot\mathbb{E}\left(\partial_\theta f(Z_{\phi^\ast(s_0)}(t;x),\theta^\ast(s_0;t))\nabla_\theta\frac{\delta E_s(\phi^\ast(s_0))}{\delta \rho}(\theta^\ast(s_0;t),t)\right)\rd t(s-s_0)\\
\end{aligned}
\end{equation}

Using \eqref{eqn:prho}, we also have
\begin{equation}\label{differenceofprho1}
\begin{aligned}
p_{\phi^\ast}(t;x)=&e^{\int^1_t\mathbb{E}\left(\partial_zf(Z_{\phi^\ast(s_0)}(\tau;x),\theta^\ast(s_0;\tau))\right)\rd \tau}p_{\phi^\ast}(1;x)\\
=&e^{\int^1_t\mathbb{E}\left(\partial_zf(Z_{\phi^\ast(s_0)}(\tau;x),\theta^\ast(s_0;\tau))\right)\rd \tau}\left(\left\langle w,Z_{\phi^\ast(s_0)}(1;x)\right\rangle-y(x)\right)w
\end{aligned}
\end{equation}

Combine \eqref{differenceofZ1} and \eqref{differenceofprho1}, we obtain that
\[
\begin{aligned}
&\mathbb{E}_{x\sim \mu}\left(\left(\left\langle w,Z_{\phi^\ast(s_0)}(1;x)\right\rangle-y(x)\right)\left\langle w,Z_{\phi^\ast(s)}(1;x)-Z_{\phi^\ast(s_0)}(1;x)\right\rangle\right)\\
=&\mathbb{E}_{x\sim \mu}\left(\left\langle p_{\phi^\ast}(1;x),Z_{\phi^\ast(s)}(1;x)-Z_{\phi^\ast(s_0)}(1;x)\right\rangle\right)\\
=&-\mathbb{E}_{x\sim \mu}\left(\int^1_0p_{\phi^\ast}(t;x)\mathbb{E}\left(\partial_\theta f(Z_{\phi^\ast(s_0)}(t;x),\theta^\ast(s_0;t))\nabla_\theta\frac{\delta E_s(\phi^\ast(s_0))}{\delta \rho}(\theta^\ast(s_0;t),t)\right)\rd t\right)(s-s_0)\\
&+o(|s-s_0|)\,,
\end{aligned}
\]
which implies
\begin{equation}\label{eqn:Ifinal}
\begin{aligned}
{\textrm{(I)}}=&-\int^1_0\mathbb{E}\left(\mathbb{E}_{x\sim \mu}\left(p_{\phi^\ast}(t;x)\partial_\theta f(Z_{\phi^\ast(s_0)}(t;x),\theta^\ast(s_0;t))\right)\nabla_\theta\frac{\delta E_s(\phi^\ast(s_0))}{\delta \rho}(\theta^\ast(s_0;t),t)\right)\rd t(s-s_0)\\
&+o(|s-s_0|)
\end{aligned}
\end{equation}

\item[(II):] 

Similar to before, using \eqref{stabilityofODEsolutiontheta} and \eqref{closetotheta}, we obtain that
\begin{equation}\label{eqn:IIfinal}
\begin{aligned}
{\textrm{(II)}}=&\exp^{-s}\int^1_0\mathbb{E}(|\theta^\ast(s;t)|^2)\rd t-\exp^{-s}\int^1_0\mathbb{E}(|\theta^\ast(s_0;t)|^2)\rd t\\
+&\exp^{-s}\int^1_0\mathbb{E}(|\theta^\ast(s_0;t)|^2)\rd t-\exp^{-s_0}\int^1_0\mathbb{E}(|\theta^\ast(s_0;t)|^2)\rd t\\
=&\exp^{-s}\int^1_0\mathbb{E}(|\theta^\ast(s;t)|^2)\rd t-\exp^{-s}\int^1_0\mathbb{E}(|\theta^\ast(s_0;t)|^2)\rd t\\
&-\exp^{-s_0}\int^1_0\mathbb{E}(|\theta^\ast(s_0;t)|^2)\rd t(s-s_0)+o(|s-s_0|)\\
\stackrel{(a)}{=}&\exp^{-s_0}\int^1_0\mathbb{E}(|\theta^\ast(s;t)|^2)\rd t-\exp^{-s_0}\int^1_0\mathbb{E}(|\theta^\ast(s_0;t)|^2)\rd t\\
&-\exp^{-s_0}\int^1_0\mathbb{E}(|\theta^\ast(s_0;t)|^2)\rd t(s-s_0)+o(|s-s_0|)\\
\stackrel{(b)}{=}&-2\exp^{-s_0}\int^1_0\mathbb{E}\left(\theta^\ast(s_0;t)\nabla_\theta\frac{\delta E_s(\phi^\ast(s_0))}{\delta \rho}(\theta^\ast(s_0;t),t)(s-s_0)\right)\rd t\\
&-\exp^{-s_0}\int^1_0\mathbb{E}(|\theta^\ast(s_0;t)|^2)\rd t(s-s_0)+o(|s-s_0|)
\end{aligned}\,.
\end{equation}
In (a), we use similar calculations as the third to the fifth equality in \eqref{Zdifferencebig}. In (b), we use similar calculations as the last equality in \eqref{Zdifferencebig}. 

Noticing that
\[
\nabla_\theta\frac{\delta E_s(\phi^\ast(s_0))}{\delta \rho}(\theta^\ast(s_0;t),t)=\mathbb{E}_{x\sim \mu}\left(\partial_\theta f(Z_{\phi^\ast(s_0)}(t;x),\theta^\ast(s_0;t))p_{\phi^\ast}(t;x)\right)+2\exp^{-s_0}\theta^\ast(s_0;t)\,,
\]
we finally have
\[
{\textrm{(I)}}+{\textrm{(II)}}=-D(s_0)+o(|s-s_0|)
\]
by \eqref{eqn:Ds_0}, \eqref{eqn:Ifinal}, and \eqref{eqn:IIfinal}. This proves \eqref{eqn:derivativeproof} and the lemma.
\end{enumerate}
\end{proof}

\vskip 0.2in
\bibliography{ref}
\end{document}